\newacronym{gp}{GP}{Gaussian process}
\newacronym{vo}{VO}{variational optimization}
\newacronym{vi}{VI}{variational inference}
\newacronym{gvi}{GVI}{generalized variational inference}
\newacronym{bo}{BO}{Bayesian optimization}
\newacronym{ei}{EI}{expected improvement}
\newacronym{pi}{PI}{probability of improvement}
\newacronym{ucb}{UCB}{upper confidence bound}
\newacronym{bore}{BORE}{Bayesian optimization by density-ratio estimation}
\newacronym{bopr}{BOPR}{Bayesian optimization with probabilistic reparametrization}
\newacronym{elbo}{ELBO}{evidence lower bound}
\newacronym{cpe}{CPE}{class probability estimation}
\newacronym{eda}{EDA}{estimation of distribution algorithms}
\newacronym{nes}{NES}{natural evolution strategies}
\newacronym{es}{ES}{evolution strategies}
\newacronym{sgd}{SGD}{stochastic gradient descent}
\newacronym{vsd}{VSD}{variational search distributions}
\newacronym{dbas}{DbAS}{design by adaptive sampling}
\newacronym{cbas}{CbAS}{conditioning by adaptive sampling}
\newacronym{ell}{ELL}{expected log-likelihood}
\newacronym{fdr}{FDR}{false discovery rate}
\newacronym{kl}{KL}{Kullback-Leibler}
\newacronym{pex}{PEX}{proximal exploration}
\newacronym{bbo}{BBO}{black-box optimization}
\newacronym{lstm}{LSTM}{long short-term memory}
\newacronym{rnn}{RNN}{recurrent neural network}
\newacronym{ml}{ML}{maximum likelihood}
\newacronym{lse}{LSE}{level set estimation}
\newacronym{lso}{LSO}{latent space optimization}
\newacronym{ntk}{NTK}{neural tangent kernel}
\newacronym{nos}{NOS}{diffusioN Optimized Sampling}
\newacronym{ga}{GA}{genetic algorithm}
\newacronym{nn}{NN}{neural network}
\newcommand{\thresh}{\ensuremath{\tau}}
\newcommand{\acid}{\ensuremath{x}}
\newcommand{\obs}{\ensuremath{\mathbf{x}}}
\newcommand{\tar}{\ensuremath{y}}
\newcommand{\bbf}{\ensuremath{{f\!\centerdot}}}
\newcommand{\labl}{\ensuremath{z}}
\newcommand{\qparam}{\ensuremath{\phi}}
\newcommand{\qparamspace}{\ensuremath{\Phi}}
\newcommand{\mparam}{\ensuremath{\theta}}
\newcommand{\err}{\ensuremath{\epsilon}}
\newcommand{\ident}{\ensuremath{\mathbf{I}}}
\newcommand{\indep}{\perp \!\!\! \perp}
\newcommand{\real}{\ensuremath{\mathbb{R}}}
\newcommand{\obsspace}{\ensuremath{\mathcal{X}}}
\newcommand{\acidspace}{\ensuremath{\mathcal{V}}}
\newcommand{\solnspace}{\ensuremath{\mathcal{S}}}
\newcommand{\seenspace}{\ensuremath{\obsspace^q}}
\newcommand{\data}{\ensuremath{\mathcal{D}}}
\newcommand{\regretsim}{\ensuremath{\mathrm{r}}}
\newcommand{\recall}{\ensuremath{\textrm{Recall}}}
\newcommand{\precision}{\ensuremath{\textrm{Precision}}}
\newcommand{\performance}{\ensuremath{\textrm{Performance}}}
\newcommand{\threshparam}{\ensuremath{\gamma}}
\newcommand{\acqfn}[2]{\ensuremath{\alpha_{#2}\!\left({#1}\right)}}
\newcommand{\prob}[1]{\ensuremath{p({#1})}}
\newcommand{\probc}[2]{\ensuremath{p({#1}|{#2}})}
\newcommand{\qrob}[1]{\ensuremath{q({#1})}}
\newcommand{\qrobc}[2]{\ensuremath{q({#1}|{#2}})}
\newcommand{\normal}[1]{\ensuremath{\mathcal{N}\!\left({#1}\right)}}
\newcommand{\categc}[2]{\ensuremath{\mathrm{Categ}({#1}|{#2})}}
\newcommand{\expec}[2]{\ensuremath{\mathbb{E}_{#1}\!\left[{#2}\right]}}
\newcommand{\dkl}[2]{\ensuremath{\mathbb{D}_\mathrm{KL}\!\left[{#1}\|{#2}\right]}}
\newcommand{\diver}[2]{\ensuremath{\mathbb{D}\!\left[{#1}\|{#2}\right]}}
\newcommand{\softmax}[1]{\ensuremath{\mathrm{softmax}\!\left({#1}\right)}}
\newcommand{\cpe}[2]{\ensuremath{\pi_{#1}({#2})}}
\newcommand{\indic}[1]{\ensuremath{\mathds{1}[{#1}]}}
\newcommand{\elbo}[1]{\ensuremath{\mathcal{L_{\textrm{ELBO}}}\!\left({#1}\right)}}
\newcommand{\lcpe}[1]{\ensuremath{\mathcal{L_{\textrm{CPE}}}\!\left({#1}\right)}}
\newcommand{\threshfn}[1]{\ensuremath{f_\thresh}\!\left({#1}\right)}
\DeclareMathOperator*{\argmax}{arg\,\!max}
\DeclareMathOperator*{\argmin}{arg\,\!min}
\newcommand*\Let[2]{\State #1 $\gets$ #2}
\newcommand{\card}[1]{\ensuremath{|#1|}}
\newcommand*{\set}[1]{{\mathcal{\MakeUppercase{#1}}}}		
\newcommand*{\collection}[1]{{\mathfrak{\MakeUppercase{#1}}}} 
\renewcommand{\vec}[1]{{\boldsymbol{\mathbf{#1}}}}          
\newcommand*{\mat}[1]{\vec{\MakeUppercase{#1}}}             
\newcommand*{\transpose}{\top}
\newcommand*{\norm}[1]{\lVert #1 \rVert}                    
\newcommand*{\diff}{{\mathop{}\operatorname{d}}}
\newcommand*{\eye}{\ident}							        
\newcommand*{\obsnoise}{\err}
\newcommand*{\lablnoise}{\zeta}
\newcommand*{\gp}{\mathcal{GP}}                             
\newcommand*{\gpmean}{\mu}                                  
\newcommand*{\gpkernel}{k}                                  
\newcommand*{\cdf}{\Psi}                                  
\newcommand*{\N}{\mathbb{N}}                                
\newcommand*{\pdist}{p}
\newcommand*{\pmin}{b}
\newcommand*{\psum}{B}
\newcommand*{\plimit}{L}
\newcommand*{\sumrv}{N}
\newcommand*{\nobs}{N}                                      
\newcommand*{\obsIdx}{n}
\newcommand*{\iterIdx}{t}
\newcommand*{\niter}{T}
\newcommand*{\nhits}{H}
\newcommand*{\Probm}[1]{\mathbb{P}\!\left[ #1 \right]}                            
\newcommand*{\anyscalar}{a}
\newcommand*{\indrv}{\chi}													
\newcommand*{\anyrv}{\rho}
\newcommand*{\SampleSpace}{\Omega}
\newcommand*{\sample}{\omega}
\newcommand*{\EventsAlgebra}{\collection{A}}
\newcommand*{\filtration}{\collection{F}}
\newcommand*{\pMeasure}{\mathbb{P}}
\newcommand*{\anyevent}{\set{A}}
\newcommand*{\borel}{\collection{B}}
\newcommand*{\expectation}{\mathbb{E}}
\newcommand*{\bigo}{\set{O}}							
\newcommand*{\constant}{C}								
\newcommand*{\lik}{\ell}										
\newcommand*{\ldiff}{{\Delta\lik}}								
\newcommand*{\anyfunction}{g}
\newcommand*{\anotherfunction}{h}
\newcommand*{\classifier}{\pi}
\newcommand*{\lmse}[1]{\mathcal{L}_{\mathrm{MSE}}(#1)}
\newcommand*{\fspace}{\set{F}}
\newcommand*{\inner}[1]{\langle #1 \rangle}
\newcommand*{\feature}{\varphi}
\newcommand*{\features}{\mat{\Phi}}
\newcommand*{\lsv}{\mat{U}}	 	
\newcommand*{\lsvec}{u}	
\newcommand*{\rsvec}{\vec{v}}
\newcommand*{\rsv}{\mat{V}}		
\newcommand*{\svs}{\mat{S}}  
\newcommand*{\evs}{\mat{\Lambda}}	
\newcommand*{\eigval}{\lambda}
\newcommand*{\mwidth}{m}
\newcommand*{\regfactor}{\rho}
\newcommand*{\regop}{\mat{R}}
\newcommand*{\regmat}{\mat{\Sigma}}
\newcommand*{\regdiag}{\mat{D}}
\newcommand*{\tstop}{s}
\newcommand*{\lrate}{\nu}
\newcommand*{\mig}{\xi}
\declaretheorem[numberwithin=section]{proposition}
\declaretheorem[numberwithin=section]{theorem}
\declaretheorem[numberwithin=section]{lemma}
\declaretheorem[numberwithin=section, name=Remark]{rremark}
\newtheorem{assumption}{Assumption}[section]
\def\namedlabel#1#2{\begingroup
    #2%
    \def\@currentlabel{#2}%
    \phantomsection\label{#1}\endgroup
}
\newcolumntype{R}[2]{%
    >{\adjustbox{angle=#1,lap=\width-(#2)}\bgroup}%
    l%
    <{\egroup}%
}
\newcommand*\rot{\multicolumn{1}{R{27}{1em}}}
\newtheorem*{require*}{Requirements \& Desiderata}
\newcommand{\cmark}{\ding{51}}
\newcommand{\xmark}{\ding{55}}
\title{Variational Search Distributions}
\author{Daniel M. Steinberg, Rafael Oliveira, Cheng Soon Ong \& Edwin V. Bonilla \\
Data61, CSIRO, Australia\\
\scriptsize\texttt{\{dan.steinberg, rafael.dossantosdeoliveira, cheng-soon.ong, edwin.bonilla\}@data61.csiro.au} \\
}
\begin{document}

\maketitle

\begin{abstract}
    We develop \gls{vsd}, a method for conditioning a generative model of discrete, combinatorial designs on a rare desired class by efficiently evaluating a black-box (e.g.~experiment, simulation) in a batch sequential manner.
    We call this task active generation; we formalize active generation's requirements and desiderata, and formulate a solution via variational inference. \gls{vsd} uses off-the-shelf gradient based optimization routines, can learn powerful generative models for desirable designs, and can take advantage of scalable predictive models. We derive asymptotic convergence rates for learning the true conditional generative distribution of designs with certain configurations of our method. After illustrating the generative model on images, we empirically demonstrate that \gls{vsd} can outperform existing baseline methods on a set of real sequence-design problems in various protein and DNA/RNA engineering tasks.
\end{abstract}

\section{Introduction}

We consider a variant of the active search problem \citep{garnett2012bayesian, jiang2017efficient, vanchinathan2015discovering}, where we wish to find members (designs) of a rare desired class in a batch sequential manner with a fixed black-box evaluation (e.g.~experiment) budget. We call sequential active learning of a \emph{generative} model of these designs \textbf{active generation}. Examples of rare designs are compounds that could be useful pharmaceutical drugs, or highly active enzymes for catalyzing chemical reactions. We assume the design space is discrete or partially discrete, high-dimensional and practically \emph{innumerable}.
For example, the number of possible configurations of a single protein is $20^{\bigo(100)}$ \citep[see, e.g.,][]{sarkisyan2016local}.
Learning a generative model of these designs allows us to circumvent the need for traversing the whole search space.

We are interested in this active generation objective for a variety of reasons. We may wish to study the properties of the ``fitness landscape'' \citep{papkou2023rugged} to gain a better scientific understanding of a phenomenon such as natural evolution. Or, we may not be able to completely specify the constraints and objectives of a task, but we would like to characterize the space of, and generate new feasible designs. For example, we want enzymes that can degrade plastics in an industrial setting, but we may not yet know the exact conditions (e.g.~temperature, pH), some of which may be anti-correlated with enzyme catalytic activity. Alternatively, if we know these multiple objectives and constraints, we may only want to generate designs from a Pareto set.

Assuming we can take advantage of a prior distribution over designs, we formulate the search problem as inferring the posterior distribution over rare, desirable designs. Importantly, this posterior can be used for \textit{generating new designs}. Specifically, we use (black-box) \gls{vi}~\citep{ranganath2014black}, and so refer to our method as \acrfull{vsd}. Our major contributions are: (1) we formulate the batch active generation objective over a (practically) innumerable discrete design space, (2)  we present a variational inference algorithm, \gls{vsd}, which solves this objective, (3) we show that \gls{vsd} performs well theoretically and empirically, and (4) we connect active generation to other recent advances in \gls{bbo} of discrete sequences that use generative models. \gls{vsd} uses off-the-shelf gradient based optimization routines, is able to learn powerful generative models, and can take advantage of scalable predictive models. In our experiments we show that \gls{vsd} can outperform existing baseline methods on a set of real applications.
Finally, we evaluate our approach on the related sequential \gls{bbo} problem, where we want to find the globally optimal design for a specific objective and show competitive performance when compared with state-of-the-art methods, e.g., based on \gls{lso} \citep{gruver2023protein}.

\section{Method}
\label{sec:method}
In this section we formalize our problem and describe its requirements and desiderata. We also develop our proposed  solution, based on variational inference, which we will refer to as \acrfull{vsd}.

\subsection{The Problem of Active Generation}
\label{sub:formulation}

We are given a design space $\obsspace$, which can be discrete or mixed discrete-continuous and high dimensional, and where for each instance that we choose $\obs \in \obsspace$, we measure some corresponding property of interest (so-called fitness) $\tar \in \real$. For example, in our motivating application of DNA/RNA or protein sequences, $\obsspace = \acidspace^M$, where $\acidspace$ is the sequence vocabulary (e.g.,~amino acid labels, $\card{\acidspace}=20$) and $M$ is the length of the sequence. However, we do not limit the application of our method to sequences. Using this framing, a real world experiment (e.g.,~measuring the activity of an enzyme) can be modeled as an unknown relationship,
\begin{align}
    \tar = \bbf(\obs) + \err,
    \label{eq:targen}
\end{align}
for some black-box function (the experiment), $\bbf$, and measurement error $\err \in \real$, distributed according to $\prob{\err}$ with $\expec{\prob{\err}}{\err} = 0$.
Instead of modeling the whole space, $\obsspace$, we are only interested in a set of events which we choose based on fitness, $\solnspace \subset \obsspace$.
In particular for active generation we wish to learn a generative model, $\qrob{\obs}$, that only returns samples $\obs^{(s)} \in \solnspace$ by efficiently querying the black-box function in \autoref{eq:targen}.
%
%
We assume that $\solnspace$ are rare events in a high dimensional space, and that we have access to a prior belief, $\prob{\obs}$, which helps narrow in on this subset of $\obsspace$.
We are given an initial dataset, $\data_N := \{(\tar_n, \obs_n)\}_{n=1}^N$, which may contain only a few instances of $\obs_n \in \solnspace$. Given $\prob{\obs}$ and $\data_N$ we aim to generate batches of unique candidates, $\{\obs_{bt}\}^B_{b=1}$, for black-box (experimental) evaluation in a series of rounds, $\iterIdx \in \{1, \ldots, \niter\}$, where $B = \bigo(1000)$ and we desire $\obs_{bt} \in \solnspace$. After each round $\data_N$ is augmented with the black-box results of the batch, i.e.~$\data_N \leftarrow \data_N \cup \{(\obs_{bt}, \tar_{bt})\}^B_{b=1}$.
As we shall see later, our solution allows us to satisfy the following requirements and additional desiderata for active generation.
\\
\begin{require*} Active generation requirements (R) and other desiderata (D).%
\vspace{-.1em}
\small%
\begin{multicols}{2}%
\begin{description}
    \item[\namedlabel{req:rare}{(R1)} Rare] feasible designs, $\solnspace$, are rare events in $\obsspace$ that need to be identified
    \item[\namedlabel{req:seq}{(R2)} Sequential] non-myopic candidate generation, $\obs \in \solnspace$, for sequential black-box evaluation
    \item[\namedlabel{req:disc}{(R3)} Discrete] search over (combinatorially) large design spaces, e.g.~$\obs \in \obsspace = \acidspace^M$
    \item[\namedlabel{req:batch}{(R4)} Batch] generation of up to $\mathcal{O}(1000)$ \emph{diverse} candidate designs per round
    \item[\namedlabel{req:gen}{(R5)} Generative] models, $\obs^{(s)} \sim \qrob{\obs}$, that are task-specific for rare, feasible designs
    \item[\namedlabel{desi:guar}{(D1)} Guaranteed]%
    convergence for certain choices of priors, variational distributions and predictive models
    \item[\namedlabel{desi:gd}{(D2)} Gradient] based optimization strategies for candidate searching
    \item[\namedlabel{desi:scale}{(D3)} Scalable] predictive models that enable high-throughput evaluation/experiments.
\end{description}
\end{multicols}
\label{requirements}
\vspace{-1em}
\end{require*}
Like active search \citep{garnett2012bayesian} in our case we are interested in the solution space of the super level-set, $\solnspace_\textrm{SLS} := \{\obs : \bbf(\obs) > \thresh \}$ for a threshold $\thresh \in \real$ (e.g.,~wild-type fitness). As we only have access to noisy measurements, $\tar$, \emph{our task is to estimate the super level-set distribution}, $\probc{\obs}{\tar > \thresh}$, \emph{using active generation}. Estimating this distribution over $\solnspace_\textrm{SLS}$ is computationally and statistically challenging and, therefore, we cast this as a \emph{variational inference} problem.
We also consider the case of \gls{bbo} for which $\solnspace_\textrm{BBO} := \argmax_\obs \bbf(\obs)$, and we show that we can accommodate this in our variational framework by iteratively raising $\thresh_t$ per round, $t$.
We visualize the properties and models involved in active generation as applied to a continuous ``fitness landscape'' in \autoref{fig:islands}.
%
%
\begin{figure}[tb]%
    \centering
    \subcaptionbox{$\argmax_\obs \bbf(\obs)$\label{sfig:argmaxx}}
        {\includegraphics[width=.22\textwidth]{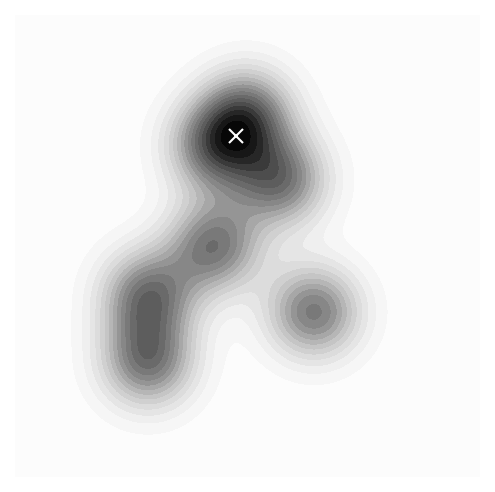}}
    \hfill
    \subcaptionbox{$\{\obs : \bbf(\obs) > \thresh \}$\label{sfig:fitset}}
        {\includegraphics[width=.22\textwidth]{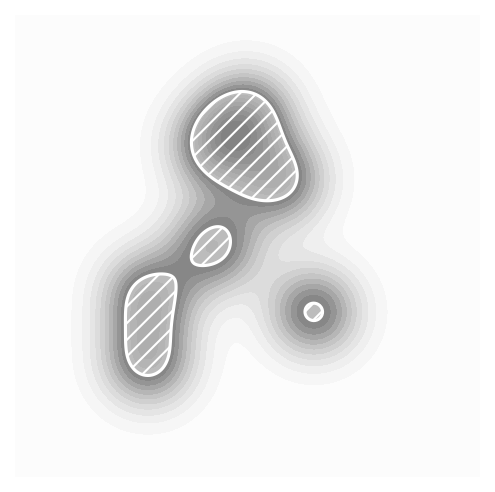}}
    \hfill
    \subcaptionbox{$\prob{\obs}$\label{sfig:px}}
        {\includegraphics[width=.22\textwidth]{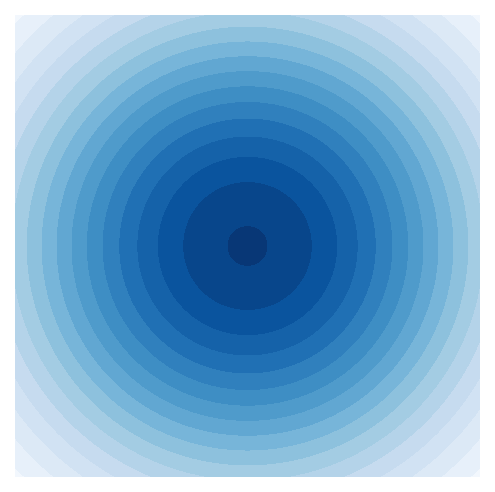}}
    \hfill
    \subcaptionbox{$\probc{\obs}{\tar > \thresh}$\label{sfig:pxgy}}
        {\includegraphics[width=.22\textwidth]{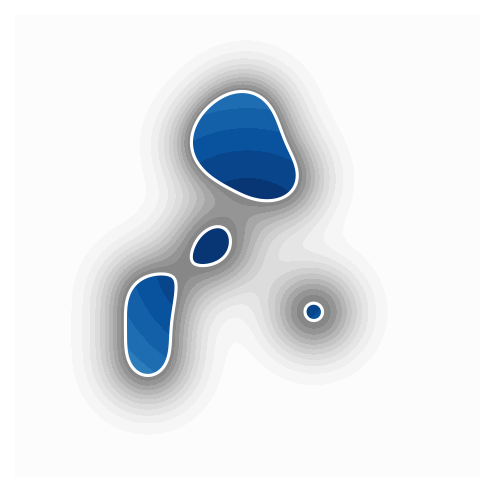}}
    \caption{Fitness landscape properties and models. (\subref{sfig:argmaxx}) A noise-less fitness landscape, $\bbf(\obs)$, and the maximum fitness design, $\solnspace_\textrm{BBO} = \{\obs^*\}$, as the white `$\times$'. (\subref{sfig:fitset}) The super level-set, $\solnspace_\textrm{SLS}$, of all fit designs as the white hatched area. (\subref{sfig:px}) Prior belief $\prob{\obs}$. (\subref{sfig:pxgy}) The density/mass function of the super level-set, $\probc{\obs}{\tar > \thresh}$, as blue contours.
    Our goal is to sequentially estimate a generative model for the distribution of the super level-set (\subref{sfig:pxgy}). We assume a noisy relationship between $\bbf$ and $\tar$, so the super level-set will not have a hard boundary, and $\probc{\obs}{\tar > \thresh}$ will be defined over all $\obsspace$.}
    \label{fig:islands}
\end{figure}%

\subsection{Variational Search Distributions}

We cast the estimation of $\probc{\obs}{\tar > \thresh}$ as a sequential optimization problem. A suitable objective for a round, $t$, is to minimize a divergence,
\begin{align}
    \qparam_t^* = \argmin_\qparam \diver{\probc{\obs}{\tar > \thresh}}{\qrobc{\obs}{\qparam}}
    \label{eq:diver}
\end{align}
where $\qrobc{\obs}{\qparam}$ is a parameterized distribution from which we sample candidate designs $\obs_{bt}$, \ref{req:gen}, and which we aim to match to $\probc{\obs}{\tar > \thresh}$.
The difficulty is that we cannot directly evaluate or empirically sample from $\probc{\obs}{\tar > \thresh}$. However, if we consider the reverse \gls{kl} divergence,
\begin{align}
    \argmin_\qparam\, \dkl{\qrobc{\obs}{\qparam}}{\probc{\obs}{\tar > \thresh}}
    = \argmin_\qparam \expec{\qrobc{\obs}{\qparam}}
        {\log \frac{\qrobc{\obs}{\qparam}}{\prob{\obs}} - \log \probc{\tar > \thresh}{\obs}},
    \label{eq:kld}
\end{align}
where we have expanded $\probc{\obs}{\tar > \thresh}$ using Bayes rule and dropped the constant term $\prob{\tar > \thresh}$, we note that we no longer require evaluation of $\probc{\obs}{\tar > \thresh}$ directly. We recognize the right-hand side of \autoref{eq:kld} as the well known (negative) variational \gls{elbo},
\begin{align}
    \elbo{\qparam} :=
    \expec{\qrobc{\obs}{\qparam}}{\log \probc{\tar > \thresh}{\obs}} - \dkl{\qrobc{\obs}{\qparam}}{\prob{\obs}}.
    \label{eq:gen_elbo}
\end{align}
For this we assume access to a prior distribution over the space of designs, $\prob{\obs}$, that may be informed from the data at hand (or pre-trained). Henceforth, as we will develop a sequential algorithm, we will denote this prior as $\probc{\obs}{\data_0}$.
We note the relationship between $\log \probc{\tar > \thresh}{\obs}$ and the \gls{pi} acquisition function from \gls{bo}~\citep{kushner1964new},
\begin{align}
    \log \probc{\tar > \thresh}{\obs} \approx \log \probc{\tar > \thresh}{\obs, \data_N}
    = \log \expec{\probc{\tar}{\obs, \data_N}}{\indic{\tar > \thresh}} = \log \acqfn{\obs, \data_N, \thresh}{PI}.
    \label{eq:pr2piacq}
\end{align}
Here $\mathds{1}: \{\textrm{false}, \textrm{true}\} \to \{0, 1\}$ is the indicator function and $\probc{\tar}{\obs, \data_\nobs}$ is typically estimated using the posterior predictive distribution of a \gls{gp} given data, $\data_N$. So $\probc{\tar > \thresh}{\obs, \data_\nobs} = \cdf((\gpmean_\nobs(\obs) - \thresh)/\sigma_\nobs(\obs))$, where $\cdf(\cdot)$ is a cumulative standard normal distribution function, and $\gpmean_\nobs(\obs)$, $\sigma_\nobs^2(\obs)$ are the posterior predictive mean and variance, respectively, of the \gls{gp}. We refer to this estimation strategy as \gls{gp}-\gls{pi}, and rewrite the \gls{elbo} accordingly,
\begin{align}
    \elbo{\qparam, \thresh, \data_N} = \expec{\qrobc{\obs}{\qparam}}{\log \acqfn{\obs, \data_N, \thresh}{PI}} - \dkl{\qrobc{\obs}{\qparam}}{\probc{\obs}{\data_0}}.
    \label{eq:vsd_elbo}
\end{align}
We refer to the method that maximizes the objective in \autoref{eq:vsd_elbo} as \acrfull{vsd}, since we are using the variational posterior distribution as a means of searching the space of fit designs, satisfying \ref{req:rare}, \ref{req:seq} and \ref{req:batch}. It is well known that when the true posterior is a member of the variational family indexed by $\qparam$, the above variational inference procedure has the potential to recover the exact posterior distribution. To recommend candidates for black-box evaluation we sample a set of designs from our search distribution each round,
\begin{align}
    \{\obs_{bt}\}^B_{b=1} \sim \prod^B_{b=1} \qrobc{\obs}{\qparam_t^*},
    \quad \textrm{where} \quad
    \qparam_t^* = \argmax_\qparam \elbo{\qparam, \thresh, \data_N}.
\end{align}
We discuss the relationship between \gls{vsd} and \gls{bo} in \autoref{app:vsdbound}. In general, because of the discrete combinatorial nature of our problem, we cannot use the re-parameterization trick \citep{kingma2014auto} to estimate the gradients of the \gls{elbo} straightforwardly. Instead, we use the score function gradient, also known as REINFORCE~\citep{williams1992simple, mohamed2020monte} with standard gradient descent methods \ref{desi:gd} such as Adam~\citep{kingma2014adam},
\begin{align}
   \nabla_\qparam \elbo{\qparam, \thresh, \data_N} &= \expec{\qrobc{\obs}{\qparam}}{\left(\log \acqfn{\obs, \data_N, \thresh}{PI}
        - \log \qrobc{\obs}{\qparam} + \log\probc{\obs}{\data_0}
        \right)
        \nabla_\qparam \log \qrobc{\obs}{\qparam}}.
    \label{eq:vsdderiv}
\end{align}
Here we use Monte-Carlo sampling to approximate the expectation with a suitable variance reduction scheme, such as control variates \citep{mohamed2020monte}. We find that the exponentially smoothed average of the \gls{elbo} works well in practice, and is the same strategy employed in~\citet{daulton2022bayesian}. \Gls{vsd} implements black-box variational inference~\citep{ranganath2014black} for parameter estimation, and despite the high-dimensional nature of $\obsspace$, we find we only need $\bigo(1000)$ samples to estimate the required expectations for \gls{elbo} optimization on problems with $M = \mathcal{O}(100)$, satisfying~\ref{req:disc}. Note that \autoref{eq:vsd_elbo} -- \ref{eq:vsdderiv} do not involve any data ($\data_N$) directly, only indirectly through the acquisition function. Hence the scalability of \gls{vsd} is dependent on the complexity of training the underlying estimator of $\probc{\tar}{\obs, \data_N}$.

\subsection{Class Probability Estimation}

So far our method indirectly computes \gls{pi} by transforming the predictions of a \gls{gp} surrogate model, $\probc{\tar}{\obs, \data_N}$, as in \autoref{eq:pr2piacq}. Instead we may choose to follow the reasoning used by \gls{bore} in~\citet{tiao2021bore, oliveira2022batch, song2022general}, and directly estimate the quantity we care about, $\probc{\tar > \thresh}{\obs, \data_N}$. This can be accomplished using \gls{cpe} on the labels $ \labl := \indic{\tar > \thresh} \in \{0, 1\}$ so
$\probc{\tar > \thresh}{\obs, \data_N} = \probc{\labl=1}{\obs, \data_N} \approx \cpe{\mparam}{\obs}$, where $\pi_\mparam : \obsspace \to [0, 1]$. We can estimate the class probabilities
using a proper scoring rule~\citep{gneiting2007strictly} such as
log-loss,
\begin{align}
    \lcpe{\mparam, \data_N^\labl} := - \frac{1}{N} \sum\nolimits_{n=1}^N
    \labl_n \log \cpe{\mparam}{\obs_n}
    + (1-\labl_n) \log (1 - \cpe{\mparam}{\obs_n}),
\end{align}
where $\data_N^\labl = \{(\labl_n, \obs_n)\}^N_{n=1}$. The \gls{vsd} objective and gradient estimator using \gls{cpe} then become,
\begin{align}
    \elbo{\qparam, \mparam} &=
        \expec{\qrobc{\obs}{\qparam}}{\log \cpe{\mparam}{\obs}}
        - \dkl{\qrobc{\obs}{\qparam}}{\probc{\obs}{\data_0}}, \label{eq:vsdcpe} \\
    \nabla_\qparam \elbo{\qparam, \mparam} &= \expec{\qrobc{\obs}{\qparam}}{\left(\log \cpe{\mparam}{\obs}
       - \log \qrobc{\obs}{\qparam} + \log \probc{\obs}{\data_0}
       \right)
        \nabla_\qparam \log \qrobc{\obs}{\qparam}}.
\end{align}
into which we plug $\mparam_t^* = \argmin_\mparam \lcpe{\mparam, \data_N^\labl}$. 
We refer to this strategy as \gls{cpe}-\gls{pi}. Using a \gls{cpe} enables the use of more scalable estimators than \gls{gp}-\gls{pi}, satisfying our desideratum \ref{desi:scale}. This is crucial if we choose to run more than a few rounds of experiments with $B=\bigo(1000)$. Since \gls{vsd} is a black-box method we may choose to use \glspl{cpe} that are non-differentiable, such as decision tree ensembles.
The complete \gls{vsd} algorithm is given in~\autoref{alg:optloop} and depicted in \autoref{fig:diagrams}. We have allowed for a threshold function, $\thresh_t = \threshfn{\{\tar : \tar \in \data_N\}, \threshparam_t}$, that can be used to modify the threshold each round. For example, an empirical quantile function $\thresh_t = \hat{Q}_\tar(\threshparam_t)$ where $\threshparam_t \in (0, 1)$ as in \citet{tiao2021bore}. Or a constant $\thresh$ for estimating a constant distribution of the super level-set.
\begin{algorithm}
    \caption{\gls{vsd} optimization loop with \gls{cpe}-\gls{pi}.}
    \begin{algorithmic}[1]
        \Require{
            Threshold function $f_\thresh$ and $\threshparam_1$,
            dataset $\data_N$,
            black-box $\bbf$,
            prior $\probc{\obs}{\data_0}$,
            \gls{cpe} $\cpe{\mparam}{\obs}$,
            variational family $\qrobc{\obs}{\qparam}$,
            budget $T$ and $B$.
        }
        \Function{FitModels}{$\data_N, \thresh$}
            \Let{$\data^\labl_N$}{$\{(\labl_n, \obs_n)\}^N_{n=1}$,~
                where $\labl_n = \indic{\tar_n > \thresh}$}
            \Let{$\mparam^*$}{$\argmin_\mparam \mathcal{L}_\text{CPE}(\mparam, \data_N^\labl)$}
            \Let{$\qparam^*$}{$\argmax_\qparam
                \mathcal{L}_\text{ELBO}(\qparam, \mparam^*)$}
            \State \Return{$\qparam^*, \mparam^*$}
        \EndFunction
        \For{round $t \in \{1, \ldots, T\}$}
            \Let{$\thresh_t$}{$\threshfn{\{\tar : \tar \in \data_N\}, \threshparam_t}$}
            \Let{$\qparam_t^*, \mparam_t^*$}{\Call{FitModels}{$\data_N, \thresh_t$}}
            \Let{$\{\obs_{bt}\}^B_{b=1}$}{$\qrobc{\obs}{\qparam_t^*}$}
            \Let{$\{\tar_{bt}\}^B_{b=1}$}{$\{\bbf(\obs_{bt}) + \err_{bt}\}^B_{b=1}$}
            \Let{$\data_{N}$}{$\data_N \cup \{(\obs_{bt}, \tar_{bt})\}^B_{b=1}$}
        \EndFor
        \Let{$\thresh_*$}{$\threshfn{\{\tar : \tar \in \data_N\}, \threshparam_*}$}
        \Let{$\qparam^*, \mparam^*$}{\Call{FitModels}{$\data_N, \thresh_*$}}
        \State \Return{$\qparam^*, \mparam^*$}
    \end{algorithmic}
    \label{alg:optloop}
\end{algorithm}

\subsection{Theoretical Analysis}
We show that \gls{vsd} sampling distributions converge to a target distribution that characterizes the level set given by $\thresh$, satisfying \ref{desi:guar} in two general settings.
We first derive results assuming $\bbf$ is drawn from a Gaussian process, i.e., $\bbf\sim\gp(0, \gpkernel)$, with a positive-semidefinite covariance (or kernel) function $\gpkernel: \obsspace \times \obsspace \to \real$ (\autoref{app:theory}), using \gls{gp}-\gls{pi} as the \gls{cpe} for \gls{vsd}. These results are then extended to probabilistic classifiers based on wide \glspl{nn} (\autoref{app:ntk-theory}) by means of the \gls{ntk} for the given architecture \citep{jacot2018}.
For the analysis, we set $B=1$ and $N=t$, though having $B>1$ should improve the rates by a multiplicative factor.
\begin{restatable}{theorem}{mainthm}
	\label{thr:kl-bound}
	Under mild assumptions (\ref{a:gp} to \ref{a:prior}), the variational distribution of \gls{vsd} equipped with \gls{gp}-\gls{pi} converges to the level-set distribution in probability at the following rate:
	\begin{equation}
		\begin{split}
				\dkl{\probc{\obs}{\tar > \thresh_\iterIdx, \data_\iterIdx}}{\probc{\obs}{\tar > \thresh_\iterIdx, \bbf}}
				\in \bigo_\pMeasure(\iterIdx^{-1/2})
		\end{split}\,.
	\end{equation}%
\end{restatable}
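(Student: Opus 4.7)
The plan is to start from Bayes' rule applied to both conditionals: since the two level-set distributions share the design prior $\prob{\obs}$, we can write
\[
    \probc{\obs}{\tar > \thresh_t, \data_t}
    = \frac{\prob{\obs}\, \probc{\tar > \thresh_t}{\obs, \data_t}}{Z_t^{\data}},
    \qquad
    \probc{\obs}{\tar > \thresh_t, \bbf}
    = \frac{\prob{\obs}\, \probc{\tar > \thresh_t}{\obs, \bbf}}{Z_t^{\bbf}},
\]
where $Z_t^{\data}$ and $Z_t^{\bbf}$ are the two normalising constants. The shared factor $\prob{\obs}$ cancels in the KL, which then reduces to an expectation of the log-ratio of the two PI-style terms plus a log-ratio of normalisers. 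By \autoref{eq:pr2piacq}, the first PI term is $\cdf((\gpmean_t(\obs) - \thresh_t)/\sigma_t(\obs))$; conditioning on the true $\bbf$ gives a Gaussian-CDF centred at $\bbf(\obs)$ with scale $\sigma_\err$. The KL is therefore a functional of the differences $\gpmean_t - \bbf$ and $\sigma_t - \sigma_\err$.

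Next I would control the pointwise gap between these two CDFs via standard GP posterior concentration. Under sub-Gaussian noise and a bounded-RKHS/GP-sample-path assumption (covered by \ref{a:gp}--\ref{a:prior}), a self-normalised confidence bound of the form $|\gpmean_t(\obs) - \bbf(\obs)| \leq \beta_t^{1/2}\, \sigma_t(\obs)$ holds uniformly in $\obs$ with high probability, for some slowly-growing $\beta_t$. Combined with local Lipschitz bounds for $\log \cdf$ on compact intervals, this delivers a pointwise bound on $\bigl|\log \probc{\tar > \thresh_t}{\obs, \data_t} - \log \probc{\tar > \thresh_t}{\obs, \bbf}\bigr|$ that is essentially linear in $\sigma_t(\obs)$, up to a $\beta_t^{1/2}$ prefactor.

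The announced $\bigo_{\pMeasure}(t^{-1/2})$ rate then comes from the standard information-gain argument. Because VSD draws each new query from a distribution that is concentrated where the current PI is large, the cumulative posterior variance along the actually queried points satisfies $\sum_{i=1}^{t}\sigma_{i-1}^2(\obs_i) \leq \constant\,\gamma_t$, where $\gamma_t$ is the maximum information gain. Averaging yields $t^{-1}\sum_{i}\sigma_{i-1}^2(\obs_i) \in \bigo(\gamma_t/t)$, which is $\bigo(t^{-1})$ up to log factors for the smooth kernels usually assumed. Pushing the pointwise bound through Cauchy--Schwarz converts this into an expected log-ratio of order $\sqrt{\gamma_t/t}$, and a second-order Taylor expansion of $\log(Z_t^{\bbf}/Z_t^{\data})$ gives the same order for the normaliser correction; in-probability control then follows from the high-probability event on which the confidence band holds.

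The hard part will be handling the tails: far from the level set the log-PI term can be very negative, so the Lipschitz bound for $\log \cdf$ degenerates and the pointwise estimate alone is not integrable against the target. I would address this by invoking the prior regularity of \ref{a:prior} together with boundedness of the GP sample paths, which together localise the target distribution on a region where PI is bounded away from $0$ and $1$, so the Lipschitz constant of $\log \cdf$ can be taken finite on that region. A related subtlety is that $\thresh_t$ evolves with $t$; the concentration bound must be uniform in $\thresh_t$, which is fine as long as the thresholds stay in a compact range dictated by $\bbf$, an assumption that should be supplied with the rest of the hypotheses in the appendix.
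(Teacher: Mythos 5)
Your skeleton matches the paper's proof up to a point: the paper also starts from the Bayes decomposition with the shared prior cancelling, reduces the KL to the log-ratio of the two PI terms plus a log-ratio of normalisers (using $\log(1+\anyscalar)\leq\anyscalar$ rather than a Lipschitz bound on $\log\cdf$), bounds the pointwise gap $|\lik_\iterIdx(\obs)-\lik^*_\iterIdx(\obs)|$ via the $1/\sqrt{2\pi}$-Lipschitz property of $\cdf$, and handles your "tail" worry exactly as you anticipate, by lower-bounding the \emph{true} PI through $\lik_\iterIdx^*(\obs)=\cdf\!\left((\bbf(\obs)-\thresh_\iterIdx)/\sigma_\obsnoise\right)\geq\cdf\!\left(-(\norm{\bbf}_\infty+\thresh_*)/\sigma_\obsnoise\right)$, which is positive because $\bbf$ is almost surely bounded on the finite domain and the thresholds are assumed bounded.

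The genuine gap is in your rate argument. You invoke the cumulative information-gain bound $\sum_{i\leq \iterIdx}\sigma_{i-1}^2(\obs_i)\lesssim\gamma_\iterIdx$ along the \emph{queried} points together with a uniform confidence band $|\gpmean_\iterIdx-\bbf|\leq\beta_\iterIdx^{1/2}\sigma_\iterIdx$. That controls an average of past variances at the points actually sampled, but the KL at round $\iterIdx$ requires controlling $\sigma_\iterIdx(\obs)$ at \emph{every} $\obs$ carrying mass under $\probc{\obs}{\tar>\thresh_\iterIdx,\data_\iterIdx}$ and under $\prob{\obs}$ (the normaliser term), including points never queried; and since $\bigo_\pMeasure(\iterIdx^{-1/2})$ is a limsup-in-$\iterIdx$ statement, an "on average over rounds / best round" bound of the simple-regret type does not suffice. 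Moreover the extra $\beta_\iterIdx^{1/2}\sqrt{\gamma_\iterIdx}$ factors would not deliver the clean $\iterIdx^{-1/2}$ rate claimed for a general bounded kernel. The paper's mechanism is different and is where the real work lies: it first shows (\autoref{thr:vsd-variance-rate}) that because the GP posterior mean is a.s. uniformly bounded (a martingale/Markov argument from $\expectation[\norm{\bbf}_\infty]<\infty$), the PI-based sampling probability of \emph{every} $\obs$ with $\prob{\obs}>0$ stays bounded away from zero; the second Borel--Cantelli lemma then gives linearly many visits to each such point, and the independent-kernel variance bound (\autoref{thr:gp-variance-upper-bound}) yields the \emph{pointwise} rate $\sigma_\iterIdx^2(\obs)\in\bigo(\iterIdx^{-1})$ almost surely. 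This step relies essentially on the finite domain, full-support prior and bounded thresholds — assumptions \ref{a:finite-domain}, \ref{a:prior} and \ref{a:thresholds} — and replaces your confidence band by plain Jensen, $\expectation[\,|\gpmean_\iterIdx(\obs)-\bbf(\obs)|\mid\filtration_\iterIdx\,]\leq\sigma_\iterIdx(\obs)$, which is why no $\beta_\iterIdx$ or $\gamma_\iterIdx$ factors appear in the final $\bigo_\pMeasure(\iterIdx^{-1/2})$ rate. Without an argument of this kind (or some other way to control the posterior variance off the query path), your proposal does not close.
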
%
This result is based on showing that the \gls{gp} posterior variance vanishes at an optimal rate of $\bigo(\iterIdx^{-1})$ in our setting (\autoref{thr:gp-variance-convergence}).
We also analyze the rate at which \gls{vsd} finds feasible designs, or ``hits'', compared to an oracle with full knowledge of $\bbf$.
After $\niter$ rounds, the number of hits found by \gls{vsd} is
$\nhits_\niter = \sum_{t=1}^\niter \indic{\tar_\iterIdx > \thresh_{\iterIdx-1}}$, where $\tar_\iterIdx$ follows \autoref{eq:targen} and $\obs_\iterIdx \sim \probc{\obs}{\tar > \thresh_{\iterIdx-1}, \data_{\iterIdx-1}}$. The number of hits, $\nhits_\niter^*$, from an agent that fully knows $\bbf$ is the same but for generating conditioned on $\bbf$ with $\obs_\iterIdx \sim \probc{\obs}{\tar > \thresh_{\iterIdx-1}, \bbf}$.
Using this definition and \autoref{thr:kl-bound}, we prove the following.
\begin{restatable}{corollary}{hitscor}
    \label{thm:hits}
    Under the settings in \autoref{thr:kl-bound}, we also have that:
    \begin{equation}
        \expectation[ |\nhits_\niter - \nhits_\niter^*| ] \in \bigo(\sqrt{\niter})\,.
    \end{equation}
\end{restatable}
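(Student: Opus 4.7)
The plan is to reduce the expected hit discrepancy to a sum of per-round divergences between the VSD sampling distribution and the oracle distribution, and then invoke \autoref{thr:kl-bound} on each round. First, I would decompose $\nhits_\niter - \nhits_\niter^* = \sum_{t=1}^\niter (h_\iterIdx - h_\iterIdx^*)$ with $h_\iterIdx = \indic{\tar_\iterIdx > \thresh_{\iterIdx-1}}$ for the VSD process and $h_\iterIdx^*$ defined analogously for the oracle, then apply the triangle inequality and Fubini to obtain $\expectation[|\nhits_\niter - \nhits_\niter^*|] \le \sum_{t=1}^\niter \expectation[|h_\iterIdx - h_\iterIdx^*|]$.

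Next, I would fix the common random elements, namely the draw of $\bbf$ from the \gls{gp} prior and (where needed) the measurement noise, so that the only difference between $h_\iterIdx$ and $h_\iterIdx^*$ is the sampling distribution of $\obs_\iterIdx$. Because the two indicators share the same noise mechanism $\tar = \bbf(\obs) + \err$ and differ only through $\obs_\iterIdx$, an optimal coupling bounds each $\expectation[|h_\iterIdx - h_\iterIdx^*|]$ by the expected total variation distance between the posteriors $\probc{\obs}{\tar > \thresh_{\iterIdx-1}, \data_{\iterIdx-1}}$ and $\probc{\obs}{\tar > \thresh_{\iterIdx-1}, \bbf}$. Pinsker's inequality then gives
\begin{equation}
    \expectation[|h_\iterIdx - h_\iterIdx^*|] \le \expectation\!\left[\sqrt{\tfrac{1}{2}\dkl{\probc{\obs}{\tar > \thresh_{\iterIdx-1}, \data_{\iterIdx-1}}}{\probc{\obs}{\tar > \thresh_{\iterIdx-1}, \bbf}}}\right],
\end{equation}
and a truncation argument (using that TV is bounded by $1$) lifts the in-probability rate of \autoref{thr:kl-bound} to an expectation bound of order $\bigo(\iterIdx^{-1/4})$.

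The hard part will be matching the stated $\bigo(\sqrt{\niter})$ aggregate rate: a naïve per-round Pinsker bound of $\bigo(t^{-1/4})$ sums to only $\bigo(\niter^{3/4})$. To sharpen this I would split $\nhits_\niter - \nhits_\niter^*$ into (i) a \emph{bias} term $\sum_t (\expectation[h_\iterIdx\mid\mathcal{F}_{\iterIdx-1}] - \expectation[h_\iterIdx^*\mid \bbf])$ and (ii) two martingale-difference terms for each process around its conditional mean. The martingale terms each contribute $\bigo(\sqrt{\niter})$ by Azuma--Hoeffding with bounded increments in $[0,1]$. For the bias term, I would aggregate via Cauchy--Schwarz: $\sum_\iterIdx |\cdot| \le \sqrt{\niter\sum_\iterIdx \text{TV}_\iterIdx^2} \le \sqrt{\niter\sum_\iterIdx \tfrac{1}{2}\text{KL}_\iterIdx}$, and bound $\sum_\iterIdx \expectation[\text{KL}_\iterIdx] \in \bigo(\sqrt{\niter})$ using \autoref{thr:kl-bound}, which yields a $\bigo(\sqrt{\niter})$ contribution. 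Combining the martingale and bias contributions gives the claimed rate. The main technical obstacle is the bias-aggregation step, both in upgrading the $\bigo_\pMeasure$-rate of \autoref{thr:kl-bound} to an integrable one (likely via boundedness or a truncation on the KL that leverages the prior support), and in justifying the martingale decomposition cleanly under the round-dependent threshold $\thresh_\iterIdx$ and data-dependent sampling distribution.
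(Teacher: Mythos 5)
Your overall reduction---per-round comparison of conditional hit probabilities plus fluctuation terms---is reasonable, and your martingale/Azuma treatment of the fluctuations is, if anything, more explicit than the paper's. The genuine gap is in the bias term: routing the per-round discrepancy through \autoref{thr:kl-bound} and Pinsker cannot deliver the stated rate. Pinsker converts a KL of order $\bigo_\pMeasure(\iterIdx^{-1/2})$ into a total-variation bound of only $\bigo_\pMeasure(\iterIdx^{-1/4})$, and your Cauchy--Schwarz aggregation does not rescue this: with $\sum_{\iterIdx\le\niter} \expectation[\mathrm{KL}_\iterIdx] \in \bigo(\sqrt{\niter})$ you obtain $\sqrt{\niter \sum_\iterIdx \mathrm{KL}_\iterIdx} \in \bigo(\niter^{3/4})$, not the $\bigo(\sqrt{\niter})$ you claim---that final combination is an arithmetic slip. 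No aggregation of a per-round $\iterIdx^{-1/4}$-type bound (triangle inequality or Cauchy--Schwarz with the available growth of $\sum_\iterIdx \mathrm{KL}_\iterIdx$) reaches $\sqrt{\niter}$, so as written your argument proves only $\bigo(\niter^{3/4})$.

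The paper avoids this loss by not using the KL statement at all. It writes $\expectation[\nhits_\niter - \nhits_\niter^*]$ as a sum over rounds and over $\obs\in\obsspace$ of $\probc{\tar>\thresh_\iterIdx}{\obs,\bbf}\,\prob{\obs}\bigl(\lik_\iterIdx(\obs)/\expec{\prob{\obs'}}{\lik_\iterIdx(\obs')} - \lik_\iterIdx^*(\obs)/\expec{\prob{\obs'}}{\lik_\iterIdx^*(\obs')}\bigr)$ and bounds each summand by $|\ldiff_\iterIdx(\obs)|$ divided by a normalizer bounded away from zero. The pointwise PI error $\ldiff_\iterIdx = \lik_\iterIdx - \lik_\iterIdx^*$ is $\bigo_\pMeasure(\iterIdx^{-1/2})$; this is the intermediate ``error bound'' established inside the proof of \autoref{thr:kl-bound} via the GP variance rate $\sigma_\iterIdx^2 \in \bigo(\iterIdx^{-1})$ (\autoref{thr:vsd-variance-rate}), not the KL bound itself. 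Hence the per-round (weighted $L^1$) distance between the two sampling distributions decays like $\iterIdx^{-1/2}$ rather than $\iterIdx^{-1/4}$, and $\sum_{\iterIdx\le\niter}\iterIdx^{-1/2} \in \bigo(\sqrt{\niter})$ by Euler--Maclaurin. To repair your proof, replace the Pinsker step by this direct bound on the difference of normalized distributions in terms of $|\ldiff_\iterIdx|$; your remaining concern about upgrading the in-probability rate to an expectation bound is legitimate but applies equally to the paper's own argument, which is informal on exactly that point.
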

$\expectation[\nhits_\niter]$ is related to the empirical recall measure \eqref{eq:rec} up to the normalization constant, but it does not account for repeated hits, which are treated as false discoveries (false positives) under recall. Lastly, for \gls{nn}-based \glspl{cpe}, we obtain convergence rates dependent on the spectrum of the \gls{ntk} (\autoref{thr:ntk-kl}), which we instantiate for infinitely wide ReLU networks below. For the full results and proofs, please see \autoref{app:theory} for the GP-based analysis and \autoref{app:ntk-theory} for the \gls{ntk} results.
\begin{restatable}{corollary}{relukl}
\label{thr:relu-kl}
Let $\classifier_\mparam$ be modeled via a fully connected ReLU network. Then, under assumptions on identifiability and sampling (\ref{a:rkhs} to \ref{a:learning-rate}), in the infinite-width limit, \gls{vsd} with \gls{cpe}-\gls{pi} achieves:
\begin{equation}
    \dkl{\probc{\obs}{\tar > \thresh_\iterIdx, \data_\iterIdx}}{\probc{\obs}{\tar > \thresh_\iterIdx, \bbf}} \in \widetilde{\bigo}_\pMeasure\left(\iterIdx^{-\frac{1}{2(M + 1)}}\right).
\end{equation}
\end{restatable}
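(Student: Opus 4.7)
The plan is to treat Corollary~\ref{thr:relu-kl} as a direct specialization of the more general NTK-based bound in Theorem~\ref{thr:ntk-kl} to the case where the classifier $\classifier_\mparam$ is a fully connected ReLU network. Theorem~\ref{thr:ntk-kl} (stated in \autoref{app:ntk-theory}) gives a KL convergence rate whose exponent depends on the eigenvalue decay of the NTK associated with the chosen architecture, in the same way that \autoref{thr:kl-bound} depends on the rate of posterior variance contraction of the GP. So the only architecture-specific work is to plug in the eigenvalue decay of the fully connected ReLU NTK on $\obsspace \subseteq \real^M$ and track how this decay enters the general bound.

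First I would restate Theorem~\ref{thr:ntk-kl} in the form
\begin{equation*}
    \dkl{\probc{\obs}{\tar > \thresh_\iterIdx, \data_\iterIdx}}{\probc{\obs}{\tar > \thresh_\iterIdx, \bbf}} \in \widetilde{\bigo}_\pMeasure(\iterIdx^{-\beta})
\end{equation*}
for an exponent $\beta$ expressed through the effective dimension / spectral decay of the NTK, mirroring the role that the GP posterior variance contraction rate $\bigo(\iterIdx^{-1})$ played in the proof of \autoref{thr:kl-bound} via \autoref{thr:gp-variance-convergence}. Then I would invoke the well-known eigenvalue decay for the fully connected ReLU NTK on the sphere (e.g., Bietti \& Bach, Basri et al., Bach 2017): the $i$-th eigenvalue behaves polynomially with an exponent governed by the input dimension $M$. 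This decay translates into a \emph{source condition} / capacity parameter that, once substituted into the general NTK bound, produces the exponent $\frac{1}{2(M+1)}$.

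The proof would therefore proceed in three short steps. (i) Verify that Assumptions \ref{a:rkhs}--\ref{a:learning-rate} are compatible with the ReLU NTK: the RKHS identifiability condition holds because the ReLU NTK is universal on the sphere, the sampling assumption is satisfied by our active-generation sampler under the same conditions as in \autoref{thr:kl-bound}, and the learning-rate assumption is standard for NTK-regime training. (ii) Substitute the polynomial eigenvalue decay of the ReLU NTK into the bound of Theorem~\ref{thr:ntk-kl} and simplify to obtain the exponent $1/(2(M+1))$; logarithmic factors from covering-number or regularization-balancing arguments are absorbed into $\widetilde{\bigo}_\pMeasure$. (iii) Carry the bound through the same probabilistic-convergence machinery used in \autoref{thr:kl-bound} (Bayes-rule expansion of the level-set posterior, Pinsker-type control) to obtain the stated KL rate.

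The main obstacle is step (ii): balancing the spectral decay exponent with the bias/variance trade-off inherent in the NTK regression bound so that the final exponent comes out exactly as $1/(2(M+1))$ rather than a weaker rate. This requires being careful about (a) whether the eigenvalue decay used is for the kernel integral operator under $\probc{\obs}{\data_0}$ versus a uniform reference measure, (b) the rate at which the effective early-stopping time $\tstop$ or ridge parameter $\regfactor$ must be chosen relative to $\iterIdx$, and (c) the propagation of the $L^2$ excess risk of $\classifier_\mparam$ through the log ratio $\log \cpe{\mparam}{\obs} - \log \probc{\tar > \thresh}{\obs, \bbf}$ into the KL. Everything else is bookkeeping: identifying $M$ as the effective dimension of the input and substituting into a formula already proved in \autoref{app:ntk-theory}.
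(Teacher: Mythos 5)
Your overall strategy---specializing \autoref{thr:ntk-kl} to the ReLU architecture by feeding in spectral information about its NTK---is the same as the paper's, but the step you yourself flag as ``the main obstacle'' (your step (ii)) is precisely the step the paper completes, and your plan for it heads down a harder and unnecessary road. \autoref{thr:ntk-kl} already gives the KL rate in the closed form $\bigo_\pMeasure\!\left(\sqrt{\mig_\iterIdx/\iterIdx}\right)$, where $\mig_\iterIdx$ is the maximum information gain \eqref{eq:mig} of a GP whose covariance is the NTK and whose effective noise level is the \emph{fixed} constant $\regfactor = e^{-\tstop}/(\tstop\lrate_*)$ determined by early stopping and \autoref{a:learning-rate}. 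Consequently there is no bias/variance trade-off to balance, no tuning of $\tstop$ or $\regfactor$ against $\iterIdx$, and no source-condition computation: the only missing ingredient is a bound on $\mig_\iterIdx$ for the fully connected ReLU NTK. The paper obtains it by invoking the equivalence (Chen and Xu, 2021) between the RKHS of the ReLU NTK and that of the Laplace kernel $\gpkernel(\obs,\obs') = \exp(-\constant\norm{\obs-\obs'})$, i.e.\ a Mat\'ern kernel with smoothness $1/2$, whose information gain satisfies $\mig_\iterIdx \in \widetilde{\bigo}\!\left(\iterIdx^{M/(M+1)}\right)$ with the domain dimension equal to the sequence length $M$ (Vakili et al.). Substituting gives $\sqrt{\mig_\iterIdx/\iterIdx} \in \widetilde{\bigo}\!\left(\iterIdx^{-\frac{1}{2(M+1)}}\right)$, which is the claimed rate.

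As written, your proposal leaves a genuine gap at exactly this point: you gesture at eigenvalue-decay results for the ReLU NTK and a capacity/source-condition argument, but you neither identify the specific spectral fact (the Laplace/Mat\'ern-$1/2$ equivalence and its $\widetilde{\bigo}(\iterIdx^{M/(M+1)})$ information gain) nor carry out the substitution, so the exponent $1/(2(M+1))$ is treated as a target to be matched rather than derived. Eigendecay can indeed be converted into information-gain bounds (that is the route behind the bound the paper cites), so your plan is salvageable, but your worries in (a)--(b) about the reference measure and about scaling $\regfactor$ or $\tstop$ with $\iterIdx$ import a kernel-regression excess-risk analysis that the paper's framework has already replaced by the fixed-regularization confidence bound of \autoref{thr:ntk-ucb}; similarly, your step (iii) is not needed in the paper's form, since the KL is controlled by reusing the ratio decomposition from the proof of \autoref{thr:kl-bound} inside \autoref{thr:ntk-kl}, not by a Pinsker-type argument.
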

This result finally indicates that, when equipped with flexible \gls{nn}-based \glspl{cpe}, \gls{vsd} is also capable of recovering the target distribution for arbitrary sequence lengths in combinatorial problems.

\section{Related Work}
\label{sec:related}

We will consider related work firstly in terms of methods that have similar components to \gls{vsd},
then secondly in terms of related problems to our specification of active generation.
\gls{vsd} can be viewed as one of many methods that makes use of the variational bound \citep{staines2013optimization},
\begin{align}
    \max_\obs \bbf(\obs)
        \geq \max_\qparam \expec{\qrobc{\obs}{\qparam}}{\bbf(\obs)}.%
    \label{eq:maxbound}%
\end{align}%
The maximum is always greater than or equal to the expected value of a random variable. This bound is useful for \acrfull{bbo} of $\bbf$, and becomes tight if $\qrobc{\obs}{\qparam} \to \delta(\obs^*)$. See \autoref{app:vsdbound} for more detail and \gls{vsd}'s relation to \gls{bo}. Other well known methods that make use of this bound are \gls{nes}~\citep{wierstra2014natural}, \gls{vo}~\citep{staines2013optimization, bird2018stochastic}, \gls{eda}~\citep{larranaga2001estimation, brookes2020view}, and \gls{bopr}~\citep{daulton2022bayesian}. For learning the parameters of the variational distribution, $\qparam$, they variously make use of maximum likelihood estimation or the score function gradient estimator (REINFORCE) \citep{williams1992simple}.
Algorithms that explicitly modify \autoref{eq:maxbound} to stop the collapse of $\qrobc{\obs}{\qparam}$ to a point mass for batch design include \gls{dbas}~\citep{brookes2018design} and \gls{cbas}~\citep{brookes2019conditioning}. They use fixed samples $\obs^{(s)}$ from $\qrobc{\obs}{\qparam^*_{t-1}}$ for approximating the expectation, and then optimize $\qparam$ using a weighted maximum-likelihood or variational style procedure.
Though \gls{dbas} and \gls{cbas} were formulated for offline (non-sequential) tasks, they have often been used in a sequential setting~\citep{ren2022proximal}.
We can take a unifying view of algorithms that use a surrogate model for $\bbf$ by recognizing the general gradient estimator,
\begin{align}
    \expec{\qrobc{\obs}{\qparam'}}{w(\obs) \nabla_\qparam \log \qrobc{\obs}{\qparam}}
    \label{eq:eda_grads}.
\end{align}
where we give each component in \autoref{tab:eda}.
For our experiments \gls{bore} has been adapted to discrete $\obsspace$ by using the score function gradient estimator, which we denote by \Gls{bore}$^*$, while \gls{cbas} and \gls{dbas} have been adapted to use a \gls{cpe} -- their original derivations use a \gls{pi} acquisition function.

\begin{table}[tb]
    \small
    \centering
    \begin{tabular}{r|c|c|c|c}
    \textbf{Method} & $w(\obs)$ & $\qparam'$ & Fixed $\obs^{(s)} \sim \qrobc{\obs}{\qparam'}$ per round? \\
    \hline
    \gls{vsd} & $\log\cpe{\mparam^*}{\obs} + \log \probc{\obs}{\data_0} - \log \qrobc{\obs}
        {\qparam}$ & $\qparam$ & No (REINFORCE)\\
    \gls{cbas} & $\cpe{\mparam^*}{\obs} \probc{\obs}{\data_0} / \qrobc{\obs}
        {\qparam^*_{t-1}}$ & $\qparam^*_{t-1}$ & Yes (importance Monte Carlo) \\
    \gls{dbas} & $\cpe{\mparam^*}{\obs}$ & $\qparam^*_{t-1}$ & Yes (Monte Carlo) \\
    \gls{bore}$^*$ & $\cpe{\mparam^*}{\obs}$ & $\qparam$ & No (REINFORCE)\\
    \gls{bopr} & $\acqfn{\obs, \data_N}{}$ & $\qparam$ & No (REINFORCE)
    \end{tabular}
    \caption{How related methods can be adapted from \autoref{eq:eda_grads}. \gls{vsd}, \gls{cbas} and \gls{dbas} may also use a cumulative distribution representation of $\acqfn{\obs, \data_N, \thresh}{\textrm{PI}}$ in place of $\cpe{\mparam^*}{\obs}$.}
    \label{tab:eda}
    \vspace{-1em}
\end{table}

A number of finite horizon methods have been applied to biological sequence \gls{bbo} tasks, such as Amortized \gls{bo} \citep{swersky2020amortized}, GFlowNets~\citep{jain2022biological}, and the reinforcement learning based DynaPPO~\citep{angermueller2019model}. \Gls{lso}-like methods \citep{gomez2018automatic, tripp2020sample, stanton2022accelerating, gruver2023protein} tackle optimization of sequences by encoding them into a continuous latent space within which candidate optimization or generation takes place.
Selected candidates are decoded back into sequences before black box evaluation; see \citet{gonzalez2024survey} for a comprehensive survey. \Gls{vsd} does not require a latent space nor an encoder, and as such can be seen as an amortized variant of probabilistic reparameterisation methods \citep{daulton2022bayesian} or continuous relaxations \citep{michael2024continuous}.  Heuristic stochastic search methods such as AdaLead~\citep{sinai2020adalead} and \gls{pex}~\citep{ren2022proximal} have also demonstrated strong empirical performance on these tasks. We compare the properties of the most relevant methods to our problem in \autoref{tab:compare}.

In contrast to just finding the maximum using BBO, active generation considers another problem -- generating samples from a rare set of feasible solutions. Generation methods that estimate the super level-set distribution, $\probc{\obs}{\tar > \thresh}$, include \gls{cbas}, which optimizes the forward \gls{kl} divergence, $\dkl{\probc{\obs}{\tar > \thresh}}{\qrobc{\obs}{\phi}}$ using importance weighted cross entropy estimation~\citep{rubinstein1999cross}. Batch-\gls{bore}~\citep{oliveira2022batch} also optimizes the reverse \gls{kl} divergence and uses \gls{cpe}, but with Stein variational inference~\citep{liu2016stein} for diverse batch candidates (with a continuous relaxation for discrete variables).
There is a rich literature on the related task of active learning and \gls{bo} for \gls{lse} \citep{bryan2005active, gotovos2013active, bogunovic2016truncated, zhang2023learning}. However, we focus on learning a generative model of a discrete space.

For active generation \gls{vsd}, \gls{cbas} and \gls{dbas} all use an acquisition function defined in the \emph{original} domain, $\obsspace$, to weight gradients (see \autoref{eq:eda_grads}) for learning a conditional generative model, from which $\obs_{bt}$ are sampled.
An alternative is to use \emph{guided generation}, that is to train an unconditional generative model, and then have a discriminative model guide (condition) the samples from the unconditional model at test time. This plug-and-play of a discriminative model has shown promise for controlled image and text generation of pre-trained models~\citep{nguyen2017plug, dathathri2020plug, li2022diffusion, zhang2023survey}.
LaMBO~\citep{stanton2022accelerating} and LaMBO-2~\citep{gruver2023protein} take a guided generation approach to solve the active generation problem. LaMBO uses an (unconditional) masked language model auto-encoder, and then optimizes sampling from its latent space using an acquisition function as a guide. LaMBO-2 takes a similar approach, but uses a diffusion process as the unconditional model, and modifies a Langevin sampling de-noising process with an acquisition function guide.

\begin{table}[tb]
    \scriptsize
    \centering
    \begin{tabular}{r|c|c|c|c|c|c|c|c|c|c|}
    \textbf{Method} & \rot{Rare $\obs \in \solnspace$ \ref{req:rare}} & \rot{Sequential \ref{req:seq}} & \rot{Discrete $\obsspace$ \ref{req:disc}} &\rot{Batch $\{\obs_{bt}\}^B_{b=1}$ \ref{req:batch}}& \rot{Generative $\qrobc{\obs}{\qparam}$ \ref{req:gen}} & \rot{Guaranteed \ref{desi:guar}} & \rot{Gradient descent \ref{desi:gd}} & \rot{Scalable \ref{desi:scale}} & \rot{General acq./reward fn.} & \rot{Amortization} \\
    \hline
    \gls{bopr}~\citep{daulton2022bayesian}& \xmark & \cmark & \cmark & \xmark & -- & \cmark & \cmark & \xmark & \cmark & -- \\
    \gls{bore}~\citep{tiao2021bore}& \xmark & \cmark & -- & \xmark & -- & \cmark & \cmark & \cmark & \xmark & -- \\
    Batch \gls{bore}~\citep{oliveira2022batch}& \cmark & \cmark & \xmark & \cmark & \cmark & \cmark & \cmark & \cmark & \xmark & \cmark \\
    \gls{dbas}~\citep{brookes2018design}& \cmark & -- & \cmark & \cmark & \cmark & \xmark & \cmark & \cmark & \xmark & \cmark \\
    \gls{cbas}~\citep{brookes2019conditioning}& \cmark & -- & \cmark & \cmark & \cmark & \xmark & \cmark & \cmark & \xmark & \cmark \\
    Amortized BO~\citep{swersky2020amortized}& \xmark & \cmark & \cmark & \cmark & \cmark & \xmark & \cmark & \cmark & \cmark & \cmark \\
    GFlowNets~\citep{jain2022biological}& \xmark & \cmark & \cmark & \cmark & \cmark & \xmark & \cmark & \cmark & \cmark & \cmark \\
    DynaPPO~\citep{angermueller2019model}& \xmark & \cmark & \cmark & \cmark & \cmark & \xmark & \cmark & -- & \cmark & \cmark \\
    AdaLead~\citep{sinai2020adalead}& \xmark & \cmark & \cmark & \cmark & \xmark & \xmark & \xmark & -- & \xmark & \xmark \\
    \gls{pex}~\citep{ren2022proximal}& \xmark & \cmark  & \cmark & \cmark & \xmark & \xmark & \xmark & -- & \xmark & \xmark \\
    GGS~\citep{kirjner2024improving}& \xmark & \xmark & \cmark & \cmark & \cmark & \xmark & \xmark & \xmark & \xmark & \xmark \\
    \gls{lso} e.g.~\citep{tripp2020sample}& \xmark & \cmark & \cmark & \xmark & \cmark & \xmark & \cmark & -- & \cmark & -- \\
    LaMBO~\citep{stanton2022accelerating}& \cmark & \cmark & \cmark & \cmark & \cmark & \xmark & \cmark & -- & \cmark & \cmark \\
    LaMBO-2~\citep{gruver2023protein}& \cmark & \cmark & \cmark & \cmark & \cmark & \xmark & \cmark & \cmark & \cmark & \cmark \\
    \gls{vsd} (ours)  & \cmark & \cmark& \cmark & \cmark & \cmark & \cmark & \cmark & \cmark & \xmark & \cmark \\
    \end{tabular}
    \hspace{0.9cm}
    \caption{Feature table of competing methods: \cmark~has feature, \xmark~does not have feature, -- partially has feature, or requires only simple modification. We follow~\citet{swersky2020amortized} in their definition of amortization referring to the ability to use $\qrobc{\obs}{\qparam^*_{t-1}}$ for warm-starting the optimization of $\qparam_t$.}
    \label{tab:compare}
    \vspace{-2em}
\end{table}

\section{Experiments}
\label{sec:experiments}

Firstly we test our method, \gls{vsd}, on its ability to generate complex, structured designs, $\obs$, in a single round by training it to generate a subset of handwritten digits from flattened MNIST images \citep{lecun1998gradient} in \autoref{sub:cde}.
We then compare \gls{vsd} on two sequence design tasks against existing baseline methods.
The first of these tasks (\autoref{sub:fitnesslandscapes}) is to generate as many unique, fit sequences as possible using the datasets DHFR \citep{papkou2023rugged}, TrpB \citep{johnston2024combinatorially} and TFBIND8 \citep{barrera2016survey}. These datasets contain near complete evaluations of $\obsspace$, and to our knowledge DHFR and TrpB are novel in the machine learning literature. The second (\autoref{sub:bbo}) is a more traditional black-box optimization task of finding the maximum of an unknown function; using datasets AAV \citep{bryant2021deep}, GFP \citep{sarkisyan2016local} and the  biologically inspired Ehrlich functions~\citep{stanton2024closed}.
The corresponding datasets involve $\card{\acidspace} \in \{4,20\}$, $4 \leq M \leq 237$ and $65,000 < \card{\obsspace} < 20^{237}$. We discuss the settings and properties of these datasets in greater detail in \autoref{app:expdet}.
For the biological sequence experiments we run a predetermined number of experimental rounds, $T=10$ or $32$ for the Ehrlich functions. We set the batch size to $B=128$, and use five different seeds for random initialization. We compare against \gls{dbas} \citep{brookes2018design}, \gls{cbas} \citep{brookes2019conditioning}, AdaLead \citep{sinai2020adalead}, and  \gls{pex} \citep{ren2022proximal} -- all of which we have adapted to use a \gls{cpe}, \gls{bore} \citep{tiao2021bore} -- which we have adapted to use the score function gradient estimator, and a na\"ive baseline that uses random samples from the prior, $\probc{\obs}{\data_0}$. To reduce confounding, all methods share the same surrogate model, acquisition functions, priors and variational distributions where possible. We compare against LaMBO-2 \citep{gruver2023protein} on the Ehrlich functions, it uses its own surrogate and generative models.

\subsection{Conditional Generation of Handwritten Digits}
\label{sub:cde}

Our motivating application for \gls{vsd} is to model the space of fit DNA and protein sequences, which are string-representations of complex 3-dimensional structures. In this experiment we aim to demonstrate, by analogy, that \gls{vsd} can generate sequences that represent 2-dimensional structures.
For this task, we have chosen to `unroll' (reverse the order of every odd row, and flatten) down-scaled ($14\times14$ pixel, 8-bit) MNIST \citep{lecun1998gradient} images into sequences, $\obs$, where $M$ = 196 and $\card{\acidspace} = 8$. We then train \gls{lstm} \gls{rnn} and decoder-only causal transformer generative models on the entire MNIST training set by \gls{ml}. These generative distributions are used as the prior models, $\probc{\obs}{\data_0}$, for \gls{vsd} and we detail their form in Appendix \ref{app:var_dist}.
\begin{figure}[bth]
\centering
\subcaptionbox{LSTM Prior\label{sfig:cde_lstm_p}}
    {\includegraphics[width=.24\textwidth]{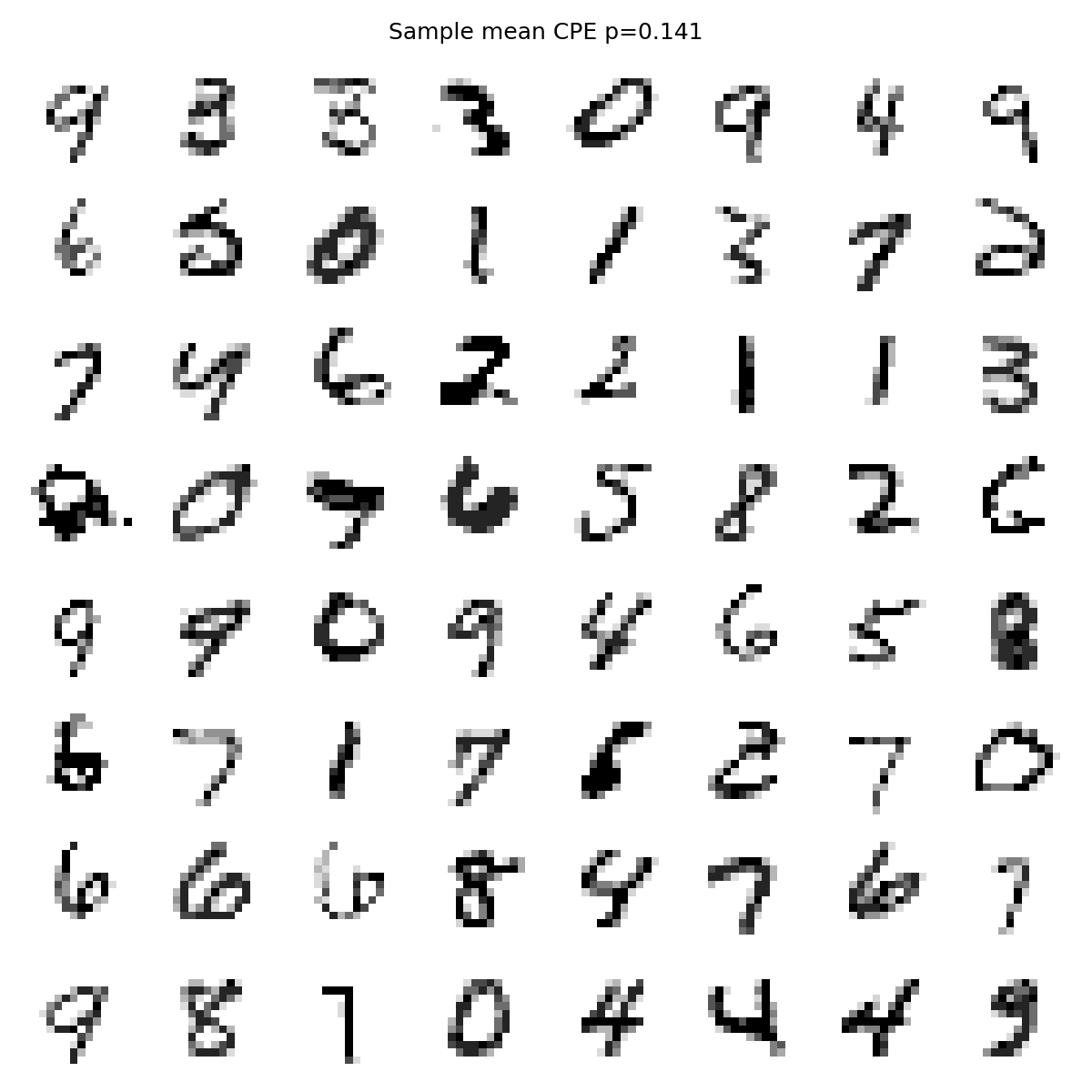}}
\subcaptionbox{Transformer Prior\label{sfig:cde_dtfm_p}}
    {\includegraphics[width=.24\textwidth]{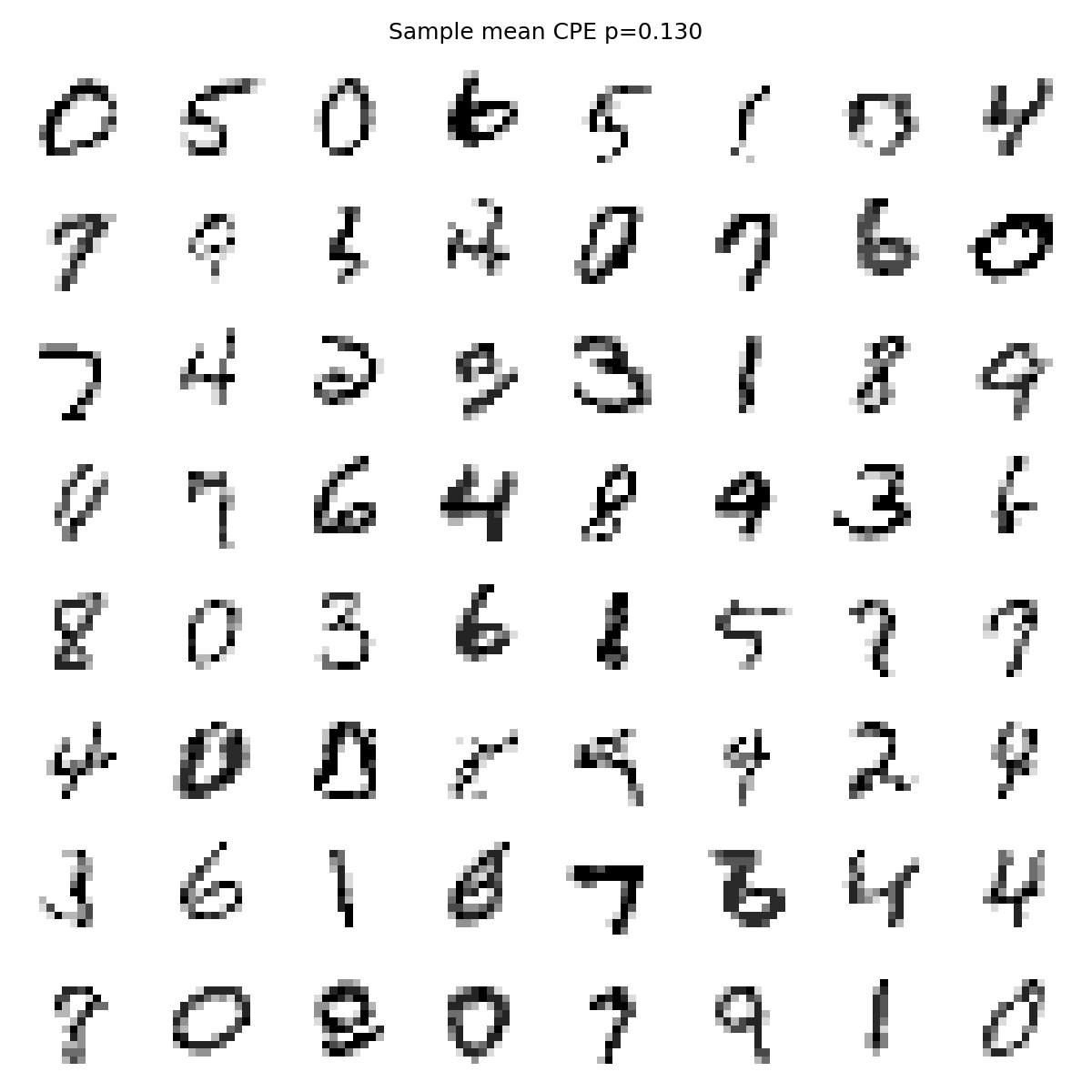}}
\subcaptionbox{LSTM Posterior\label{sfig:cde_lstm}}
    {\includegraphics[width=.24\textwidth]{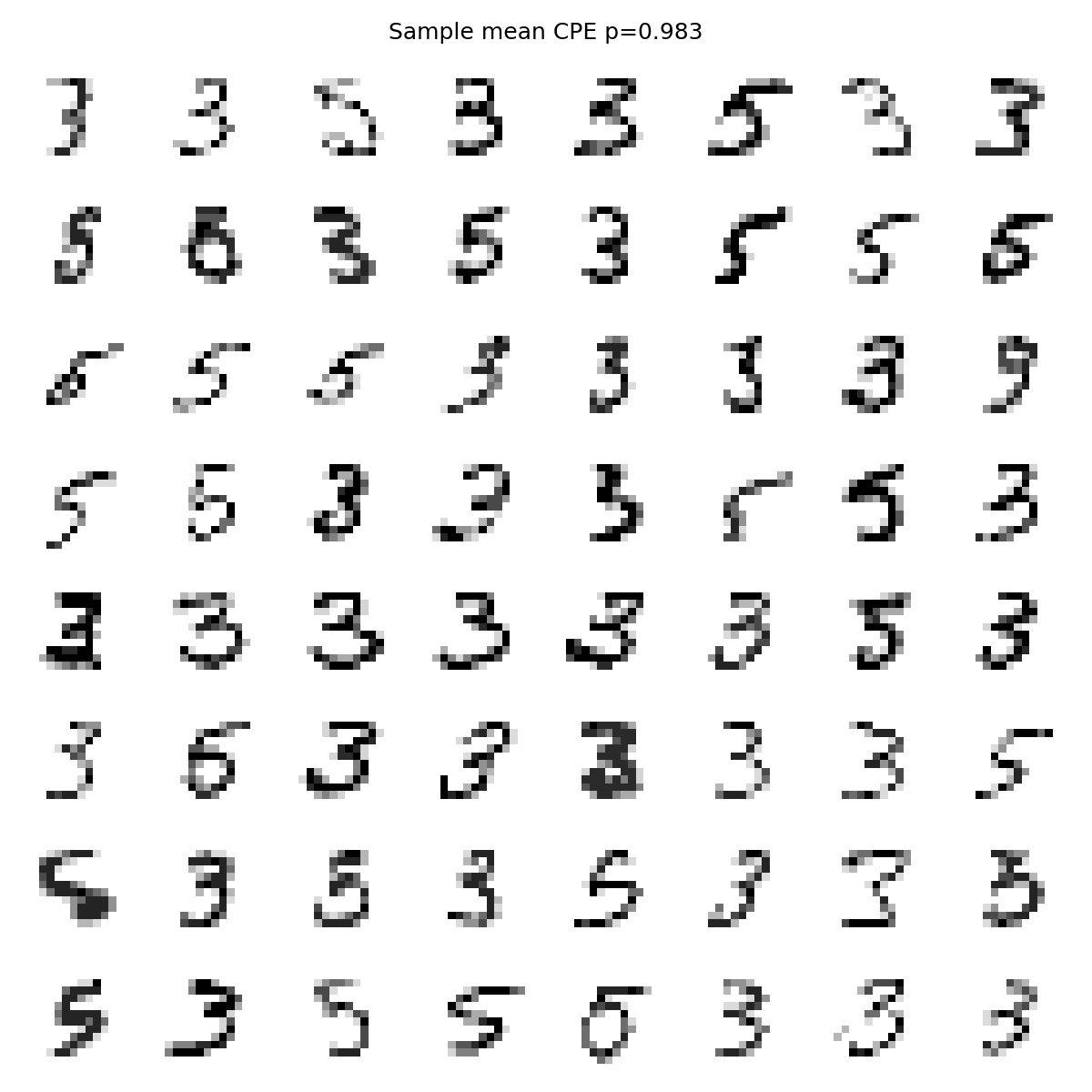}}
\subcaptionbox{Transformer Posterior\label{sfig:cde_dtfm}}
    {\includegraphics[width=.24\textwidth]{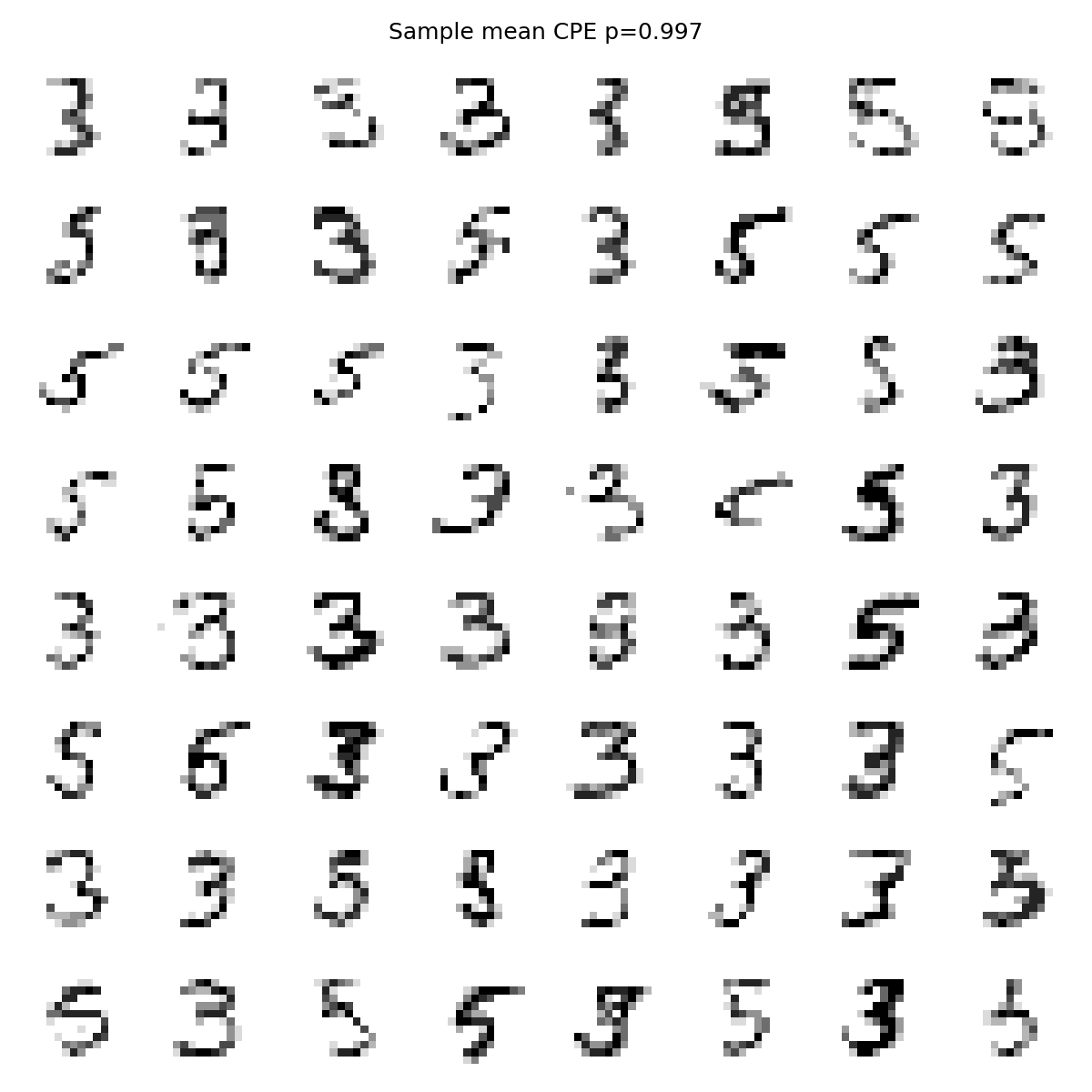}}
\caption{(\subref{sfig:cde_lstm_p}) and (\subref{sfig:cde_dtfm_p}) are samples from the LSTM and transformer priors, respectively. (\subref{sfig:cde_lstm}) and (\subref{sfig:cde_dtfm}) show samples from the \gls{lstm} and transformer \gls{vsd} variational distributions respectively. We also report the samples mean scores according to the \gls{cpe} probabilities.}
\label{fig:cde}
\end{figure}%
The task is then to use \gls{vsd} in one round to estimate the posterior $\probc{\obs}{\tar \in \{3, 5\}}$ using a \gls{cpe} trained on labels $\labl_n = \indic{\tar_n \in \{3, 5\}}$. We use a convolutional architecture for the \gls{cpe} given in Appendix \ref{sub:cpe}, and it achieves a test balanced accuracy score of $\sim99\%$. We parameterize the variational distributions, $\qrobc{\obs}{\qparam}$, in the same way as the priors, and initialize these distribution parameters from the prior distribution parameters. During training with \gls{elbo} the prior distribution parameters are locked, and we run training for 5000 iterations. This is exactly lines 8 and 9 in \autoref{alg:optloop}. Samples are visualized from the resulting variational distributions with the corresponding priors in \autoref{fig:cde}.
We see that the prior \gls{lstm} and transformer are able to generate convincing digits once the sampled sequences are `re-rolled', and that \gls{vsd} is able to effectively refine these distributions, even though it does not have access to any data directly -- only scores from the \gls{cpe}. Both the \gls{lstm} and transformer yield qualitatively similar results, and have similar  mean scores from the \gls{cpe}.

\subsection{Fitness Landscapes}
\label{sub:fitnesslandscapes}

In this setting we wish to find fit sequences $\obs \in \solnspace_\textrm{SLS}$, so we fix $\thresh$ over all rounds. We only consider the combinatorially (near) complete datasets to avoid any pathological behavior from relying on machine learning oracles \citep{surana2024overconfident}. Results are presented in \autoref{fig:fl_res}. The primary measures by which we compare methods are precision, recall and performance,
\begin{align}
    \precision_t &= \frac{1}{\min\{tB, \card{\solnspace}\}}\sum^t_{r=1} \sum^B_{b=1} \indic{\tar_{br} > \thresh}
        \cdot \indic{\obs_{br} \notin \seenspace_{b-1,r}},
    \label{eq:prec} \\
    \recall_t &= \frac{1}{\min\{TB, \card{\solnspace}\}}\sum^t_{r=1} \sum^B_{b=1} \indic{\tar_{br} > \thresh}
        \cdot \indic{\obs_{br} \notin \seenspace_{b-1,r}},
    \label{eq:rec} \\
    \performance_t &= \sum^t_{r=1} \sum^B_{b=1} \tar_{br}
        \cdot \indic{\obs_{br} \notin \seenspace_{b-1,r}}.
    \label{eq:perf}
\end{align}%
Here $\seenspace_{br} \subset \obsspace$ is the set of experimentally queried sequences by the $b$th batch member of the $r$th round, including the initial training set.
These measures are comparable among probabilistic and non probabilistic methods. Precision and recall measure the ability of a method to efficiently explore $\solnspace$, where $\min\{tB, \card{\solnspace}\}$ is the size of the selected set at round $t$ (bounded by the number of good solutions), and $\min\{TB, \card{\solnspace}\}$ is the number of positive elements possible in the experimental budget.
Performance measures the cumulative fitness of the unique batch members, but unlike \citet{jain2022biological} we do not normalize this measure.

\begin{figure}[htb]
\centering
\includegraphics[width=.32\textwidth]{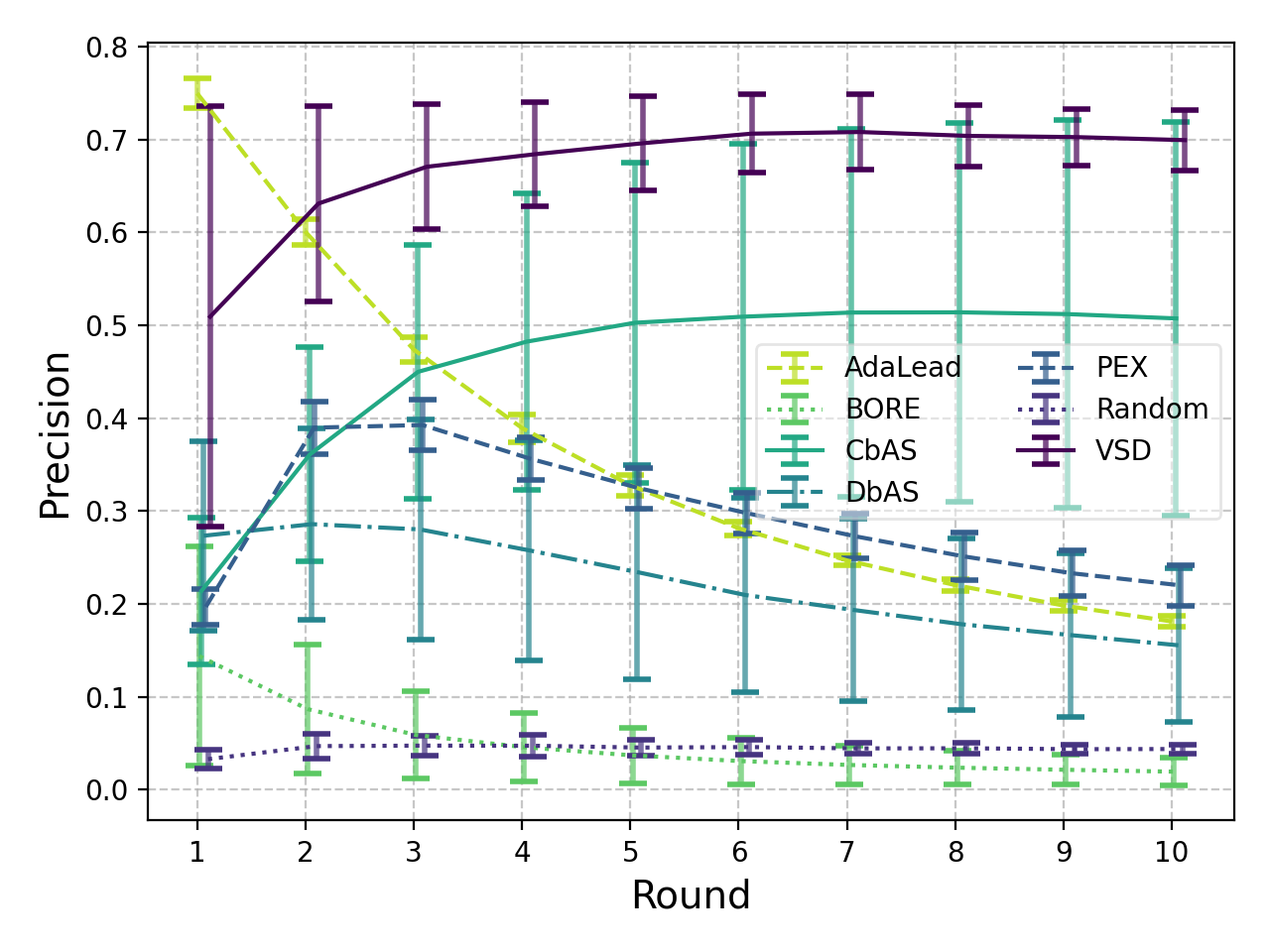}
\includegraphics[width=.32\textwidth]{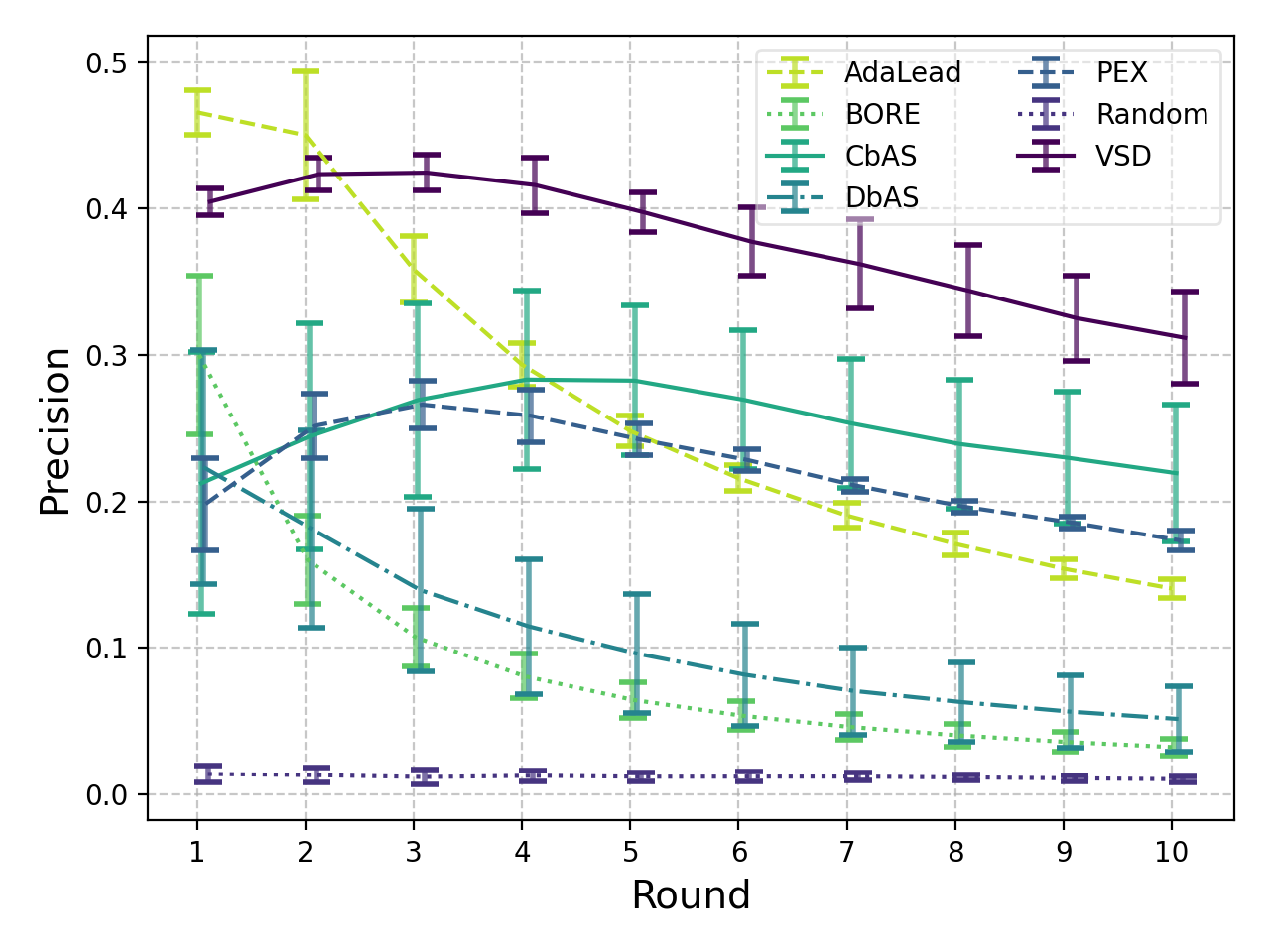}
\includegraphics[width=.32\textwidth]{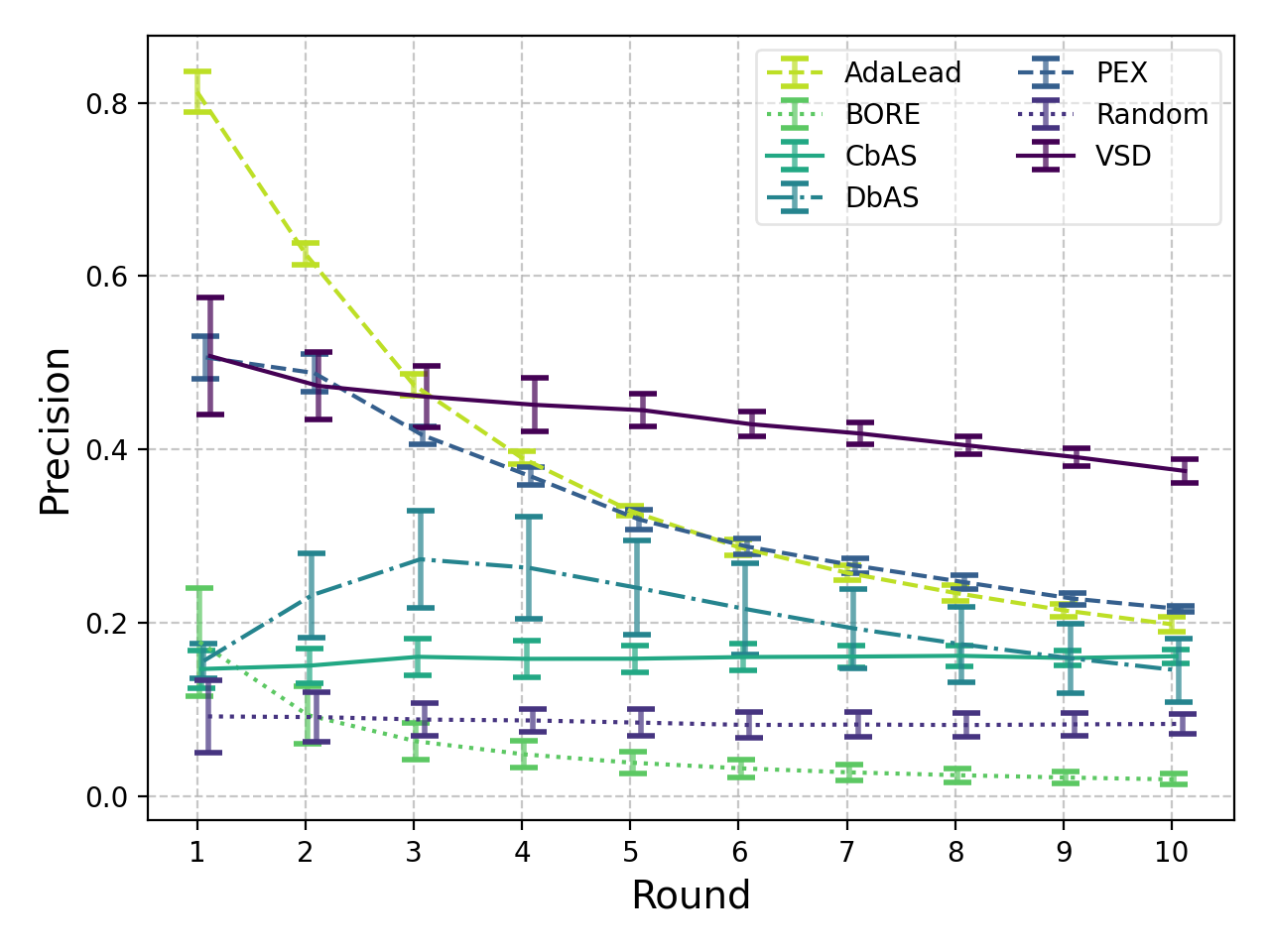} \\
\includegraphics[width=.32\textwidth]{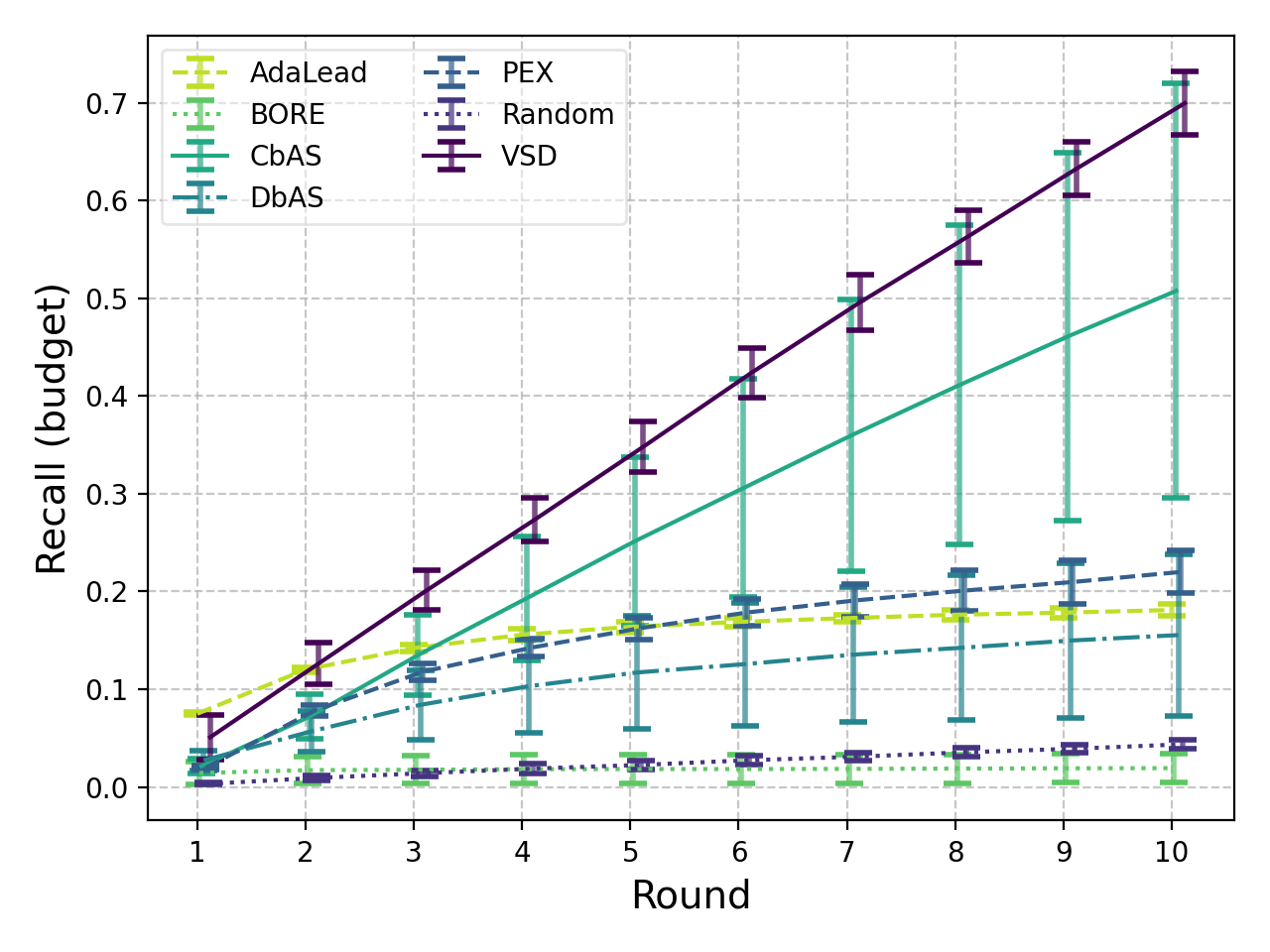}
\includegraphics[width=.32\textwidth]{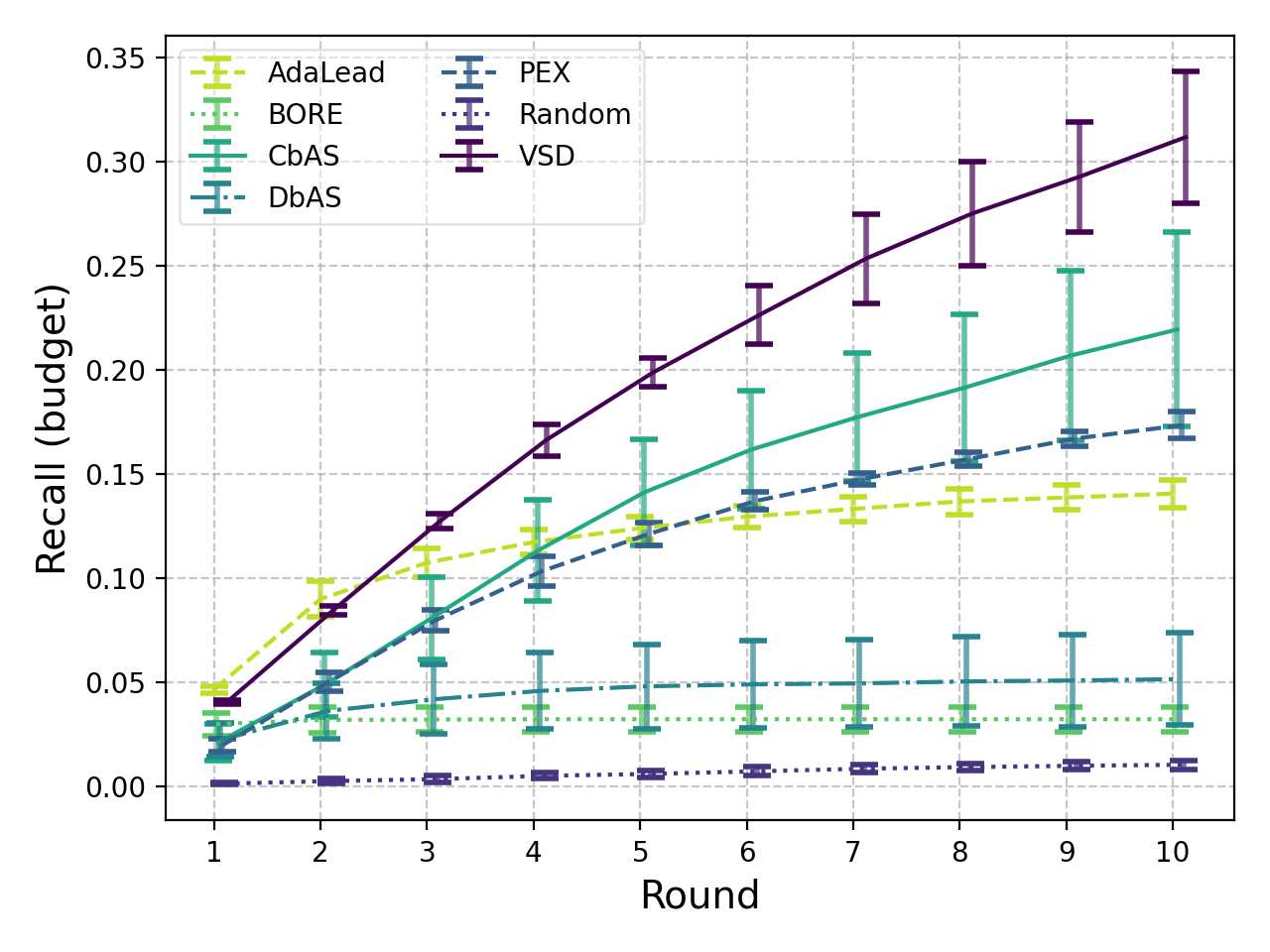}
\includegraphics[width=.32\textwidth]{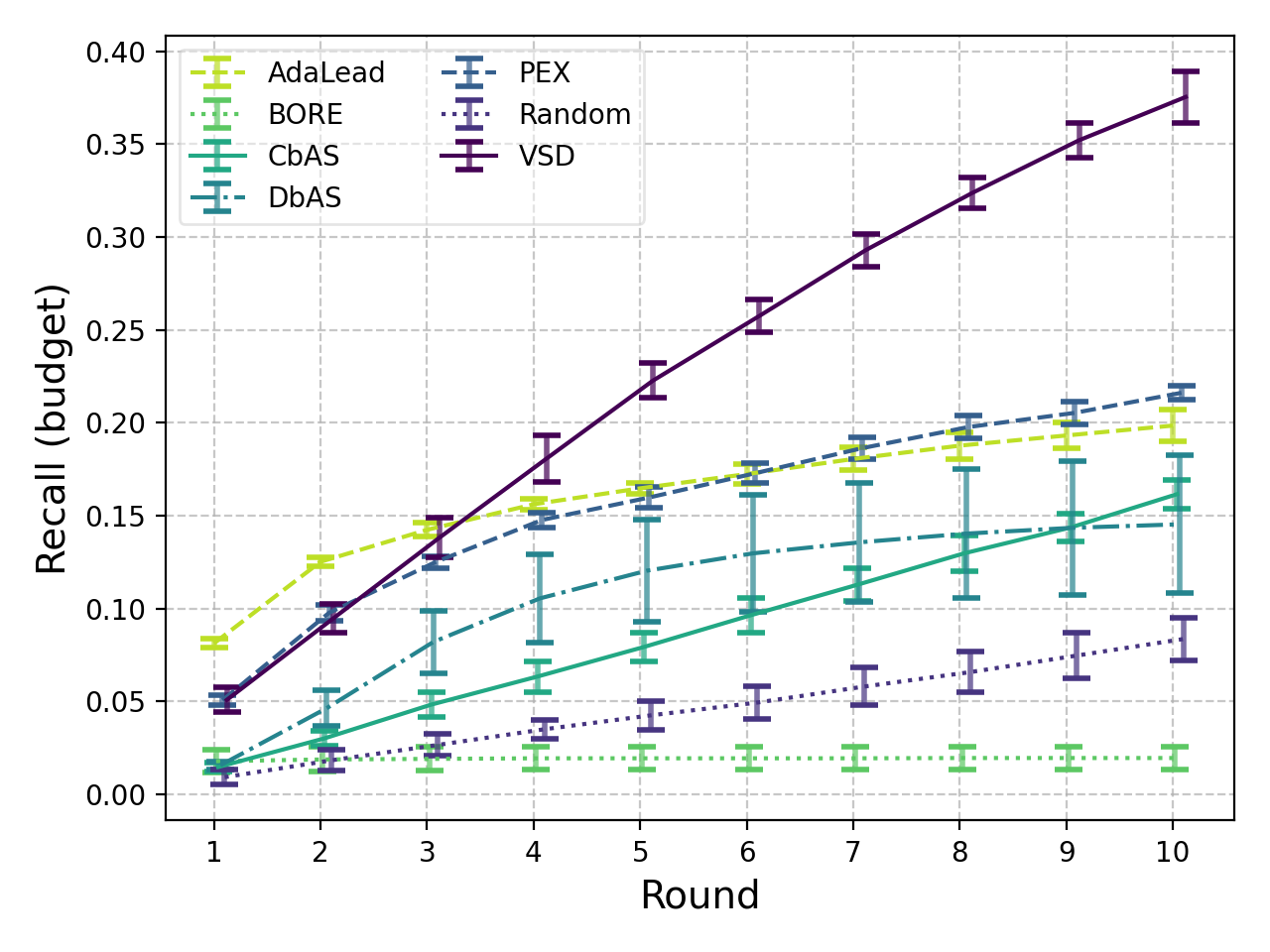} \\
\subcaptionbox{DHFR\label{sfig:dhfr_fl}}
    {\includegraphics[width=.32\textwidth]{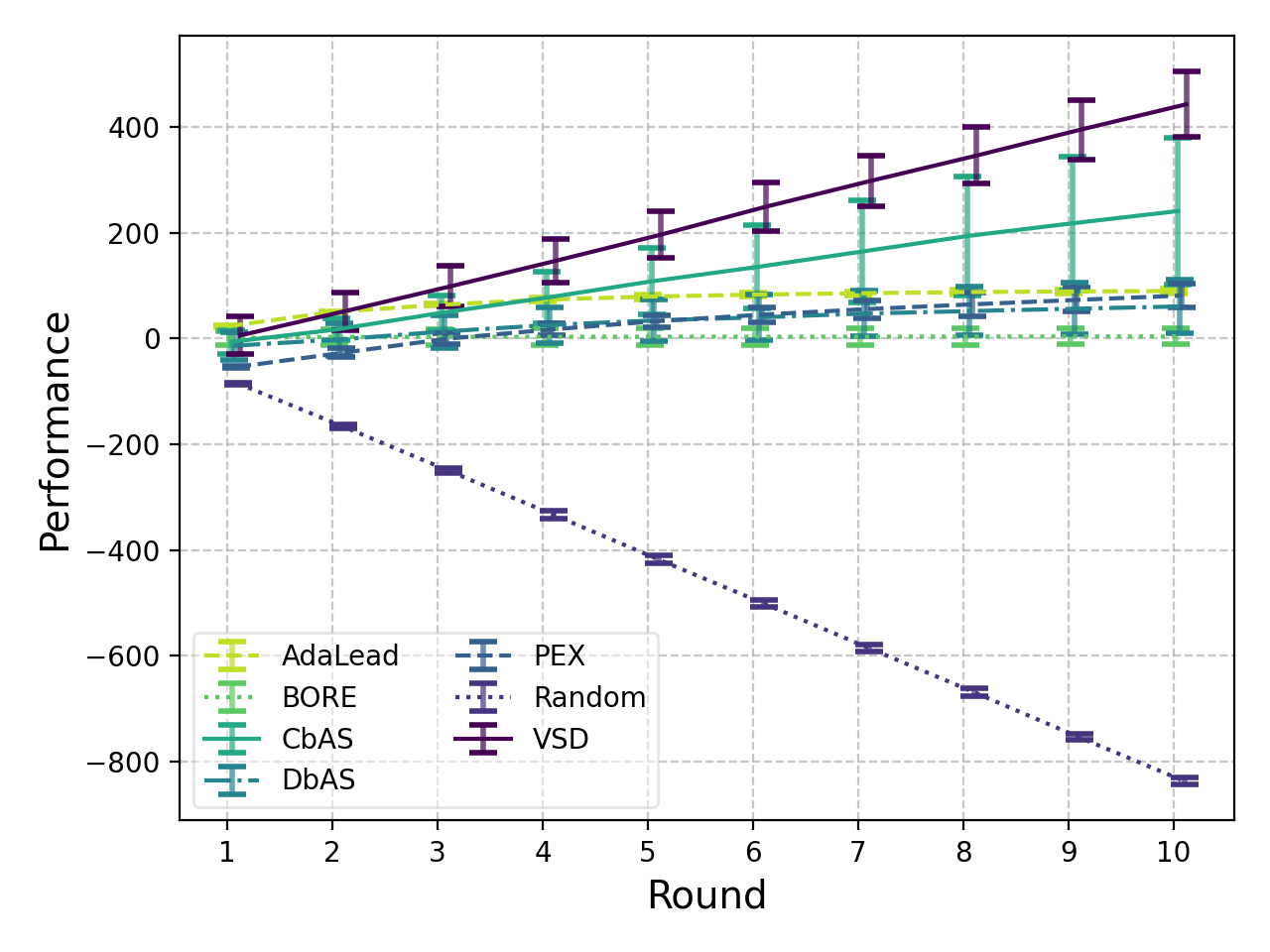}}
\subcaptionbox{TrpB\label{sfig:trpb_fl}}
    {\includegraphics[width=.32\textwidth]{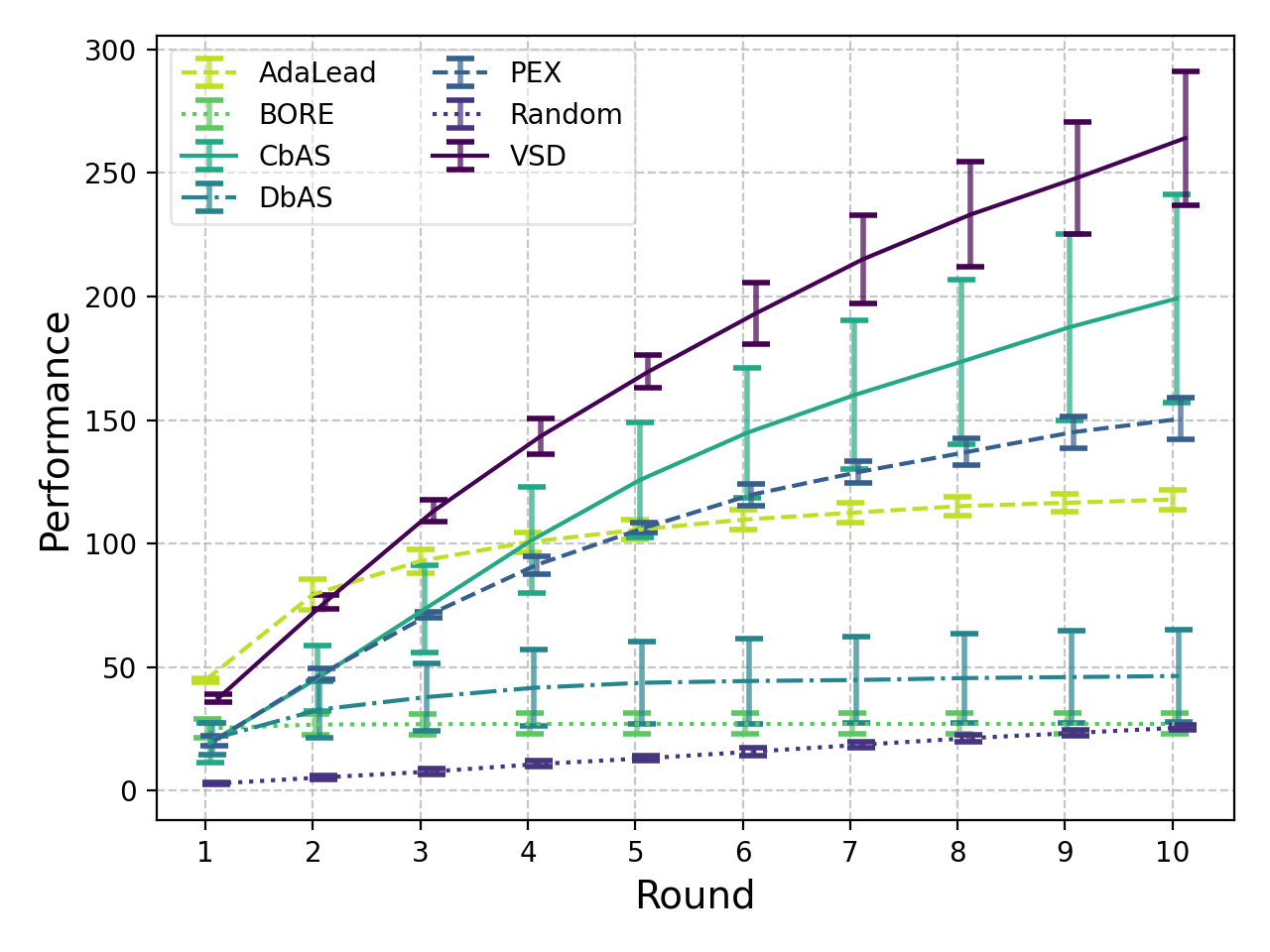}}
\subcaptionbox{TFBIND8\label{sfig:tfbind8_fl}}
    {\includegraphics[width=.32\textwidth]{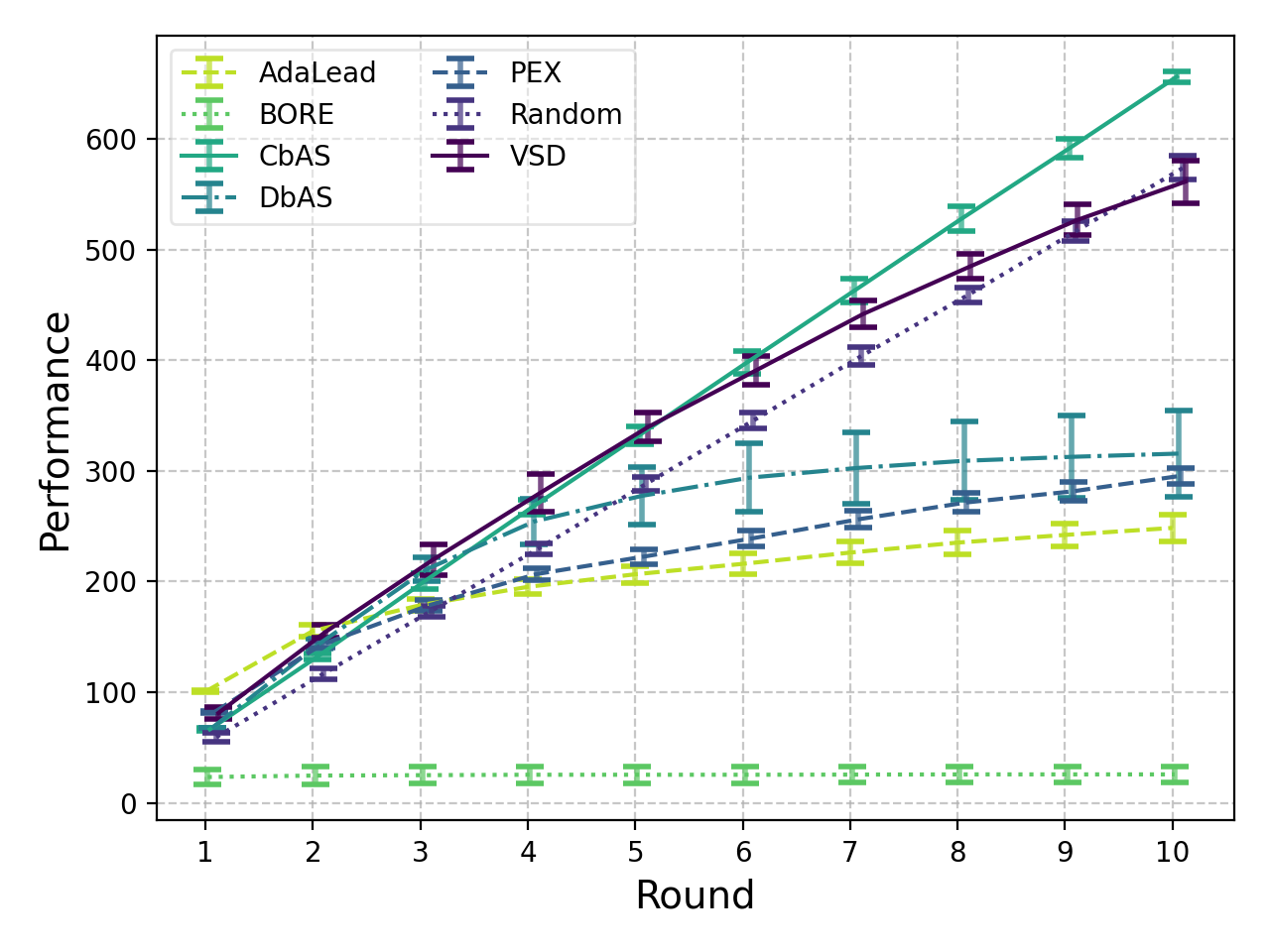}}
\caption{Fitness landscape results. Precision (\autoref{eq:prec}), recall (\autoref{eq:rec}) and performance (\autoref{eq:perf}) -- higher is better -- for the combinatorially (near) complete datasets, DHFR, TrpB and TFBIND8. The random method is implemented by drawing $B$ samples uniformly.
}
\label{fig:fl_res}
\vspace{-1em}
\end{figure}

For exact experimental settings we refer the reader to Appendix \ref{app:fl_settings}. We set $\thresh$ to be that of the wild-type sequences in the DHFR and TrpB datasets, and use $\thresh = 0.75$ for TFBIND8. We find that a uniform prior over sequences, and a mean field variational distribution (\autoref{eq:proposal}) are adequate for these experiments, as is a simple MLP for the \gls{cpe}.
Results are presented in \autoref{fig:fl_res}. \Gls{vsd} is the best performing method by most of the measures. We have found the AdaLead and \gls{pex} evolutionary-search based methods to be effective on lower-dimensional problems (TFBIND8 being the lowest here), however we consistently observe their performance degrading as the dimension of the problem increases. We suspect this is a direct consequence of their random mutation strategies being suited to exploration in low dimensions, but less efficient in higher dimensions compared to the learned generative models employed by \gls{vsd}, \gls{cbas}, and \gls{dbas}. Our modified version of \gls{bore} (which is just the expected log-likelihood component of \autoref{eq:vsdcpe}) performs badly in all cases, and this is a direct consequence of its variational distribution collapsing to a point mass. In a non-batch setting this behavior is not problematic, but shows the importance of the \gls{kl} divergence of \gls{vsd} in this batch setting.
We replicate these experiments in Appendix \ref{sub:gp_fl} using \gls{gp}-\gls{pi}, also backed by our guarantees. In all cases \gls{vsd}'s results remain similar or improve slightly, whereas the other methods results remain similar or degrade. We report on batch diversity scores in Appendix~\ref{sub:diversity}.

\subsection{Black-Box Optimization}
\label{sub:bbo}

In this experiment we use \gls{vsd} on the related task of \gls{bbo} for finding $\solnspace_\textrm{BBO}$.
We set $\thresh_t$ adaptively by specifying it as an empirical quantile $\tilde{Q}^t_\tar$ of the observed target values at round $t$,
\begin{align}
    \thresh_t = \tilde{Q}^t_\tar(\gamma_t{=}p_{t-1}^\eta)
    \label{eq:annealedthresh}
\end{align}
where $p_{t-1}$ is a percentile from the previous round, and $\eta \in [0, 1]$ is an annealing parameter for $\thresh_t$ that trades off exploration and exploitation.
%
Performance is measured by simple regret $\regretsim_t$, which quantifies the fitness gap between the globally optimal design and the best design found,
\begin{align}
    \regretsim_t = \tar^* - \max_\tar \{\tar_{bi}\}^{B,t}_{b=1, i=1}.
    \label{eq:simregret}
\end{align}
Here $\tar^*$ is the fitness value of the globally optimal sequence $\obs^*$.
We use the higher dimensional AAV ($\tar^*{=}19.54$), GFP ($\tar^*{=}4.12$) and Ehrlich functions ($\tar^*{=}1$) datasets/benchmarks to show that \gls{vsd} can scale to higher dimensional problems. $\obsspace$ of AAV and GFP is completely intractable to fully explore experimentally, and so we use a predictive oracle trained on all the original experimental data as the ground-truth black-box function.
We use the CNN-based oracles from \cite{kirjner2024improving} for these experiments. However, we note here that some oracles used in these experiments do not predict well out-of-distribution~\citep{surana2024overconfident}, which limits their real-world applicability. The Ehrlich functions \citep{stanton2024closed} are challenging biologically inspired closed-form simulations that cover all $\obsspace$. We compare against a \gls{ga}, \gls{cbas} and LaMBO-2~\citep{gruver2023protein} for sequences of length $M=\{15,32,64\}$ using the \textsc{poli} and \textsc{poli-baselines} benchmarks and baselines software \citep{gonzalez-duque2024poli}. For these experiments we use CNNs for the \gls{cpe}s -- all experimental settings are in Appendix \ref{app:bbo_settings}.

\begin{figure}[htb]
\centering
\rotatebox{90}{\hspace{1.5cm}GFP}
\includegraphics[width=.32\textwidth]{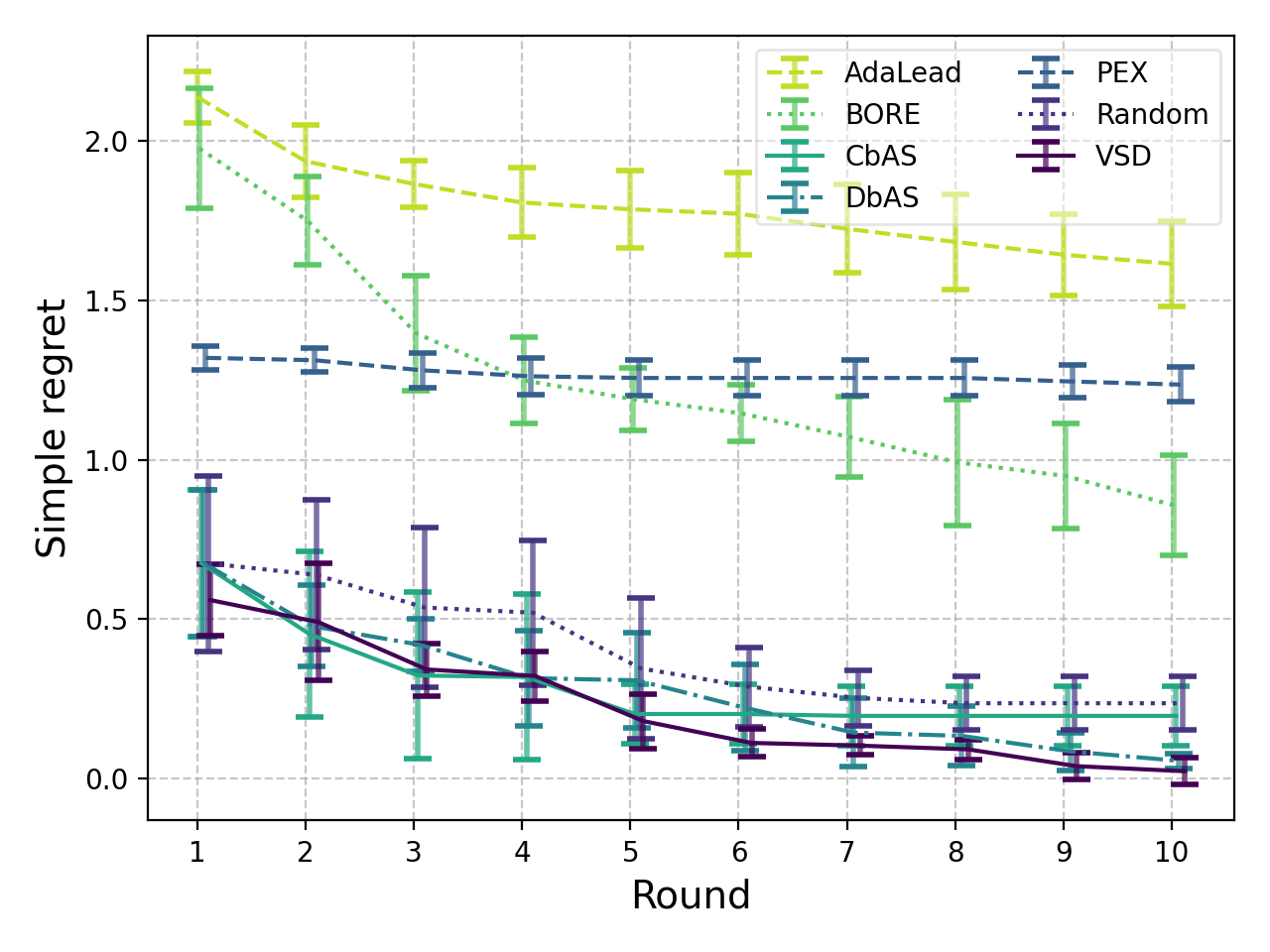}
\includegraphics[width=.32\textwidth]{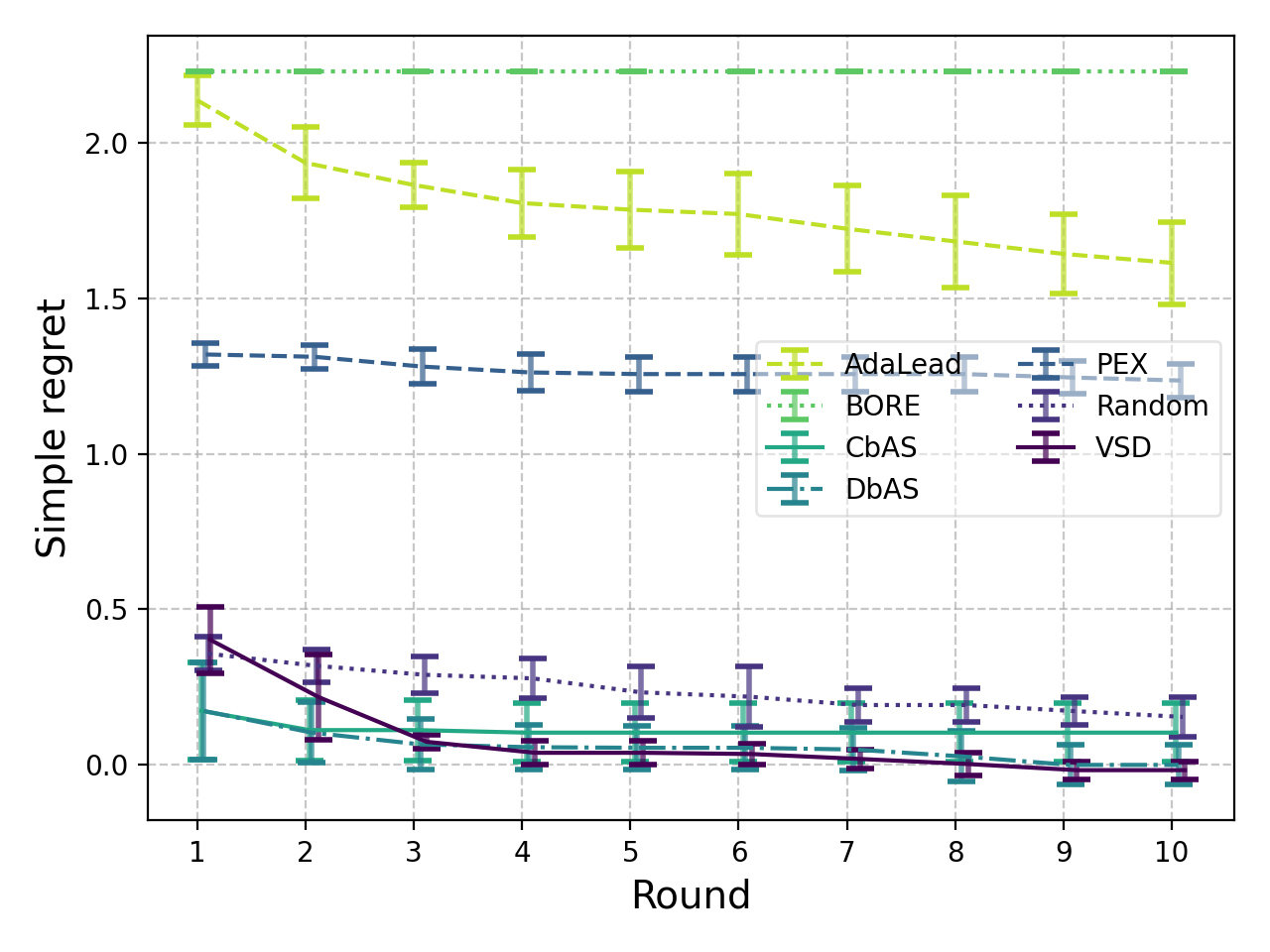}
\includegraphics[width=.32\textwidth]{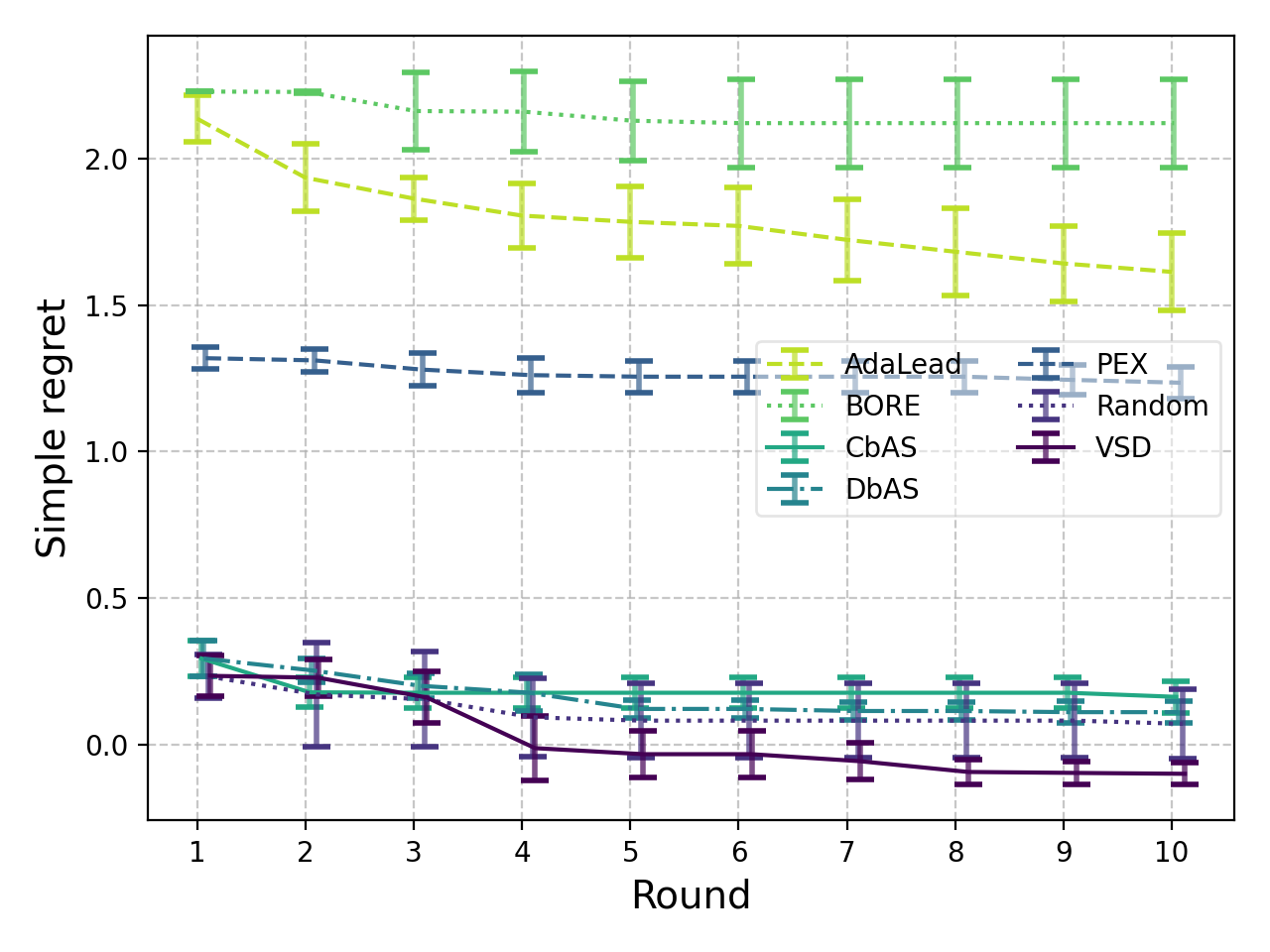} \\
\rotatebox{90}{\hspace{1.5cm}AAV}
\subcaptionbox{Independent\label{sfig:aav_sr_mc}}
    {\includegraphics[width=.32\textwidth]{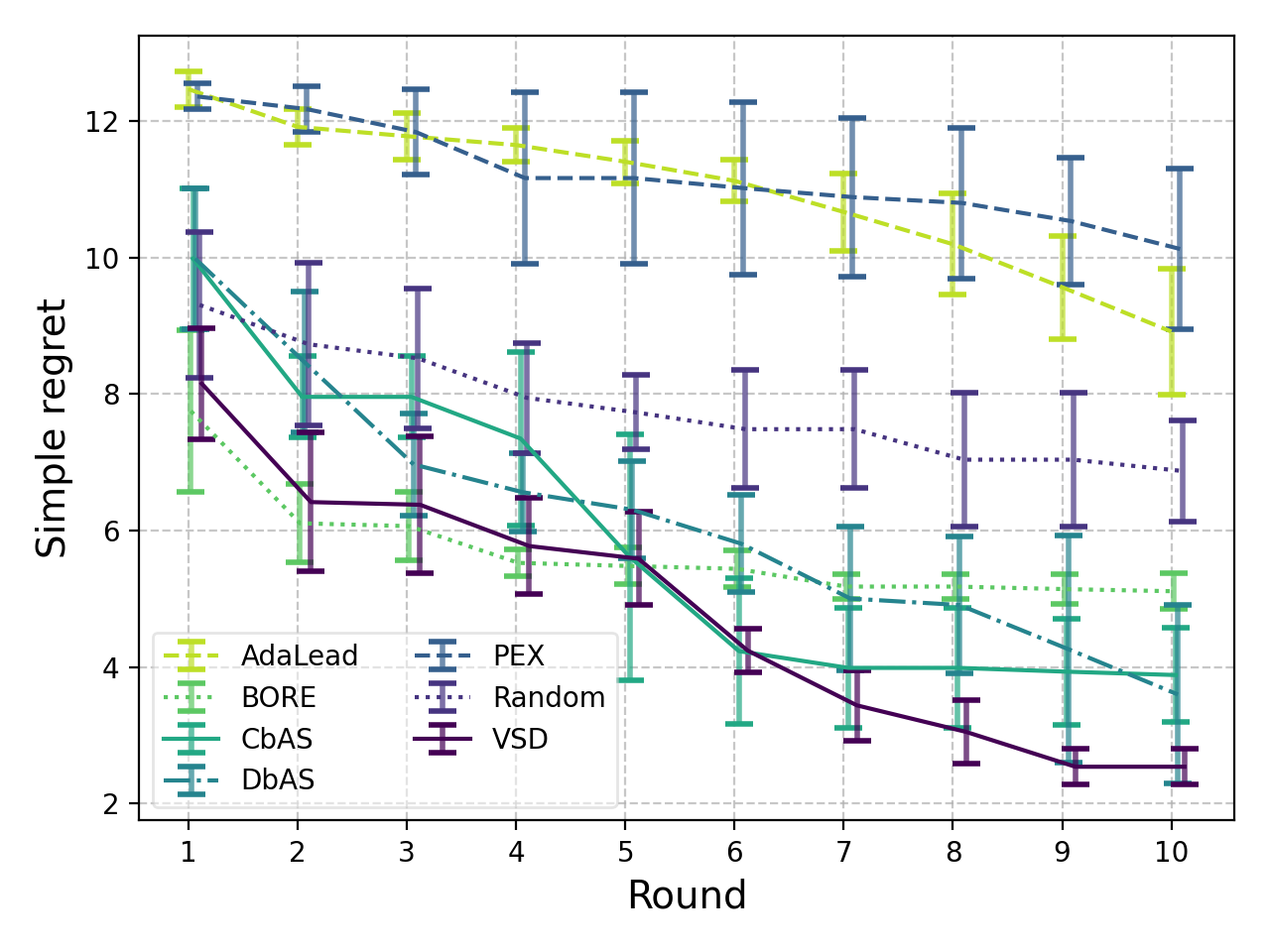}}
\subcaptionbox{LSTM\label{sfig:aav_sr_lstm}}
    {\includegraphics[width=.32\textwidth]{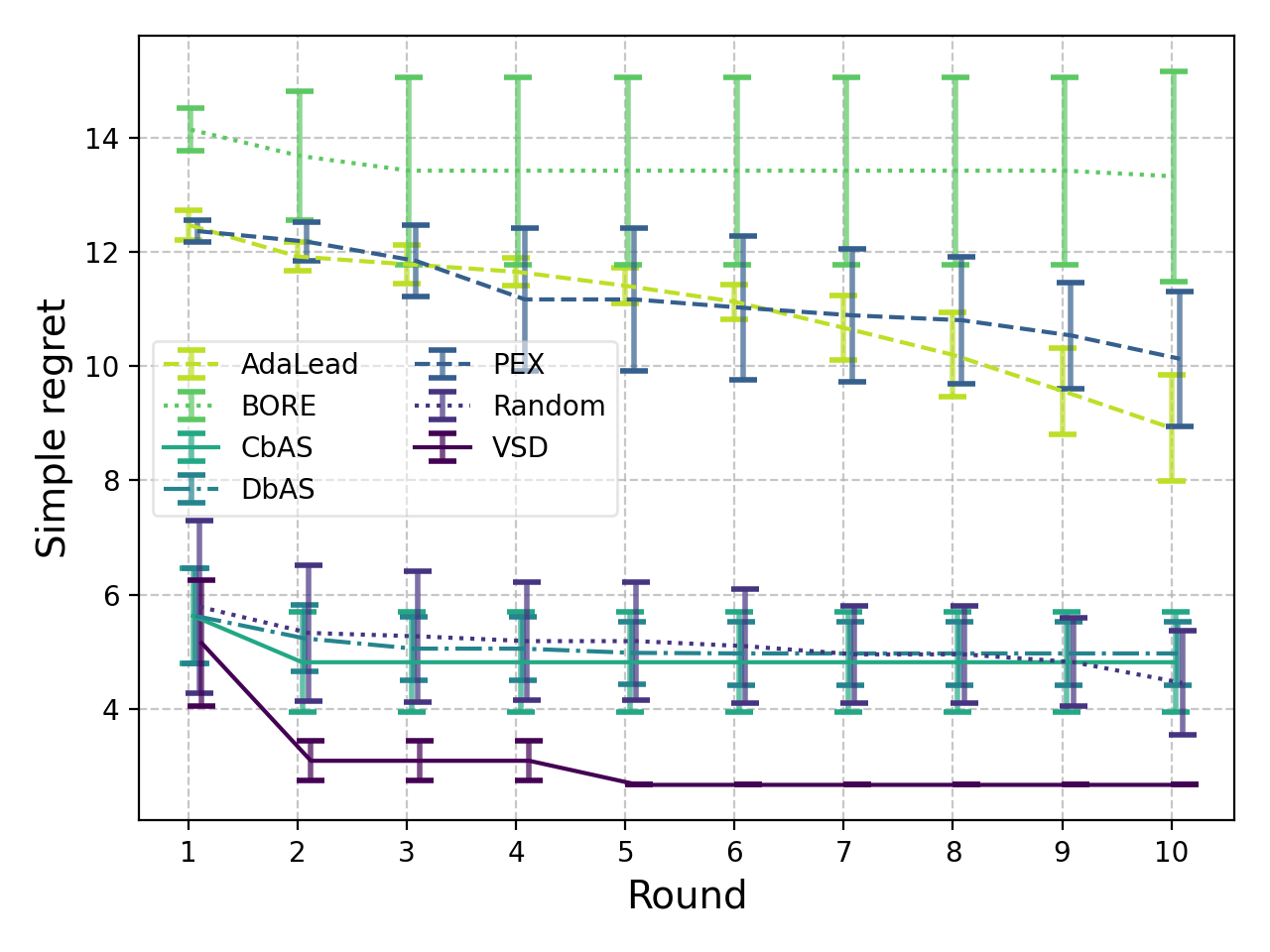}}
\subcaptionbox{Transformer\label{sfig:aav_sr_dtfm}}
    {\includegraphics[width=.32\textwidth]{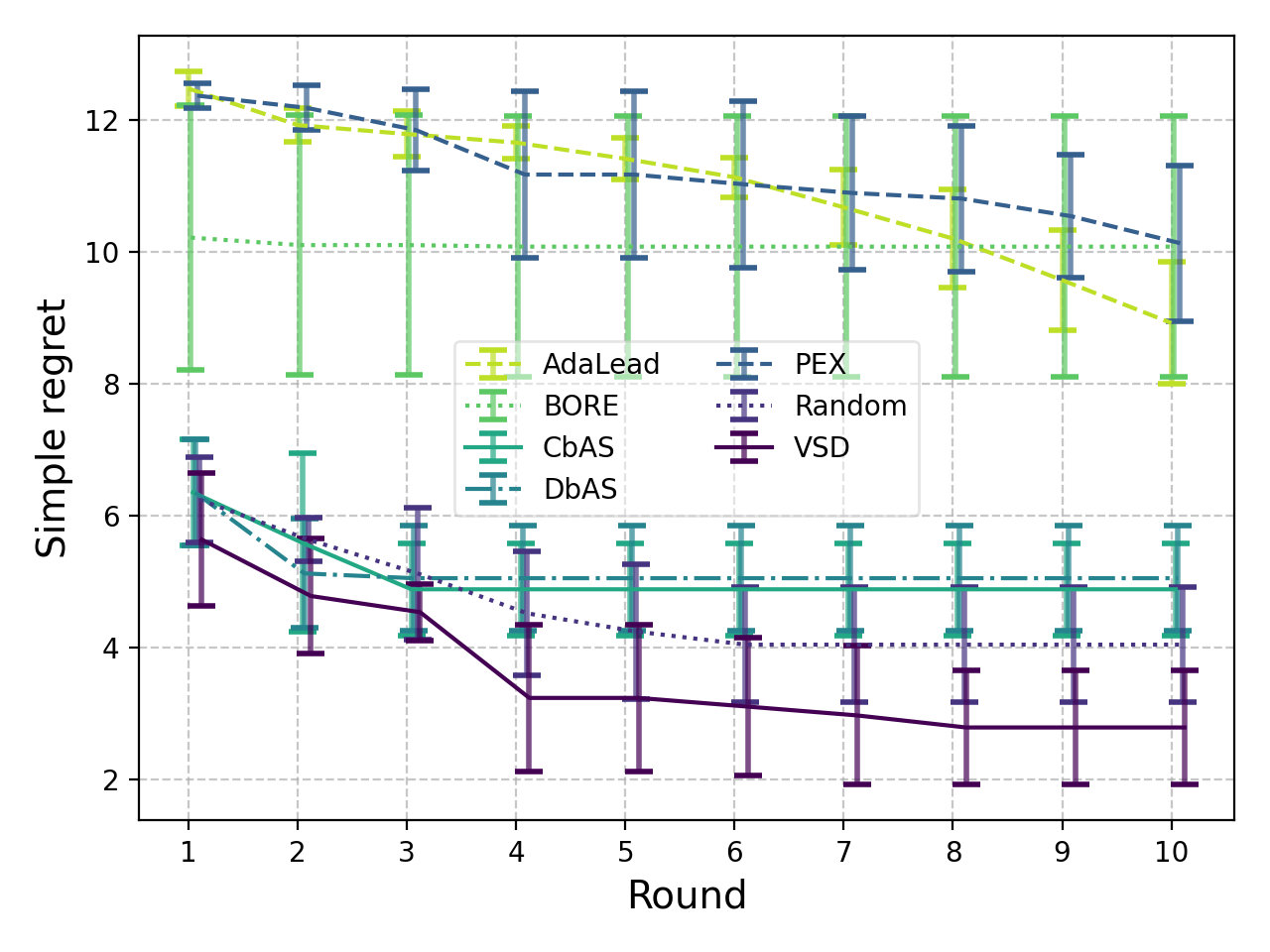}}
\caption{AAV \& GFP \gls{bbo} results. Simple regret (\autoref{eq:simregret}) -- lower is better -- on GFP and AAV with independent and auto-regressive variational distributions. The \gls{pex} and AdaLead results are replicated between the plots, since they are unaffected by choice of variational distribution.
}
\label{fig:bbo_res}
\vspace{-1em}
\end{figure}

\begin{figure}[htb]
    \centering
    \subcaptionbox{$M=15$\label{sfig:ehrlich-poli-15}}
    {\includegraphics[width=.329\textwidth]{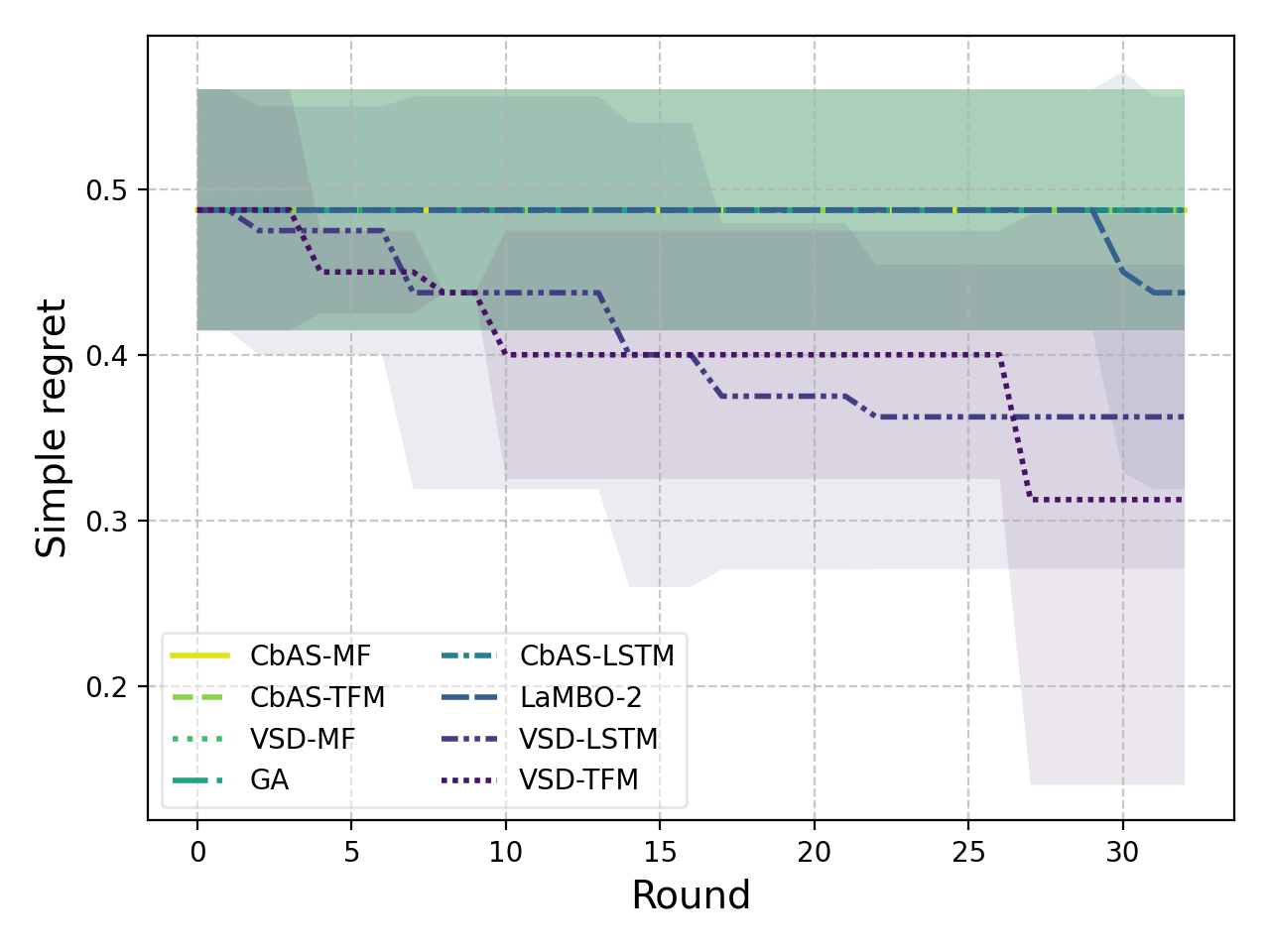}}
    \subcaptionbox{$M=32$\label{sfig:ehrlich-poli-32}}
    {\includegraphics[width=.329\textwidth]{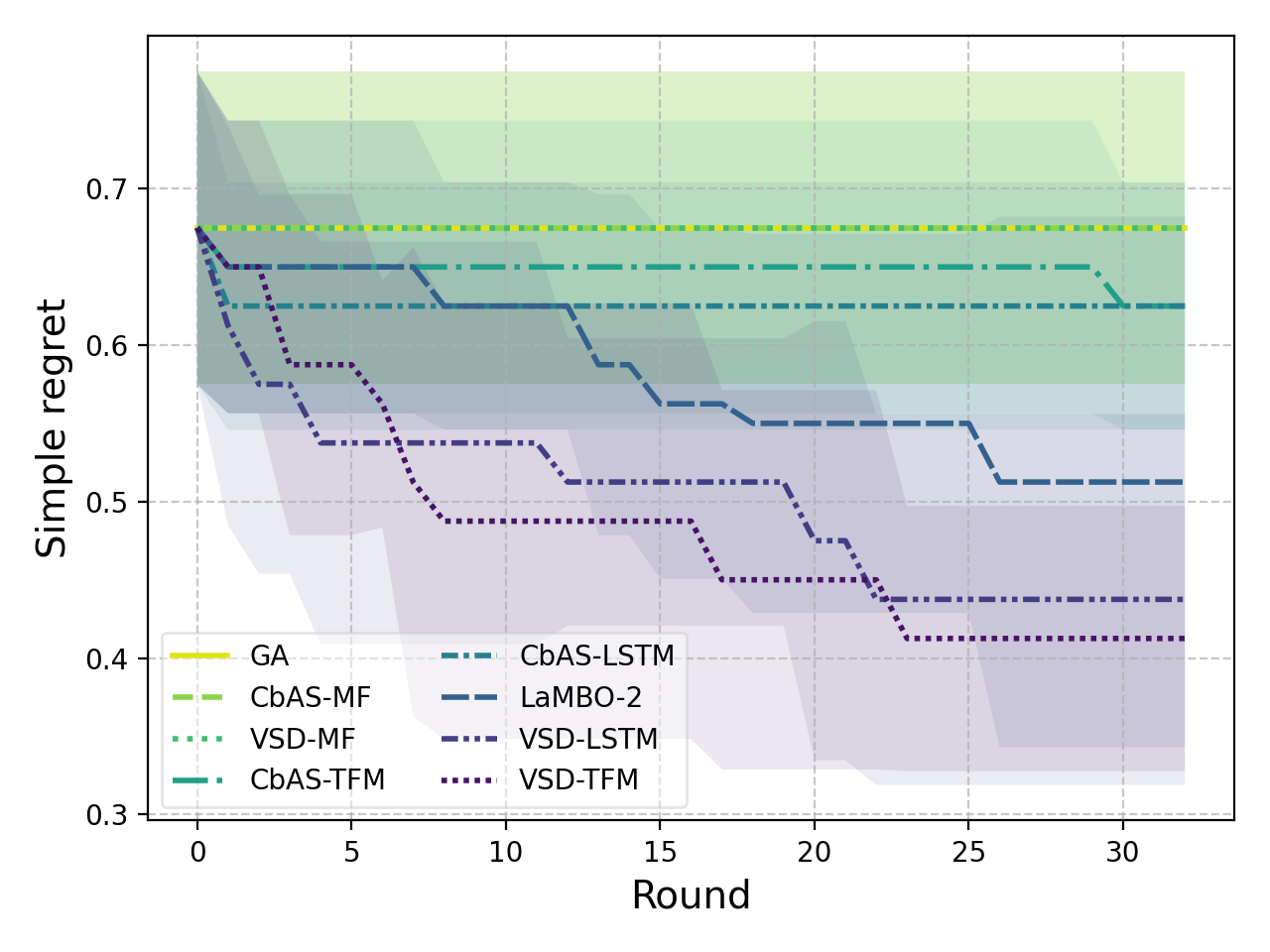}}
    \subcaptionbox{$M=64$\label{sfig:ehrlich-poli-64}}
    {\includegraphics[width=.329\textwidth]{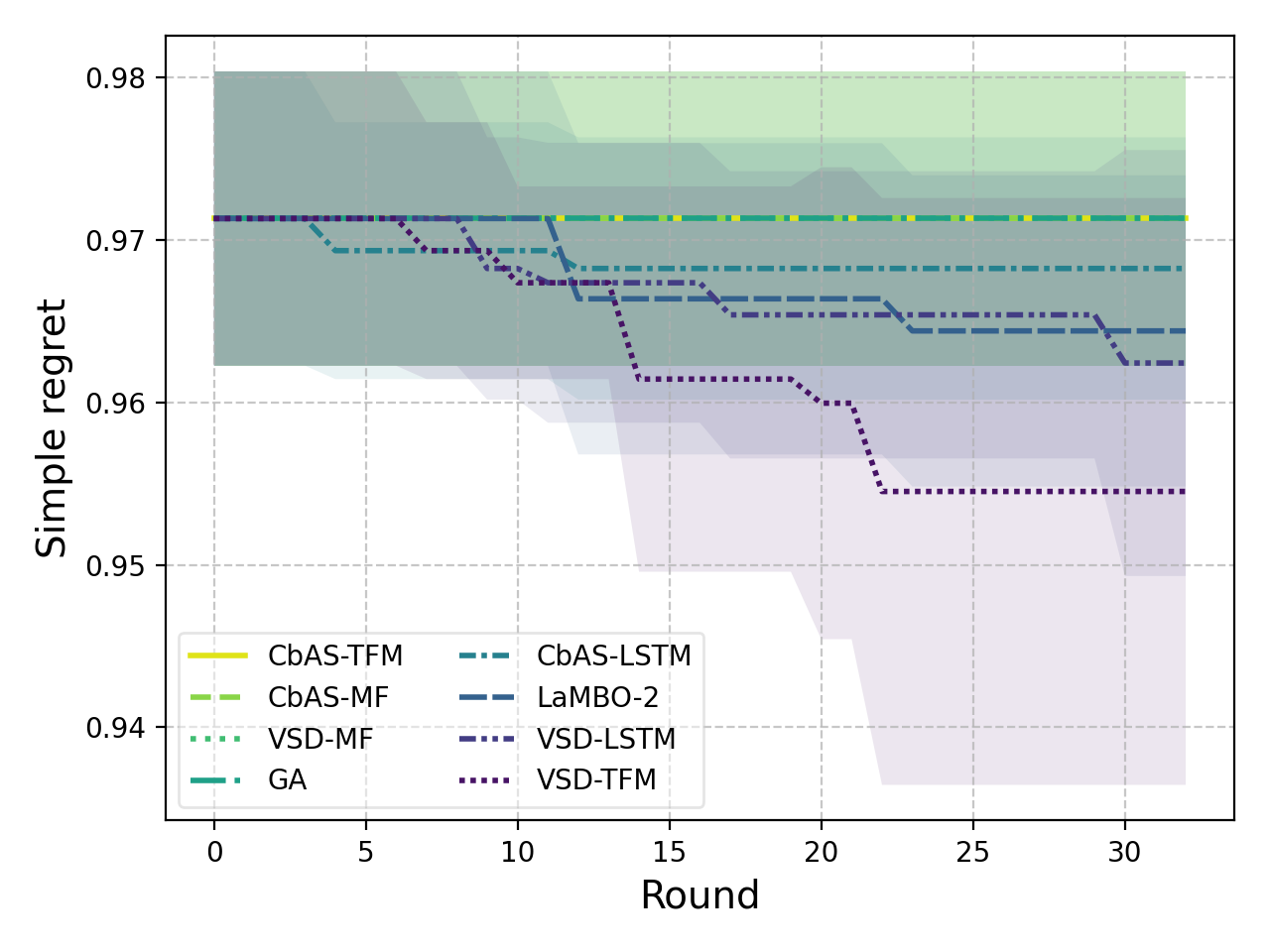}}
    \caption{Ehrlich function (\textsc{poli} implementation) \gls{bbo} results. \Gls{vsd} and \gls{cbas} with different variational distributions; mean field (MF), \gls{lstm} and transformer (TFM), compared against genetic algorithm (GA) and LaMBO-2 baselines.}
    \label{fig:ehrlich_poli_bbo}
    \vspace{-1em}
\end{figure}

The results are summarized in~\autoref{fig:bbo_res} and \ref{fig:ehrlich_poli_bbo}. Batch diversity scores for these experiments are presented in Appendix \ref{sub:diversity}, and for \textsc{holo} Ehrlich function implementation results see Appendix~\ref{app:ehrlich}. \gls{vsd} is among the leading methods for all experiments. \Gls{vsd} takes better advantage of the more complex variational distributions than \gls{cbas} and \gls{dbas} since it can sample from the adapted variational distribution while learning it. We can see that AdaLead, \gls{pex} and often \gls{bore} all perform worse than random for reasons previously mentioned. Simple regret can drop below zero for AAV \& GFP since an oracle is used as the black box function, but the global maximizer is taken from the experimental data. \gls{vsd} outperforms \gls{cbas} on the Ehrlich function benchmarks, and is competitive with LaMBO-2.
We also present an ablation study in Appendix \ref{sub:abl}. 

\section{Conclusion}

We have presented the problem of active generation --- sequentially learning a generative model for designs of a rare class by efficiently evaluating a black-box function --- and a method for efficiently generating samples which we call \acrfull{vsd}. Underpinned by variational inference, \Gls{vsd} satisfies critical requirements and important desiderata, and we show that \gls{vsd} converges asymptotically to the true level-set distribution at the same rate as a Monte-Carlo estimator with full knowledge of the true distribution. We showcased the benefits of our method empirically on a set of combinatorially complete and high dimensional sequential-design biological problems and show that it can effectively learn powerful generative models of fit designs.
There is a close connection between active generation and black-box optimization, and with the advent of powerful generative models we hope that our explicit framing of generation of fit sequences will lead to further study of this connection.
Finally, our framework can be generalized to more complex application scenarios, involving learning generative models over Pareto sets, $\solnspace_\textrm{Pareto}$, in a multi-objective setting, or other challenging combinatorial optimization problems \citep{bengio2021}, such as graph structures \citep{annadani2023bayesdag}, and mixed discrete-continuous variables. All of which are worth investigating as future work directions. For the code implementing the models and experiments in this paper, please see \url{https://github.com/csiro-funml/variationalsearch}.

\subsubsection*{Acknowledgments}
This work is funded by the CSIRO Science Digital and Advanced Engineering Biology Future Science Platforms, and was supported by resources and expertise provided by CSIRO IMT Scientific Computing. We would like to thank the anonymous reviewers, and especially reviewer \texttt{xamp} for the constructive feedback and advice, which greatly increased the relevance and quality of this work.

\bibliography{vsd}

\begin{thebibliography}{90}
\providecommand{\natexlab}[1]{#1}
\providecommand{\url}[1]{\texttt{#1}}
\expandafter\ifx\csname urlstyle\endcsname\relax
  \providecommand{\doi}[1]{doi: #1}\else
  \providecommand{\doi}{doi: \begingroup \urlstyle{rm}\Url}\fi

\bibitem[Abbasi-Yadkori(2012)]{abbasi-yadkori2012}
Yasin Abbasi-Yadkori.
\newblock \emph{Online Learning for Linearly Parametrized Control Problems}.
\newblock Phd, University of Alberta, 2012.

\bibitem[Angermueller et~al.(2019)Angermueller, Dohan, Belanger, Deshpande,
  Murphy, and Colwell]{angermueller2019model}
Christof Angermueller, David Dohan, David Belanger, Ramya Deshpande, Kevin
  Murphy, and Lucy Colwell.
\newblock Model-based reinforcement learning for biological sequence design.
\newblock In \emph{International conference on learning representations
  (ICLR)}, 2019.

\bibitem[Annadani et~al.(2023)Annadani, Pawlowski, Jennings, Bauer, Zhang, and
  Gong]{annadani2023bayesdag}
Yashas Annadani, Nick Pawlowski, Joel Jennings, Stefan Bauer, Cheng Zhang, and
  Wenbo Gong.
\newblock Bayes{DAG}: Gradient-based posterior inference for causal discovery.
\newblock In \emph{Thirty-seventh Conference on Neural Information Processing
  Systems (NeurIPS)}, 2023.

\bibitem[Balandat et~al.(2020)Balandat, Karrer, Jiang, Daulton, Letham, Wilson,
  and Bakshy]{balandat2020botorch}
Maximilian Balandat, Brian Karrer, Daniel~R. Jiang, Samuel Daulton, Benjamin
  Letham, Andrew~Gordon Wilson, and Eytan Bakshy.
\newblock {BoTorch: A Framework for Efficient Monte-Carlo Bayesian
  Optimization}.
\newblock In \emph{Advances in Neural Information Processing Systems 33}, 2020.

\bibitem[Barrera et~al.(2016)Barrera, Vedenko, Kurland, Rogers, Gisselbrecht,
  Rossin, Woodard, Mariani, Kock, Inukai, et~al.]{barrera2016survey}
Luis~A Barrera, Anastasia Vedenko, Jesse~V Kurland, Julia~M Rogers, Stephen~S
  Gisselbrecht, Elizabeth~J Rossin, Jaie Woodard, Luca Mariani, Kian~Hong Kock,
  Sachi Inukai, et~al.
\newblock Survey of variation in human transcription factors reveals prevalent
  dna binding changes.
\newblock \emph{Science}, 351\penalty0 (6280):\penalty0 1450--1454, 2016.

\bibitem[Barrett \& Dherin(2021)Barrett and Dherin]{barrett2021}
David G.~T. Barrett and Benoit Dherin.
\newblock {Implicit gradient regularization}.
\newblock In \emph{International Conference on Learning Representations
  (ICLR)}. OpenReview, 2021.

\bibitem[Bauer(1981)]{bauer1981}
Heinz Bauer.
\newblock \emph{Probability theory and elements of measure theory}.
\newblock Academic Press, 2nd edition, 1981.

\bibitem[Bengio et~al.(2021)Bengio, Lodi, and Prouvost]{bengio2021}
Yoshua Bengio, Andrea Lodi, and Antoine Prouvost.
\newblock Machine learning for combinatorial optimization: A methodological
  tour d’horizon.
\newblock \emph{European Journal of Operational Research}, 290\penalty0
  (2):\penalty0 405--421, 2021.
\newblock ISSN 0377-2217.

\bibitem[Bird et~al.(2018)Bird, Kunze, and Barber]{bird2018stochastic}
Thomas Bird, Julius Kunze, and David Barber.
\newblock Stochastic variational optimization.
\newblock \emph{arXiv preprint arXiv:1809.04855}, 2018.

\bibitem[Bogunovic et~al.(2016)Bogunovic, Scarlett, Krause, and
  Cevher]{bogunovic2016truncated}
Ilija Bogunovic, Jonathan Scarlett, Andreas Krause, and Volkan Cevher.
\newblock Truncated variance reduction: A unified approach to bayesian
  optimization and level-set estimation.
\newblock \emph{Advances in neural information processing systems}, 29, 2016.

\bibitem[Boucheron et~al.(2013)Boucheron, Lugosi, and Massart]{boucheron2013}
St{\'{e}}phane Boucheron, G{\'{a}}bor Lugosi, and Pascal Massart.
\newblock \emph{Concentration inequalities: A Nonasymptotic Theory of
  Independence}.
\newblock Oxford University Press, 2013.

\bibitem[Brookes et~al.(2019)Brookes, Park, and
  Listgarten]{brookes2019conditioning}
David Brookes, Hahnbeom Park, and Jennifer Listgarten.
\newblock Conditioning by adaptive sampling for robust design.
\newblock In \emph{International conference on machine learning}, pp.\
  773--782. PMLR, 2019.

\bibitem[Brookes et~al.(2020)Brookes, Busia, Fannjiang, Murphy, and
  Listgarten]{brookes2020view}
David Brookes, Akosua Busia, Clara Fannjiang, Kevin Murphy, and Jennifer
  Listgarten.
\newblock A view of estimation of distribution algorithms through the lens of
  expectation-maximization.
\newblock In \emph{Proceedings of the 2020 Genetic and Evolutionary Computation
  Conference Companion}, pp.\  189--190, 2020.

\bibitem[Brookes \& Listgarten(2018)Brookes and Listgarten]{brookes2018design}
David~H Brookes and Jennifer Listgarten.
\newblock Design by adaptive sampling.
\newblock \emph{arXiv preprint arXiv:1810.03714}, 2018.

\bibitem[Bryan et~al.(2005)Bryan, Nichol, Genovese, Schneider, Miller, and
  Wasserman]{bryan2005active}
Brent Bryan, Robert~C Nichol, Christopher~R Genovese, Jeff Schneider,
  Christopher~J Miller, and Larry Wasserman.
\newblock Active learning for identifying function threshold boundaries.
\newblock \emph{Advances in neural information processing systems}, 18, 2005.

\bibitem[Bryant et~al.(2021)Bryant, Bashir, Sinai, Jain, Ogden, Riley, Church,
  Colwell, and Kelsic]{bryant2021deep}
Drew~H Bryant, Ali Bashir, Sam Sinai, Nina~K Jain, Pierce~J Ogden, Patrick~F
  Riley, George~M Church, Lucy~J Colwell, and Eric~D Kelsic.
\newblock Deep diversification of an aav capsid protein by machine learning.
\newblock \emph{Nature Biotechnology}, 39\penalty0 (6):\penalty0 691--696,
  2021.

\bibitem[Chen \& Xu(2021)Chen and Xu]{chen2021deepntklap}
Lin Chen and Sheng Xu.
\newblock Deep neural tangent kernel and laplace kernel have the same {RKHS}.
\newblock In \emph{International Conference on Learning Representations}, 2021.

\bibitem[Chowdhury \& Gopalan(2017)Chowdhury and Gopalan]{chowdhury2017}
Sayak~Ray Chowdhury and Aditya Gopalan.
\newblock {On Kernelized Multi-armed Bandits}.
\newblock In \emph{Proceedings of the 34th International Conference on Machine
  Learning (ICML)}, Sydney, Australia, 2017.

\bibitem[Dathathri et~al.(2020)Dathathri, Madotto, Lan, Hung, Frank, Molino,
  Yosinski, and Liu]{dathathri2020plug}
Sumanth Dathathri, Andrea Madotto, Janice Lan, Jane Hung, Eric Frank, Piero
  Molino, Jason Yosinski, and Rosanne Liu.
\newblock Plug and play language models: A simple approach to controlled text
  generation.
\newblock In \emph{International Conference on Learning Representations}, 2020.

\bibitem[Daulton et~al.(2022)Daulton, Wan, Eriksson, Balandat, Osborne, and
  Bakshy]{daulton2022bayesian}
Samuel Daulton, Xingchen Wan, David Eriksson, Maximilian Balandat, Michael~A
  Osborne, and Eytan Bakshy.
\newblock Bayesian optimization over discrete and mixed spaces via
  probabilistic reparameterization.
\newblock \emph{Advances in Neural Information Processing Systems (NeurIPS)},
  35:\penalty0 12760--12774, 2022.

\bibitem[Dubins \& Freedman(1965)Dubins and Freedman]{dubins1965}
Lester~E. Dubins and David~A. Freedman.
\newblock A sharper form of the {Borel-Cantelli} lemma and the strong law.
\newblock \emph{The Annals of Mathematical Statistics}, 36\penalty0
  (3):\penalty0 800--807, 1965.

\bibitem[Durand et~al.(2018)Durand, Maillard, and Pineau]{durand2018}
Audrey Durand, Odalric-Ambrym Maillard, and Joelle Pineau.
\newblock Streaming kernel regression with provably adaptive mean, variance,
  and regularization.
\newblock \emph{Journal of Machine Learning Research}, 19, 2018.

\bibitem[Durrett(2019)]{durrett2019}
Rick Durrett.
\newblock \emph{Probability: theory and examples}.
\newblock Cambridge University Press, New York, NY, 5th edition, 2019.

\bibitem[Eldan et~al.(2021)Eldan, Mikulincer, and Schramm]{eldan2021}
Ronen Eldan, Dan Mikulincer, and Tselil Schramm.
\newblock Non-asymptotic approximations of neural networks by {Gaussian}
  processes.
\newblock In \emph{34th Annual Conference on Learning Theory (COLT)}, volume
  134 of \emph{Proceedings of Machine Learning Research}, 2021.

\bibitem[Fleming(1990)]{fleming1990}
Henry~E. Fleming.
\newblock Equivalence of regularization and truncated iteration in the solution
  of ill-posed image reconstruction problems.
\newblock \emph{Linear Algebra and Its Applications}, 130\penalty0
  (C):\penalty0 133--150, 1990.

\bibitem[Garc\'ia-Portugu\'es(2024)]{garcia-portugues2024}
E.~Garc\'ia-Portugu\'es.
\newblock \emph{Notes for Nonparametric Statistics}.
\newblock Bookdown, 2024.
\newblock URL \url{https://bookdown.org/egarpor/NP-UC3M/}.
\newblock Version 6.9.1. ISBN 978-84-09-29537-1.

\bibitem[Garnett(2023)]{garnett2023bayesian}
Roman Garnett.
\newblock \emph{Bayesian optimization}.
\newblock Cambridge University Press, 2023.

\bibitem[Garnett et~al.(2012)Garnett, Krishnamurthy, Xiong, Schneider, and
  Mann]{garnett2012bayesian}
Roman Garnett, Yamuna Krishnamurthy, Xuehan Xiong, Jeff~G. Schneider, and
  Richard~P. Mann.
\newblock Bayesian optimal active search and surveying.
\newblock In \emph{Proceedings of the 29th International Conference on Machine
  Learning, {ICML} 2012, Edinburgh, Scotland, UK, June 26 - July 1, 2012}.
  icml.cc / Omnipress, 2012.

\bibitem[Gneiting \& Raftery(2007)Gneiting and Raftery]{gneiting2007strictly}
Tilmann Gneiting and Adrian~E Raftery.
\newblock Strictly proper scoring rules, prediction, and estimation.
\newblock \emph{Journal of the American statistical Association}, 102\penalty0
  (477):\penalty0 359--378, 2007.

\bibitem[G{\'o}mez-Bombarelli et~al.(2018)G{\'o}mez-Bombarelli, Wei, Duvenaud,
  Hern{\'a}ndez-Lobato, S{\'a}nchez-Lengeling, Sheberla, Aguilera-Iparraguirre,
  Hirzel, Adams, and Aspuru-Guzik]{gomez2018automatic}
Rafael G{\'o}mez-Bombarelli, Jennifer~N Wei, David Duvenaud, Jos{\'e}~Miguel
  Hern{\'a}ndez-Lobato, Benjam{\'\i}n S{\'a}nchez-Lengeling, Dennis Sheberla,
  Jorge Aguilera-Iparraguirre, Timothy~D Hirzel, Ryan~P Adams, and Al{\'a}n
  Aspuru-Guzik.
\newblock Automatic chemical design using a data-driven continuous
  representation of molecules.
\newblock \emph{ACS central science}, 4\penalty0 (2):\penalty0 268--276, 2018.

\bibitem[Gonz{\'a}lez-Duque et~al.(2024)Gonz{\'a}lez-Duque, Michael, Bartels,
  Zainchkovskyy, Hauberg, and Boomsma]{gonzalez2024survey}
Miguel Gonz{\'a}lez-Duque, Richard Michael, Simon Bartels, Yevgen
  Zainchkovskyy, S{\o}ren Hauberg, and Wouter Boomsma.
\newblock A survey and benchmark of high-dimensional {Bayesian} optimization of
  discrete sequences.
\newblock \emph{arXiv preprint arXiv:2406.04739}, 2024.

\bibitem[González-Duque et~al.(2024)González-Duque, Bartels, and
  Michael]{gonzalez-duque2024poli}
Miguel González-Duque, Simon Bartels, and Richard Michael.
\newblock {poli: a libary of discrete sequence objectives}, January 2024.
\newblock URL \url{https://github.com/MachineLearningLifeScience/poli}.

\bibitem[Gotovos et~al.(2013)Gotovos, Casati, Hitz, and
  Krause]{gotovos2013active}
Alkis Gotovos, Nathalie Casati, Gregory Hitz, and Andreas Krause.
\newblock Active learning for level set estimation.
\newblock In \emph{Proceedings of the Twenty-Third international joint
  conference on Artificial Intelligence}, pp.\  1344--1350, 2013.

\bibitem[Gruver et~al.(2023)Gruver, Stanton, Frey, Rudner, Hotzel,
  Lafrance-Vanasse, Rajpal, Cho, and Wilson]{gruver2023protein}
Nate Gruver, Samuel Stanton, Nathan Frey, Tim~GJ Rudner, Isidro Hotzel, Julien
  Lafrance-Vanasse, Arvind Rajpal, Kyunghyun Cho, and Andrew~G Wilson.
\newblock Protein design with guided discrete diffusion.
\newblock \emph{Advances in neural information processing systems (NeurIPS)},
  36, 2023.

\bibitem[Hornik et~al.(1989)Hornik, Stinchcombe, and White]{hornik1989}
Kurt Hornik, Maxwell Stinchcombe, and Halbert White.
\newblock Multilayer feedforward networks are universal approximators.
\newblock \emph{Neural Networks}, 2\penalty0 (5):\penalty0 359--366, 1989.

\bibitem[Hron et~al.(2020)Hron, Bahri, Sohl-Dickstein, and Novak]{hron2020}
Jiri Hron, Yasaman Bahri, Jascha Sohl-Dickstein, and Roman Novak.
\newblock Infinite attention: {NNGP} and {NTK} for deep attention networks.
\newblock In \emph{Proceedings of the 37th International Conference on Machine
  Learning (ICML)}. PMLR 108, 2020.

\bibitem[Jacot et~al.(2018)Jacot, Gabriel, and Hongler]{jacot2018}
Arthur Jacot, Franck Gabriel, and Cl{\'{e}}ment Hongler.
\newblock Neural tangent kernel: Convergence and generalization in neural
  networks.
\newblock In \emph{Advances in Neural Information Processing Systems},
  Montreal, Canada, 2018.

\bibitem[Jain et~al.(2022)Jain, Bengio, Hernandez-Garcia, Rector-Brooks,
  Dossou, Ekbote, Fu, Zhang, Kilgour, Zhang, et~al.]{jain2022biological}
Moksh Jain, Emmanuel Bengio, Alex Hernandez-Garcia, Jarrid Rector-Brooks,
  Bonaventure~FP Dossou, Chanakya~Ajit Ekbote, Jie Fu, Tianyu Zhang, Michael
  Kilgour, Dinghuai Zhang, et~al.
\newblock Biological sequence design with gflownets.
\newblock In \emph{International Conference on Machine Learning}, pp.\
  9786--9801. PMLR, 2022.

\bibitem[Jiang et~al.(2017)Jiang, Malkomes, Converse, Shofner, Moseley, and
  Garnett]{jiang2017efficient}
Shali Jiang, Gustavo Malkomes, Geoff Converse, Alyssa Shofner, Benjamin
  Moseley, and Roman Garnett.
\newblock Efficient nonmyopic active search.
\newblock In \emph{International Conference on Machine Learning}, pp.\
  1714--1723. PMLR, 2017.

\bibitem[Johnston et~al.(2024)Johnston, Almhjell, Watkins-Dulaney, Liu, Porter,
  Yang, and Arnold]{johnston2024combinatorially}
Kadina~E. Johnston, Patrick~J. Almhjell, Ella~J. Watkins-Dulaney, Grace Liu,
  Nicholas~J. Porter, Jason Yang, and Frances~H. Arnold.
\newblock A combinatorially complete epistatic fitness landscape in an enzyme
  active site.
\newblock \emph{Proceedings of the National Academy of Sciences}, 121\penalty0
  (32):\penalty0 e2400439121, 2024.
\newblock \doi{10.1073/pnas.2400439121}.

\bibitem[Kingma \& Ba(2014)Kingma and Ba]{kingma2014adam}
Diederik~P Kingma and Jimmy Ba.
\newblock Adam: A method for stochastic optimization.
\newblock \emph{arXiv preprint arXiv:1412.6980}, 2014.

\bibitem[Kingma \& Welling(2014)Kingma and Welling]{kingma2014auto}
Diederik~P. Kingma and Max Welling.
\newblock Auto-encoding variational bayes.
\newblock In Yoshua Bengio and Yann LeCun (eds.), \emph{2nd International
  Conference on Learning Representations, {ICLR} 2014, Banff, AB, Canada, April
  14-16, 2014, Conference Track Proceedings}, 2014.

\bibitem[Kirjner et~al.(2024)Kirjner, Yim, Samusevich, Bracha, Jaakkola,
  Barzilay, and Fiete]{kirjner2024improving}
Andrew Kirjner, Jason Yim, Raman Samusevich, Shahar Bracha, Tommi~S Jaakkola,
  Regina Barzilay, and Ila~R Fiete.
\newblock Improving protein optimization with smoothed fitness landscapes.
\newblock In \emph{The Twelfth International Conference on Learning
  Representations}, 2024.

\bibitem[Kushner(1964)]{kushner1964new}
Harold~J Kushner.
\newblock A new method of locating the maximum point of an arbitrary multipeak
  curve in the presence of noise.
\newblock \emph{Journal of Basic Engineering}, 1964.

\bibitem[Larra{\~n}aga \& Lozano(2001)Larra{\~n}aga and
  Lozano]{larranaga2001estimation}
Pedro Larra{\~n}aga and Jose~A Lozano.
\newblock \emph{Estimation of distribution algorithms: A new tool for
  evolutionary computation}, volume~2.
\newblock Springer Science \& Business Media, 2001.

\bibitem[LeCun et~al.(1998)LeCun, Bottou, Bengio, and
  Haffner]{lecun1998gradient}
Yann LeCun, L{\'e}on Bottou, Yoshua Bengio, and Patrick Haffner.
\newblock Gradient-based learning applied to document recognition.
\newblock \emph{Proceedings of the IEEE}, 86\penalty0 (11):\penalty0
  2278--2324, 1998.

\bibitem[Lee et~al.(2019)Lee, Xiao, Schoenholz, Bahri, Novak, Sohl-Dickstein,
  and Pennington]{lee2019}
Jaehoon Lee, Lechao Xiao, Samuel~S. Schoenholz, Yasaman Bahri, Roman Novak,
  Jascha Sohl-Dickstein, and Jeffrey Pennington.
\newblock Wide neural networks of any depth evolve as linear models under
  gradient descent.
\newblock In \emph{33rd Conference on Neural Information Processing Systems
  (NeurIPS 2019)}, Vancouver, Canada, 2019.

\bibitem[Li et~al.(2022)Li, Thickstun, Gulrajani, Liang, and
  Hashimoto]{li2022diffusion}
Xiang Li, John Thickstun, Ishaan Gulrajani, Percy~S Liang, and Tatsunori~B
  Hashimoto.
\newblock Diffusion-lm improves controllable text generation.
\newblock \emph{Advances in Neural Information Processing Systems},
  35:\penalty0 4328--4343, 2022.

\bibitem[Liu et~al.(2020)Liu, Zhu, and Belkin]{liu2020c}
Chaoyue Liu, Libin Zhu, and Mikhail Belkin.
\newblock On the linearity of large non-linear models: When and why the tangent
  kernel is constant.
\newblock In \emph{Advances in Neural Information Processing Systems}, 2020.

\bibitem[Liu \& Wang(2016)Liu and Wang]{liu2016stein}
Qiang Liu and Dilin Wang.
\newblock Stein variational gradient descent: A general purpose bayesian
  inference algorithm.
\newblock \emph{Advances in neural information processing systems}, 29, 2016.

\bibitem[Maillard(2016)]{maillard2016}
Odalric-Ambrym Maillard.
\newblock Self-normalization techniques for streaming confident regression.
\newblock \emph{HAL}, {\textless}hal-01349, 2016.

\bibitem[Michael et~al.(2024)Michael, Bartels, Gonz{\'a}lez-Duque,
  Zainchkovskyy, Frellsen, Hauberg, and Boomsma]{michael2024continuous}
Richard Michael, Simon Bartels, Miguel Gonz{\'a}lez-Duque, Yevgen
  Zainchkovskyy, Jes Frellsen, S{\o}ren Hauberg, and Wouter Boomsma.
\newblock A continuous relaxation for discrete {Bayesian} optimization.
\newblock \emph{arXiv preprint arXiv:2404.17452}, 2024.

\bibitem[Mohamed et~al.(2020)Mohamed, Rosca, Figurnov, and
  Mnih]{mohamed2020monte}
Shakir Mohamed, Mihaela Rosca, Michael Figurnov, and Andriy Mnih.
\newblock Monte carlo gradient estimation in machine learning.
\newblock \emph{Journal of Machine Learning Research}, 21\penalty0
  (132):\penalty0 1--62, 2020.

\bibitem[Mollenhauer et~al.(2020)Mollenhauer, Schuster, Klus, and
  Sch{\"u}tte]{mollenhauer2020}
Mattes Mollenhauer, Ingmar Schuster, Stefan Klus, and Christof Sch{\"u}tte.
\newblock Singular value decomposition of operators on reproducing kernel
  hilbert spaces.
\newblock In Oliver Junge, Oliver Sch{\"u}tze, Gary Froyland, Sina
  Ober-Bl{\"o}baum, and Kathrin Padberg-Gehle (eds.), \emph{Advances in
  Dynamics, Optimization and Computation}, pp.\  109--131, Cham, 2020. Springer
  International Publishing.

\bibitem[Murray et~al.(2023)Murray, Jin, Bowman, and Montufar]{murray2023}
Michael Murray, Hui Jin, Benjamin Bowman, and Guido Montufar.
\newblock Characterizing the spectrum of the {NTK} via a power series
  expansion.
\newblock In \emph{International Conference on Learning Representations
  (ICLR)}, Kigali, Rwanda, 2023.

\bibitem[Mutn{\'{y}} \& Krause(2018)Mutn{\'{y}} and Krause]{mutny2018}
Mojm{\'{i}}r Mutn{\'{y}} and Andreas Krause.
\newblock Efficient high dimensional {Bayesian} optimization with additivity
  and quadrature {Fourier} features.
\newblock In \emph{Conference on Neural Information Processing Systems
  (NeurIPS)}, Montreal, Canada, 2018.

\bibitem[Nguyen et~al.(2017)Nguyen, Clune, Bengio, Dosovitskiy, and
  Yosinski]{nguyen2017plug}
Anh Nguyen, Jeff Clune, Yoshua Bengio, Alexey Dosovitskiy, and Jason Yosinski.
\newblock Plug \& play generative networks: Conditional iterative generation of
  images in latent space.
\newblock In \emph{Proceedings of the IEEE conference on computer vision and
  pattern recognition}, pp.\  4467--4477, 2017.

\bibitem[Oliveira et~al.(2022)Oliveira, Tiao, and Ramos]{oliveira2022batch}
Rafael Oliveira, Louis Tiao, and Fabio~T Ramos.
\newblock Batch bayesian optimisation via density-ratio estimation with
  guarantees.
\newblock \emph{Advances in Neural Information Processing Systems (NeurIPS)},
  35:\penalty0 29816--29829, 2022.

\bibitem[Papkou et~al.(2023)Papkou, Garcia-Pastor, Escudero, and
  Wagner]{papkou2023rugged}
Andrei Papkou, Lucia Garcia-Pastor, Jos{\'e}~Antonio Escudero, and Andreas
  Wagner.
\newblock A rugged yet easily navigable fitness landscape.
\newblock \emph{Science}, 382\penalty0 (6673):\penalty0 eadh3860, 2023.

\bibitem[Phuong \& Hutter(2022)Phuong and Hutter]{phuong2022formal}
Mary Phuong and Marcus Hutter.
\newblock Formal algorithms for transformers.
\newblock \emph{arXiv preprint arXiv:2207.09238}, 2022.

\bibitem[Ranganath et~al.(2014)Ranganath, Gerrish, and
  Blei]{ranganath2014black}
Rajesh Ranganath, Sean Gerrish, and David Blei.
\newblock Black box variational inference.
\newblock In \emph{Artificial intelligence and statistics}, pp.\  814--822.
  PMLR, 2014.

\bibitem[Rasmussen \& Williams(2006)Rasmussen and Williams]{rasmussen2006}
Carl~E. Rasmussen and Christopher K.~I. Williams.
\newblock \emph{Gaussian Processes for Machine Learning}.
\newblock The MIT Press, Cambridge, MA, 2006.

\bibitem[Ren et~al.(2022)Ren, Li, Ding, Zhou, Ma, and Peng]{ren2022proximal}
Zhizhou Ren, Jiahan Li, Fan Ding, Yuan Zhou, Jianzhu Ma, and Jian Peng.
\newblock Proximal exploration for model-guided protein sequence design.
\newblock In \emph{International Conference on Machine Learning}, pp.\
  18520--18536. PMLR, 2022.

\bibitem[Rubinstein(1999)]{rubinstein1999cross}
Reuven Rubinstein.
\newblock The cross-entropy method for combinatorial and continuous
  optimization.
\newblock \emph{Methodology and computing in applied probability}, 1:\penalty0
  127--190, 1999.

\bibitem[Sarkisyan et~al.(2016)Sarkisyan, Bolotin, Meer, Usmanova, Mishin,
  Sharonov, Ivankov, Bozhanova, Baranov, Soylemez, et~al.]{sarkisyan2016local}
Karen~S Sarkisyan, Dmitry~A Bolotin, Margarita~V Meer, Dinara~R Usmanova,
  Alexander~S Mishin, George~V Sharonov, Dmitry~N Ivankov, Nina~G Bozhanova,
  Mikhail~S Baranov, Onuralp Soylemez, et~al.
\newblock Local fitness landscape of the green fluorescent protein.
\newblock \emph{Nature}, 533\penalty0 (7603):\penalty0 397--401, 2016.

\bibitem[Sch{\"{o}}lkopf \& Smola(2001)Sch{\"{o}}lkopf and
  Smola]{scholkopf2001}
Bernhard Sch{\"{o}}lkopf and Alexander~J. Smola.
\newblock \emph{Learning with Kernels: Support Vector Machines, Regularization,
  Optimization, and Beyond}.
\newblock The MIT Press, 2001.

\bibitem[Searle(1982)]{searle1982matrix}
Shayle~R Searle.
\newblock \emph{Matrix algebra useful for statistics}.
\newblock John Wiley \& Sons, 1982.

\bibitem[Sinai et~al.(2020)Sinai, Wang, Whatley, Slocum, Locane, and
  Kelsic]{sinai2020adalead}
Sam Sinai, Richard Wang, Alexander Whatley, Stewart Slocum, Elina Locane, and
  Eric~D Kelsic.
\newblock Adalead: A simple and robust adaptive greedy search algorithm for
  sequence design.
\newblock \emph{arXiv preprint arXiv:2010.02141}, 2020.

\bibitem[Smith et~al.(2021)Smith, Dherin, Barrett, and De]{smith2021}
Samuel~L. Smith, Benoit Dherin, David G.~T. Barrett, and Soham De.
\newblock On the origin of implicit regularization in stochastic gradient
  descent.
\newblock In \emph{Proceedings of the 10th International Conference on Learning
  Representations (ICLR 2021)}. OpenReview, 2021.

\bibitem[Song et~al.(2022)Song, Yu, Neiswanger, and Ermon]{song2022general}
Jiaming Song, Lantao Yu, Willie Neiswanger, and Stefano Ermon.
\newblock A general recipe for likelihood-free bayesian optimization.
\newblock In \emph{International Conference on Machine Learning}, pp.\
  20384--20404. PMLR, 2022.

\bibitem[Soudry et~al.(2018)Soudry, Hoffer, Nacson, Gunasekar, and
  Srebro]{soudry2018}
Daniel Soudry, Elad Hoffer, Mor~Shpigel Nacson, Suria Gunasekar, and Nathan
  Srebro.
\newblock The implicit bias of gradient descent on separable data.
\newblock \emph{Journal of Machine Learning Research}, 19, 2018.

\bibitem[Srinivas et~al.(2010)Srinivas, Krause, Kakade, and
  Seeger]{srinivas2010gaussian}
Niranjan Srinivas, Andreas Krause, Sham Kakade, and Matthias Seeger.
\newblock Gaussian process optimization in the bandit setting: no regret and
  experimental design.
\newblock In \emph{Proceedings of the 27th International Conference on
  International Conference on Machine Learning}, ICML'10, pp.\  1015–1022,
  Madison, WI, USA, 2010. Omnipress.
\newblock ISBN 9781605589077.

\bibitem[Staines \& Barber(2013)Staines and Barber]{staines2013optimization}
Joe Staines and David Barber.
\newblock Optimization by variational bounding.
\newblock In \emph{ESANN}, 2013.

\bibitem[Stanton et~al.(2022)Stanton, Maddox, Gruver, Maffettone, Delaney,
  Greenside, and Wilson]{stanton2022accelerating}
Samuel Stanton, Wesley Maddox, Nate Gruver, Phillip Maffettone, Emily Delaney,
  Peyton Greenside, and Andrew~Gordon Wilson.
\newblock Accelerating {Bayesian} optimization for biological sequence design
  with denoising autoencoders.
\newblock In \emph{International Conference on Machine Learning}, pp.\
  20459--20478. PMLR, 2022.

\bibitem[Stanton et~al.(2024)Stanton, Alberstein, Frey, Watkins, and
  Cho]{stanton2024closed}
Samuel Stanton, Robert Alberstein, Nathan Frey, Andrew Watkins, and Kyunghyun
  Cho.
\newblock Closed-form test functions for biophysical sequence optimization
  algorithms.
\newblock \emph{arXiv preprint arXiv:2407.00236}, 2024.

\bibitem[Surana et~al.(2024)Surana, Grinsztajn, Atkinson, Duckworth, and
  Barrett]{surana2024overconfident}
Shikha Surana, Nathan Grinsztajn, Timothy Atkinson, Paul Duckworth, and
  Thomas~D Barrett.
\newblock Overconfident oracles: Limitations of in silico sequence design
  benchmarking.
\newblock In \emph{ICML 2024 AI for Science Workshop}, 2024.

\bibitem[Swersky et~al.(2020)Swersky, Rubanova, Dohan, and
  Murphy]{swersky2020amortized}
Kevin Swersky, Yulia Rubanova, David Dohan, and Kevin Murphy.
\newblock Amortized bayesian optimization over discrete spaces.
\newblock In \emph{Conference on Uncertainty in Artificial Intelligence}, pp.\
  769--778. PMLR, 2020.

\bibitem[Takeno et~al.(2024)Takeno, Inatsu, Karasuyama, and
  Takeuchi]{takeno2023}
Shion Takeno, Yu~Inatsu, Masayuki Karasuyama, and Ichiro Takeuchi.
\newblock Posterior sampling-based {Bayesian} optimization with tighter
  {Bayesian} regret bounds.
\newblock In \emph{Proceedings of the 41 st International Conference on Machine
  Learning (ICML 2024)}, volume 235, Vienna, Austria, 2024. PMLR.

\bibitem[Tiao et~al.(2021)Tiao, Klein, Seeger, Bonilla, Archambeau, and
  Ramos]{tiao2021bore}
Louis~C Tiao, Aaron Klein, Matthias~W Seeger, Edwin~V Bonilla, Cedric
  Archambeau, and Fabio Ramos.
\newblock Bore: Bayesian optimization by density-ratio estimation.
\newblock In \emph{International Conference on Machine Learning}, pp.\
  10289--10300. PMLR, 2021.

\bibitem[Trabucco et~al.(2022)Trabucco, Geng, Kumar, and
  Levine]{trabucco2022design}
Brandon Trabucco, Xinyang Geng, Aviral Kumar, and Sergey Levine.
\newblock Design-bench: Benchmarks for data-driven offline model-based
  optimization.
\newblock \emph{CoRR}, abs/2202.08450, 2022.

\bibitem[Tripp et~al.(2020)Tripp, Daxberger, and
  Hern{\'a}ndez-Lobato]{tripp2020sample}
Austin Tripp, Erik Daxberger, and Jos{\'e}~Miguel Hern{\'a}ndez-Lobato.
\newblock Sample-efficient optimization in the latent space of deep generative
  models via weighted retraining.
\newblock \emph{Advances in Neural Information Processing Systems},
  33:\penalty0 11259--11272, 2020.

\bibitem[Vakili \& Olkhovskaya(2023)Vakili and Olkhovskaya]{vakili2023a}
Sattar Vakili and Julia Olkhovskaya.
\newblock Kernelized reinforcement learning with order optimal regret bounds.
\newblock In \emph{Advances in Neural Information Processing Systems
  (NeurIPS)}, volume~36, 2023.

\bibitem[Vakili et~al.(2021)Vakili, Khezeli, and Picheny]{vakili2021}
Sattar Vakili, Kia Khezeli, and Victor Picheny.
\newblock On information gain and regret bounds in {Gaussian} process bandits.
\newblock In Arindam Banerjee and Kenji Fukumizu (eds.), \emph{Proceedings of
  The 24th International Conference on Artificial Intelligence and Statistics},
  volume 130 of \emph{Proceedings of Machine Learning Research}, pp.\  82--90.
  PMLR, 2021.

\bibitem[Vanchinathan et~al.(2015)Vanchinathan, Marfurt, Robelin, Kossmann, and
  Krause]{vanchinathan2015discovering}
Hastagiri~P Vanchinathan, Andreas Marfurt, Charles-Antoine Robelin, Donald
  Kossmann, and Andreas Krause.
\newblock Discovering valuable items from massive data.
\newblock In \emph{Proceedings of the 21th ACM SIGKDD International Conference
  on Knowledge Discovery and Data Mining}, pp.\  1195--1204, 2015.

\bibitem[Wierstra et~al.(2014)Wierstra, Schaul, Glasmachers, Sun, Peters, and
  Schmidhuber]{wierstra2014natural}
Daan Wierstra, Tom Schaul, Tobias Glasmachers, Yi~Sun, Jan Peters, and
  J{\"u}rgen Schmidhuber.
\newblock Natural evolution strategies.
\newblock \emph{The Journal of Machine Learning Research}, 15\penalty0
  (1):\penalty0 949--980, 2014.

\bibitem[Williams(1992)]{williams1992simple}
Ronald~J Williams.
\newblock Simple statistical gradient-following algorithms for connectionist
  reinforcement learning.
\newblock \emph{Machine learning}, 8:\penalty0 229--256, 1992.

\bibitem[Wilson et~al.(2017)Wilson, Moriconi, Hutter, and
  Deisenroth]{wilson2017reparameterization}
James~T Wilson, Riccardo Moriconi, Frank Hutter, and Marc~Peter Deisenroth.
\newblock The reparameterization trick for acquisition functions.
\newblock \emph{arXiv preprint arXiv:1712.00424}, 2017.

\bibitem[Yao et~al.(2007)Yao, Rosasco, and Caponnetto]{yao2007}
Yuan Yao, Lorenzo Rosasco, and Andrea Caponnetto.
\newblock On early stopping in gradient descent learning.
\newblock \emph{Constructive Approximation}, 26\penalty0 (2):\penalty0
  289--315, 2007.

\bibitem[Zhang et~al.(2023{\natexlab{a}})Zhang, Song, Bowden, Ladd, Yue,
  Desautels, and Chen]{zhang2023learning}
Fengxue Zhang, Jialin Song, James~C Bowden, Alexander Ladd, Yisong Yue, Thomas
  Desautels, and Yuxin Chen.
\newblock Learning regions of interest for {B}ayesian optimization with
  adaptive level-set estimation.
\newblock In Andreas Krause, Emma Brunskill, Kyunghyun Cho, Barbara Engelhardt,
  Sivan Sabato, and Jonathan Scarlett (eds.), \emph{Proceedings of the 40th
  International Conference on Machine Learning}, volume 202 of
  \emph{Proceedings of Machine Learning Research}, pp.\  41579--41595. PMLR,
  23--29 Jul 2023{\natexlab{a}}.

\bibitem[Zhang et~al.(2023{\natexlab{b}})Zhang, Song, Li, Zhou, and
  Song]{zhang2023survey}
Hanqing Zhang, Haolin Song, Shaoyu Li, Ming Zhou, and Dawei Song.
\newblock A survey of controllable text generation using transformer-based
  pre-trained language models.
\newblock \emph{ACM Computing Surveys}, 56\penalty0 (3):\penalty0 1--37,
  2023{\natexlab{b}}.

\end{thebibliography}

\clearpage
\appendix

\renewcommand\thefigure{\thesection.\arabic{figure}}
\renewcommand\thetable{\thesection.\arabic{table}}
\setcounter{figure}{0}
\setcounter{table}{0}

\section{Acronyms}
\printglossary[type=\acronymtype]

\section{Depiction of Active Generation}

See \autoref{fig:diagrams} for graphical depictions of active generation as implemented by \gls{vsd}, compared to batch \gls{bo}. Active generation using \gls{vsd} follows \autoref{alg:optloop} to sequentially approximate $\probc{\obs}{\tar > \thresh}$. It uses samples from the current learned approximation of this distribution, $\qrobc{\obs}{\phi^*_t}$, for proposing candidates to evaluate each round. Unmodified batch \gls{bo} in the discrete setting without any specialization  requires a list of candidates, from which a batch of candidates are selected per round using a surrogate model with a batch acquisition function, e.g.~see \cite{wilson2017reparameterization}. The surrogate model's hyper-parameters, $\mparam$, are estimated by minimizing negative log marginal likelihood, $\mathcal{L}_\textrm{NLML}(\mparam, \data_N)$. Mechanistically, active generation learns a generative model of valuable candidates to circumvent the requirement of having to select candidates from a list. This is especially important for searching over the space of sequences, $\obsspace$, where enumerating feasible candidates is often intractable. Furthermore, active generation naturally lends itself to large batch sizes, while explicit batch optimization is often computationally intensive and is limited to $\bigo(10)$ candidates per batch \cite{wilson2017reparameterization}. As mentioned in \autoref{sec:related}, alternatives to active generation for specializing batch \gls{bo} to the discrete domain also include \gls{lso} \citep{tripp2020sample} and amortized \gls{bo} \citep{swersky2020amortized} among others.

\begin{figure}[htb]
    \centering
    \subcaptionbox{\gls{vsd} Active Generation\label{sfig:ag}}
    {\includegraphics[height=4.1cm]{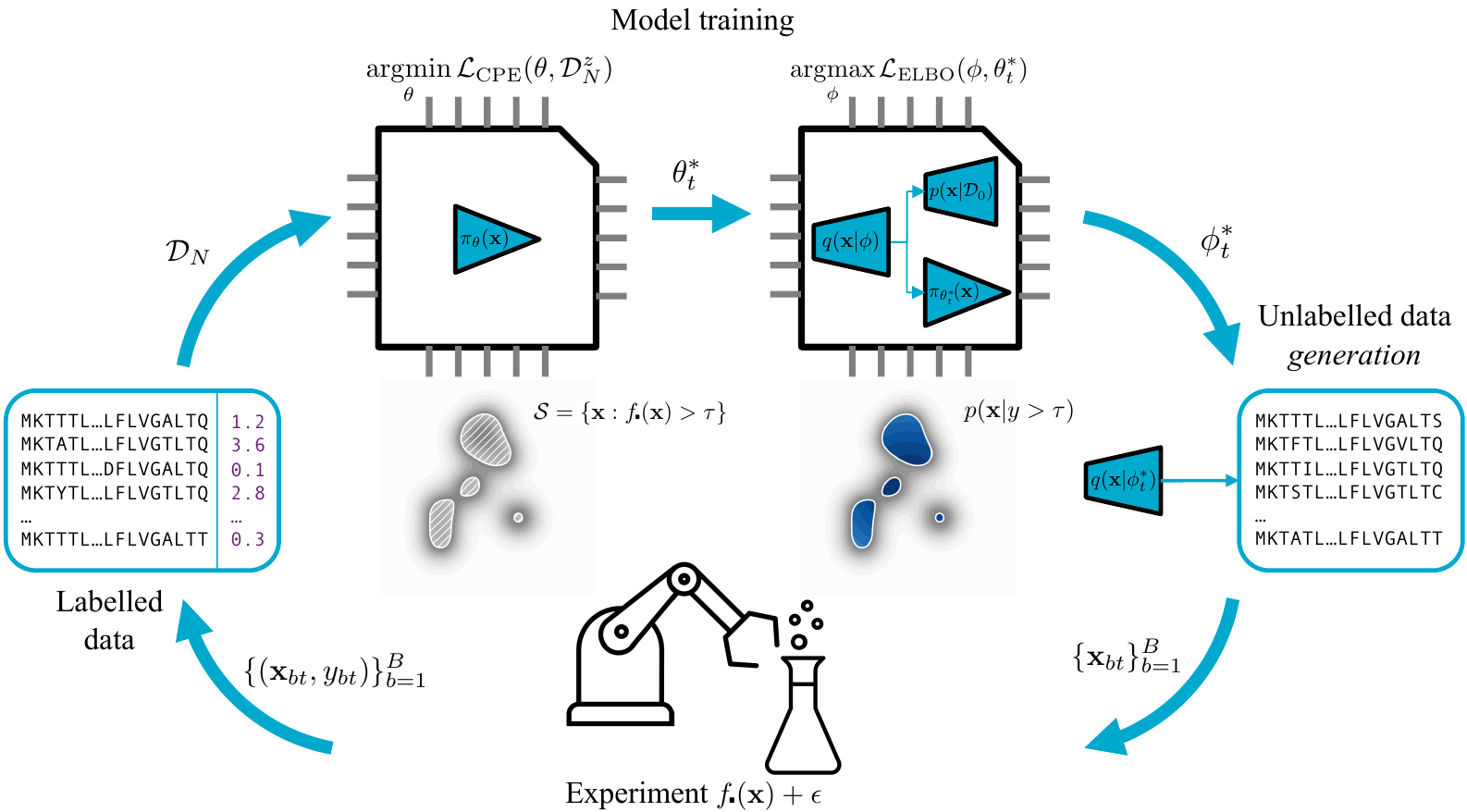}}
        \subcaptionbox{Batch \gls{bo}\label{sfig:batchbo}}
    {\includegraphics[height=4.1cm]{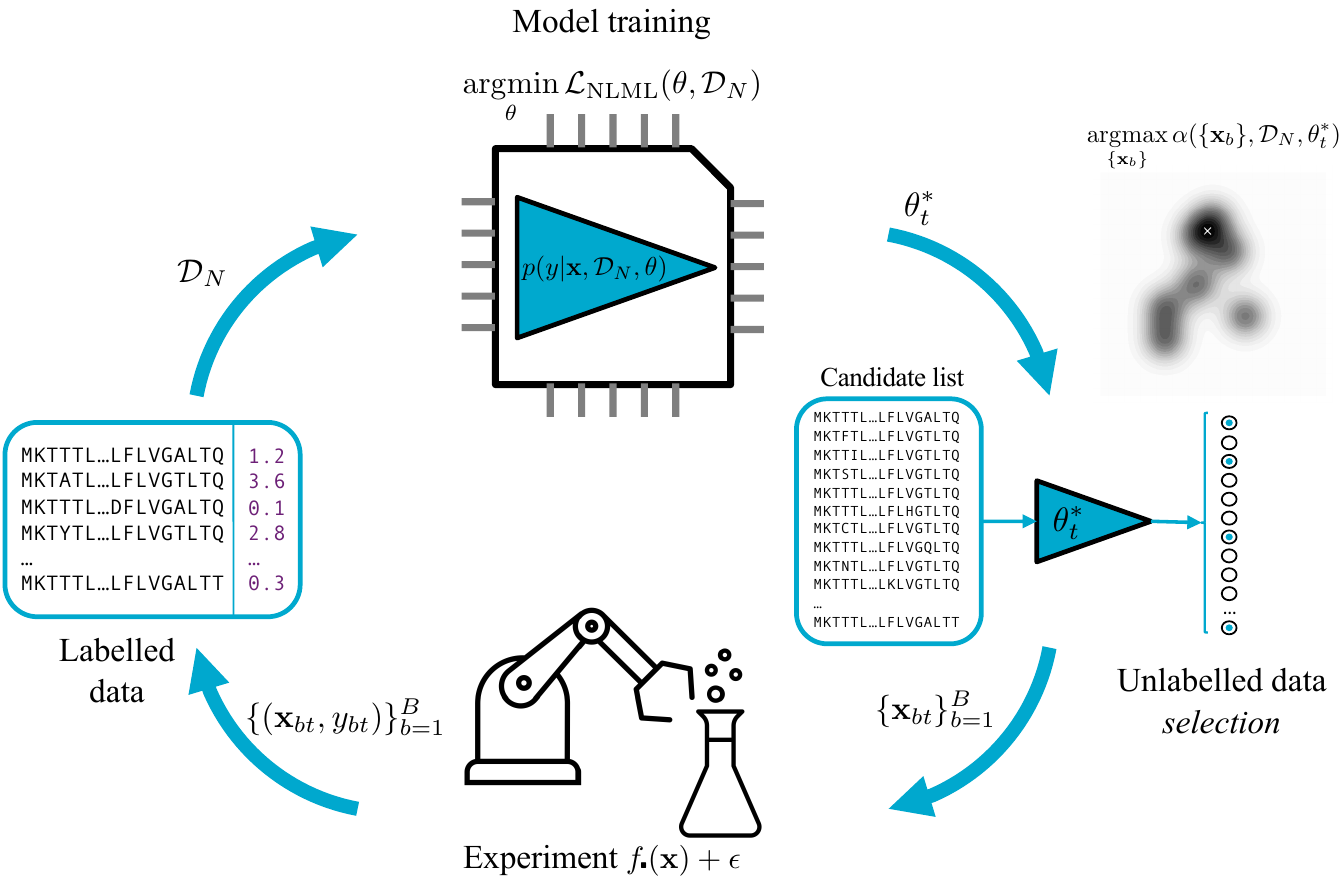}}
    \caption{Depictions of (\subref{sfig:ag}) active generation as implemented by \gls{vsd}, and (\subref{sfig:batchbo}) batch Bayesian optimization as applied to discrete sequences. Please see the text for a discussion of the differences between these approaches.}
    \label{fig:diagrams}
\end{figure}

\section{Experimental Details}
\label{app:expdet}

We use three well established datasets; a green fluorescent protein (GFP) from Aequorea Victoria~\citep{sarkisyan2016local}, an adeno-associated virus (AAV)~\cite{bryant2021deep}; and DNA binding activity to a human transcription factor (TFBIND8)~\citep{trabucco2022design, barrera2016survey}. These datasets have been used variously by~\citet{brookes2018design, brookes2019conditioning, angermueller2019model, kirjner2024improving, jain2022biological} among others. The GFP task is to maximize fluorescence, this protein consists of 238 amino acids, of which 237 can mutate. The AAV task us to maximize the genetic payload that can be delivered, and the associated protein has 28 amino acids, all of which can mutate. A complete combinatorial assessment is infeasible for these tasks, and so we use the convolution neural network oracle presented in~\cite{kirjner2024improving} as \emph{in-silico} ground truth. TFBIND8 contains a complete combinatorial assessment of the effect of changing 8 nucleotides on binding to human transcription factor SIX6 REF R1~\citep{barrera2016survey}. The dataset we use contains all 65536 sequences prepared by~\cite{trabucco2022design}.

We also use two novel datasets from recent works that experimentally assess the (near) complete combinatorial space of short sequences. The first dataset measures the antibiotic resistance of Escherichia coli metabolic gene folA, which encodes dihydrofolate reductase (DHFR)~\citep{papkou2023rugged}. Only a sub-sequence of this gene is varied (9 nucleic acids which encode 3 amino acids), and so a near-complete (99.7\%) combinatorial scan is available. For variants that have no fitness (resistance) data available, we give a score of $-1$. The next dataset is near-complete combinatorial scan of four interacting amino acid residues near the active site of the enzyme tryptophan synthase (TrpB)~\citep{johnston2024combinatorially}, with 159,129 unique sequences and fitness values, we use $-0.2$ for the missing fitness values (we do not use the authors' imputed values). These residues are explicitly shown to exhibit epistasis -- or non-additive effects on catalytic function -- which makes navigating this landscape a more interesting challenge from an optimization perspective.

Finally, we use the recently proposed Ehrlich functions \citep{stanton2024closed} benchmark. These functions are challenging closed form biological analogues, specifically designed to test \gls{bbo} methods on high dimensional sequence design tasks without having to resort to physical experimentation or machine learning oracles. We use the \textsc{poli} and \textsc{poli-baselines} software package for the benchmark and baselines \citep{gonzalez-duque2024poli}, and test on both the original \textsc{holo} implementation \citep{stanton2024closed} as well as the native \textsc{poli} implementation of these functions.

The properties of these datasets and benchmarks are presented in \autoref{tab:dataprops}.

\begin{table}[htb]
    \footnotesize
    \centering
    \begin{tabular}{r|c c c c}
         \textbf{Dataset} & $\card{\acidspace}$ & $M$ & $\card{\obsspace_\textrm{available}}$ & $\card{\obsspace}$ \\
         \hline
         TFBIND8 & 4 & 8 & 65,536 & 65,536 \\
         TrpB & 20 & 4 & 159,129 & 160,000 \\
         DHFR & 4 & 9 & 261,333 & 262,144 \\
         AAV & 20 & 28 & 42,340 & $20^{28}$ \\
         GFP & 20 & 237 & 51,715 & $20^{237}$ \\
         Ehrlich-15 & 20 & 15 & $20^{15}$ & $20^{15}$ \\
         Ehrlich-32 & 20 & 32 & $20^{32}$ & $20^{32}$ \\
         Ehrlich-64 & 20 & 64 & $20^{64}$ & $20^{64}$ \\
    \end{tabular}
    \caption{Alphabet size, sequence length, and number of available sequences for each of the datasets we use in this work.}
    \label{tab:dataprops}
\end{table}

We optimize \gls{vsd}, \gls{cbas}, \gls{dbas} and \gls{bore} for a minimum of 3000 iterations each round (5000 for all experiments but the Ehrlich functions) using Adam \citep{kingma2014adam}. When we use a \gls{cpe}, AdaLead's $\kappa$ parameter is set to 0.5 since the \gls{cpe} already incorporates the appropriate threshold.

\subsection{Fitness Landscapes Settings}
\label{app:fl_settings}

For the DHFR and TrpB experiments we set maximum fitness in the training dataset to be that of the wild type, and $\thresh$ to be slightly below the wild type fitness value (so we have $\sim\!10$ positive examples to train the \gls{cpe} with). We use a randomly selected $N_\textrm{train} = 2000$ below the wild-type fitness to initially train the \gls{cpe}, we also explicitly include the wild-type. The thresholds and wild-type fitness values are; DHRF: $\thresh = -0.1$, $\tar_\textrm{wt} = 0$, TrpB: $\thresh = 0.35$, $\tar_\textrm{wt} = 0.409$. We follow the same procedure for the TFBIND8 experiment, however, there is no notion of a wild-type sequence in this data, and so we set $\thresh = 0.75$, and $\tar_\textrm{train max} = 0.85$.
We use a uniform prior over sequences, $\prob{\obs} = \prod^M_{m=1} \categc{\acid_m}{\mathbf{1} \cdot |\acidspace|^{-1}}$, since these are relatively small search spaces, and the sub-sequences of nucleic/amino acids have been specifically selected for their task. Similarly, we find that relatively simple independent (mean-field) variational distributions of the form in \autoref{eq:proposal} and MLP based \gls{cpe}s work best for these experiments (details in \autoref{sub:cpe}).

\subsection{Black-box Optimization Settings}
\label{app:bbo_settings}

We follow \cite{kirjner2024improving} in the experimental settings for the AAV and GFP datasets, but we modify the maximum fitness training point and training dataset sizes to make them more amenable to a sequential optimization setting. The initial percentiles, schedule, and max training fitness values are; AAV: $p_0 = 0.8$, $\eta=0.7$, $\tar_\textrm{max} = 5$, GFP: $p_0 = 0.8$, $\eta = 0.7$ $\tar_\textrm{max} = 1.9$. We aim for $p_T = 0.99$. The edit distance between $\obs^*$ and the fittest sequence in the \gls{cpe} training data is 8 for GFP, and 13 for AAV. We again use a random $N_\textrm{train} = 2000$ for training the \gls{cpe}s, which in this case are CNNs -- architecture specifics are in \autoref{sub:cpe}.

For the Ehrlich function experiment, we use sequence lengths of $M=\{15, 32, 64\}$ with 2 motifs for the shorter sequence lengths, and 8 motifs for $M=64$. All use a motif length of 4 and a quantization of 4. $B=128$, $T=32$ and \emph{only} 128 random samples of the function are used for $\data_N$ -- these are resampled for each seed. As before, 5 different random seeds are used for these trials, and for VSD we use an the same scheduling function for $\thresh_t$ as in \autoref{eq:annealedthresh}, with $p_0 = 0.5$ and $\eta=0.87$ (so $p_T = 0.99$). The lower initial percentile is used since the training dataset is much smaller than in the other experiments, and we find allowing for more exploration initially improves \gls{vsd}'s performance.

In these higher dimensional settings, we find that performance of the methods heavily relies on using an informed prior (in the case of \gls{vsd} and \gls{cbas}), or initial variational distribution (in the case of \gls{dbas} and \gls{bore}). To this end, we follow \citet{brookes2019conditioning} and fit the initial variational distribution to the \gls{cpe} training sequences (regardless of fitness), but we use maximum likelihood. For the more complex variational distributions (\gls{lstm} and transformer), we have to be careful not to over-fit -- so we implement early stopping and data augmentation techniques. Then for \gls{vsd} and \gls{cbas} we copy this distribution and fix its parameters for the remainder of the experiment for use as a prior. We also use this prior for the Random method, but AdaLead and \gls{pex} use alternative generative heuristics. For these experiments we use the simple independent variational distribution and the same \gls{lstm} and causal decoder-only transformer models from \autoref{sub:cde}.

\subsection{Variational Distributions}
\label{app:var_dist}

In this section we summarize the main variational distribution architectures considered for \gls{vsd}, \gls{bore}, \gls{cbas} and \gls{dbas}, and the sampling distributions for the Random baseline method. Somewhat surprisingly, we find that we often obtain good results for the biological sequence experiments using a simple independent (or mean-field) variational distribution, especially in lower dimensional settings,
\begin{align}
    \qrobc{\obs}{\qparam} = \prod^M_{m=1} \categc{\acid_m }{\softmax{\qparam_m}},
    \label{eq:proposal}
\end{align}
where $\acid_m \in \acidspace$ and $\qparam_m \in \real^{\card{\acidspace}}$. However, this simple mean-field distribution was not capable of generating convincing handwritten digits or, in some cases, higher-dimensional sequences. We  have also tested a variety of transition variational distributions,
\begin{align}
    \qrobc{\obs_t}{\obs_{t-1}, \qparam} = \prod^M_{m=1} \categc{\acid_{tm}}{\softmax{\textrm{NN}_m(\obs_{t-1}, \qparam)}},
    \label{eq:transition-proposal}
\end{align}
where $\textrm{NN}_m(\obs_{t-1}, \qparam)$ is the $m^\textrm{th}$ vector output of a neural network that takes a sequence from the previous round, $\obs_{t-1}$, as input. We have implemented multiple neural net encoder/decoder architectures for $\textrm{NN}_m(\obs_{t-1}, \qparam)$, but we did not consider architectures of the form $\textrm{NN}_m(\qparam)$ since the variational distribution in \autoref{eq:proposal} can always learn a $\phi_m = \textrm{NN}_m(\qparam')$. We found that none of these transition architectures significantly outperformed the mean-field distribution (\autoref{eq:proposal}) when it was initialized well (e.g.~fit to the \gls{cpe} training sequences), see \autoref{sub:abl} for results. We also implemented auto-regressive variational distributions of the form,
\begin{align}
    \qrobc{\obs}{\qparam} &= \categc{\acid_1}{\softmax{\qparam_1}} \prod^M_{m=2} \qrobc{\acid_m}{\acid_{1:m-1}, \qparam_{1:m}} \quad \textrm{where,}\label{eq:autoregressive-proposal} \\
    \qrobc{\acid_m}{\acid_{1:m-1}, \qparam_{1:m}} &=
    \begin{cases}
        \categc{\acid_m}{\softmax{\mathrm{LSTM}(\acid_{m-1}, \qparam_{m-1:m})}}, \nonumber \\
        \categc{\acid_m}{\softmax{\mathrm{DTransformer}(\acid_{1:m-1}, \qparam_{1:m})}}. \nonumber
    \end{cases}
\end{align}

For a \gls{lstm} \gls{rnn} and a decoder-only transformer with a causal mask, for the latter see \citet[Algorithm 10 \& Algorithm 14]{phuong2022formal} for maximum likelihood training and sampling implementation details respectively. We list the configurations of the \gls{lstm} and transformer variational distributions in \autoref{tab:qsettings}. We use additive positional encoding for all of these models. When using these models for priors or initialization of variational distributions, we find that over-fitting can be an issue. To circumvent this, we use early stopping for  larger training datasets, or data augmentation techniques for smaller training datasets (as in the case of the Ehrlich functions).

\begin{table}[htb]
    \centering
    \begin{tabular}{r r|c c c c c c}
        & Configuration & Digits & AAV & GFP & Ehrlich 15 & Ehrlich 32 & Ehrlich 64 \\
        \hline
        LSTM & Layers & 5 & 4 & 4 & 3 & 3 & 3 \\
             & Network size & 128 & 32 & 32 & 32 & 32 & 64 \\
             & Embedding size & 4 & 10 & 10 & 10 & 10 & 10 \\
        \hline
        Transformer & Layers & 4 & 1 & 1 & 2 & 2 & 2 \\
                    & Network Size & 256 & 64 & 64 & 32 & 64 & 128 \\
                    & Attention heads & 8 & 2 & 2 & 1 & 2 & 3 \\
                    & Embedding size & 32 & 20 & 20 & 10 & 20 & 30
    \end{tabular}
    \caption{\Gls{lstm} and transformer network configuration.}
    \label{tab:qsettings}
\end{table}

\subsection{Class Probability Estimator Architectures}
\label{sub:cpe}

For the fitness landscape experiments on the smaller combinatorially complete datasets we use a two-hidden layer MLP, with an input embedding layer. The architecture is given in \autoref{fig:cpe-arch} (a). For the larger dimensional AAV and GFP datasets and Ehrlich function benchmark, we use the convolutional architecture given in \autoref{fig:cpe-arch} (b). On all but the Ehrlich benchmark, five fold cross validation was used to select the hyper parameters before the \gls{cpe}s are trained on the whole training set for use in the subsequent experimental rounds. For the Ehrlich benchmark we do not use cross-validation to select the \gls{cpe} hyper parameters -- but we do use an additive ensemble of 10 randomly initialized CNNs for the \gls{cpe} following LaMBO-2. Model updates are performed by retraining on the whole query set.

\begin{figure}[htb]
    \centering
\begin{minipage}{.45\textwidth}%
\begin{lstlisting}
Sequential(
    Embedding(
        num_embeddings=A,
        embedding_dim=8
    ),
    Dropout(p=0.2),
    Flatten(),
    LeakyReLU(),
    Linear(
        in_features=8 * M,
        out_features=32
    ),
    LeakyReLU(),
    Linear(
        in_features=32,
        out_features=1
    ),
)
\end{lstlisting}
\centering
\vspace{6.35cm}
(a) MLP architecture
\end{minipage}%
\begin{minipage}{.45\textwidth}%
\begin{lstlisting}
Sequential(
    Embedding(
        num_embeddings=A,
        embedding_dim=10
    ),
    Dropout(p=0.2),
    Conv1d(
        in_channels=10,
        out_channels=16,
        kernel_size=3 or 7,
    ),
    LeakyReLU(),
    MaxPool1d(
        kernel_size=2 or 4,
        stride=2 or 4,
    ),
    Conv1d(
        in_channels=16,
        out_channels=16,
        kernel_size=7,
    ),
    LeakyReLU(),
    MaxPool1d(
        kernel_size=2 or 4,
        stride=2 or 4,
    ),
    Flatten(),
    LazyLinear(
        out_features=128
    ),
    LeakyReLU(),
    Linear(
        in_features=128,
        out_features=1
    ),
)
\end{lstlisting}
\centering
(b) CNN architecture
\end{minipage}%
    \caption{\Gls{cpe} architectures used for the experiments in PyTorch syntax. $\texttt{A} = \card{\acidspace}$, $\texttt{M} = M$, GFP uses a max pooling kernel size and stride of 4, all other datasets and benchmarks use 2. The Ehrlich function benchmark uses and ensemble of 10 randomly initialized CNNs that are additively combined. The Ehrlich-15 functions use a kernel size of 3,
    all other \gls{bbo} experiments use a kernel size of 7. LaMBO-2 uses the same kernel size as our CNNs for the Ehrlich functions.}
    \label{fig:cpe-arch}
\end{figure}

\section{Additional Experimental Results}
\label{sec:aresults}

\subsection{Fitness Landscapes -- Gaussian Process Probability of Improvement}
\label{sub:gp_fl}

Here we present additional fitness landscape experimental results, where we have used a \gls{gp} as a surrogate model for $\probc{\tar}{\obs, \data_N}$ in conjunction with a complementary Normal CDF as the \gls{pi} acquisition function. This is one of the main frameworks supported by our theoretical analysis.
\Gls{vsd}, \gls{dbas}, \gls{cbas} and \gls{bore} can make use of the \gls{gp}-\gls{pi} acquisition function, and so \gls{bore} is \gls{bopr}~\citep{daulton2022bayesian} in this instance since we are not using a \gls{cpe}. \Gls{pex} and AdaLead only use the \gls{gp} surrogate, as per their original formulation.
The \gls{gp} uses a simple categorical kernel with automatic relevance determination from \cite{balandat2020botorch},
\begin{align}
    \gpkernel(\obs, \obs') = \sigma \exp\!\left(-\frac{1}{M}\sum^M_{m=1}\frac{\indic{\acid_m = \acid'_m}}{l_m}\right),
\end{align}
where $\sigma$ and $l_m$ are hyper-parameters controlling scale and length-scale respectively. See \autoref{fig:fl_res_gp} for the results.

\begin{figure}[tb]
\centering
\includegraphics[width=.32\textwidth]{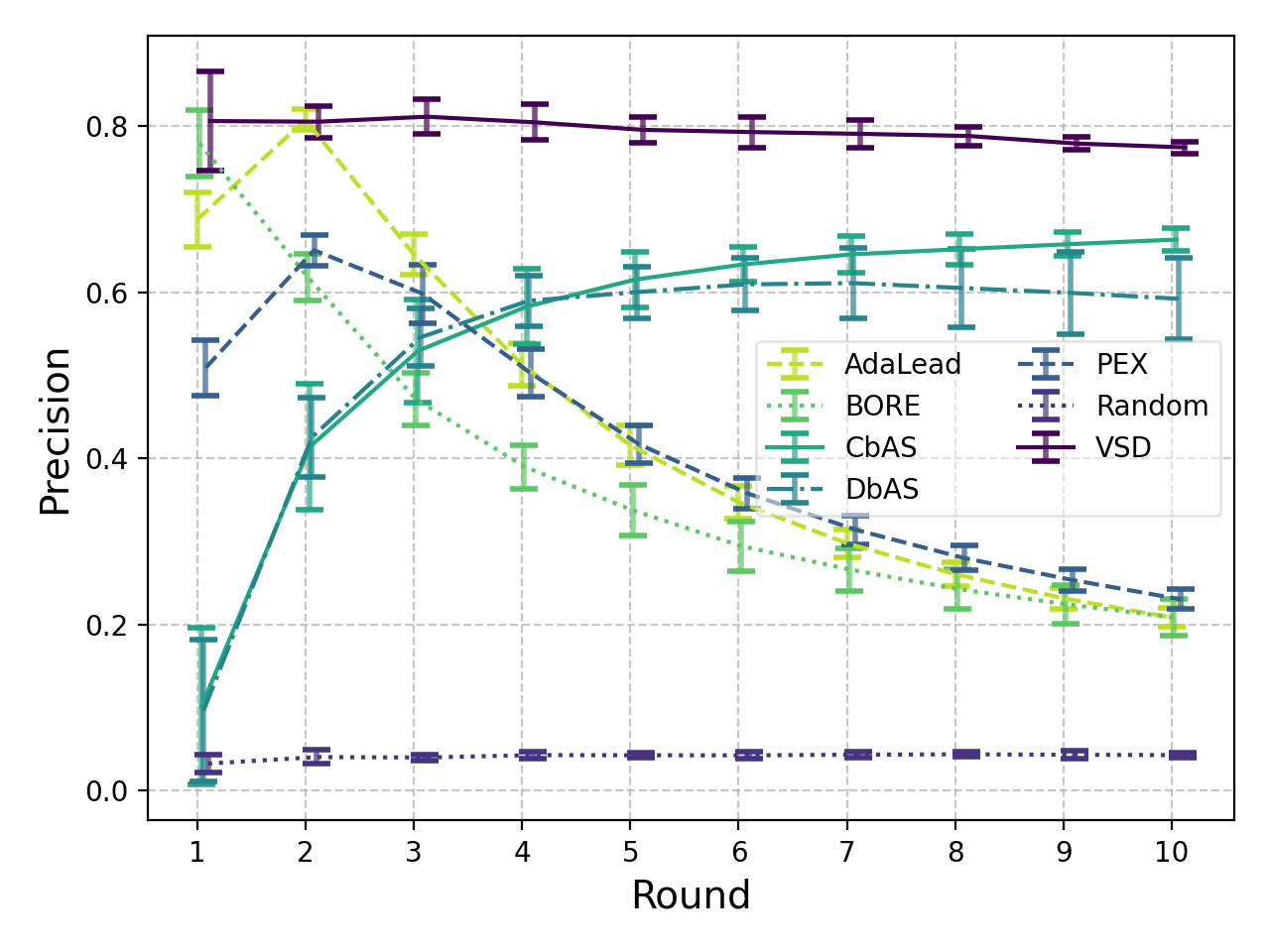}
\includegraphics[width=.32\textwidth]{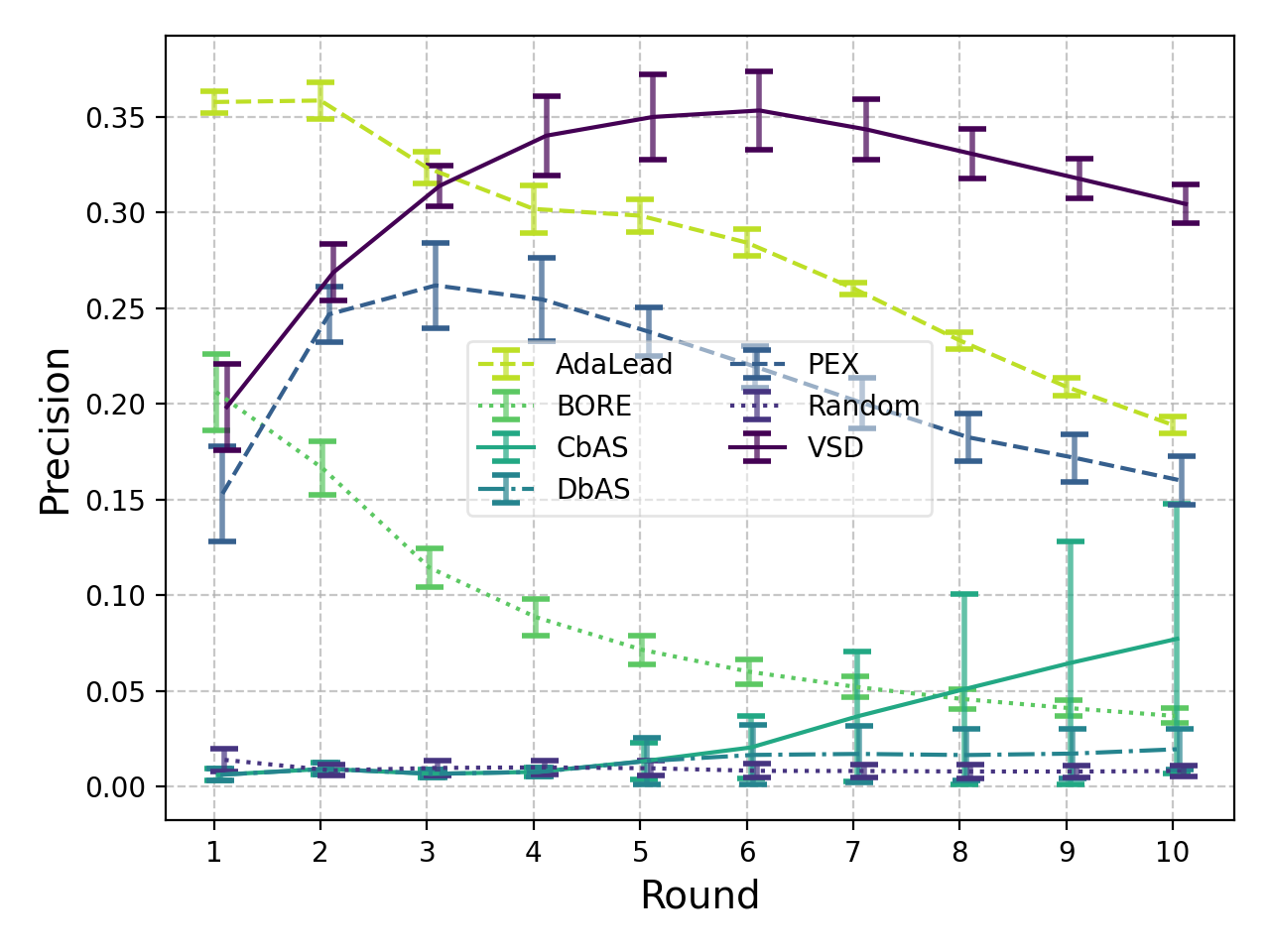}
\includegraphics[width=.32\textwidth]{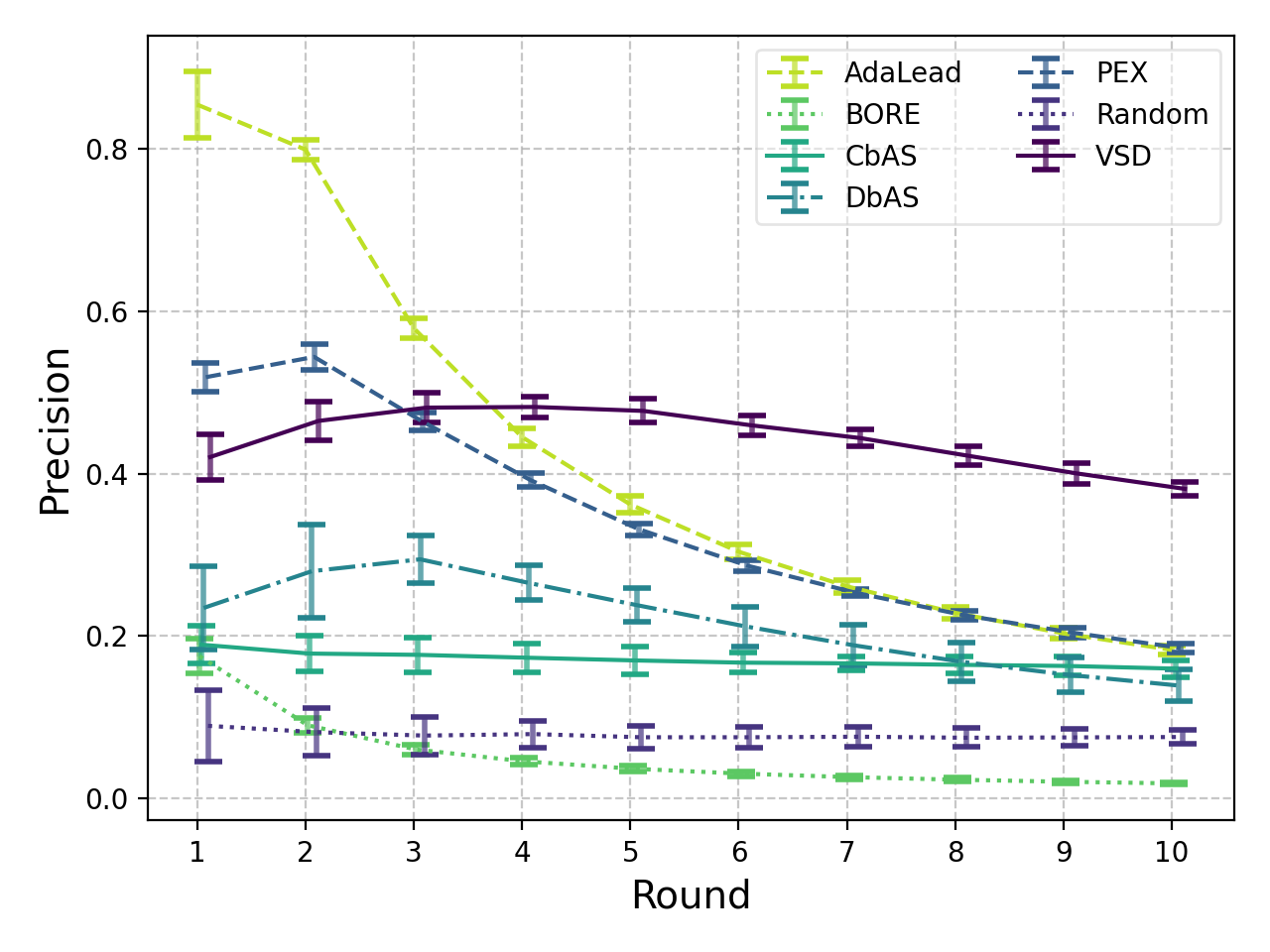} \\
\includegraphics[width=.32\textwidth]{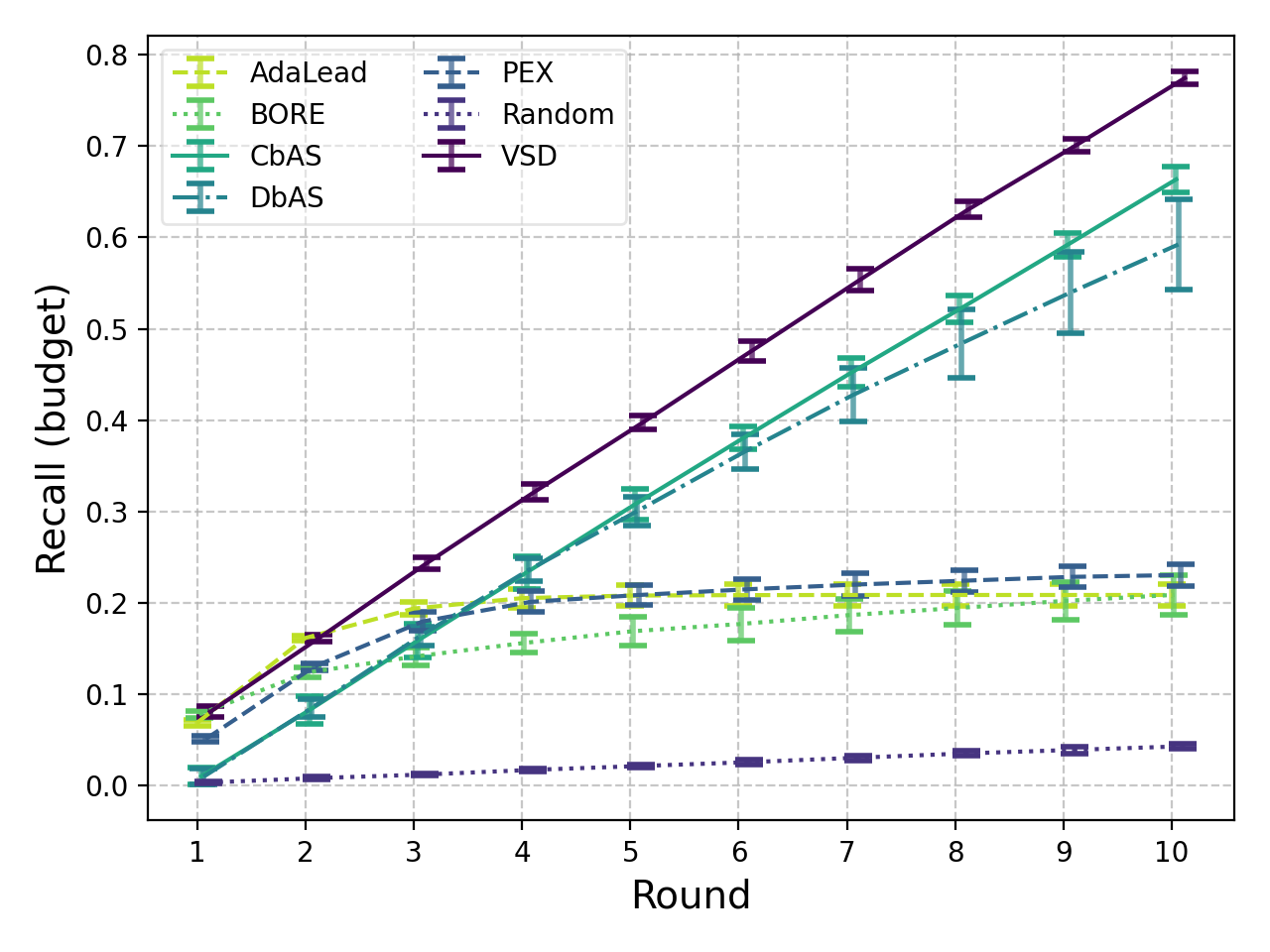}
\includegraphics[width=.32\textwidth]{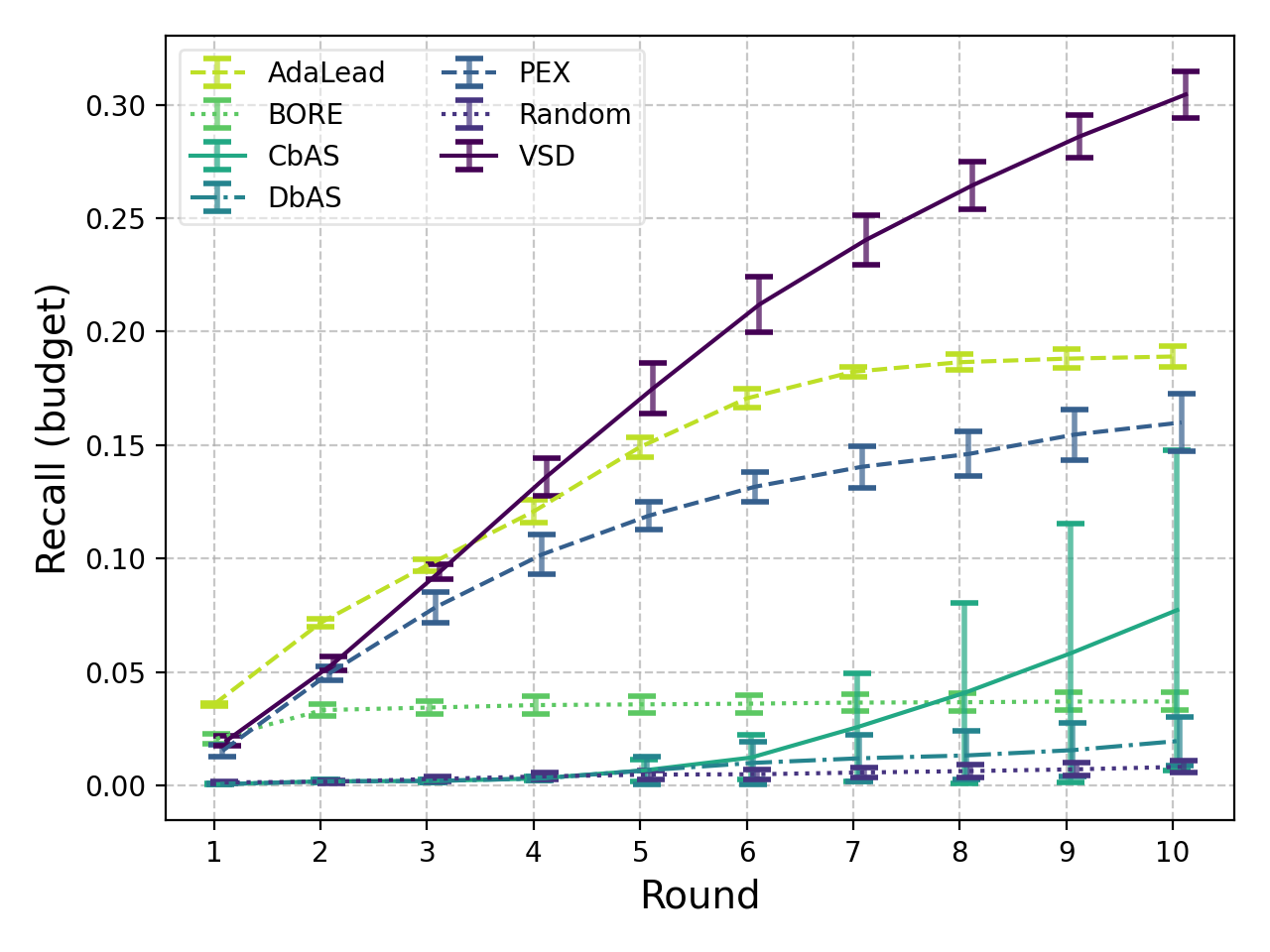}
\includegraphics[width=.32\textwidth]{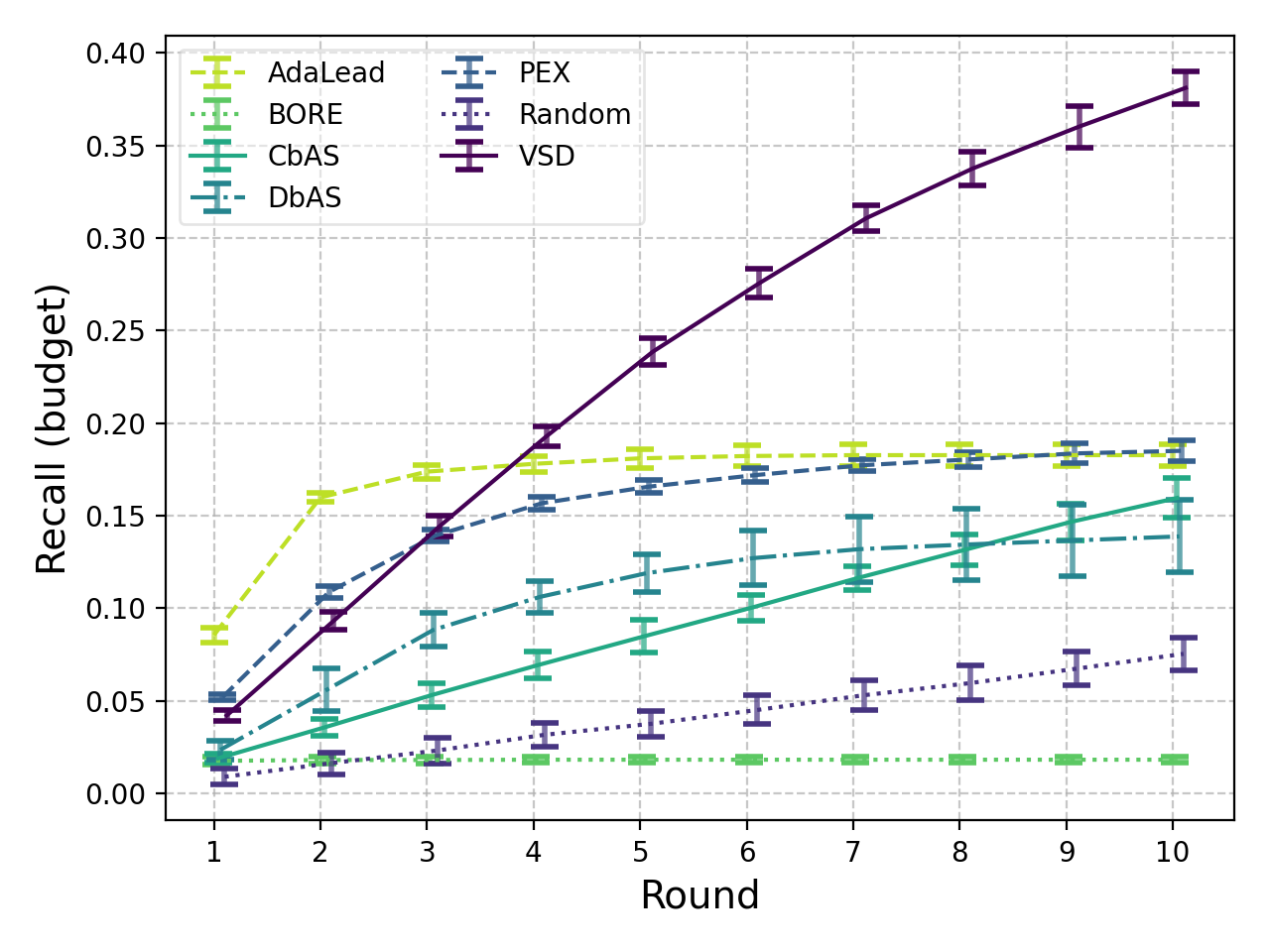} \\
\subcaptionbox{DHFR\label{sfig:dhfr_fl_gp}}
    {\includegraphics[width=.32\textwidth]{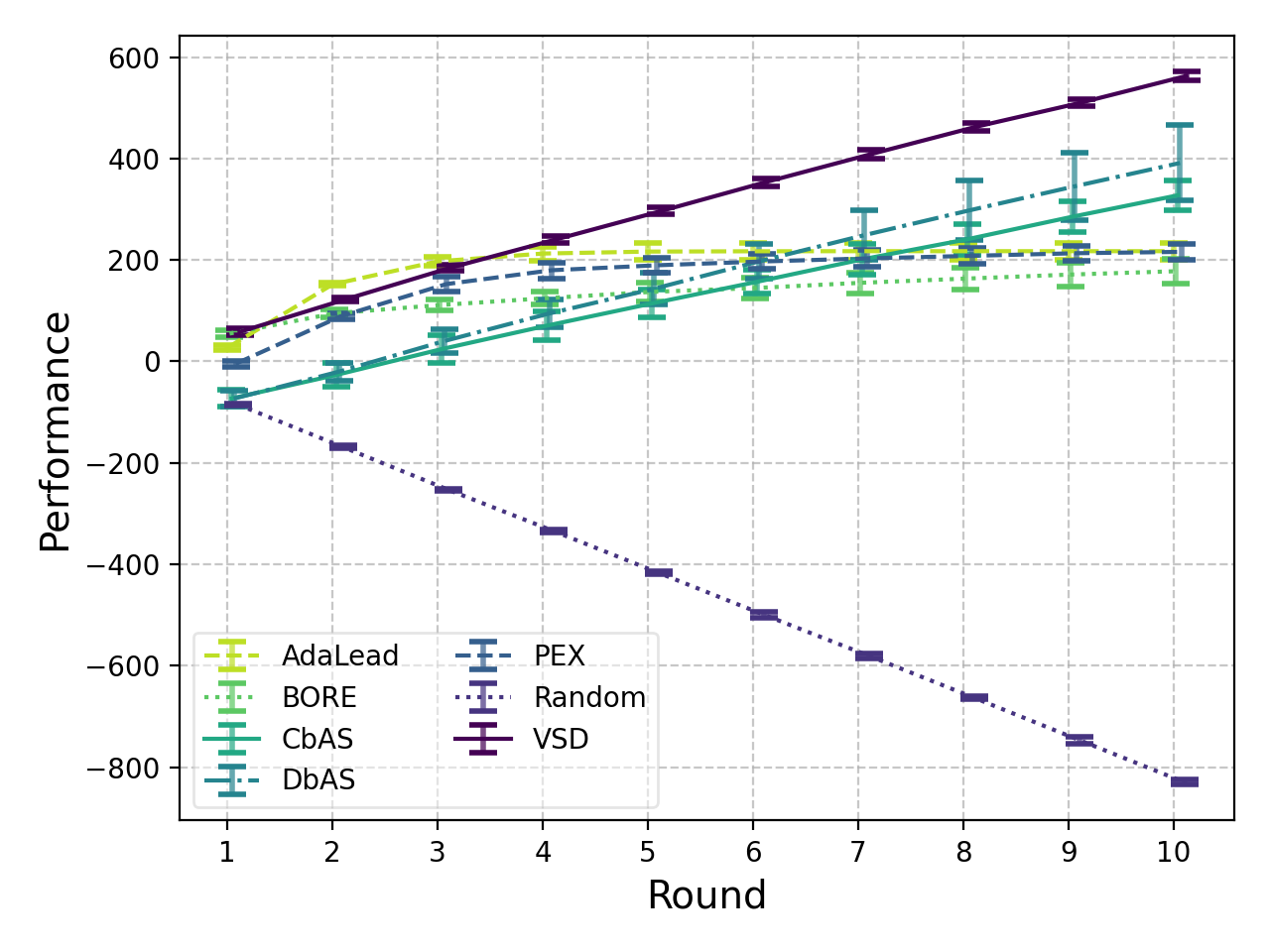}}
\subcaptionbox{TrpB\label{sfig:trpb_fl_gp}}
    {\includegraphics[width=.32\textwidth]{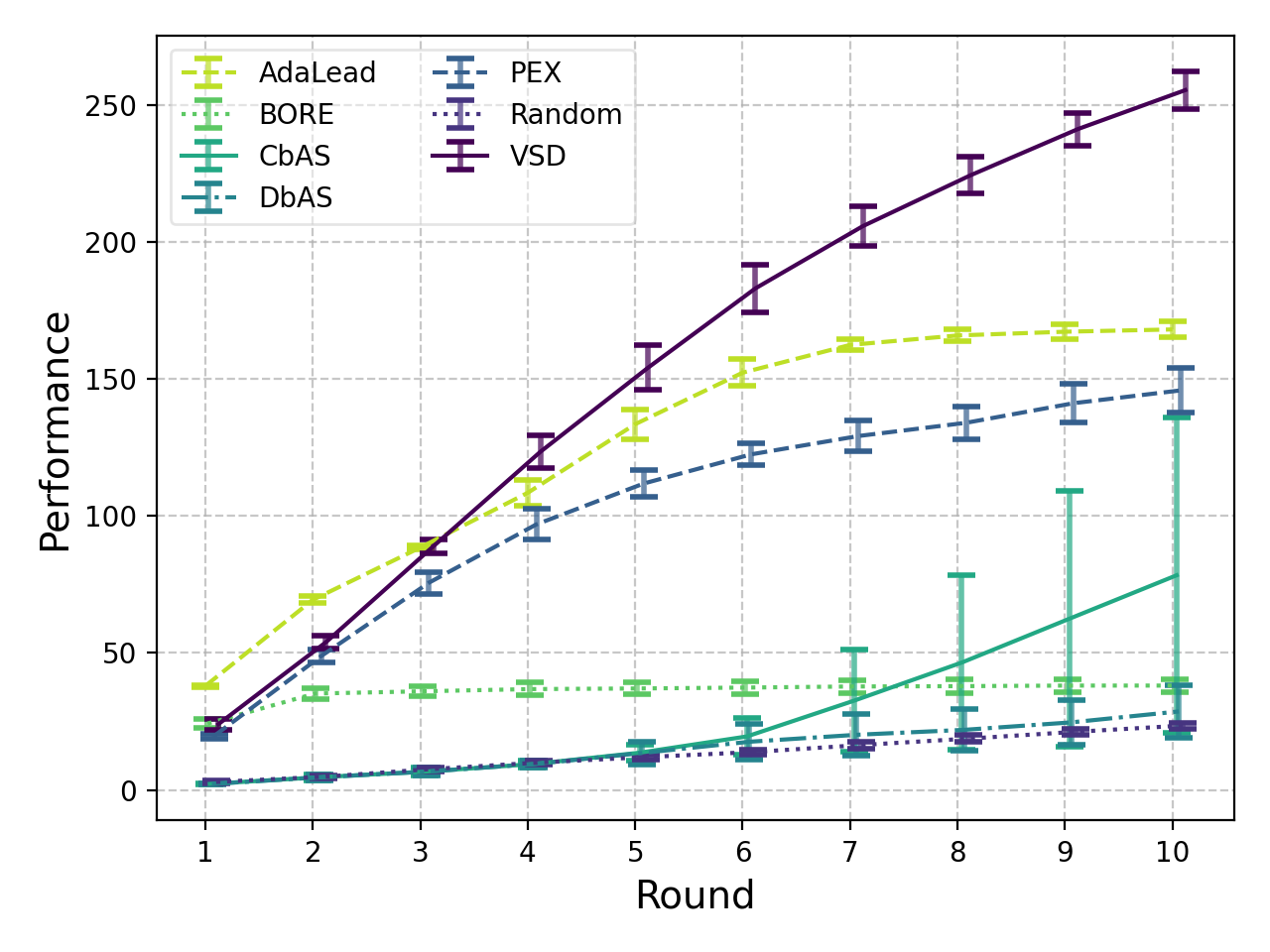}}
\subcaptionbox{TFBIND8\label{sfig:tfbind8_fl_gp}}
    {\includegraphics[width=.32\textwidth]{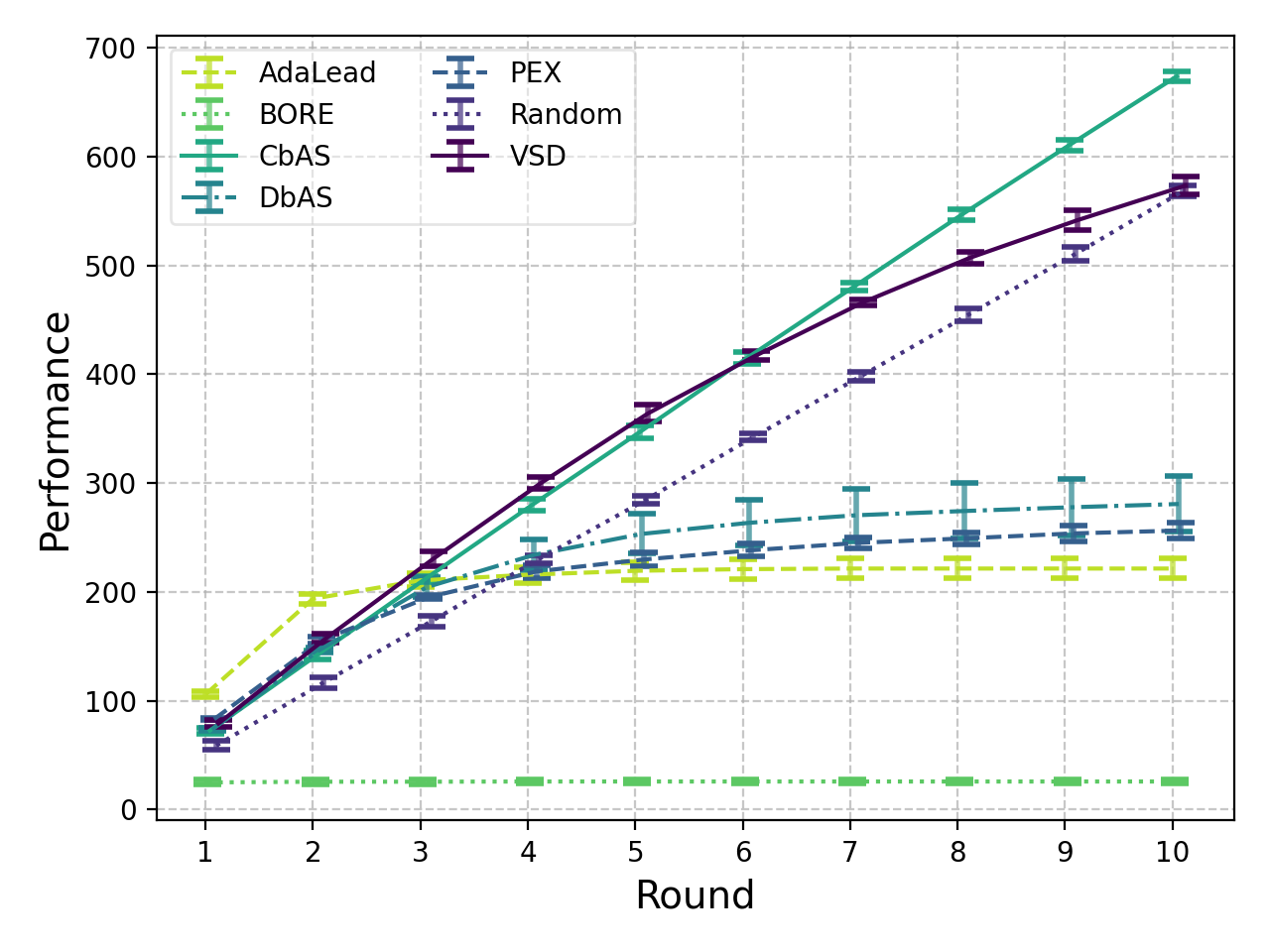}}
\caption{Fitness landscape results using \gls{gp}-\gls{pi}. Precision (\autoref{eq:prec}), recall (\autoref{eq:rec}) and performance (\autoref{eq:perf}) -- higher is better -- for the combinatorially (near) complete datasets, DHFR and TrpB and TFBIND8. The random method is implemented by drawing $B$ samples uniformly.}
\label{fig:fl_res_gp}
\end{figure}

\subsection{Ehrlich Function HOLO Results}
\label{app:ehrlich}

See \autoref{fig:ehrlich_holo_bbo} for \gls{bbo} results on the original \textsc{holo} Ehrlich function implementation~\citep{stanton2024closed}. We present additional diversity scores for these and the \textsc{poli} implementation in \autoref{sub:diversity}.

\begin{figure}[htb]
    \centering
    \subcaptionbox{$M=15$\label{sfig:ehrlich-holo-15}}
    {\includegraphics[width=.329\textwidth]{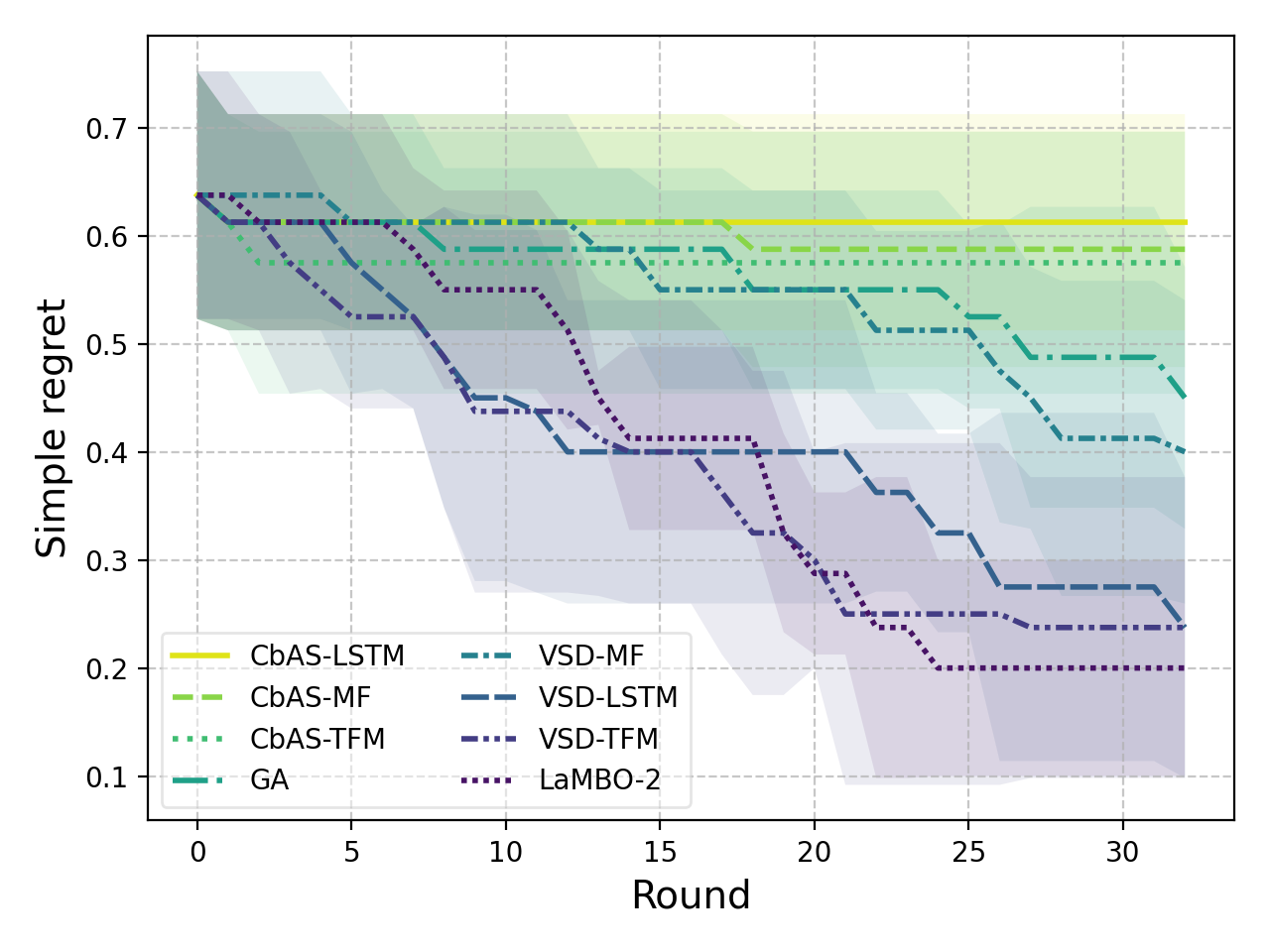}}
    \subcaptionbox{$M=32$\label{sfig:ehrlich-holo-32}}
    {\includegraphics[width=.329\textwidth]{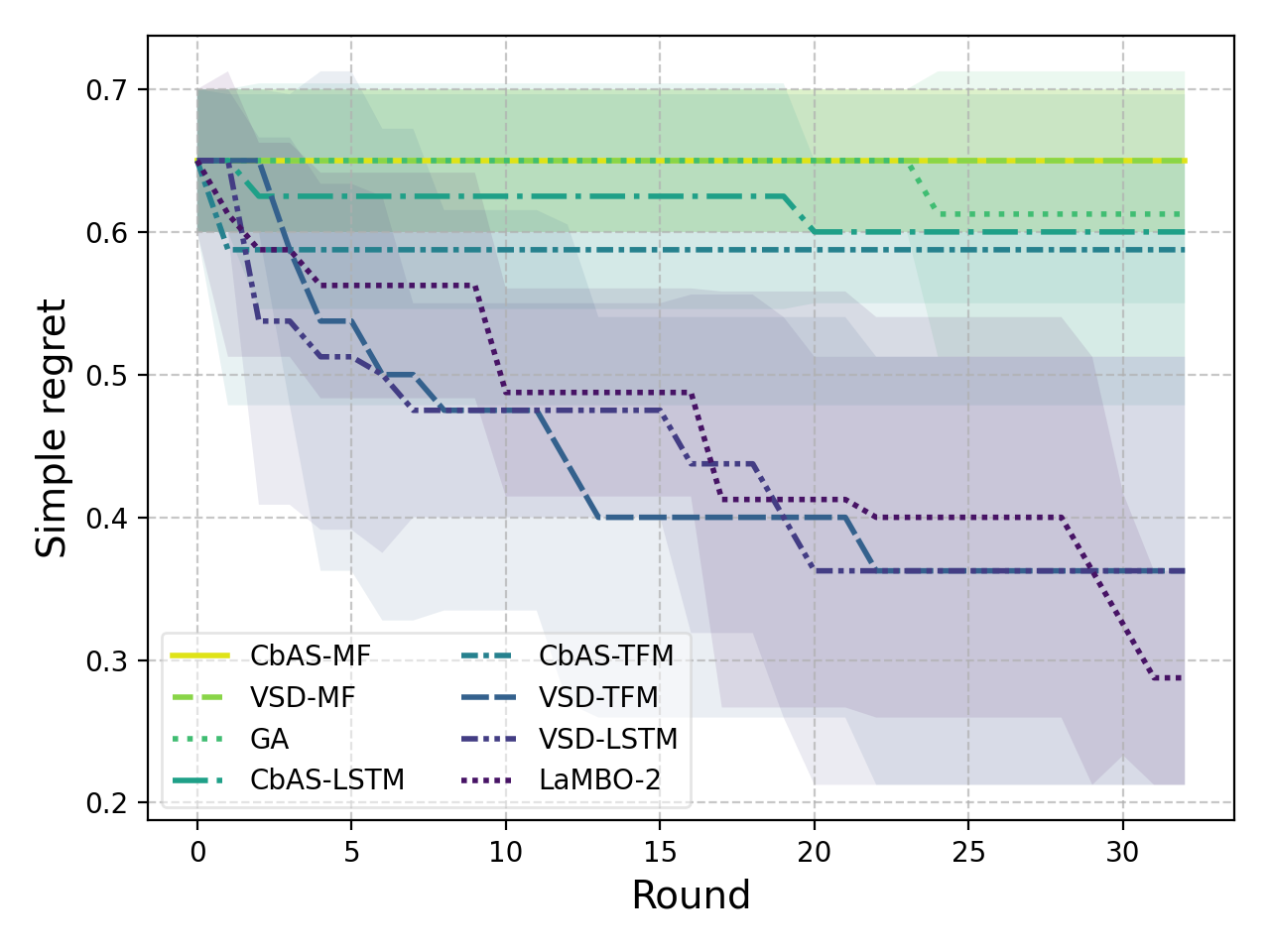}}
    \subcaptionbox{$M=64$\label{sfig:ehrlich-holo-64}}
    {\includegraphics[width=.329\textwidth]{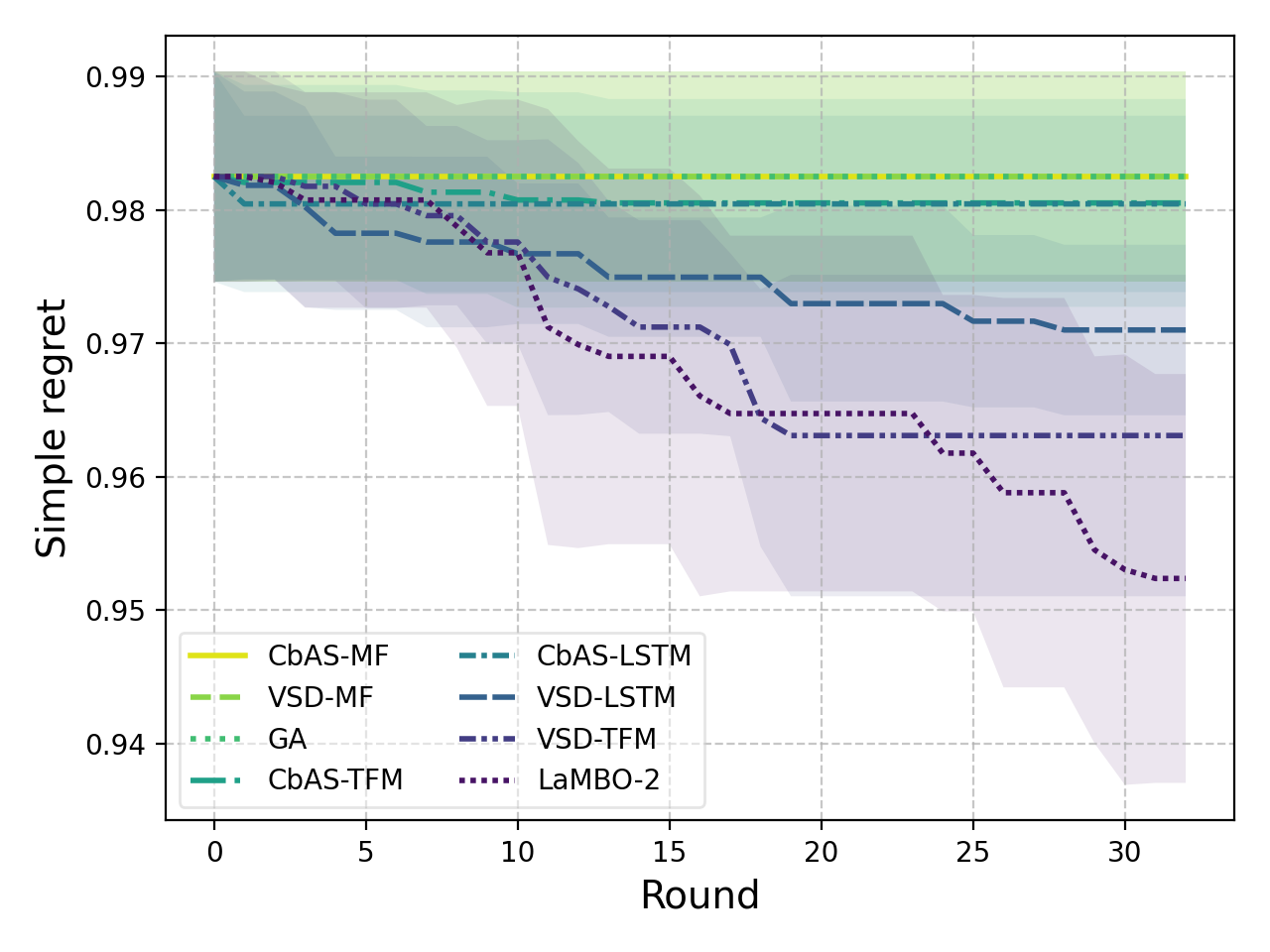}}
    \caption{Ehrlich function (\textsc{holo} implementation) \gls{bbo} results. \Gls{vsd} and \gls{cbas} with different variational distributions; mean field (MF), \gls{lstm} and transformer (TFM), compared against genetic algorithm (GA) and LaMBO-2 baselines.}
    \label{fig:ehrlich_holo_bbo}
\end{figure}

\subsection{Diversity Scores}
\label{sub:diversity}

The diversity of batches of candidates is a common thing to report in the literature, and to that end we present the diversity of our results here. We have taken the definition of pair-wise diversity from \citep{jain2022biological} as,
\begin{align}
    \mathrm{Diversity}_t = \frac{1}{B(B - 1)} \sum_{\obs_i \in \data_{Bt}} \sum_{\obs_j \in \data_{Bt} \setminus \{\obs_i\}} \mathrm{Lev}(\obs_i, \obs_j),
    \label{eq:divers}
\end{align}
where $\mathrm{Lev} : \obsspace \times \obsspace \to \N_0$ is the Levenshtein distance. We caution the reader as to the interpretation of these results however, as more diverse batches often do not lead to better performance, precision, recall or simple regret (as can be seen from the Random method results). Though insufficient diversity can also explain poor performance, as in the case of \gls{bore}. Results for the fitness landscape experiment are presented in \autoref{fig:fl_res_div}, and black-box optimization for AAV \& GFP in \autoref{fig:bbo_res_div} and Ehrlich functions in \autoref{fig:bbo_ehrlich_div}.

\begin{figure}[htb]
\centering
\subcaptionbox{DHFR\label{sfig:dhfr_fl_div}}
    {\includegraphics[width=.32\textwidth]{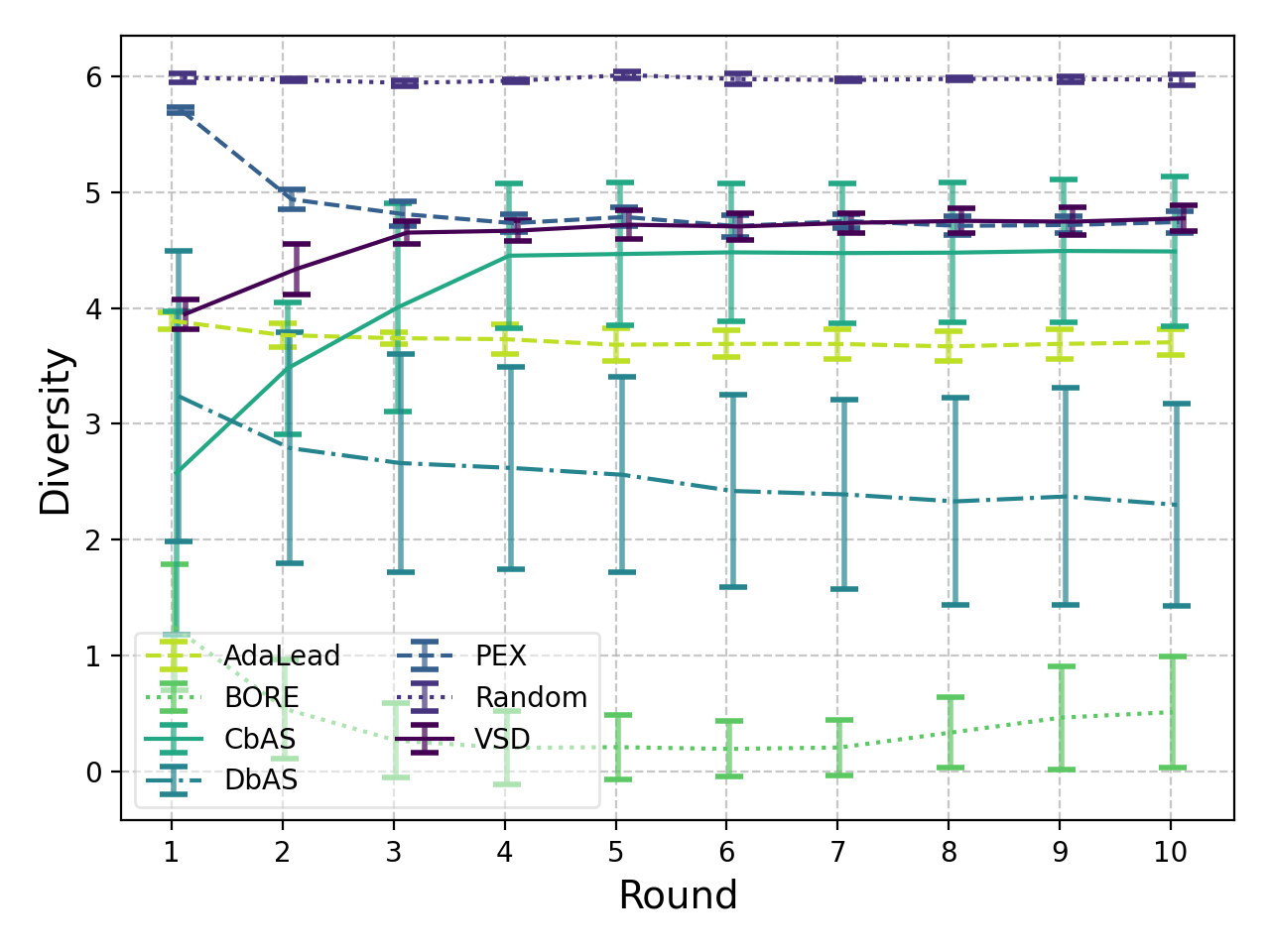}}
\subcaptionbox{TrpB\label{sfig:trpb_fl_div}}
    {\includegraphics[width=.32\textwidth]{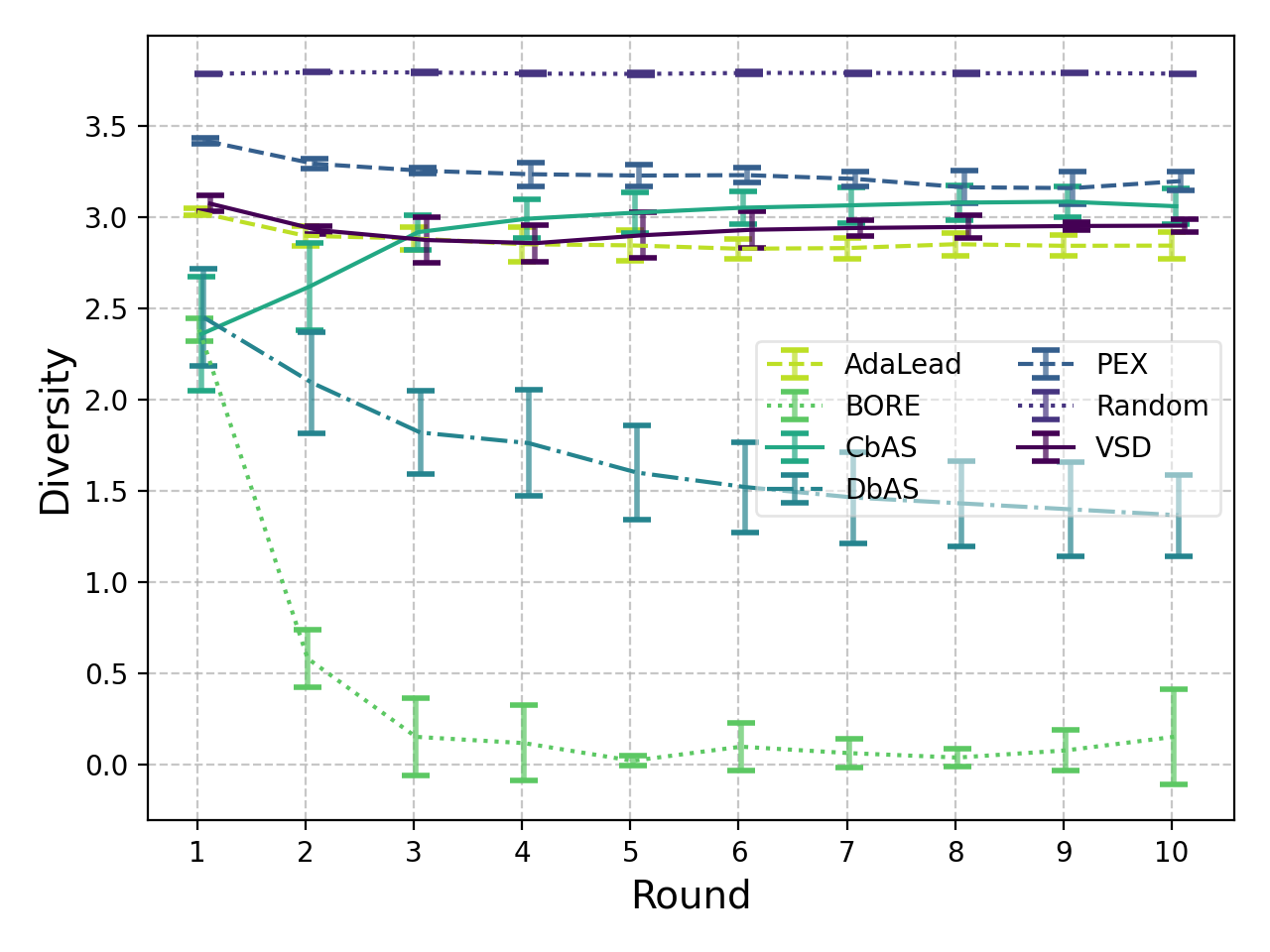}}
\subcaptionbox{TFBIND8\label{sfig:tfbind8_fl_div}}
    {\includegraphics[width=.32\textwidth]{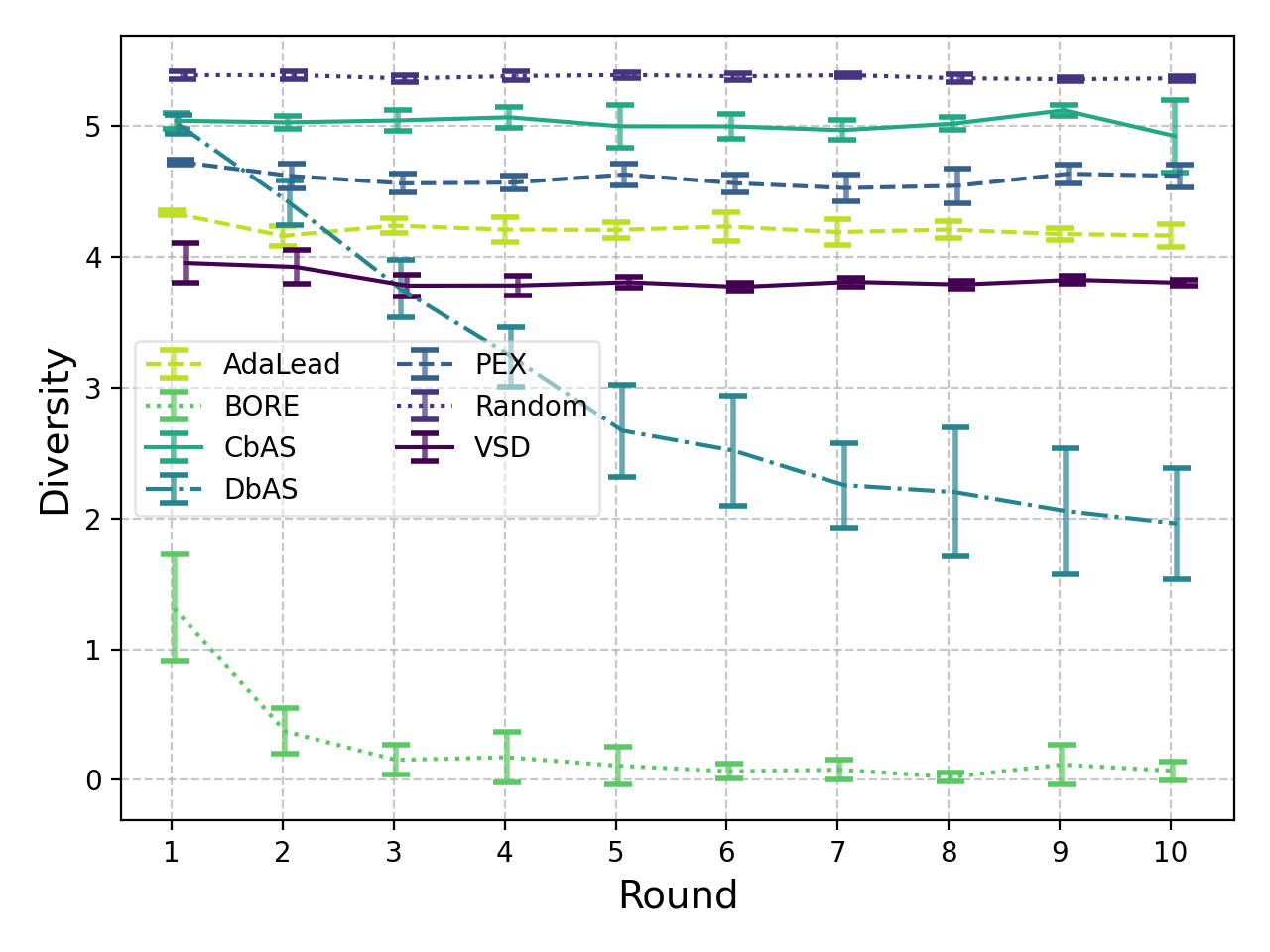}}
\caption{Fitness landscape diversity results. Higher is more diverse, as defined by \autoref{eq:divers}.}
\label{fig:fl_res_div}
\end{figure}

\begin{figure}[htb]
\centering
\rotatebox{90}{\hspace{1.5cm}GFP}
\includegraphics[width=.32\textwidth]{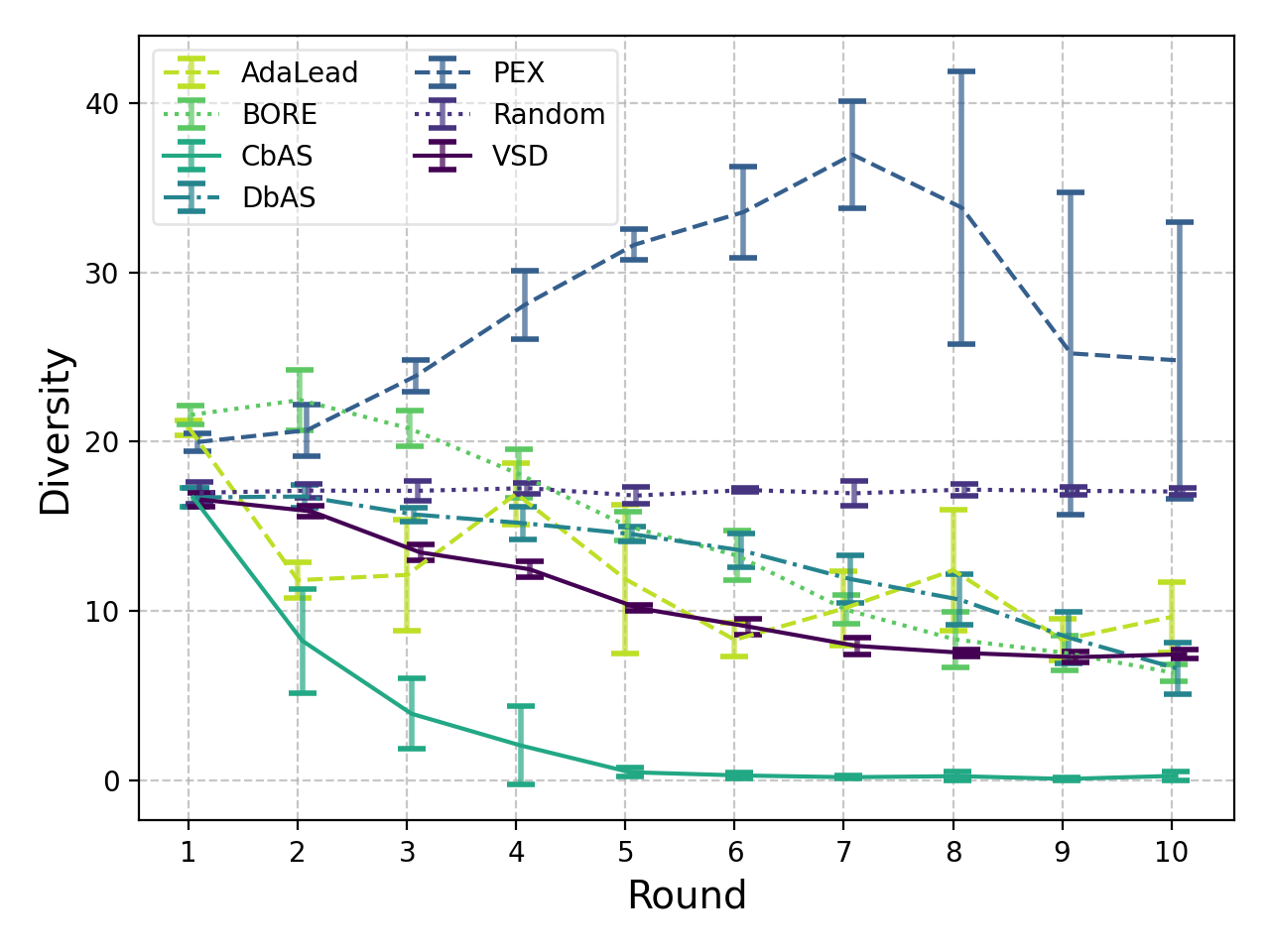}
\includegraphics[width=.32\textwidth]{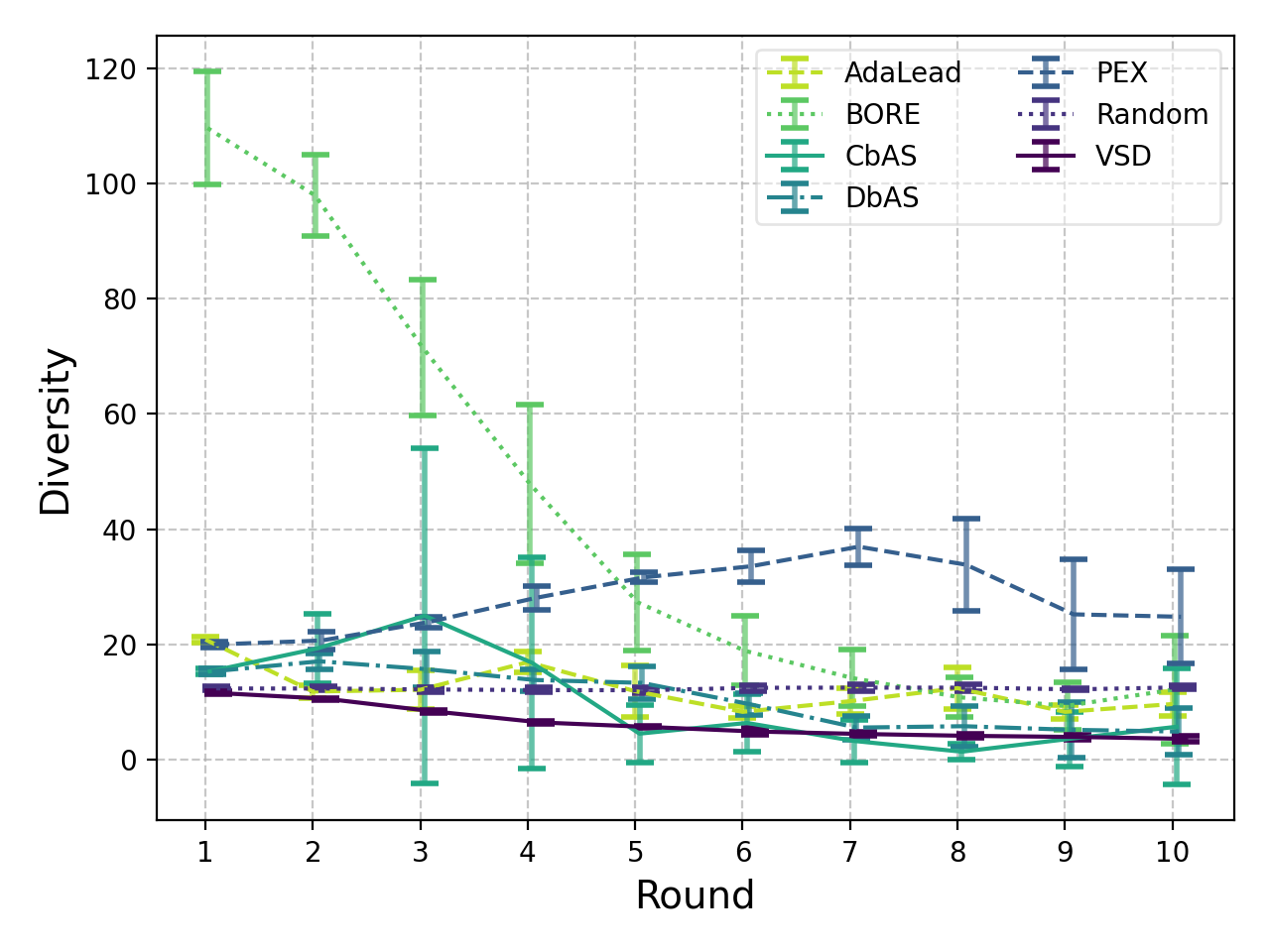}
\includegraphics[width=.32\textwidth]{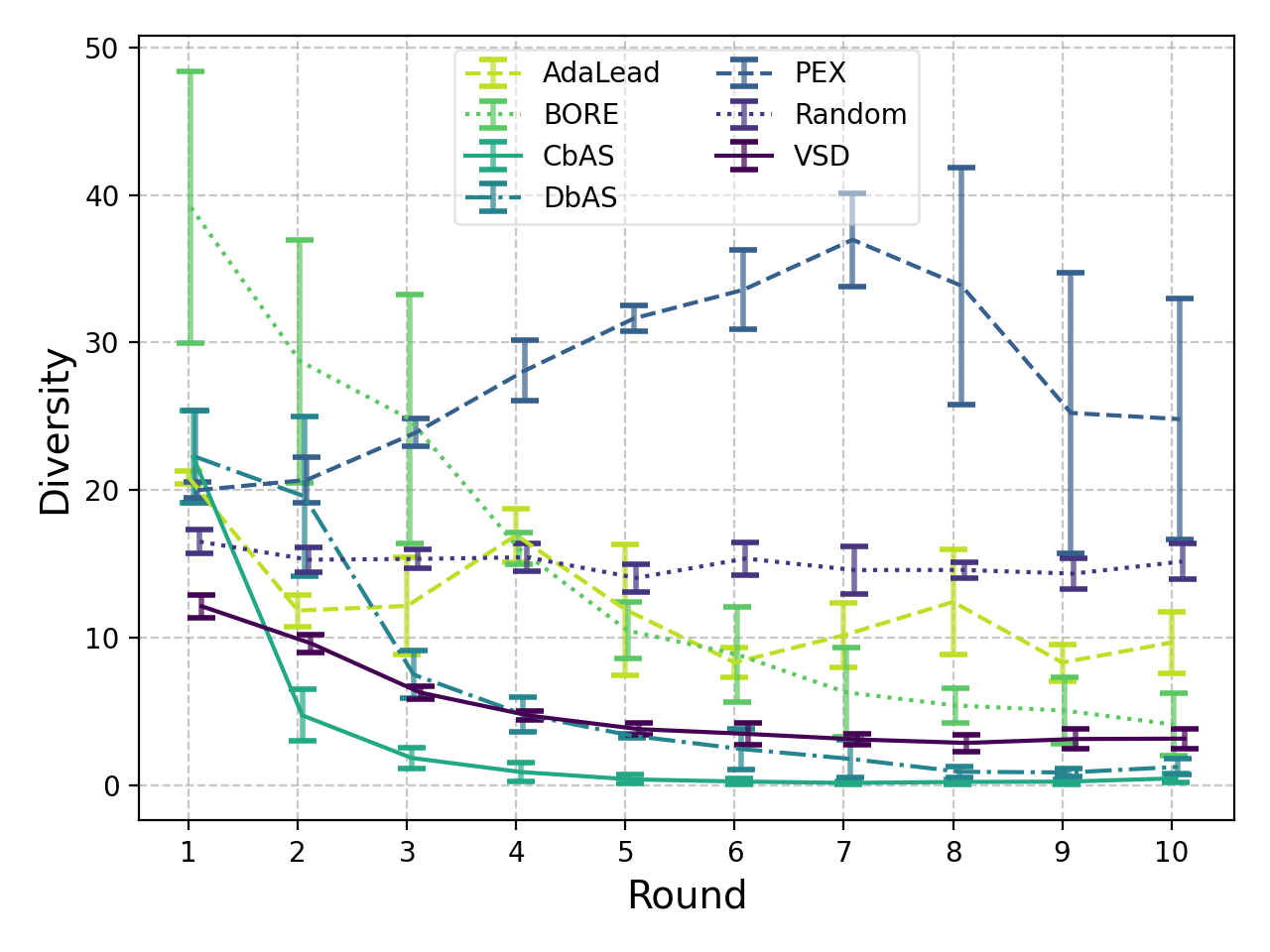} \\
\rotatebox{90}{\hspace{1.5cm}AAV}
\subcaptionbox{Independent\label{sfig:aav_div_mc}}
    {\includegraphics[width=.32\textwidth]{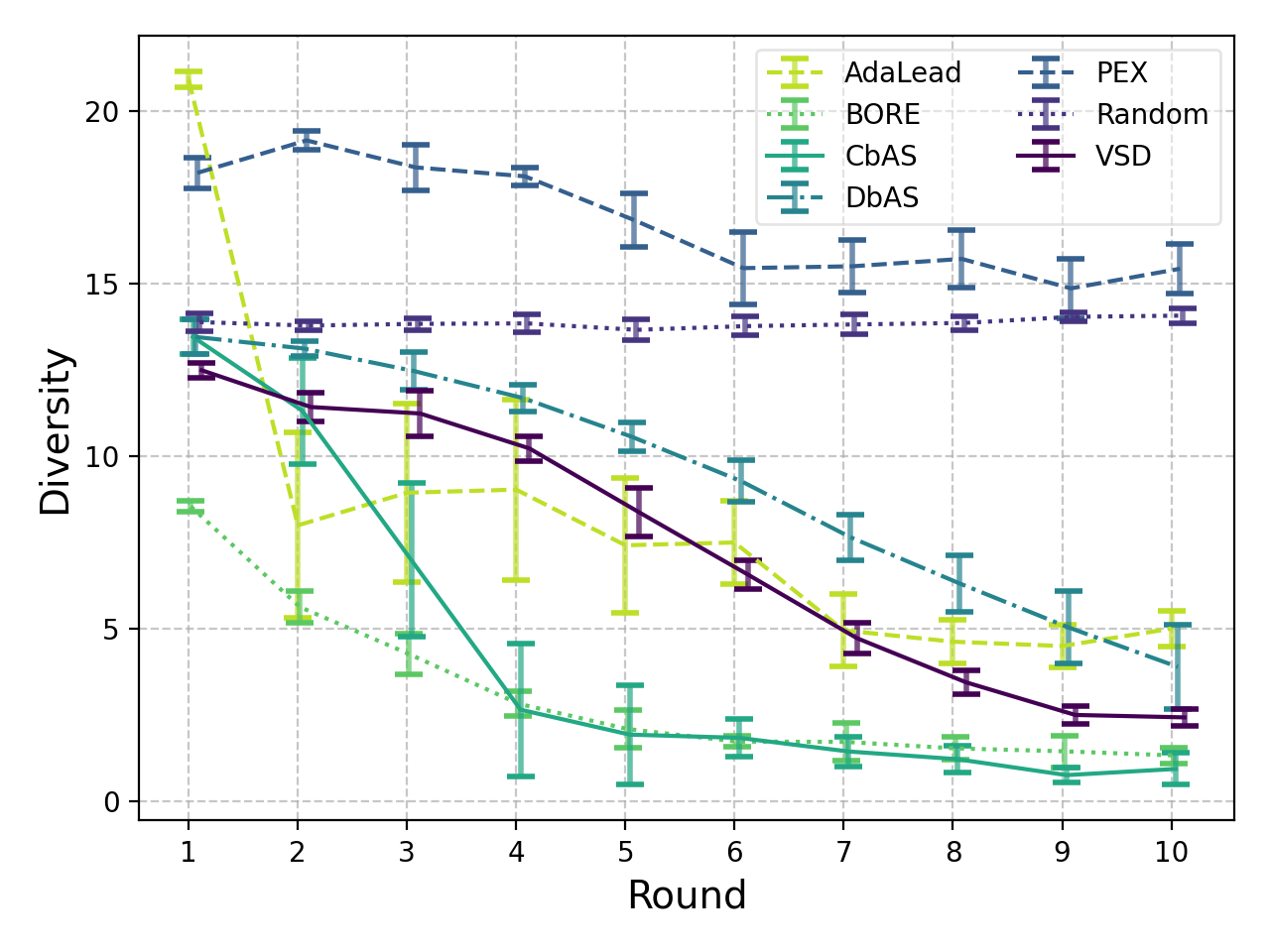}}
\subcaptionbox{LSTM\label{sfig:aav_div_lstm}}
    {\includegraphics[width=.32\textwidth]{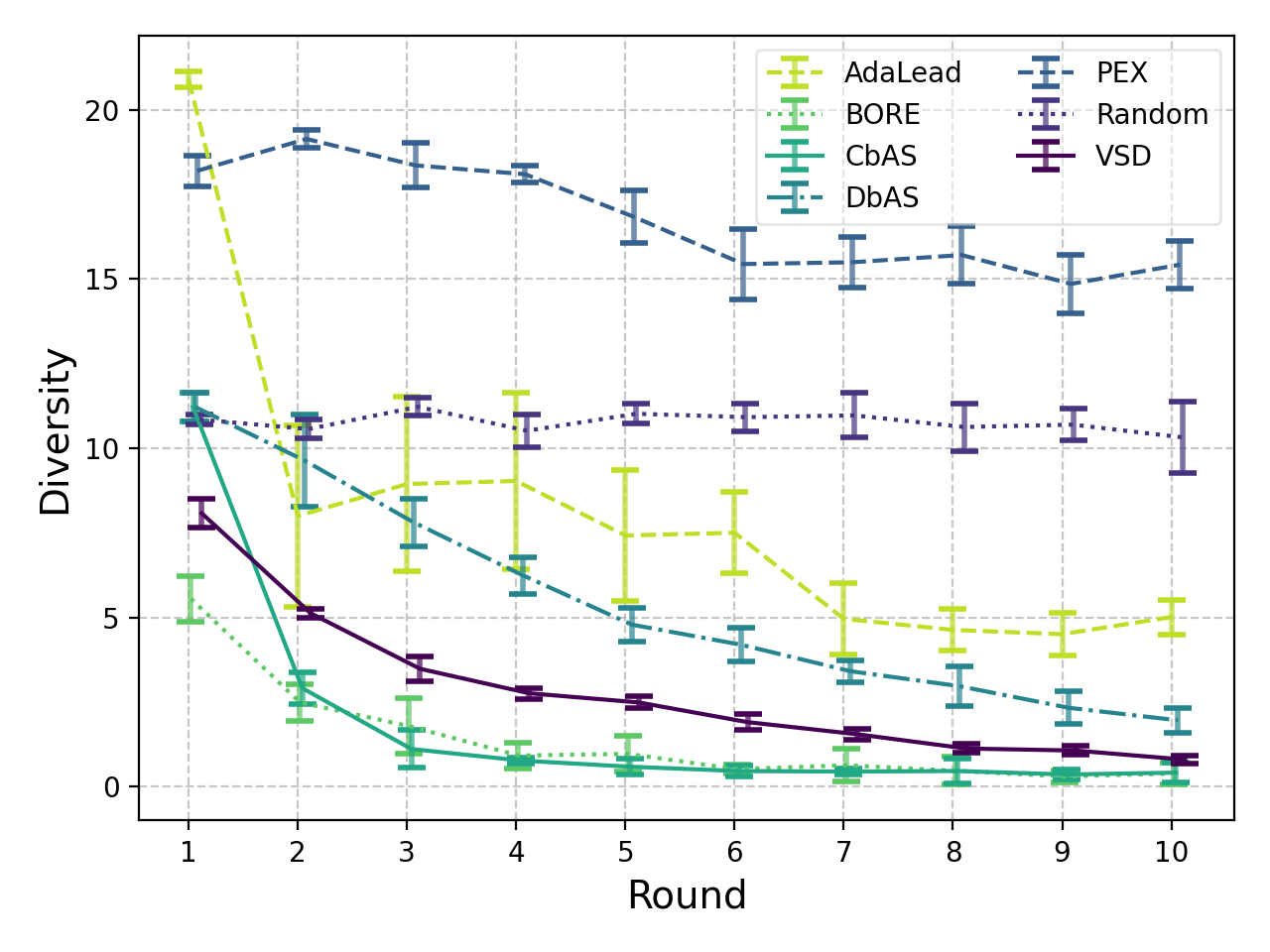}}
\subcaptionbox{Transformer\label{sfig:aav_div_dtfm}}
    {\includegraphics[width=.32\textwidth]{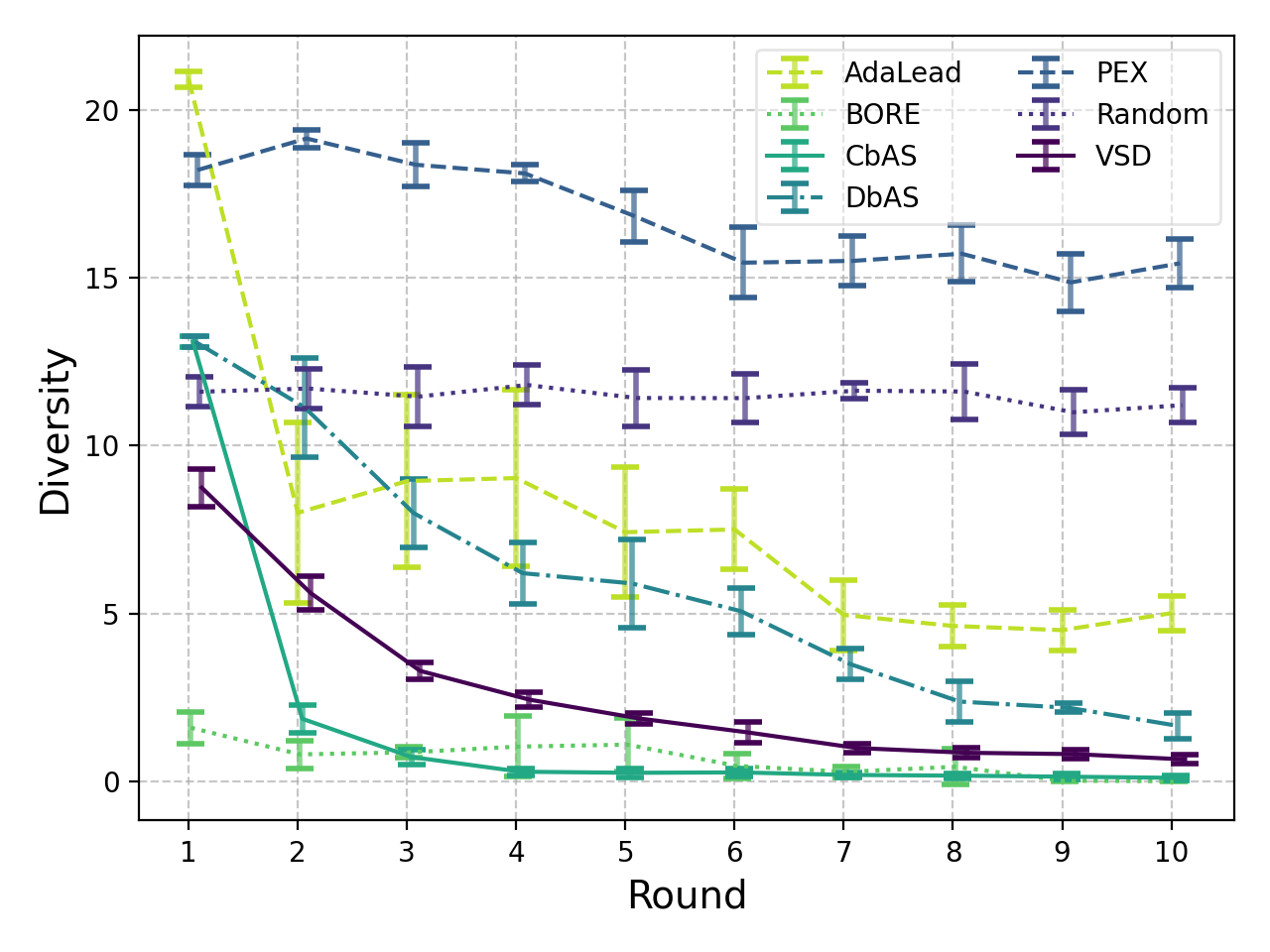}}
\caption{Black-box optimization results for diversity on GFP and AAV with independent and auto-regressive variational distributions. Higher is more diverse, as defined by \autoref{eq:divers}. The \gls{pex} and AdaLead results are replicated between the plots, since they are unaffected by choice of variational distribution.}
\label{fig:bbo_res_div}
\end{figure}

\begin{figure}[htb]
\centering
\rotatebox{90}{\hspace{1.5cm}\textsc{poli}}
\includegraphics[width=.32\textwidth]{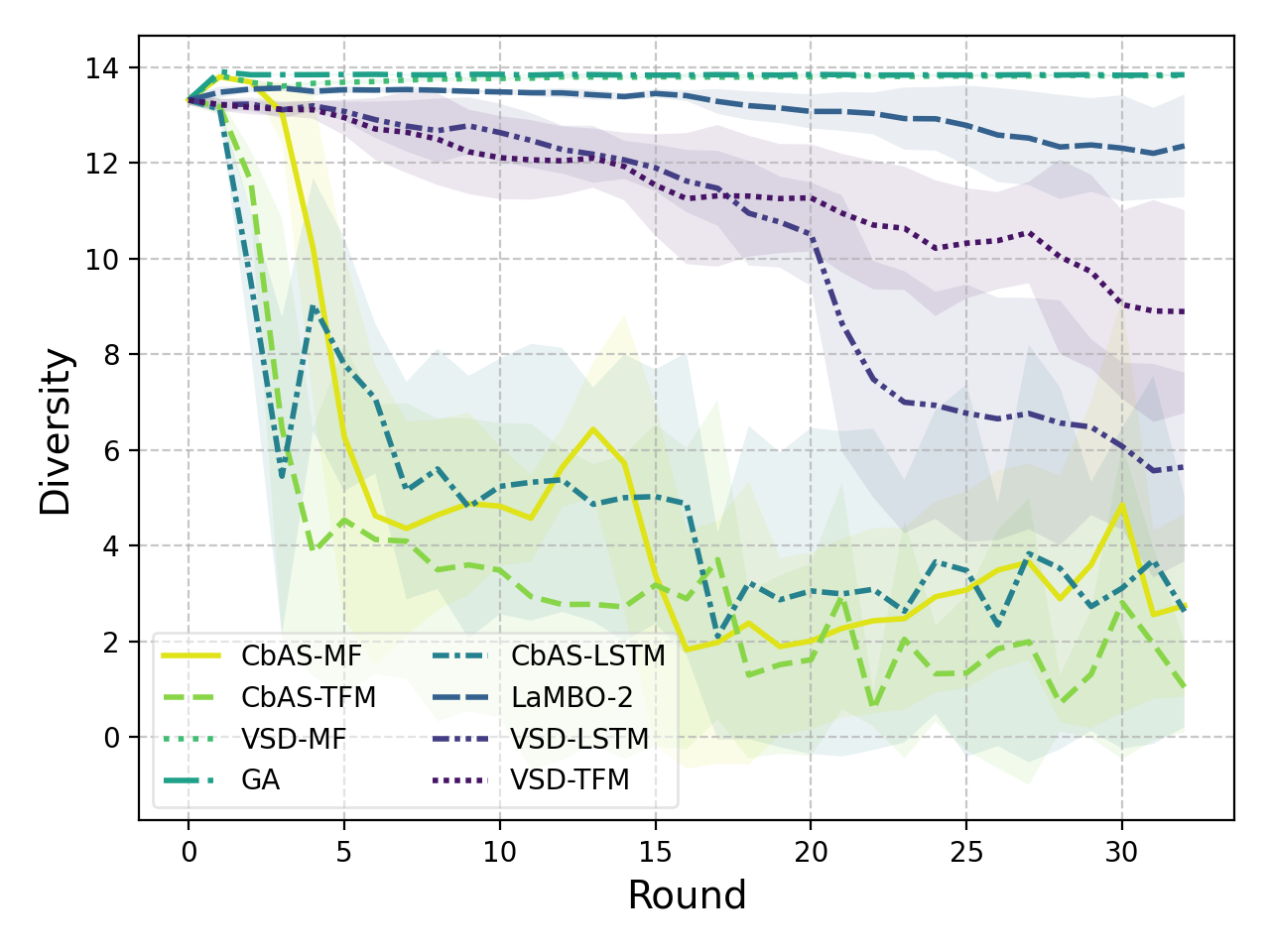}
\includegraphics[width=.32\textwidth]{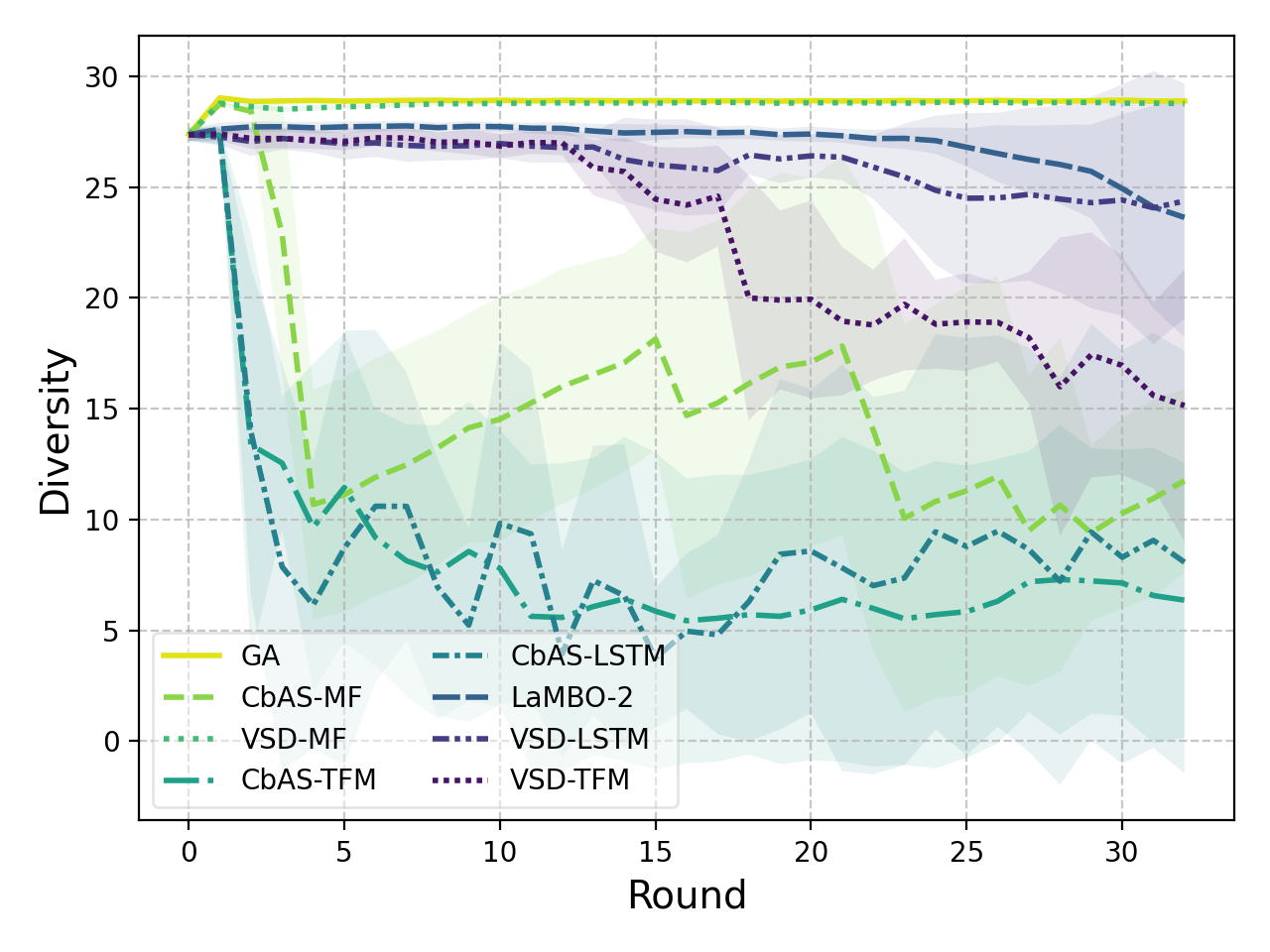}
\includegraphics[width=.32\textwidth]{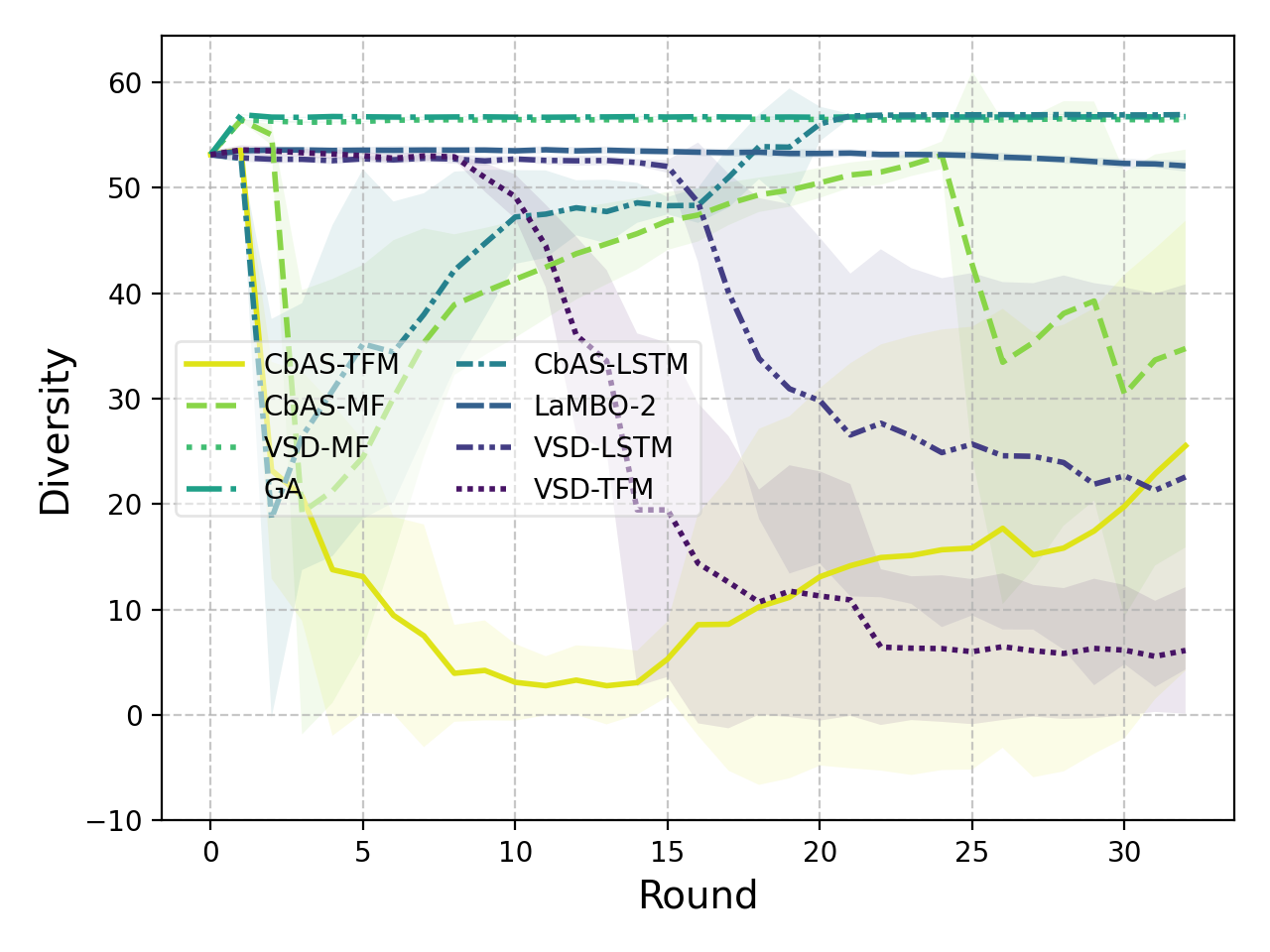} \\
\rotatebox{90}{\hspace{1.5cm}\textsc{holo}}
\subcaptionbox{$M=15$\label{sfig:ehrlich_div_15}}
    {\includegraphics[width=.32\textwidth]{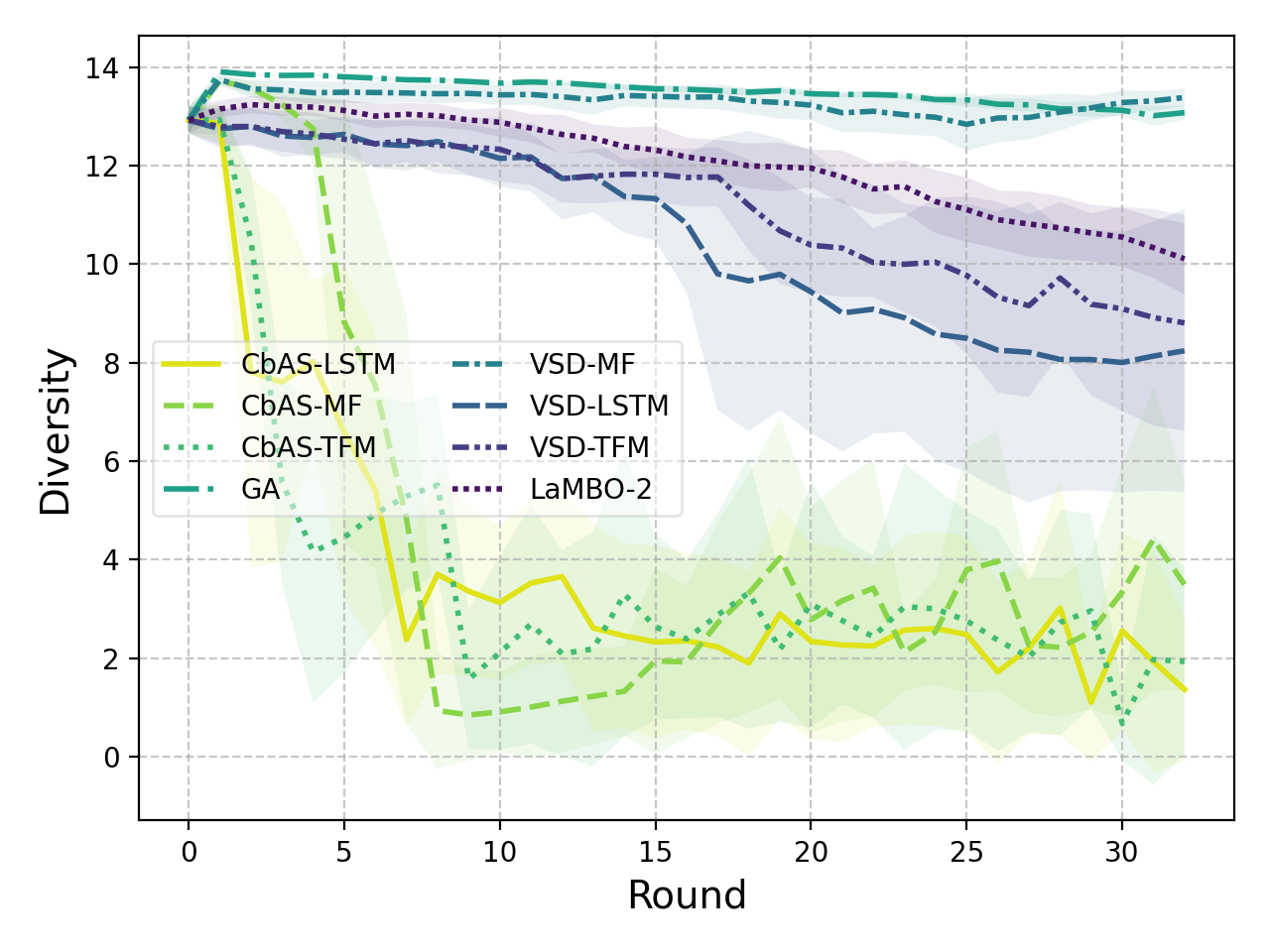}}
\subcaptionbox{$M=32$\label{sfig:ehrlich_div_32}}
    {\includegraphics[width=.32\textwidth]{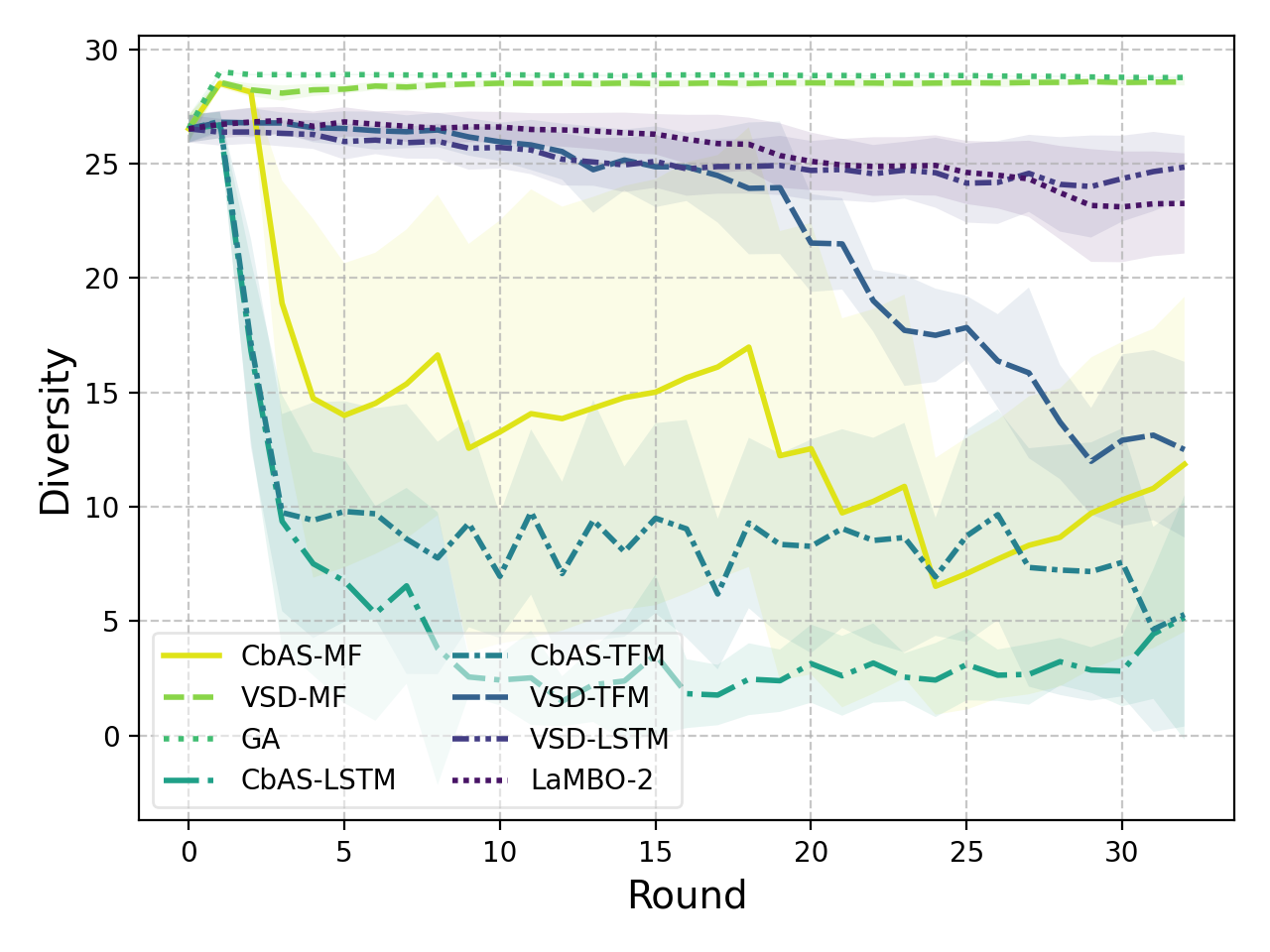}}
\subcaptionbox{$M=64$\label{sfig:ehrlich_div_64}}
    {\includegraphics[width=.32\textwidth]{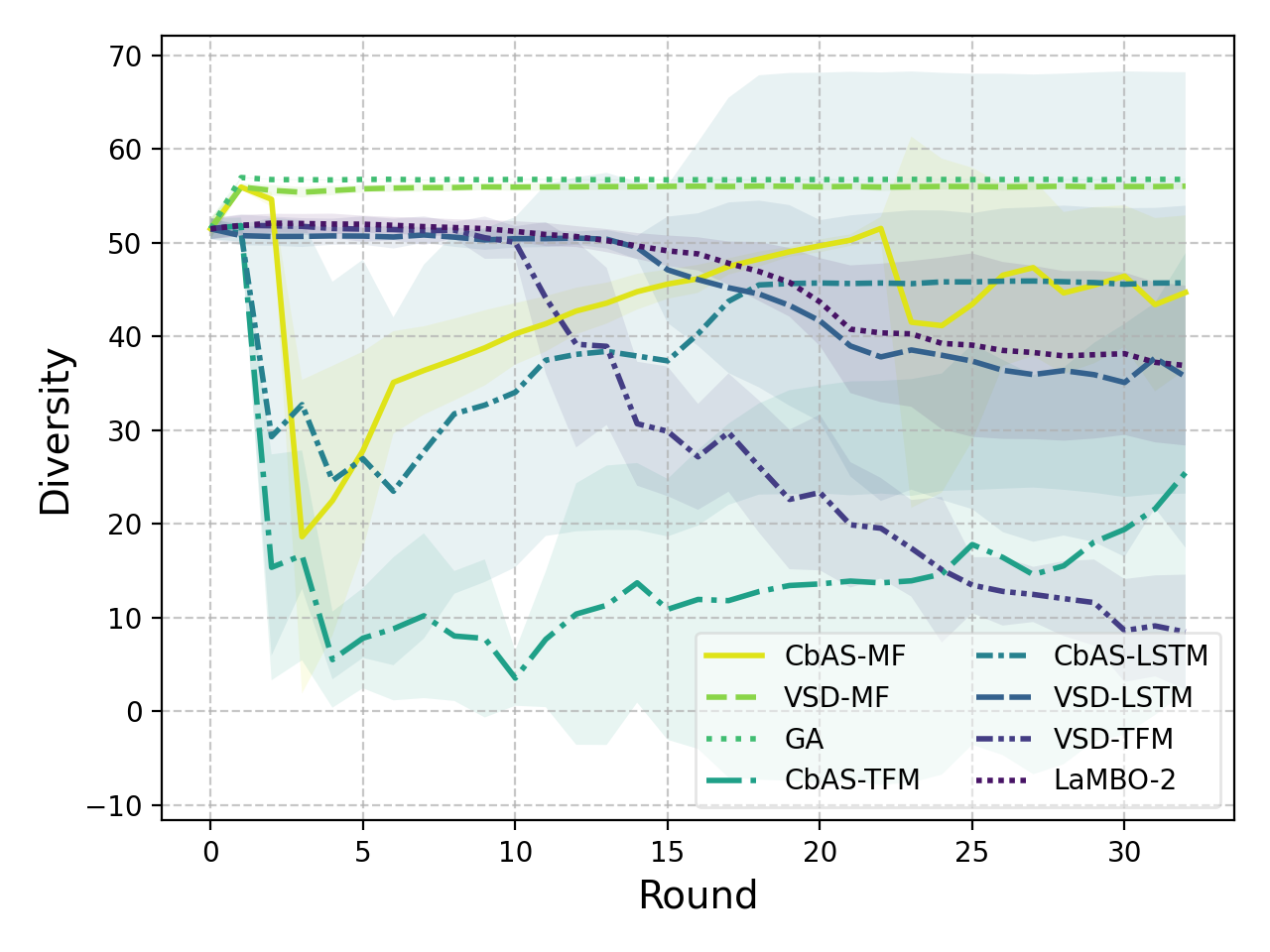}}
\caption{Black-box optimization results for diversity on the \textsc{poli} and \textsc{holo} implementations of the Ehrlich functions. Higher is more diverse, as defined by \autoref{eq:divers}.}
\label{fig:bbo_ehrlich_div}
\end{figure}

\subsection{Ablations -- Variational and Prior Distributions}
\label{sub:abl}

In \autoref{fig:ablation} we present ablation results for \gls{vsd} using different priors and variational distributions. We use the \gls{bbo} experimental datasets for this task as they are higher-dimensional and so more sensitive to these design choices. We test the following prior and variational posterior distributions:
\begin{description}[leftmargin=!,labelwidth=\widthof{\bfseries DTFM}]
    \item[IU] Independent categorical variational posterior distribution of the form in \autoref{eq:proposal}, and a uniform prior distribution, $\prob{\obs} = \prod^M_{m=1} \categc{\acid_m}{\mathbf{1} \cdot |\acidspace|^{-1}}$.
    \item[I] Independent categorical prior and variational posterior of the form in \autoref{eq:proposal}. The prior is fit using \gls{ml} on the initial \gls{cpe} training data.
    \item[LSTM] \gls{lstm} prior and variational posterior of the form \autoref{eq:autoregressive-proposal}. The prior is fit using \gls{ml} on the initial \gls{cpe} training data.
    \item[DTFM] Decoder-only causal transformer prior and variational posterior of the form \autoref{eq:autoregressive-proposal}. The prior is fit using \gls{ml} on the initial \gls{cpe} training data.
    \item[TAE] Independent categorical prior and a transition-style auto-encoder variational posterior of the form \autoref{eq:transition-proposal}, where we use two-hidden layer MLPs for the encoder and decoder. The prior is fit using \gls{ml} on the initial \gls{cpe} training data.
    \item[TCNN] Independent categorical prior and a transition-style convolutional auto-encoder variational posterior of the form \autoref{eq:transition-proposal}, where we use a convolutional encoder, and transpose convolutional decoder. The prior is fit using \gls{ml} on the initial \gls{cpe} training data.
\end{description}
We use the informed-independent priors with the transition variational distributions since they are somewhat counter-intuitive to use as priors themselves.

\begin{figure}[htb]
\centering
\includegraphics[width=.49\textwidth]{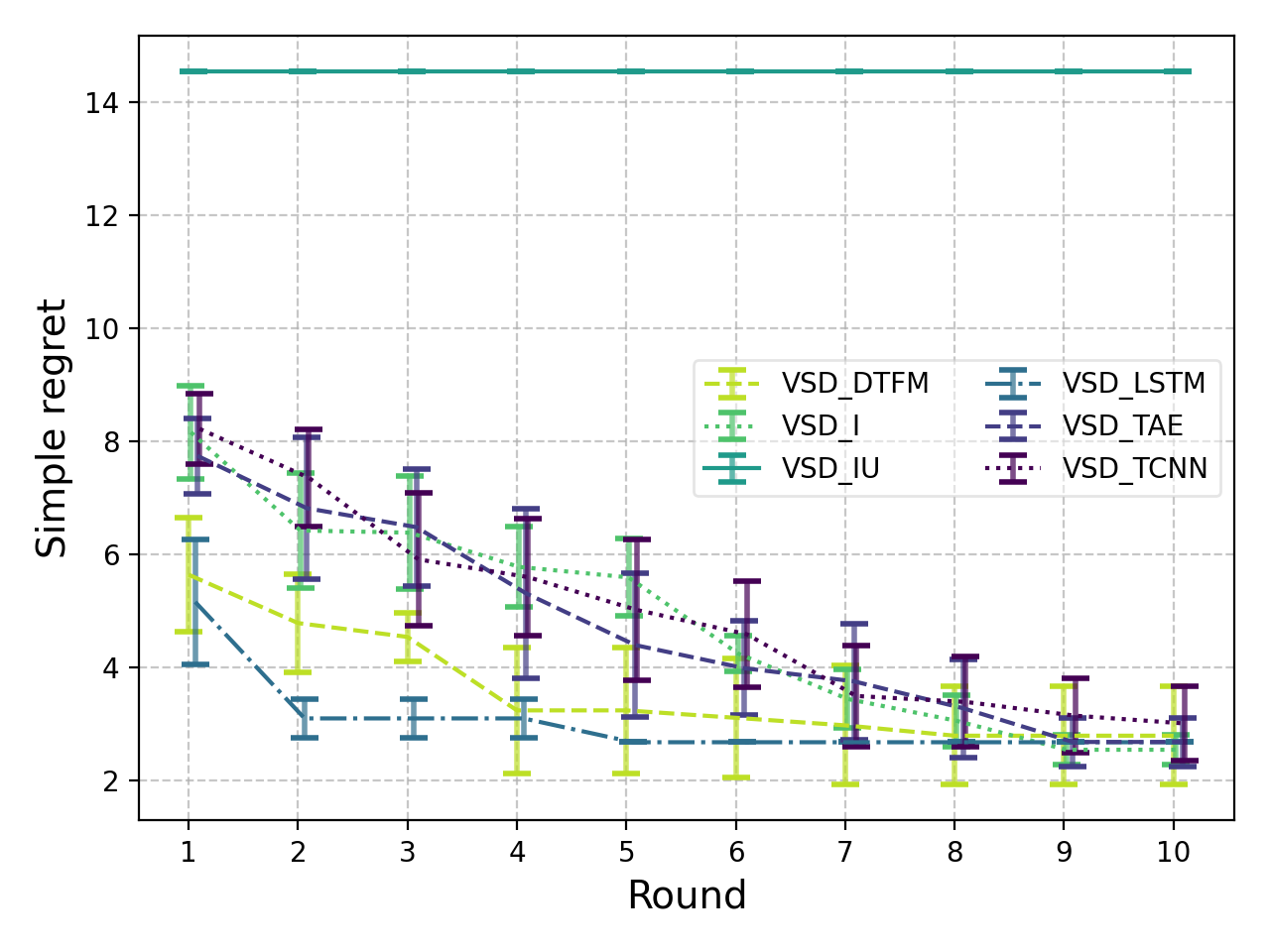}
\includegraphics[width=.49\textwidth]{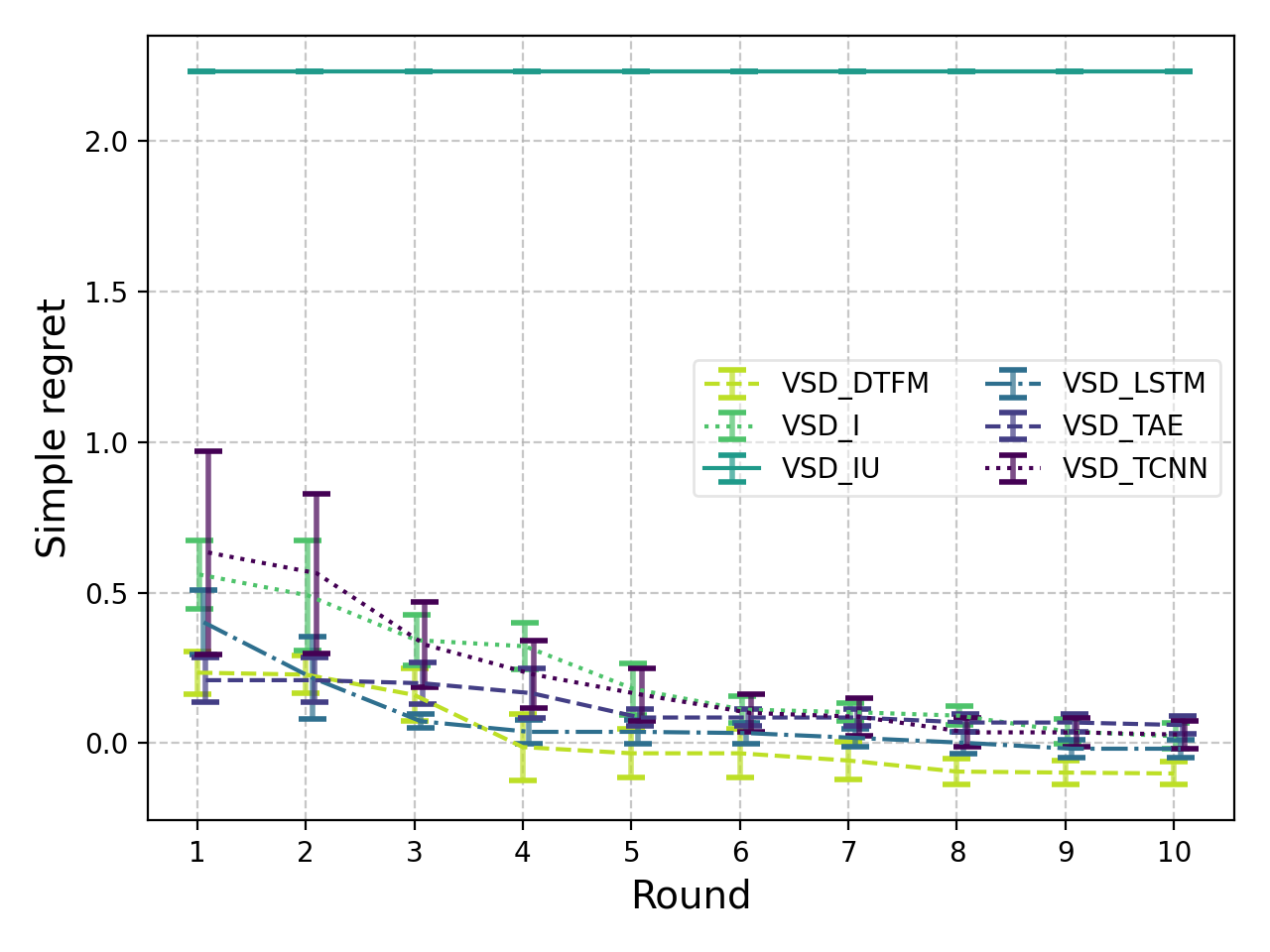} \\
\subcaptionbox{AAV\label{sfig:aav_abl}}
    {\includegraphics[width=.49\textwidth]{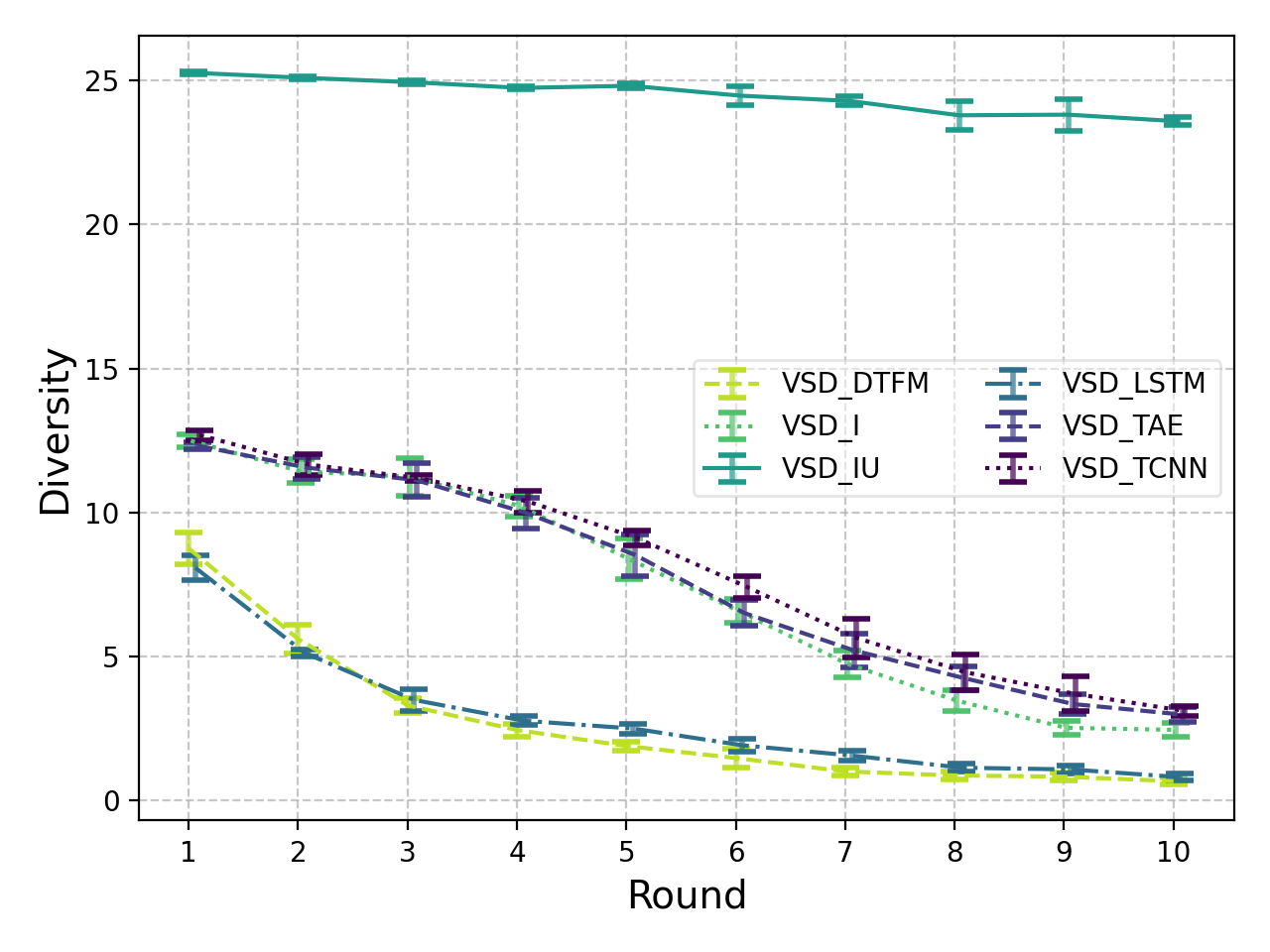}}
\subcaptionbox{GFP\label{sfig:gfp_abl}}
    {\includegraphics[width=.49\textwidth]{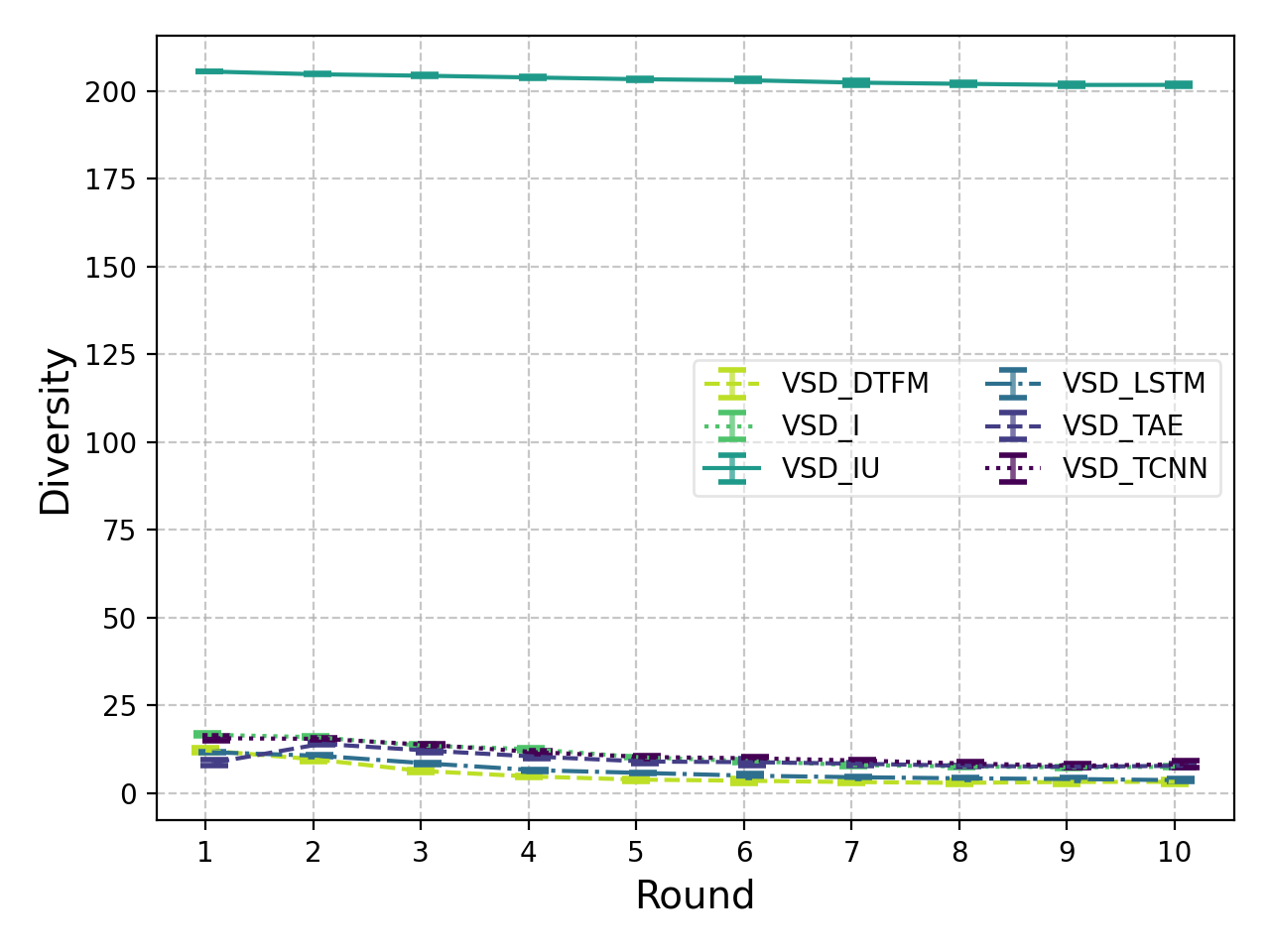}}
\caption{Ablation results for the AAV and GFP \gls{bbo} experiments. \Gls{vsd} is trialed with different prior and variational posterior combinations, ``I'' indicates a simple independent informed prior and posterior, ``IU'' is the same but with a uniform prior, ``LSTM'' and ``DTFM'' are the \gls{lstm} and decoder only transformer prior and posteriors, ``TCNN'' and ``TAE'' are transition convolutional encoder-decoder and auto-encoder posteriors, with informed independent priors. See text for details.}
\label{fig:ablation}
\end{figure}

From \autoref{fig:ablation} we can see that while using an uninformative prior works in the lower-dimensional fitness landscape experiments, using an informative prior is crucial for these higher dimensional problems. We found a similar result when using this uninformative prior with \gls{cbas}, or using a uniform initialization with \gls{dbas} and \gls{bore}. The methods are not able to make any significant progress within the experimental budget given. The independent and transition variational distributions achieve similar performance, whereas the auto-regressive models generally outperform all others. This is because of the \gls{lstm} and transformer's superior generalization performance when generating sequences -- measured both when training the priors (on held-out sequences) and during \gls{vsd} adaptation.

\section{Theoretical analysis for GP-based CPEs}
\label{app:theory}

In this section, we present theoretical results concerning \gls{vsd} and its estimates when equipped with Gaussian process regression models \citep{rasmussen2006}. We show that \gls{vsd} sampling distributions converge to a target distribution that characterizes the level set given by $\thresh$. The approximation error mainly depends on the predictive uncertainty of the probabilistic model with respect to the true underlying function $\bbf$. For the analysis, we will assume that $\bbf$ is drawn from a Gaussian process, i.e., $\bbf\sim\gp(0, \gpkernel)$, with a positive-semidefinite covariance (or kernel) function $\gpkernel: \obsspace \times \obsspace \to \real$. 
In this case, we can show that the predictive uncertainty of the model converges (in probability) to zero as the number of observations grows. From this result, we prove asymptotic convergence guarantees for VSD equipped with GP-PI-based CPEs. These results form the basis for our analysis of CPEs based on neural networks (\autoref{app:ntk-theory}).

\subsection{Gaussian process posterior}
\label{sec:gp-posterior}
Let $\bbf \sim \gp(0, \gpkernel)$ be a zero-mean Gaussian process with a positive-semidefinite covariance function $\gpkernel: \obsspace \times \obsspace \to \real$. Assume that we are given a set $\data_\nobs := \{(\obs_i, \tar_i)\}_{i=1}^\nobs$ of $\nobs \geq 1$ observations $\tar_i = \bbf(\obs_i) + \obsnoise_i$, where $\obsnoise \sim \normal{0, \sigma_\obsnoise^2}$ and $\obs_i \in \obsspace$. The GP posterior predictive distribution at any $\obs\in\obsspace$ is then given by \citep{rasmussen2006}:
\begin{align}
    \bbf(\obs)|\data_\nobs &\sim \normal{\gpmean_\nobs(\obs), \sigma_\nobs^2(\obs)}\\
    \gpmean_\nobs(\obs) &= \vec{\gpkernel}_\nobs(\obs)^\transpose (\mat{\gpkernel}_\nobs + \sigma_\obsnoise^2\eye)^{-1}\vec{\tar}_\nobs\\
    \gpkernel_\nobs(\obs, \obs') &= \gpkernel(\obs,\obs') - \vec{\gpkernel}_\nobs(\obs)^\transpose (\mat{\gpkernel}_\nobs + \sigma_\obsnoise^2\eye)^{-1}\vec{\gpkernel}_\nobs(\obs')\\
    \sigma_\nobs^2(\obs) &= \gpkernel_\nobs(\obs,\obs)\,,
\end{align}
where $\vec{\gpkernel}_\nobs(\obs) := [\gpkernel(\obs,\obs_i)]_{i=1}^\nobs \in \real^\nobs$, $\mat{\gpkernel}_\nobs := [\gpkernel(\obs_i, \obs_j)]_{i,j=1}^{\nobs,\nobs} \in \real^{\nobs\times\nobs}$, and $\vec{\tar}_\nobs := [\tar_i]_{i=1}^\nobs \in \real^\nobs$.

\paragraph{Batch size.} In the following, we will assume a batch of size $B = 1$ to keep the proofs simple. With this assumption, at every iteration $\iterIdx\geq 1$, we have $\nobs = \iterIdx$ observations available in the dataset. We would, however, like to emphasize that sampling a batch of multiple observations, instead of a single observation, per iteration should only improve the convergence rates by a constant (batch-size-dependent) multiplicative factor. Therefore, our results remain valid as an upper bound for the convergence rates of VSD in the batch setting.

\subsection{Background}
We will consider an underlying probability space $(\SampleSpace, \EventsAlgebra, \pMeasure)$, where $\SampleSpace$ is the sample space, $\EventsAlgebra$ denotes the $\sigma$-algebra of events, and $\pMeasure$ is a probability measure. For any event $\anyevent\in\EventsAlgebra$, we have that $\Probm{\anyevent} \in [0,1]$ quantifies the probability of that event. For events involving a random variable, e.g., $\indrv: (\SampleSpace, \EventsAlgebra) \to (\real, \borel_\real)$, where $\borel_\real$ denotes the Borel $\sigma$-algebra of the real line with its usual topology, we will let:
\begin{equation}
	\Probm{\indrv > 0} = \Probm{\{\sample\in\SampleSpace : \indrv(\sample) > 0\}}\,.
\end{equation}
We will also use conditional expectations, i.e., given a $\sigma$-sub-algebra $\collection{S}$ of $\EventsAlgebra$, the conditional expectation $\expectation[\indrv | \collection{S}]$ is a $\collection{S}$-measurable random variable such that:
\begin{equation}
	\forall \anyevent \in \collection{S}\,, \quad \int_\anyevent \expectation[\indrv | \collection{S}] \diff\pMeasure = \int_\anyevent \indrv \diff\pMeasure = \expectation[\indrv | \anyevent]\,.
\end{equation}
We will denote by $\{\filtration_\iterIdx\}_{\iterIdx=0}^\infty$ an increasing filtration on $\EventsAlgebra$. For instance, we could set $\filtration_\iterIdx$ as the $\sigma$-algebra generated by the random variables in the algorithm (i.e., the candidates, target observations, etc.) at time $\iterIdx$. For more details on the measure-theoretic definition of probability, we refer the reader to classic textbooks in the area \citep[e.g.][]{bauer1981,durrett2019}

We will use the following well known notation for asymptotic convergence results. For a given strictly positive function $\anyfunction:\N\to\real$, we define $\bigo(\anyfunction(\iterIdx))$ as the set of functions asymptotically bounded by $\anyfunction$ (up to a constant factor) as:
\begin{equation}
	\bigo(\anyfunction(\iterIdx)) := \left\lbrace \anotherfunction: \N\to\real \:\middle|\: \limsup_{\iterIdx\to\infty} \frac{|\anotherfunction(\iterIdx)|}{\anyfunction(\iterIdx)} < \infty \right\rbrace ,
	\label{eq:big-o}
\end{equation}
and for convergence in probability we use its stochastic counterpart:
\begin{equation}
	\bigo_\pMeasure(\anyfunction(\iterIdx)) := \left\lbrace \anyrv: \N \times (\SampleSpace, \EventsAlgebra) \to (\real, \borel_\real) \:\middle|\: \lim_{\constant\to\infty}\limsup_{\iterIdx\to\infty} \Probm{\frac{|\anyrv(\iterIdx)|}{\anyfunction(\iterIdx)} > \constant} = 0 \right\rbrace \,,
	\label{eq:big-op}
\end{equation}
which is equivalent to:
\begin{equation}
	\forall \varepsilon > 0, \quad \exists \constant_\varepsilon \in (0, \infty): \quad \Probm{\frac{|\anyrv_\iterIdx|}{\anyfunction(\iterIdx)} > \constant_\varepsilon} \leq \varepsilon, \quad \forall \iterIdx \geq \niter_\varepsilon\,,
\end{equation}
for some $\niter_\varepsilon \in \N$. For almost sure convergence, we may also say that a sequence of random variables $\anyrv_\iterIdx$, $\iterIdx\in\N$, is almost surely $\bigo(\anyfunction(\iterIdx))$ if $\Probm{\anyrv_\iterIdx \in \bigo(\anyfunction(\iterIdx))} = 1$. A deeper overview on these notations and their properties can be found in \citet{garcia-portugues2024}. In addition, as it is common in the multi-armed bandits literature, we use the variants $\widetilde{\bigo}$ and $\widetilde{\bigo}_\pMeasure$ to denote asymptotic rates which are valid up to logarithmic factors.

\subsection{Auxiliary results}
We start with a few technical results which will form the basis for our derivations. The following recursive relations allow us to derive convergence rates for the variance of a GP posterior by analyzing how much it reduces per iteration.
\begin{lemma}[{\citet[Appendix F]{chowdhury2017}}]
	\label{thr:gp-recursion}
	The posterior mean and covariance functions of a Gaussian process given $\iterIdx \geq 1$ observations obey the following recursive identities:
	\begin{align}
		\gpmean_\iterIdx(\obs) &= \gpmean_{\iterIdx-1}(\obs) + \frac{\gpkernel(\obs, \obs_\iterIdx)}{\sigma_\obsnoise^2 + \sigma_{\iterIdx-1}^2(\obs_\iterIdx)}(\tar_\iterIdx - \gpmean_{\iterIdx-1}(\obs))\\
		\gpkernel_\iterIdx(\obs, \obs') &= \gpkernel_{\iterIdx-1}(\obs, \obs') - \frac{\gpkernel_{\iterIdx-1}(\obs, \obs_\iterIdx)\gpkernel_{\iterIdx-1}(\obs_\iterIdx, \obs')}{\sigma_\obsnoise^2 + \sigma_{\iterIdx-1}^2(\obs_\iterIdx)}\\
		\sigma_\iterIdx^2(\obs) &= \sigma_{\iterIdx-1}^2(\obs) - \frac{\gpkernel_{\iterIdx-1}^2(\obs, \obs_\iterIdx)}{\sigma_\obsnoise^2 + \sigma_{\iterIdx-1}^2(\obs_\iterIdx)}\,,
	\end{align}
	for $\obs, \obs'\in\obsspace$.
\end{lemma}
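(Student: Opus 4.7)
The plan is to reduce this rank-one update to the standard single-observation Gaussian conditioning formulas by exploiting the Bayesian consistency of GP updating: the posterior after $\iterIdx$ observations coincides with the distribution obtained by first conditioning $\gp(0, \gpkernel)$ on $\data_{\iterIdx-1}$---yielding the GP with mean $\gpmean_{\iterIdx-1}$ and kernel $\gpkernel_{\iterIdx-1}$---and then conditioning this updated GP on the single additional observation $(\obs_\iterIdx, \tar_\iterIdx)$ with $\tar_\iterIdx = \bbf(\obs_\iterIdx) + \obsnoise_\iterIdx$. Once this reduction is in place, the three identities in the lemma are just the single-point Gaussian conditioning formulas applied to the ``prior'' $\gp(\gpmean_{\iterIdx-1}, \gpkernel_{\iterIdx-1})$, with the denominator $\sigma_\obsnoise^2 + \sigma_{\iterIdx-1}^2(\obs_\iterIdx)$ emerging as the marginal variance of $\tar_\iterIdx$ under that conditional law.

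The cleanest route to make this rigorous is block matrix inversion. I would partition the Gram matrix as
\[
    \mat{\gpkernel}_\iterIdx + \sigma_\obsnoise^2 \eye = \begin{bmatrix} \mat{\gpkernel}_{\iterIdx-1} + \sigma_\obsnoise^2 \eye & \vec{\gpkernel}_{\iterIdx-1}(\obs_\iterIdx) \\ \vec{\gpkernel}_{\iterIdx-1}(\obs_\iterIdx)^\transpose & \gpkernel(\obs_\iterIdx, \obs_\iterIdx) + \sigma_\obsnoise^2 \end{bmatrix}
\]
and apply the Schur complement formula. A short calculation identifies the Schur complement of the top-left block as precisely $\sigma_\obsnoise^2 + \sigma_{\iterIdx-1}^2(\obs_\iterIdx)$, producing the denominator appearing throughout the recursion. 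Substituting the resulting block inverse into $\gpmean_\iterIdx(\obs) = \vec{\gpkernel}_\iterIdx(\obs)^\transpose (\mat{\gpkernel}_\iterIdx + \sigma_\obsnoise^2 \eye)^{-1}\vec{\tar}_\iterIdx$ and grouping terms involving $\tar_\iterIdx$ separately from those involving $\vec{\tar}_{\iterIdx-1}$ yields the mean recursion. The covariance identity falls out of the same block inverse applied to the bilinear form $\gpkernel(\obs,\obs') - \vec{\gpkernel}_\iterIdx(\obs)^\transpose (\mat{\gpkernel}_\iterIdx + \sigma_\obsnoise^2 \eye)^{-1}\vec{\gpkernel}_\iterIdx(\obs')$, and the variance recursion is simply its diagonal case.

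The one subtlety---rather than a real obstacle---is that $\obs_\iterIdx$ may be chosen adaptively based on $\data_{\iterIdx-1}$, so the identities should hold pointwise in $\sample \in \SampleSpace$. Conditioning on $\filtration_{\iterIdx-1}$ renders $\obs_\iterIdx$ measurable and effectively deterministic for the Gaussian conditioning step, after which the identities lift back unchanged by the tower property. The algebraic manipulation itself is entirely mechanical; the only care needed is to verify that the Schur complement denominator is strictly positive (it is, since $\sigma_\obsnoise^2 > 0$ by assumption), so the block inverse is well defined almost surely.
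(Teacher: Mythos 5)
Your approach is correct and is essentially the same as that of the cited source: the paper gives no proof of its own (it imports the lemma from Chowdhury and Gopalan, 2017, Appendix F), and their derivation is precisely the block-partition/Schur-complement computation on $\mat{\gpkernel}_\iterIdx + \sigma_\obsnoise^2\eye$ that you describe, with the same observation that the Schur complement equals $\sigma_\obsnoise^2 + \sigma_{\iterIdx-1}^2(\obs_\iterIdx)$. One small point worth noting: carrying out your calculation gives the mean recursion with $\gpkernel_{\iterIdx-1}(\obs,\obs_\iterIdx)$ in the numerator and residual $\tar_\iterIdx - \gpmean_{\iterIdx-1}(\obs_\iterIdx)$, so the mean identity as printed in the lemma (with $\gpkernel(\obs,\obs_\iterIdx)$ and $\gpmean_{\iterIdx-1}(\obs)$) contains typos; your derivation yields the correct form, and the covariance and variance recursions, which are the ones actually used downstream, match your result exactly.
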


We will also make use of the following version of the second Borel-Cantelli lemma adapted from \citet[Thr. 4.5.5]{durrett2019} and its original statement in \citet{dubins1965}.
\begin{lemma}[Second Borel-Cantelli lemma]
	\label{thr:borel-cantelli}
	Let $\{\anyevent_\iterIdx\}_{\iterIdx=1}^\infty$ be a sequence of events where $\anyevent_\iterIdx \in \filtration_\iterIdx$, for all $\iterIdx\in\N$, and let $\indrv_\iterIdx: \sample \mapsto \indic{\sample\in\anyevent_\iterIdx}$, for $\sample\in\SampleSpace$. Then the following holds with probability 1:
	\begin{equation}
		\lim_{\niter\to\infty} \frac{\sum_{\iterIdx=1}^\niter \indrv_\iterIdx}{\sum_{\iterIdx=1}^\niter \Probm{\anyevent_\iterIdx | \filtration_{\iterIdx-1}}} = \plimit < \infty\,,
	\end{equation}
	assuming $ \Probm{\anyevent_1 | \filtration_{0}} > 0$. In addition, if $\lim_{\niter\to\infty} \sum_{\iterIdx=1}^\niter \Probm{\anyevent_\iterIdx | \filtration_{\iterIdx-1}} = \infty$, then $\plimit = 1$.
\end{lemma}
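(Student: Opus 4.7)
The plan is to use the standard martingale-based proof of the conditional second Borel--Cantelli lemma (Lévy's extension). Introduce shorthand $p_\iterIdx := \Probm{\anyevent_\iterIdx | \filtration_{\iterIdx-1}}$, and let $S_\niter := \sum_{\iterIdx=1}^\niter \indrv_\iterIdx$, $T_\niter := \sum_{\iterIdx=1}^\niter p_\iterIdx$, and $M_\niter := S_\niter - T_\niter$. Since $\indrv_\iterIdx$ is $\filtration_\iterIdx$-measurable, $p_\iterIdx$ is $\filtration_{\iterIdx-1}$-measurable, and $\expectation[\indrv_\iterIdx | \filtration_{\iterIdx-1}] = p_\iterIdx$, the sequence $\{M_\niter\}$ is an $\{\filtration_\niter\}$-martingale with bounded increments $|M_\niter - M_{\niter-1}| \leq 1$ and predictable quadratic variation $\langle M \rangle_\niter = \sum_{\iterIdx=1}^\niter p_\iterIdx(1 - p_\iterIdx) \leq T_\niter$.

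First I would handle the easy case $T_\infty := \lim_\niter T_\niter < \infty$. Here $\langle M \rangle_\infty \leq T_\infty < \infty$, so the $L^2$-martingale convergence theorem yields that $M_\niter$ converges almost surely (and in $L^2$) to a finite limit. Consequently $S_\niter = M_\niter + T_\niter$ converges to a finite limit $S_\infty$, and the hypothesis $\Probm{\anyevent_1 | \filtration_0} > 0$ ensures $T_\niter > 0$ for every $\niter \geq 1$ on a full-measure set, so the ratio $S_\niter / T_\niter \to S_\infty / T_\infty =: \plimit$ is well defined and finite almost surely on $\{T_\infty < \infty\}$.

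For the harder case $T_\infty = \infty$, the goal is to show $M_\niter / T_\niter \to 0$ almost surely, which immediately yields $S_\niter / T_\niter \to 1$. I would consider the martingale transform
\begin{equation}
W_\niter := \sum_{\iterIdx=1}^\niter \frac{\indrv_\iterIdx - p_\iterIdx}{1 + T_\iterIdx},
\end{equation}
whose weights $1/(1+T_\iterIdx)$ are $\filtration_{\iterIdx-1}$-measurable and bounded. Using $p_\iterIdx = T_\iterIdx - T_{\iterIdx-1}$ together with the monotonicity of $x \mapsto (1+x)^{-2}$, an integral comparison gives
\begin{equation}
\langle W \rangle_\infty \leq \sum_{\iterIdx=1}^\infty \frac{T_\iterIdx - T_{\iterIdx-1}}{(1 + T_\iterIdx)^2} \leq \int_0^\infty \frac{\diff x}{(1+x)^2} = 1,
\end{equation}
so $L^2$-martingale convergence gives that $W_\niter$ converges almost surely. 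Applying Kronecker's lemma pathwise, with weights $b_\iterIdx := 1 + T_\iterIdx$, then yields $M_\niter / (1 + T_\niter) \to 0$ on the event $\{T_\infty = \infty\}$; since $T_\niter \to \infty$ there, this is equivalent to $M_\niter / T_\niter \to 0$ and hence to $\plimit = 1$.

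The main obstacle is the Kronecker step in the second case: the classical statement requires a \emph{deterministic} increasing sequence $b_\iterIdx \to \infty$, whereas $T_\iterIdx$ here is random. One must therefore apply Kronecker's lemma $\sample$-by-$\sample$ on the full-measure intersection of $\{W_\niter \text{ converges}\}$ and $\{T_\infty = \infty\}$, after verifying that both events belong to the relevant $\sigma$-algebras and that their intersection has the same probability as $\{T_\infty = \infty\}$. A secondary subtlety is checking measurability of the limit $\plimit$ and ensuring the telescoping integral bound on $\langle W \rangle_\infty$ goes through uniformly in $\sample$. With those technicalities resolved, the two exhaustive cases $T_\infty < \infty$ and $T_\infty = \infty$ together deliver both the finiteness claim and the identification $\plimit = 1$ under divergence.
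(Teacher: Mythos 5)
Your proof is correct and is essentially the argument behind the result the paper cites: the paper itself gives no proof of this lemma, deferring to Durrett (Thm.~4.5.5) and Dubins--Freedman, whose standard proof is exactly your two-case split via the martingale $M_\niter = S_\niter - T_\niter$, the transform with predictable weights $(1+T_\iterIdx)^{-1}$, and a pathwise application of Kronecker's lemma. The only nitpick is in the first case: with $T_\infty$ finite a.s.\ but not uniformly bounded you should invoke the theorem that a square-integrable martingale converges a.s.\ on $\{\langle M\rangle_\infty < \infty\}$ (via localization with the stopping times $\tau_K = \inf\{\iterIdx : T_{\iterIdx+1} > K\}$) rather than the plain $L^2$-bounded convergence theorem, and drop the ``in $L^2$'' claim.
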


The next result provides us with an upper bound on the posterior variance of a Gaussian process which is valid for any covariance function.
\begin{lemma}
	\label{thr:gp-variance-upper-bound}
	Let $\gpkernel: \obsspace\times\obsspace \to \real$ be any positive-semidefinite kernel on $\obsspace$, and let $\tilde{\gpkernel}:\obsspace\times\obsspace\to\real$ be a kernel defined as:
	\begin{equation}
		\tilde{\gpkernel}(\obs, \obs') =
		\begin{cases}
			\gpkernel(\obs, \obs), \quad &\obs = \obs'\\
			0, \quad &\obs \neq \obs',
		\end{cases}
	\end{equation}
	for $\obs, \obs' \in \obsspace$. Given any set of observations $\{\obs_i, \tar_i\}_{i=1}^\iterIdx$, for $\iterIdx\geq 1$, denote by $\sigma_\iterIdx^2$ the predictive variance of a GP model with prior covariance given by $\gpkernel$, and let $\tilde{\sigma}_\iterIdx^2$ denote the predictive variance of a GP model configured with $\tilde{\gpkernel}$ as prior covariance function, where both models are given the same set of observations. Then the following holds for all $\iterIdx\geq 0$:
	\begin{equation}
		\sigma_\iterIdx^2(\obs) \leq \tilde{\sigma}_\iterIdx^2(\obs) = \frac{\sigma_\obsnoise^2\tilde{\sigma}_0^2(\obs)}{\sigma_\obsnoise^2 + \sumrv_\iterIdx(\obs)\tilde{\sigma}_0^2(\obs)}, \quad \forall \obs\in\obsspace\,,
	\end{equation}
	where $\sumrv_\iterIdx(\obs)$ denotes the number of observations at $\obs$, and $\tilde{\sigma}_0^2(\obs) = \sigma_0^2(\obs) := \gpkernel(\obs,\obs)$, for $\obs\in\obsspace$.
\end{lemma}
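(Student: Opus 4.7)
The plan is to prove both the equality for $\tilde{\sigma}_\iterIdx^2$ and the inequality $\sigma_\iterIdx^2 \leq \tilde{\sigma}_\iterIdx^2$ by induction on $\iterIdx$, using the recursive update identities stated in \autoref{thr:gp-recursion}. The key structural observation is that the tilde-model's posterior remains diagonal throughout: if $\tilde{\gpkernel}_{\iterIdx-1}(\obs,\obs') = 0$ for all $\obs \neq \obs'$, then one reads off the recursion for the cross-covariance
\begin{equation*}
\tilde{\gpkernel}_\iterIdx(\obs,\obs') = \tilde{\gpkernel}_{\iterIdx-1}(\obs,\obs') - \frac{\tilde{\gpkernel}_{\iterIdx-1}(\obs,\obs_\iterIdx)\,\tilde{\gpkernel}_{\iterIdx-1}(\obs_\iterIdx,\obs')}{\sigma_\obsnoise^2 + \tilde{\sigma}_{\iterIdx-1}^2(\obs_\iterIdx)}
\end{equation*}
that, whenever $\obs \neq \obs'$, at least one of $\obs_\iterIdx \neq \obs$ or $\obs_\iterIdx \neq \obs'$ holds, so by the inductive hypothesis one of the two factors in the numerator vanishes; hence $\tilde{\gpkernel}_\iterIdx(\obs,\obs') = 0$. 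This confirms that the tilde-posterior ``sees'' each design independently.

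With diagonality established, I would derive the closed form for $\tilde{\sigma}_\iterIdx^2(\obs)$ by induction. The base case $\iterIdx=0$ is immediate since $\sumrv_0(\obs)=0$. For the step, if $\obs_\iterIdx \neq \obs$ then $\tilde{\gpkernel}_{\iterIdx-1}(\obs,\obs_\iterIdx)=0$, so the variance recursion gives $\tilde{\sigma}_\iterIdx^2(\obs)=\tilde{\sigma}_{\iterIdx-1}^2(\obs)$ and $\sumrv_\iterIdx(\obs)=\sumrv_{\iterIdx-1}(\obs)$, matching the formula. If $\obs_\iterIdx = \obs$, then $\tilde{\gpkernel}_{\iterIdx-1}(\obs,\obs_\iterIdx)=\tilde{\sigma}_{\iterIdx-1}^2(\obs)$ and $\sumrv_\iterIdx(\obs)=\sumrv_{\iterIdx-1}(\obs)+1$; a direct algebraic simplification of $\tilde{\sigma}_{\iterIdx-1}^2(\obs) - \tilde{\sigma}_{\iterIdx-1}^4(\obs)/(\sigma_\obsnoise^2 + \tilde{\sigma}_{\iterIdx-1}^2(\obs))$ with the inductive formula substituted in yields exactly $\sigma_\obsnoise^2\tilde{\sigma}_0^2(\obs)/(\sigma_\obsnoise^2 + \sumrv_\iterIdx(\obs)\tilde{\sigma}_0^2(\obs))$.

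For the inequality, I would again induct on $\iterIdx$, with the base case holding by the definition $\sigma_0^2(\obs)=\tilde{\sigma}_0^2(\obs)=\gpkernel(\obs,\obs)$. For the inductive step, I would split on whether $\obs_\iterIdx = \obs$. When $\obs_\iterIdx \neq \obs$, the variance recursion of \autoref{thr:gp-recursion} gives $\sigma_\iterIdx^2(\obs) \leq \sigma_{\iterIdx-1}^2(\obs)$ (the subtracted term is non-negative), and $\tilde{\sigma}_\iterIdx^2(\obs)=\tilde{\sigma}_{\iterIdx-1}^2(\obs)$ by diagonality, so the conclusion follows from the inductive hypothesis. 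When $\obs_\iterIdx = \obs$, both recursions reduce to the same map $\anyfunction(u) = \sigma_\obsnoise^2 u / (\sigma_\obsnoise^2 + u)$ applied to $\sigma_{\iterIdx-1}^2(\obs)$ and $\tilde{\sigma}_{\iterIdx-1}^2(\obs)$, respectively, since $\gpkernel_{\iterIdx-1}(\obs,\obs_\iterIdx)=\sigma_{\iterIdx-1}^2(\obs)$ in the non-tilde model; because $\anyfunction$ is monotonically increasing on $[0,\infty)$, the inductive hypothesis propagates.

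I do not expect a serious obstacle here: the recursive identities of \autoref{thr:gp-recursion} carry essentially all the work, and the only subtle point is remembering to maintain the auxiliary invariant that the tilde-posterior cross-covariance vanishes off-diagonal, which is what lets the single-design variance obey a one-dimensional recursion. The monotonicity of the map $u \mapsto \sigma_\obsnoise^2 u/(\sigma_\obsnoise^2+u)$ is the one non-recursive ingredient, and it is elementary (its derivative is $\sigma_\obsnoise^4/(\sigma_\obsnoise^2+u)^2 > 0$).
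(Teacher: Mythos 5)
Your proof is correct and follows essentially the same route as the paper's: an induction on $\iterIdx$ driven by the recursive identities of \autoref{thr:gp-recursion}, using the fact that the $\tilde{\gpkernel}$-posterior stays diagonal (so its variance obeys a one-dimensional recursion with closed form) and the monotonicity of $u \mapsto \sigma_\obsnoise^2 u/(\sigma_\obsnoise^2+u)$ to propagate the inequality when $\obs_\iterIdx = \obs$. Your explicit induction for the off-diagonal vanishing is a slightly cleaner phrasing of the paper's terse argument, but it is not a different approach.
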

\begin{proof}
	It is not hard to show that $\tilde{\gpkernel}$ defines a valid positive-semidefinite covariance function whenever $\gpkernel$ is positive semidefinite. We will then focus on proving the main statement by an induction argument. The proof that the statement holds for the base case at $\iterIdx=0$ is trivial given the definition:
	\begin{equation}
		\sigma_0^2(\obs) = \gpkernel(\obs, \obs) = \tilde{\gpkernel}(\obs, \obs) = \tilde{\sigma}_0^2(\obs), \quad \forall \obs \in \obsspace\,.
	\end{equation}
	Now assume that, for a given $\iterIdx > 0$, it holds that $\sigma_\iterIdx^2(\obs) \leq \tilde{\sigma}_\iterIdx^2(\obs)$, for all $\obs\in\obsspace$. We will then check if the inequality remains valid at $\iterIdx+1$. By \autoref{thr:gp-recursion}, we have that:
	\begin{equation}
		\begin{split}
			\sigma_{\iterIdx+1}^2(\obs) &= \sigma_\iterIdx^2(\obs) - \frac{\gpkernel_\iterIdx^2(\obs, \obs_{\iterIdx+1})}{\sigma_\iterIdx^2(\obs_{\iterIdx+1}) + \sigma_\obsnoise^2}\\
		\end{split}
	\end{equation}
	For any $\obs\in\obsspace$ such that $\obs \neq \obs_{\iterIdx+1}$, we know that $\tilde{\gpkernel}_\iterIdx(\obs,\obs_{\iterIdx+1}) \geq 0$, so that (again by \autoref{thr:gp-recursion}):
	\begin{equation}
		\tilde{\gpkernel}_{\iterIdx}^2(\obs,\obs_{\iterIdx+1}) \leq \tilde{\gpkernel}^2(\obs,\obs_{\iterIdx+1}) = 0\,,
	\end{equation}
	which shows that:
	\begin{equation}
		\forall \obs \neq \obs_{\iterIdx+1}, \quad \sigma_{\iterIdx+1}^2(\obs) \leq \sigma_\iterIdx^2(\obs) \leq \tilde{\sigma}_\iterIdx^2(\obs) = \tilde{\sigma}_{\iterIdx+1}^2(\obs)\,.
	\end{equation}
	At $\obs = \obs_{\iterIdx+1}$, we can rewrite $\sigma_{\iterIdx+1}^2(\obs) = \sigma_{\iterIdx+1}^2(\obs_{\iterIdx+1})$ as:
	\begin{equation}
		\begin{split}
			\sigma_{\iterIdx+1}^2(\obs_{\iterIdx+1}) = \frac{
				\sigma_\obsnoise^2\sigma_\iterIdx^2(\obs_{\iterIdx+1})
			}{
				\sigma_\iterIdx^2(\obs_{\iterIdx+1}) + \sigma_\obsnoise^2
			}\,.
		\end{split}
	\end{equation}
	We then check the difference:
	\begin{equation}
		\begin{split}
			\sigma_{\iterIdx+1}^2(\obs_{\iterIdx+1}) - \tilde\sigma_{\iterIdx+1}^2(\obs_{\iterIdx+1})
			&=\frac{
				\sigma_\obsnoise^2\sigma_\iterIdx^2(\obs_{\iterIdx+1})
			}{
				\sigma_\iterIdx^2(\obs_{\iterIdx+1}) + \sigma_\obsnoise^2
			}
			-
			\frac{
				\sigma_\obsnoise^2\tilde\sigma_\iterIdx^2(\obs_{\iterIdx+1})
			}{
				\tilde\sigma_\iterIdx^2(\obs_{\iterIdx+1}) + \sigma_\obsnoise^2
			}\\
			&= \frac{
				\sigma_\obsnoise^2\sigma_\iterIdx^2(\obs_{\iterIdx+1}) (\tilde\sigma_\iterIdx^2(\obs_{\iterIdx+1}) + \sigma_\obsnoise^2) - \sigma_\obsnoise^2\tilde\sigma_\iterIdx^2(\obs_{\iterIdx+1}) (\sigma_\iterIdx^2(\obs_{\iterIdx+1}) + \sigma_\obsnoise^2)
			}{(\sigma_\iterIdx^2(\obs_{\iterIdx+1}) + \sigma_\obsnoise^2) (\tilde\sigma_\iterIdx^2(\obs_{\iterIdx+1}) + \sigma_\obsnoise^2)}\\
			&= \frac{
				\sigma_\obsnoise^4(\sigma_\iterIdx^2(\obs_{\iterIdx+1}) - \tilde\sigma_\iterIdx^2(\obs_{\iterIdx+1}))
			}{(\sigma_\iterIdx^2(\obs_{\iterIdx+1}) + \sigma_\obsnoise^2) (\tilde\sigma_\iterIdx^2(\obs_{\iterIdx+1}) + \sigma_\obsnoise^2)}\\
			&\leq 0\,,
		\end{split}
	\end{equation}
	since $\sigma_\iterIdx^2(\obs_{\iterIdx+1}) \leq \tilde\sigma_\iterIdx^2(\obs_{\iterIdx+1})$ by our assumption for time $\iterIdx$.	Therefore, we have shown that:
	\begin{equation}
		\sigma_{\iterIdx}^2(\obs) \leq \tilde\sigma_{\iterIdx}^2(\obs) \implies \sigma_{\iterIdx+1}^2(\obs) \leq \tilde\sigma_{\iterIdx+1}^2(\obs)\,, \quad \forall \obs \in \obsspace\,.
	\end{equation}
	From the conclusion above and the base case, the inequality in the main result follows by induction. 
	
	Now we derive an explicit form for $\tilde{\sigma}_\iterIdx^2$. Note that this case corresponds to an independent Gaussian model, i.e., $\bbf(\obs) \indep \bbf(\obs')$ whenever $\obs\neq\obs'$, for $\bbf\sim\gp(0, \tilde{\gpkernel})$. For any $\iterIdx \geq 1$, this model's predictive variance at any $\obs\in\obsspace$ is given by:
	\begin{equation}
		\tilde\sigma_\iterIdx^2(\obs) =
		\begin{dcases}
			\tilde{\sigma}_{\iterIdx-1}^2(\obs), &\obs \neq \obs_\iterIdx\\
			\frac{
				\sigma_\obsnoise^2\tilde\sigma_{\iterIdx-1}^2(\obs_{\iterIdx})
			}{
				\sigma_\obsnoise^2 + \tilde\sigma_{\iterIdx-1}^2(\obs_{\iterIdx})
			} = \left( \frac{1}{\tilde{\sigma}_{\iterIdx-1}^2(\obs_\iterIdx)} + \frac{1}{\sigma_\obsnoise^2} \right)^{-1}, &\obs = \obs_\iterIdx
		\end{dcases}
	\end{equation}
	Looking at the reciprocal, we have that:
	\begin{equation}
		\forall\iterIdx\geq 1, \quad \frac{1}{\tilde{\sigma}_\iterIdx^2(\obs)} =
			\frac{1}{\tilde\sigma_{\iterIdx - 1}^2(\obs_\iterIdx)} + \frac{\indic{\obs_\iterIdx = \obs}}{\sigma_\obsnoise^2}, \quad \forall \obs\in\obsspace.
	\end{equation}
	Therefore, every observation at $\obs$ is simply adding a factor of $\sigma_\obsnoise^{-2}$ to $\tilde\sigma_\iterIdx^{-2}(\obs)$. Unwrapping this recursion leads us to:
	\begin{equation}
		\forall\iterIdx \geq 1, \quad \frac{1}{\tilde{\sigma}_\iterIdx^2(\obs)}  = \frac{1}{\tilde{\sigma}_0^2(\obs)} + \frac{1}{\sigma_\obsnoise^2} \sum_{i=1}^\iterIdx \indic{\obs_i = \obs}\,, \quad \forall\obs\in\obsspace\,.
	\end{equation}
	The result in \autoref{thr:gp-variance-upper-bound} then follows as the reciprocal of the above, which concludes the proof.
\end{proof}

\begin{lemma}
	\label{thr:gp-sup-bound}
	Let $\bbf \sim \gp(0, \gpkernel)$ for a given $\gpkernel:\obsspace\times\obsspace\to\real$, where $\sigma_\obsspace^2 := \sup_{\obs \in \obsspace} \gpkernel(\obs,\obs) < \infty$, and $\card{\obsspace} < \infty$. Then $\bbf$ is almost surely bounded, and:
	\begin{equation}
		\expectation\!\left[\sup_{\obs\in\obsspace} \lvert \bbf(\obs)\rvert \right] \leq \sigma_\obsspace \sqrt{2\log\card{\obsspace}}\,.
	\end{equation}
\end{lemma}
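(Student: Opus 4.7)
The plan is to apply the standard Gaussian maximal inequality, which combines a Chernoff-style exponential moment bound with Jensen's inequality. Since $\card{\obsspace} < \infty$, the family $\{\bbf(\obs)\}_{\obs\in\obsspace}$ is a finite collection of marginally zero-mean Gaussian random variables with $\bbf(\obs) \sim \normal{0, \gpkernel(\obs,\obs)}$ and variances uniformly bounded by $\sigma_\obsspace^2$. Almost sure boundedness of the supremum is immediate, since the maximum of finitely many a.s.\ finite random variables is a.s.\ finite.

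For the expectation bound, I would fix $\lambda > 0$ and apply Jensen's inequality to the convex exponential, giving $\exp(\lambda\,\expectation[\sup_\obs \lvert\bbf(\obs)\rvert]) \leq \expectation[\exp(\lambda \sup_\obs \lvert\bbf(\obs)\rvert)] = \expectation[\sup_\obs \exp(\lambda \lvert\bbf(\obs)\rvert)]$. Upper-bounding the supremum by a sum over the finite index set, then handling the absolute value through the elementary inequality $e^{\lambda\lvert z\rvert} \leq e^{\lambda z} + e^{-\lambda z}$ together with the standard Gaussian MGF identity $\expectation[e^{\lambda \bbf(\obs)}] = \exp(\lambda^2 \gpkernel(\obs,\obs)/2) \leq \exp(\lambda^2 \sigma_\obsspace^2/2)$, bounds each summand by $2\exp(\lambda^2 \sigma_\obsspace^2/2)$. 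Taking logarithms yields $\lambda\,\expectation[\sup_\obs \lvert\bbf(\obs)\rvert] \leq \log(2\card{\obsspace}) + \lambda^2 \sigma_\obsspace^2/2$, and optimizing by setting $\lambda = \sigma_\obsspace^{-1}\sqrt{2\log(2\card{\obsspace})}$ gives a bound of $\sigma_\obsspace\sqrt{2\log(2\card{\obsspace})}$.

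There is no real obstacle here; the only small subtlety is managing the absolute value, which can alternatively be dispatched by applying the one-sided Gaussian maximal bound to the enlarged (still finite) collection $\{\pm\bbf(\obs)\}_{\obs\in\obsspace}$ of size $2\card{\obsspace}$. The stated bound $\sigma_\obsspace\sqrt{2\log\card{\obsspace}}$ differs from what the generic argument above yields by the constant factor of $2$ inside the logarithm; this can either be absorbed into the constant or removed by a slight sharpening that uses Gaussian symmetry (e.g.\ bounding $\expectation[e^{\lambda \lvert\bbf(\obs)\rvert}]$ directly via the half-normal distribution rather than the crude $e^{\lambda z} + e^{-\lambda z}$ split), but the dependence on $\card{\obsspace}$ and $\sigma_\obsspace$ is exactly as claimed.
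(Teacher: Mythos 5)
Your proof is correct and takes essentially the same route as the paper: the paper simply cites the sub-Gaussian maximal inequality of Boucheron et al.\ for the finite collection $\{\bbf(\obs)\}_{\obs\in\obsspace}$ (handling the absolute value by symmetry) and obtains almost-sure boundedness via Markov's inequality, whereas you re-derive that maximal inequality from scratch via the Jensen/Chernoff MGF argument and note that boundedness is immediate for a finite index set. Your constant $\sigma_\obsspace\sqrt{2\log(2\card{\obsspace})}$ is in fact the honest one for $\expectation[\sup_{\obs}|\bbf(\obs)|]$ (the paper's symmetry step silently drops the factor $2$, and for $\card{\obsspace}=1$ with $\gpkernel(\obs,\obs)>0$ the stated bound cannot hold as written), but this looseness is immaterial downstream, where only finiteness of $\expectation[\norm{\bbf}_\infty]$ is used.
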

\begin{proof}
	The result follows by an application of a concentration inequality for the maximum of a finite collection of sub-Gaussian random variables \citep[Sec. 2.5]{boucheron2013}. Note that $\{\bbf(\obs)\}_{\obs\in\obsspace}$ is a collection of $\card{\obsspace}$ Gaussian, and therefore sub-Gaussian, random variables with sub-Gaussian parameter given by $\sigma_\obsspace^2 \geq \sigma_\iterIdx^2(\obs)$, for all $\obsspace$. Applying the maximal inequality for a finite collection sub-Gaussian random variables \citep[Thr. 2.5]{boucheron2013}, we have that:
	\begin{equation}
		\expectation\!\left[\max_{\obs\in\obsspace} \bbf(\obs) \right] \leq \sigma_\obsspace \sqrt{2\log\card{\obsspace}} < \infty\,.
	\end{equation}
	By symmetry, we know that $-\bbf(\obs)$ is also sub-Gaussian with the same parameter, so that the bound remains valid for $\max_{\obs\in\obsspace} -\bbf(\obs)$. As a consequence, the expected value of the maximum of $\rvert\bbf(\obs)\rvert$ is upper bounded by the same constant. On a finite set, the maximum and the supremum coincide. As the expected value of the supremum is finite, the supremum must be almost surely finite by Markov's inequality, and therefore $\bbf$ is almost surely bounded.
\end{proof}

\subsection{Asymptotic convergence}
The main assumption we will be working with in this section is the following.
\begin{assumption}
	\label{a:gp}
	The objective function is a sample from a Gaussian process $\bbf \sim \gp(0, \gpkernel)$, where $\gpkernel:\obsspace\times\obsspace\to\real$ is a bounded positive-semidefinite kernel on $\obsspace$.
\end{assumption}

The next result allows us to derive a convergence rate for the posterior variance of a GP as a function of the sampling probabilities. This result might also be useful by itself for other sampling problems involving GP-based approximations.
\begin{lemma}
    \label{thr:gp-variance-convergence}
    Let $\{\obs_\iterIdx\}_{\iterIdx\geq 1}$ be a sequence of $\obsspace$-valued random variables adapted to the filtration $\{\filtration_\iterIdx\}_{\iterIdx\geq 1}$. For a given $\obs\in\obsspace$, assume that the following holds:
    \begin{equation}
    	\exists \niter_* \in \N: \quad \forall\niter \geq \niter_*, \quad \sum_{\iterIdx=1}^\niter \Probm{\obs_\iterIdx = \obs \mid \filtration_{\iterIdx-1}} \geq \psum_\niter > 0\,,
    \end{equation} 
    for a some sequence of lower bounds $\{\psum_\iterIdx\}_{\iterIdx\in\N}$. Then, under \autoref{a:gp}, given observations  at $\{\obs_i\}_{i=1}^\iterIdx$, the following holds with probability 1:
	\begin{equation}
	    \sigma_\iterIdx^2(\obs) \in \set{O}(\psum_\iterIdx^{-1}).
	\end{equation}
	In addition, if $\psum_\iterIdx \to \infty$, then $\lim_{\iterIdx\to\infty} \psum_\iterIdx\sigma_\iterIdx^2(\obs) \leq \sigma_\obsnoise^2$.
\end{lemma}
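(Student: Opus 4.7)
My plan is to combine two ingredients that are already in hand: the deterministic envelope
\[ \sigma_\iterIdx^2(\obs) \leq \frac{\sigma_\obsnoise^2\, \sigma_0^2(\obs)}{\sigma_\obsnoise^2 + \sumrv_\iterIdx(\obs)\, \sigma_0^2(\obs)} \]
provided by \autoref{thr:gp-variance-upper-bound}, together with an almost-sure lower bound on the empirical observation count $\sumrv_\iterIdx(\obs) = \sum_{i=1}^{\iterIdx}\indic{\obs_i = \obs}$ obtained by applying the conditional Second Borel--Cantelli lemma (\autoref{thr:borel-cantelli}) to the adaptive events $\anyevent_\iterIdx = \{\obs_\iterIdx = \obs\}$. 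Substituting the lower bound on $\sumrv_\iterIdx(\obs)$ into the envelope will then yield the claimed $\bigo(\psum_\iterIdx^{-1})$ rate, together with the sharper constant under divergence of $\psum_\iterIdx$.

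\textbf{Step 1: lower-bounding the count.} I would set $\indrv_\iterIdx = \indic{\anyevent_\iterIdx}$, which is $\filtration_\iterIdx$-measurable, so that $\sumrv_\niter(\obs) = \sum_{\iterIdx=1}^\niter \indrv_\iterIdx$ and the hypothesis reads $\sum_{\iterIdx=1}^\niter \Probm{\anyevent_\iterIdx \mid \filtration_{\iterIdx-1}} \geq \psum_\niter$ for $\niter \geq \niter_*$. First consider the case $\psum_\niter \to \infty$. Then the cumulative conditional probabilities diverge, and \autoref{thr:borel-cantelli} yields, with probability one, the ratio $\sumrv_\niter(\obs) / \sum_{\iterIdx=1}^\niter \Probm{\anyevent_\iterIdx \mid \filtration_{\iterIdx-1}} \to 1$. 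Hence for every $\varepsilon \in (0,1)$ there is, almost surely, a random $\iterIdx_\varepsilon$ such that $\sumrv_\iterIdx(\obs) \geq (1-\varepsilon)\psum_\iterIdx$ for all $\iterIdx \geq \iterIdx_\varepsilon$. (If $\Probm{\anyevent_1 \mid \filtration_0} = 0$, I would reindex to start from the first iteration at which the conditional probability is strictly positive; such an iteration must exist since $\psum_\niter > 0$ eventually.)

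\textbf{Step 2: combining.} Plugging this lower bound into \autoref{thr:gp-variance-upper-bound} gives, almost surely for $\iterIdx$ large enough,
\[ \sigma_\iterIdx^2(\obs) \leq \frac{\sigma_\obsnoise^2\, \sigma_0^2(\obs)}{\sigma_\obsnoise^2 + (1-\varepsilon)\psum_\iterIdx\, \sigma_0^2(\obs)}, \]
which is manifestly $\bigo(\psum_\iterIdx^{-1})$ and establishes the first claim. For the refined constant, multiplying through by $\psum_\iterIdx$ and sending $\iterIdx \to \infty$ makes the additive $\sigma_\obsnoise^2$ in the denominator asymptotically negligible, so $\limsup_\iterIdx \psum_\iterIdx\,\sigma_\iterIdx^2(\obs) \leq \sigma_\obsnoise^2/(1-\varepsilon)$; letting $\varepsilon \downarrow 0$ delivers the $\sigma_\obsnoise^2$ bound. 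The case in which $\psum_\iterIdx$ does not diverge is trivial: $\psum_\iterIdx^{-1}$ is bounded away from zero while $\sigma_\iterIdx^2(\obs) \leq \sigma_0^2(\obs)$ by the usual GP posterior monotonicity, so $\sigma_\iterIdx^2(\obs) \in \bigo(\psum_\iterIdx^{-1})$ automatically.

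\textbf{Main obstacle.} The conceptual difficulty is that $\obs_\iterIdx$ is drawn adaptively --- its distribution depends on the entire past via $\filtration_{\iterIdx-1}$ --- so the events $\{\obs_\iterIdx = \obs\}$ are neither independent nor identically distributed, and a naive law-of-large-numbers argument does not apply. The conditional Borel--Cantelli lemma circumvents this by exploiting the martingale structure of the compensated counting process $\sumrv_\iterIdx(\obs) - \sum_{i=1}^\iterIdx \Probm{\obs_i = \obs \mid \filtration_{i-1}}$, which is exactly what makes the count-vs-probability comparison sharp enough to match $\psum_\iterIdx$ up to a $(1-\varepsilon)$ factor. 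The remaining items (degenerate early-iteration probabilities, the bounded-$\psum_\iterIdx$ regime) are purely bookkeeping.
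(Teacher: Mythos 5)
Your proposal follows essentially the same route as the paper's proof: the deterministic envelope of \autoref{thr:gp-variance-upper-bound} combined with the conditional second Borel--Cantelli lemma (\autoref{thr:borel-cantelli}) applied to the counting process $\sumrv_\iterIdx(\obs)$, and your main case ($\psum_\iterIdx \to \infty$) reproduces the paper's argument, including the asymptotic constant $\sigma_\obsnoise^2$. The one slip is in your dismissal of the remaining case: ``$\psum_\iterIdx$ does not diverge'' does not imply ``$\psum_\iterIdx$ is bounded,'' so the assertion that $\psum_\iterIdx^{-1}$ is bounded away from zero is unwarranted (the sequence of lower bounds could be unbounded along a subsequence without tending to infinity), and bounding $\sigma_\iterIdx^2(\obs)$ by $\sigma_0^2(\obs)$ alone then does not yield $\bigo(\psum_\iterIdx^{-1})$. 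The patch is immediate with tools you already invoke: if $\sup_\iterIdx \psum_\iterIdx < \infty$ the case is genuinely trivial, while if $\psum_\iterIdx$ is unbounded then the nondecreasing cumulative sum $\widehat{\sumrv}_\iterIdx := \sum_{i=1}^{\iterIdx}\Probm{\obs_i=\obs\mid\filtration_{i-1}} \geq \psum_\iterIdx$ must diverge, so the same Borel--Cantelli step gives $\sumrv_\iterIdx \geq (1-\varepsilon)\widehat{\sumrv}_\iterIdx \geq (1-\varepsilon)\psum_\iterIdx$ eventually and your Step~2 bound applies verbatim. The paper sidesteps this case split by using its version of the lemma with a finite ratio limit $\plimit$; your divergent-case argument plus this small patch is equally valid.
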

\begin{proof}
    At any iteration $\iterIdx$, the posterior variance $\sigma_\iterIdx^2$ of a GP model is upper bounded by a worst case assumption of no correlation between observations (see \autoref{thr:gp-variance-upper-bound}). In this case, we have that:
    \begin{equation}
        \sigma_\iterIdx^2(\obs) \leq \tilde{\sigma}_\iterIdx^2(\obs) = \frac{\sigma_\obsnoise^2\tilde{\sigma}_0^2(\obs)}{\sigma_\obsnoise^2 + \sumrv_\iterIdx\tilde{\sigma}_0^2(\obs)}\,,
        \label{eq:gp-variance-bound}
    \end{equation}
    where $\tilde{\sigma}_0^2(\obs) := \tilde{\gpkernel}(\obs,\obs) = \gpkernel(\obs,\obs)$, and $\sumrv_\iterIdx := \sumrv_\iterIdx(\obs) \leq \iterIdx$ denotes the total number of observations taken at $\obs$ as of iteration $\iterIdx$. Without loss of generality, assume that $\tilde\sigma_0^2(\obs) = 1$.
    
    The only random variable to be bounded in \autoref{eq:gp-variance-bound} is $\sumrv_\iterIdx$. Let $\indrv_\iterIdx := \indic{\obs_\iterIdx = \obs}$, so that:
    \begin{equation}
 	    	\sumrv_\iterIdx = \sum_{i=1}^\iterIdx \indrv_i =  \sum_{i=1}^\iterIdx \indic{\obs_\iterIdx = \obs}\,, \quad \iterIdx \geq 1.
   \end{equation}
    We now apply the second Borel-Cantelli lemma (\autoref{thr:borel-cantelli}) to $\sumrv_\iterIdx$. Namely, let $\widehat\sumrv_\iterIdx$ denote the sum of conditional expectations of $\{\indrv_i\}_{i=1}^\iterIdx$ given available data, i.e.: 
    \begin{equation}
    	\widehat\sumrv_\iterIdx := \sum_{i=1}^\iterIdx \expectation[\indrv_i \mid \filtration_{i-1}] =  \sum_{i=1}^\iterIdx \expectation[\indic{\obs_\iterIdx = \obs} \mid \filtration_{i-1}] = \sum_{i=1}^\iterIdx \Probm{\obs_i = \obs \mid \filtration_{i-1}} \,.
    \end{equation}
    By \autoref{thr:borel-cantelli}, we know that the following holds for some $\plimit \in \real$:
    \begin{equation}
    	\lim_{\iterIdx\to\infty} \frac{\sumrv_\iterIdx}{\widehat{\sumrv}_\iterIdx} = \plimit < \infty\,.
    \end{equation}
    
    Hence, $\sumrv_\iterIdx$ is asymptotically equivalent to $\widehat{\sumrv}_\iterIdx$. Applying this fact to $\tilde{\sigma}_\iterIdx^2$, we have that:
    \begin{equation}
    	\begin{split}
    		\lim_{\iterIdx \to \infty} \psum_\iterIdx \tilde{\sigma}_\iterIdx^2(\obs) &= \lim_{\iterIdx\to\infty} \frac{\psum_\iterIdx\sigma_\obsnoise^2}{\sigma_\obsnoise^2 + \sumrv_\iterIdx}\\
    		&=\lim_{\iterIdx\to\infty} \frac{\psum_\iterIdx\sigma_\obsnoise^2}{\sigma_\obsnoise^2 + \plimit \widehat\sumrv_\iterIdx}\\
    		&\leq \lim_{\iterIdx\to\infty} \frac{\psum_\iterIdx\sigma_\obsnoise^2}{\sigma_\obsnoise^2 + \plimit\psum_\iterIdx}\\
    		&\leq \frac{1}{\plimit}\lim_{\iterIdx\to\infty} \min\{\plimit\psum_\iterIdx, \sigma_\obsnoise^2\}\\
    		&< \infty\,,
    	\end{split}
    \end{equation}
    which holds with probability 1. Lastly, note that, if $\psum_\iterIdx\to\infty$, then $\plimit = 1$ by \autoref{thr:borel-cantelli}, and the last limit above becomes $\sigma_\obsnoise^2$. The main result then follows by an application of \autoref{thr:gp-variance-upper-bound} and the definition of the big-$\bigo$ notation (see \autoref{eq:big-o}).\footnote{Recall that for convergent sequences $\lim$ and $\limsup$ coincide.}
\end{proof}

We assume a finite search space, which is the case for spaces of discrete sequences of bounded length. However, we conjecture that our results can be extended to continuous or mixed discrete-continuous search spaces via a discretization argument under further assumptions on the kernel $\gpkernel$ (e.g., ensuring that $\bbf$ is Lipschitz continuous, as in \citet{srinivas2010gaussian}).
\begin{assumption}
	\label{a:finite-domain}
	The search space $\obsspace$ is finite, $\card{\obsspace} < \infty$.
\end{assumption}

We assume that our family of variational distributions is rich enough to be able to represent the PI-based distribution $\probc{\obs}{\tar > \thresh_\iterIdx, \data_\iterIdx}$, which is the optimum of our variational objective when the optimal classifier is given by GP-PI. Although this assumption could be seen as strong, note that, due to Gaussian noise, the classification probability $\probc{\tar > \thresh_\iterIdx}{\obs, \data_\iterIdx}$ should be a reasonably smooth function of $\obs$, which facilitates the approximation of the resulting posterior by a generative model.
\begin{assumption}
	\label{a:identifiability}
	For every $\iterIdx\geq 0$, $\probc{\obs}{\tar > \thresh_\iterIdx, \data_\iterIdx}$ is a member of the variational family, i.e.:
	\begin{equation}
		\exists\qparam^*_\iterIdx: \quad \diver{\qrobc{\obs}{\qparam_\iterIdx^*}}{\probc{\obs}{\tar > \thresh_\iterIdx, \data_\iterIdx}} = 0.
	\end{equation}
\end{assumption}

The next assumption is a technical one to ensure that the thresholds will not diverge to infinity.
\begin{assumption}
	\label{a:thresholds}
	The sequence of thresholds is almost surely bounded:\footnote{We do not require $\thresh_*$ to be known, only finite.}
	\begin{equation}
		\sup_{\iterIdx \in\N} |\thresh_\iterIdx| \leq \thresh_* < \infty\,.
	\end{equation}
\end{assumption}

We can now state our main result regarding the GP-based approximations learned by VSD.
\begin{theorem}
	\label{thr:vsd-variance-rate}
	Let assumptions \ref{a:gp} to \ref{a:thresholds} hold. Then the following holds with probability 1 for VSD equipped with GP-PI:
	\begin{equation}
		\sigma_\iterIdx^2(\obs) \in \bigo(\iterIdx^{-1})\,,
	\end{equation}
	at every $\obs \in \obsspace$ such that $\pdist(\obs) > 0$.
\end{theorem}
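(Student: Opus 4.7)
The plan is to reduce the claim to \autoref{thr:gp-variance-convergence}, which requires a linear-in-$\iterIdx$ lower bound on $\sum_{i=1}^{\iterIdx} \Probm{\obs_i = \obs \mid \filtration_{i-1}}$ for each $\obs$ in the support of $\pdist$. Under \autoref{a:identifiability} the sampling distribution at round $i$ coincides with the GP-PI level-set posterior, so
\begin{equation*}
\Probm{\obs_i = \obs \mid \filtration_{i-1}} = \qrobc{\obs}{\qparam_i^*} = \frac{\Psi_i(\obs)\,\pdist(\obs)}{Z_i},
\end{equation*}
where $\Psi_i(\obs) := \cdf\bigl((\gpmean_{i-1}(\obs) - \thresh_i)/\sigma_{i-1}(\obs)\bigr)$ and $Z_i = \sum_{\obs'}\Psi_i(\obs')\,\pdist(\obs') \leq 1$. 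Hence it suffices to produce a uniform positive lower bound $\Psi_i(\obs) \geq \epsilon$: this yields $\sum_{i=1}^{\iterIdx}\Probm{\obs_i=\obs\mid\filtration_{i-1}} \geq \epsilon\,\pdist(\obs)\,\iterIdx$, and \autoref{thr:gp-variance-convergence} gives the desired $\bigo(\iterIdx^{-1})$ rate.

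For the uniform bound on $\Psi_i(\obs)$, I would combine \autoref{thr:gp-sup-bound} (available under \autoref{a:gp} and \autoref{a:finite-domain}) with standard GP posterior concentration to conclude $|\gpmean_{i-1}(\obs)| \leq M$ almost surely, uniformly in $i$; together with \autoref{a:thresholds} this gives $|\gpmean_{i-1}(\obs) - \thresh_i| \leq C$ almost surely for some finite $C$. The argument of $\cdf$ can then only diverge via $\sigma_{i-1}(\obs) \to 0$. I would handle this through a dichotomy: either $\sigma_{i-1}(\obs) \geq \sigma_* > 0$ persistently, in which case $\Psi_i(\obs) \geq \cdf(-C/\sigma_*) =: \epsilon$ uniformly and the conclusion follows directly; or $\sigma_{i-1}(\obs) \to 0$, in which case \autoref{thr:gp-variance-upper-bound} forces the visitation count $\sumrv_i(\obs)$ to grow, and a \autoref{thr:borel-cantelli} argument translates the conditional sampling-probability bounds into a pathwise lower bound $\sumrv_\iterIdx(\obs) \in \Omega(\iterIdx)$; plugging back into \autoref{thr:gp-variance-upper-bound} again delivers $\sigma_\iterIdx^2(\obs) \in \bigo(\iterIdx^{-1})$.

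The main obstacle is the second branch: merely knowing $\sumrv_\iterIdx(\obs) \to \infty$ is strictly weaker than the linear growth $\sumrv_\iterIdx(\obs) \in \Omega(\iterIdx)$ required for the $\iterIdx^{-1}$ rate, and the Gaussian CDF tails decay super-polynomially, so a naive uniform bound on $\Psi_i(\obs)$ cannot be maintained once the model is confident about a non-level-set point. I expect the resolution to be a chronological accounting argument: before the posterior variance at $\obs$ contracts below a prescribed threshold, the uniform $\epsilon$-lower bound on $\Psi_i(\obs)$ applies and accumulates $\Omega(\iterIdx)$ expected visits; any later collapse of $\Psi_i(\obs)$ is irrelevant because by then \autoref{thr:gp-variance-upper-bound} already certifies $\sigma_\iterIdx^2(\obs) \in \bigo(\iterIdx^{-1})$ via the accrued visit count. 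Stitching these two regimes along the filtration $\{\filtration_\iterIdx\}$ and applying \autoref{thr:borel-cantelli} to lift the conditional lower bounds to almost-sure pathwise counts is what closes the argument.
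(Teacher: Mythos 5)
Your reduction to \autoref{thr:gp-variance-convergence}, the use of $\prob{\obs}>0$ and the normalization bound $Z_i\leq 1$, all match the paper's proof. The genuine gap is in how you lower-bound the acquisition term, and it stems from writing $\Psi_i(\obs) = \cdf\bigl((\gpmean_{i-1}(\obs)-\thresh_i)/\sigma_{i-1}(\obs)\bigr)$. The quantity VSD samples from is $\probc{\tar>\thresh_\iterIdx}{\obs,\data_\iterIdx}$ for the \emph{noisy} observation $\tar=\bbf(\obs)+\err$, so the correct denominator is $\sqrt{\sigma_{i-1}^2(\obs)+\sigma_\obsnoise^2}\geq\sigma_\obsnoise$. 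This noise floor is exactly what the paper exploits: the CDF argument is bounded below by $-(\norm{\gpmean_{i-1}}_\infty+\thresh_*)/\sigma_\obsnoise$, and $\norm{\gpmean_\iterIdx}_\infty\leq\expectation[\norm{\bbf}_\infty\mid\filtration_\iterIdx]$ is controlled via Jensen, the martingale convergence theorem and Markov's inequality (on events of probability $1-1/\anyscalar$, then $\anyscalar\to\infty$), giving a time-uniform positive lower bound on the per-round conditional sampling probability with no dichotomy needed. With your noiseless denominator there is no such uniform bound, and the theorem's clean $\bigo(\iterIdx^{-1})$ rate is not recoverable by your route: for a point whose posterior mean sits a gap $\delta$ below the threshold, $\cdf(-\delta/\sigma_{i}(\obs))$ decays polynomially in $i$ with a $\delta$-dependent exponent once $\sigma_i(\obs)$ shrinks, so \autoref{thr:gp-variance-convergence} would only deliver a gap-dependent rate strictly slower than $\iterIdx^{-1}$.

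The proposed ``chronological accounting'' patch also does not close on its own terms. First, \autoref{thr:gp-variance-upper-bound} is only an \emph{upper} bound on $\sigma_\iterIdx^2(\obs)$ in terms of the visit count $\sumrv_\iterIdx(\obs)$; the variance at $\obs$ can collapse through correlated observations elsewhere without any visits to $\obs$, so ``$\sigma_{i-1}(\obs)\to 0$ forces $\sumrv_i(\obs)$ to grow'' is not implied. Second, the stitching argument conflates two different events: if the variance crosses a prescribed \emph{constant} threshold at some finite time, the accrued visit count is only a constant and the lemma then certifies only a constant variance bound, not $\bigo(\iterIdx^{-1})$; after that crossing, $\Psi_i(\obs)$ can be super-polynomially small, visits can effectively stop, and nothing forces further decay at rate $\iterIdx^{-1}$. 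Your branch structure is also self-undermining: the first branch's conclusion (variance decays by \autoref{thr:gp-variance-convergence}) pushes every path into the second branch, which is exactly where the bound fails. The missing idea, in short, is the observation-noise floor in the predictive variance of $\tar$, which makes the acquisition probability bounded away from zero uniformly in $\iterIdx$ and renders the entire dichotomy unnecessary.
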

\begin{proof}
Let $\lik_\iterIdx(\obs) := \probc{\tar > \thresh_\iterIdx}{\obs, \data_\iterIdx}$. For any given $\obs\in\obsspace$ where $\prob{\obs} > 0$, by \autoref{a:finite-domain}, we have that the next candidate will be sampled according to:
\begin{equation}
	\begin{split}
		\forall\iterIdx \geq 0, \quad \Probm{\obs_{\iterIdx+1} = \obs \mid \filtration_{\iterIdx} }
		&= \probc{\obs}{\tar > \thresh_\iterIdx, \data_\iterIdx}\\
		&= \frac{
			\lik_\iterIdx(\obs)\prob{\obs}
		}{
			\expec{\prob{\obs}}{\lik_\iterIdx(\obs)}
		}\\
		&\geq \lik_\iterIdx(\obs)\prob{\obs},
	\end{split}
\end{equation}
where we used the fact that $\expec{\prob{\obs}}{\lik_\iterIdx(\obs)} \leq 1$, since $\lik_\iterIdx(\obs) \leq 1$, for all $\obs\in\obsspace$. As $\prob{\obs} > 0$, we only have to derive a lower bound on $\lik_{\iterIdx}(\obs)$ to apply \autoref{thr:gp-variance-convergence} and derive a convergence rate.

A lower bound on $\lik_\iterIdx(\obs)$ is given by:
\begin{equation}
	\begin{split}
		\forall \iterIdx\geq 0, \quad \lik_\iterIdx(\obs) = \cdf\!\left( \frac{\gpmean_\iterIdx(\obs) - \thresh_\iterIdx}{\sqrt{\sigma_\iterIdx^2(\obs) + \sigma_\obsnoise^2}} \right) \geq \cdf\!\left( - \frac{\norm{\gpmean_\iterIdx}_\infty + \thresh_* }{\sigma_\obsnoise} \right),
	\end{split}
\end{equation}
where $\cdf(\cdot)$ denotes the cumulative distribution function of a standard normal random variable, and $\norm{\cdot}_\infty$ denotes the essential supremum of a function under $\pMeasure$ (the probability measure of the underlying abstract probability space). Therefore, if $\lim_{\iterIdx \to \infty} \norm{\gpmean_\iterIdx}_\infty < \infty$, we will have that $\lim_{\iterIdx\to\infty} \lik_\iterIdx(\obs) > 0$, and the sum in \autoref{thr:gp-variance-convergence} will diverge.

By Jensen's inequality for conditional expectations, we have that:
\begin{equation}
	\forall\iterIdx\geq 0, \quad \norm{\gpmean_\iterIdx}_\infty = \norm{\expectation[\bbf \mid \filtration_\iterIdx]}_\infty \leq \expectation[\norm{\bbf}_\infty \mid \filtration_\iterIdx].
\end{equation}
As $\expectation[\expectation[\norm{\bbf}_\infty \mid \filtration_\iterIdx]] = \expectation[\norm{\bbf}_\infty] < \infty$ (cf. \autoref{thr:gp-sup-bound}), an application of Markov's inequality implies that:
\begin{equation}
	\lim_{\anyscalar\to\infty} \Probm{\expectation[\norm{\bbf}_\infty \mid \filtration_\iterIdx] \geq \anyscalar} \leq \lim_{\anyscalar\to\infty} \frac{1}{\anyscalar}\expectation[\norm{\bbf}_\infty] = 0.
\end{equation}
Furthermore, $m_\iterIdx:= \expectation[\norm{\bbf}_\infty \mid \filtration_\iterIdx]$ also defines a non-negative martingale, and by the martingale convergence theorem \citep[Thr. 4.2.11]{durrett2019}, $\lim_{\iterIdx\to\infty} m_\iterIdx = m_\infty := \expectation[\norm{\bbf}_\infty \mid \filtration_\infty]$ is well defined and $\expectation[\expectation[\norm{\bbf}_\infty \mid \filtration_\infty]] = \expectation[\norm{\bbf}_\infty] < \infty$. Again, by Markov's inequality, for any $\anyscalar > 0$, we have that:
\begin{equation}
	\Probm{\lim_{\iterIdx\to\infty}  \norm{\gpmean_\iterIdx}_\infty \geq \anyscalar\expectation[\norm{\bbf}_\infty]} \leq \frac{\expectation\left[\lim_{\iterIdx\to\infty} \norm{\gpmean_\iterIdx}_\infty\right]}{\anyscalar\expectation[\norm{\bbf}_\infty]} \leq \frac{\expectation\left[\lim_{\iterIdx\to\infty} \expectation[\norm{\bbf}_\infty \mid \filtration_\iterIdx] \right]}{\anyscalar\expectation[\norm{\bbf}_\infty]} = \frac{1}{\anyscalar}\,.
\end{equation}
Therefore, for any $\anyscalar > 0$ and any given $\obs\in\obsspace$, with probability at least $1 - \frac{1}{\anyscalar}$, the following holds:
\begin{equation}
	\begin{split}
		\lim_{\iterIdx \to \infty} \Probm{\obs_{\iterIdx} = \obs \mid \filtration_{\iterIdx-1} } &\geq \prob{\obs} \lim_{\iterIdx \to \infty} \lik_{\iterIdx-1}(\obs)\\
		&\geq  \prob{\obs} \lim_{\iterIdx \to \infty} \cdf\!\left( - \frac{\norm{\gpmean_{\iterIdx-1}}_\infty + \thresh_* }{\sigma_\obsnoise} \right)\\
		&\geq \prob{\obs}\cdf\!\left( - \frac{\anyscalar\expectation[\norm{\bbf}_\infty] + \thresh_* }{\sigma_\obsnoise} \right)\\
		&=: \pmin_\infty(\anyscalar) > 0\,.
	\end{split}
\end{equation}
Hence, for any $\varepsilon_\anyscalar \in (0, \pmin_\infty(\anyscalar))$, there is $\nobs_\anyscalar \in \N$, such that $\Probm{\obs_{\iterIdx} = \obs \mid \filtration_{\iterIdx-1}} \geq \pmin_\infty(\anyscalar) - \varepsilon_\anyscalar > 0$, for all $\iterIdx\geq \nobs_\anyscalar$. As a result, $\sum_{\iterIdx'=1}^\iterIdx \Probm{\obs_{\iterIdx'} = \obs \mid \filtration_{\iterIdx'-1}} \geq  (\pmin_\infty(\anyscalar) - \varepsilon_\anyscalar) (\iterIdx - \nobs_\anyscalar)$, for all $\iterIdx \geq \nobs_\anyscalar$, which asymptotically diverges at a rate proportional to $\iterIdx$. By \autoref{thr:gp-variance-convergence} and the definition of the big-$\bigo$ notation, for any $\obs\in\obsspace$, we then have that:
\begin{equation}
	\forall \anyscalar > 0, \quad \Probm{\limsup_{\iterIdx\to\infty} \left\lvert \iterIdx \sigma_\iterIdx^2(\obs)\right\rvert \leq \sigma_\obsnoise^2 <  \infty } \geq 1 - \frac{1}{\anyscalar}\,.
\end{equation}
Taking the limit as $\anyscalar\to\infty$, we can finally conclude that:
\begin{equation}
	\Probm{\limsup_{\iterIdx\to\infty} \left\lvert \iterIdx \sigma_\iterIdx^2(\obs)\right\rvert < \infty} = 1,
\end{equation}
i.e., $\sigma_\iterIdx^2$ is almost surely $\bigo(\iterIdx^{-1})$, which concludes the proof.
\end{proof}

\begin{rremark}
	The convergence rate in \autoref{thr:vsd-variance-rate} is optimal and cannot be further improved. As shown by previous works in the online learning literature \citep{mutny2018, takeno2023}, a lower bound on the GP variance at each iteration $\iterIdx \geq 1$ is given by $\sigma_\iterIdx^2(\obs) \geq \sigma_\obsnoise^2(\sigma_\obsnoise^2 + \iterIdx)^{-1}$ (assuming $\gpkernel(\obs, \obs) = 1$), which is the case when every observation in the dataset was collected at the same point $\obs\in\obsspace$ \citep[see][Lem. 4.2]{takeno2023}. Therefore, the lower and upper bounds on the asymptotic convergence rates for the GP variance differ by only up to a multiplicative constant.
\end{rremark}

The result in \autoref{thr:vsd-variance-rate} now allows us to derive a convergence rate for VSD's approximations to the level-set distributions. To do so, however, we will require the following mild assumption, which is satisfied by any prior distribution which has support  on the entire domain $\obsspace$.

\begin{assumption}
	\label{a:prior}
	The prior distribution is such that $\prob{\obs} > 0$, for all $\obs\in\obsspace$.
\end{assumption}

\mainthm*
\begin{proof}
	We first prove an upper bound for the KL divergence in terms of the PI approximation error. We then derive a bound for this term and apply \autoref{thr:vsd-variance-rate} to obtain a convergence rate.
	
	\emph{KL bound formulation.} Let $\lik_\iterIdx(\obs) := \probc{\tar > \thresh_\iterIdx}{\obs, \data_\iterIdx}$ and $\lik_\iterIdx^*(\obs) := \probc{\tar > \thresh_\iterIdx}{\obs, \bbf}$, for $\obs\in\obsspace$. From the definition of the KL divergence, we have that:
	\begin{equation}
		\begin{split}
			\diver{\probc{\obs}{\tar > \thresh_\iterIdx, \data_\iterIdx}}{\probc{\obs}{\tar > \thresh_\iterIdx, \bbf}}
			&= \expec{\probc{\obs}{\tar > \thresh_\iterIdx, \data_\iterIdx}}{\log \probc{\obs}{\tar > \thresh_\iterIdx, \data_\iterIdx} - \log \probc{\obs}{\tar > \thresh_\iterIdx, \bbf}}\\
			&=\expec{\probc{\obs}{\tar > \thresh_\iterIdx, \data_\iterIdx}}{\log \lik_\iterIdx(\obs) - \log \lik_\iterIdx^*(\obs)}\\
			&\quad + \log \expec{\prob{\obs}}{\lik_\iterIdx^*(\obs)} - \log \expec{\prob{\obs}}{\lik_\iterIdx(\obs)}\\
			&= \expec{\probc{\obs}{\tar > \thresh_\iterIdx, \data_\iterIdx}}{\log \left( \frac{\lik_\iterIdx(\obs)}{\lik_\iterIdx^*(\obs)} \right) } + \log \left( \frac{\expec{\prob{\obs}}{\lik_\iterIdx^*(\obs)}}{\expec{\prob{\obs}}{\lik_\iterIdx(\obs)}} \right).
		\end{split}
		\label{eq:kl-components}
	\end{equation}
	For logarithms, we know that $\log(1+\anyscalar) \leq \anyscalar$, for all $\anyscalar > -1$, which shows that:
	\begin{align}
		\log \left( \frac{\lik_\iterIdx(\obs)}{\lik_\iterIdx^*(\obs)} \right) &= \log \left( 1 +  \frac{\lik_\iterIdx(\obs) - \lik_\iterIdx^*(\obs)}{\lik_\iterIdx^*(\obs)} \right) \leq \frac{\lik_\iterIdx(\obs) - \lik_\iterIdx^*(\obs)}{\lik_\iterIdx^*(\obs)}\\
		\log \left( \frac{\expec{\prob{\obs}}{\lik_\iterIdx^*(\obs)}}{\expec{\prob{\obs}}{\lik_\iterIdx(\obs)}} \right) &= \log\left(1 + \frac{\expec{\prob{\obs}}{\lik_\iterIdx^*(\obs) - \lik_\iterIdx(\obs)}}{\expec{\prob{\obs}}{\lik_\iterIdx(\obs)}}\right) \leq \frac{\expec{\prob{\obs}}{\lik_\iterIdx^*(\obs) - \lik_\iterIdx(\obs)}}{\expec{\prob{\obs}}{\lik_\iterIdx(\obs)}}\,.
	\end{align}
	Combining the above into \autoref{eq:kl-components} yields:
	\begin{equation}
		\begin{split}
			\diver{\probc{\obs}{\tar > \thresh_\iterIdx, \data_\iterIdx}}{\probc{\obs}{\tar > \thresh_\iterIdx, \bbf}}
			&\leq  \expec{\probc{\obs}{\tar > \thresh_\iterIdx, \data_\iterIdx}}{ \frac{\lik_\iterIdx(\obs) - \lik_\iterIdx^*(\obs)}{\lik_\iterIdx^*(\obs)} } + \frac{ \expec{\prob{\obs}}{\lik_\iterIdx^*(\obs) - \lik_\iterIdx(\obs)} }{ \expec{\prob{\obs}}{\lik_\iterIdx(\obs)} }
		\end{split}\,.
		\label{eq:kl-bound}
	\end{equation}
	The denominator in the expression above is such that:
	\begin{equation}
		\forall\iterIdx\geq 0, \quad \lik_\iterIdx^*(\obs) =  \probc{\tar > \thresh_\iterIdx}{\obs, \bbf} = \cdf\!\left( \frac{\bbf(\obs) - \thresh_\iterIdx}{\sigma_\obsnoise} \right) \geq \cdf\!\left( - \frac{\norm{\bbf}_\infty + \thresh_*}{\sigma_\obsnoise} \right), \quad \forall \obs\in\obsspace\,.
        \label{eq:true-lik-lb}
	\end{equation}
	By \autoref{thr:gp-sup-bound}, we know that $\expectation[\norm{\bbf}_\infty] < \infty$, which implies that $\Probm{\norm{\bbf}_\infty < \infty} = 1$ by Markov's inequality. Next, we derive a bound for the approximation error term.
	
	\emph{Error bound.} We now derive an upper bound for the difference $\ldiff_\iterIdx(\obs) :=  \lik_\iterIdx(\obs) - \lik_\iterIdx^*(\obs)$ and then show that it asymptotically vanishes. 
	Applying Taylor's theorem to $\cdf$, we can bound $\ldiff_\iterIdx$ as a function of the approximation error between the mean $\gpmean_\iterIdx$ and the true function $\bbf$ as:
	\begin{equation}
		\begin{split}
			\forall\iterIdx \geq 0, \quad |\ldiff_\iterIdx(\obs)| &= \left\lvert  \cdf\!\left( \frac{\gpmean_\iterIdx(\obs) - \thresh_\iterIdx}{\sqrt{\sigma_\iterIdx^2(\obs) + \sigma_\obsnoise^2}} \right) - \cdf\!\left( \frac{\bbf(\obs) - \thresh_\iterIdx}{\sigma_\obsnoise} \right) \right\rvert\\
			&\leq\frac{1}{\sqrt{2\pi}} \left\lvert  \frac{\gpmean_\iterIdx(\obs) - \thresh_\iterIdx}{\sqrt{\sigma_\iterIdx^2(\obs) + \sigma_\obsnoise^2}} - \frac{\bbf(\obs) - \thresh_\iterIdx}{\sigma_\obsnoise} \right\rvert\\
			&=\frac{1}{\sqrt{2\pi}} \left\lvert 
				\frac{
					\sigma_\obsnoise \gpmean_\iterIdx(\obs) - \bbf(\obs)\sqrt{\sigma_\iterIdx^2(\obs) + \sigma_\obsnoise^2} + \thresh_\iterIdx(\sqrt{\sigma_\iterIdx^2(\obs) + \sigma_\obsnoise^2} - \sigma_\obsnoise)
				}{
					\sigma_\obsnoise \sqrt{\sigma_\iterIdx^2(\obs) + \sigma_\obsnoise^2}
				}
			\right\rvert\\
			&\leq
			\frac{
				\lvert \sigma_\obsnoise \gpmean_\iterIdx(\obs) - \bbf(\obs)\sqrt{\sigma_\iterIdx^2(\obs) + \sigma_\obsnoise^2} \rvert + |\thresh_\iterIdx| \sigma_\iterIdx(\obs)
			}{
				\sigma_\obsnoise^2\sqrt{2\pi}
			}\\
			&\leq
			\frac{
				 \sigma_\obsnoise \lvert \gpmean_\iterIdx(\obs) - \bbf(\obs) \rvert + \sigma_\iterIdx(\obs)(|\bbf(\obs)| + |\thresh_\iterIdx|)
			}{
				\sigma_\obsnoise^2\sqrt{2\pi}
			}
			,\quad \forall \obs\in\obsspace,\\
		\end{split}
	\end{equation}
	since $\sup_{\obsnoise\in\real} \left\lvert\frac{\diff\cdf(\obsnoise)}{\diff\obsnoise}\right\rvert = \frac{1}{\sqrt{2\pi}} < 1$, and we used the fact that $\sigma_\obsnoise \leq \sqrt{\sigma_\iterIdx^2(\obs) + \sigma_\obsnoise^2} \leq \sigma_\iterIdx(\obs) + \sigma_\obsnoise$ to obtain the last two inequalities. 
	
	\emph{Convergence rate.} To derive a convergence rate, given any $\obs \in \obsspace$ and $\iterIdx\geq 0$, we have that:
	\begin{equation}
		\expectation[|\ldiff_\iterIdx(\obs)| \mid \filtration_\iterIdx] \leq \frac{
			\sigma_\obsnoise \expectation[\lvert \gpmean_\iterIdx(\obs) - \bbf(\obs) \rvert \mid \filtration_\iterIdx] + \sigma_\iterIdx(\obs)(\expectation[|\bbf(\obs)| \mid \filtration_\iterIdx] + |\thresh_\iterIdx|)
		}{
			\sigma_\obsnoise^2\sqrt{2\pi}
		}.
	\end{equation}
	We know that $\expectation[|\bbf(\obs)| \mid \filtration_\iterIdx]$ is almost surely bounded, and by Jensen's inequality, it also holds that:
	\begin{equation}
		 \expectation[\lvert \gpmean_\iterIdx(\obs) - \bbf(\obs) \rvert \mid \filtration_\iterIdx] \leq \sigma_\iterIdx(\obs).
	\end{equation}
	Applying \autoref{thr:vsd-variance-rate}, we then have that:
	\begin{equation}
		|\ldiff_\iterIdx(\obs)| \in \bigo_\pMeasure(\iterIdx^{-1/2}).
	\end{equation}
	Since $\norm{\gpmean_\iterIdx}_\infty \leq \expectation[\norm{\bbf}_\infty |\filtration_\iterIdx] \in \bigo_\pMeasure(1)$, we also have that:
	\begin{equation}
		\frac{1}{\expec{\prob{\obs}}{\lik_\iterIdx(\obs)}} \in \bigo_\pMeasure(1)\,.
	\end{equation}
	Lastly, we know that $\frac{1}{\lik_\iterIdx^*(\obs)} \in \bigo_\pMeasure(1)$ by \autoref{eq:true-lik-lb} and the observation that $\norm{\bbf}_\infty \in \bigo_\pMeasure(1)$. The main result then follows by combining the rates above into \autoref{eq:kl-bound}.
\end{proof}

\subsection{Performance analysis} 
At every iteration $\iterIdx\geq 1$, VSD samples $\obs_\iterIdx$ from (an approximation to) the target $\probc{\obs}{\tar > \thresh_{\iterIdx-1}, \data_{\iterIdx-1}}$ and obtains an observation $\tar_\iterIdx \sim \probc{\tar}{\obs_\iterIdx}$. A positive hit consists of an event $\tar_\iterIdx > \thresh_{\iterIdx-1}$, where $\thresh_{\iterIdx-1}$ is computed based on the data available in $\data_{\iterIdx-1}$ or a constant. Therefore, we can compute the probability of a positive hit for a given realization of $\bbf$ as:
\begin{equation}
	\Probm{\tar_\iterIdx > \thresh_{\iterIdx-1} \mid \data_{\iterIdx-1}, \bbf} = \expec{ \probc{\obs}{\tar > \thresh_{\iterIdx-1}, \data_{\iterIdx-1}} }{\probc{\tar > \thresh_{\iterIdx-1}}{\obs, \bbf}}.
\end{equation}
Then the expected number of hits $\nhits_\niter$ after $\niter \geq 1$ iterations is given by:
\begin{equation}
	\expectation[\nhits_\niter \mid \bbf] = \sum_{\iterIdx=1}^\niter \expec{ \probc{\obs}{\tar > \thresh_{\iterIdx-1}, \data_{\iterIdx-1}} }{\probc{\tar > \thresh_{\iterIdx-1}}{\obs, \bbf}}.
\end{equation}
We will compare this quantity with the expected number of hits $\nhits_\niter^*$ obtained by a sampling distribution with full knowledge of the objective function $\bbf$:
\begin{equation}
	\expectation[\nhits_\niter^* \mid \bbf] = \sum_{\iterIdx=1}^\niter \expec{\probc{\obs}{\tar > \thresh_{\iterIdx-1}, \bbf}}{ \probc{\tar_\iterIdx > \thresh_{\iterIdx-1}}{\obs, \bbf} }.
\end{equation}
The next result allows us to bound the difference between these two quantities.

\hitscor*
\begin{proof}
	For all $\niter\geq 1$, we have that:
	\begin{equation}
		\begin{split}
			\expectation[ \nhits_\niter - \nhits_\niter^* ]
			&= \expectation\!\left[ 
				\sum_{\iterIdx=1}^\niter 
					\expec{ \probc{\obs}{ \tar > \thresh_{\iterIdx-1}, \data_{\iterIdx-1}} }{ \probc{\tar > \thresh_{\iterIdx-1}}{\obs, \bbf} } 
					- \expec{ \probc{\obs}{ \tar > \thresh_{\iterIdx-1}, \bbf} }{ \probc{\tar_\iterIdx > \thresh_{\iterIdx-1}}{\obs, \bbf} } 
				\right]\\
			&= \expectation\!\left[ 
					\sum_{\iterIdx=1}^\niter  
					\sum_{\obs \in \obsspace} 
					\probc{\tar > \thresh_{\iterIdx-1}}{\obs, \bbf}
					\left(
						 \probc{\obs}{ \tar > \thresh_{\iterIdx-1}, \data_{\iterIdx-1}} - \probc{\obs}{ \tar > \thresh_{\iterIdx-1}, \bbf} 
					\right)
				\right]\\
			&= \expectation\!\left[ 
				\sum_{\iterIdx=0}^{\niter  - 1}
				\sum_{\obs \in \obsspace} 
					\probc{\tar > \thresh_{\iterIdx}}{\obs, \bbf}\prob{\obs}
					\left(
						\frac{\lik_\iterIdx(\obs)}{\expec{\prob{\obs'}}{\lik_\iterIdx(\obs')}} - \frac{\lik_\iterIdx^*(\obs)}{\expec{\prob{\obs'}}{\lik_\iterIdx^*(\obs')}} 
					\right)
				\right]\\
			&\leq \expectation\!\left[ 
				\sum_{\iterIdx=0}^{\niter - 1}
				\sum_{\obs \in \obsspace} 
					\prob{\obs}
					\left(
						\frac{|\ldiff_\iterIdx(\obs)|}{\min\{ \expec{\prob{\obs'}}{\lik_\iterIdx(\obs')}, \expec{\prob{\obs'}}{\lik_\iterIdx^*(\obs')} \} }
					\right)
				\right],
		\end{split}
	\end{equation}
	since $\probc{\tar > \thresh_{\iterIdx-1}}{\obs, \bbf}\leq 1$, for all $\iterIdx\geq 1$. As both $\norm{\gpmean_\iterIdx}_\infty$ and $\norm{\bbf}_\infty$ are in $\bigo_\pMeasure(1)$, $\min\{ \expec{\prob{\obs'}}{\lik_\iterIdx(\obs')}, \expec{\prob{\obs'}}{\lik_\iterIdx^*(\obs')} \}$ is lower bounded by some constant. As $\ldiff_\iterIdx(\obs) \in \bigo_\pMeasure(\iterIdx^{-1/2})$, for $\niter$ large enough and some $\constant > 0$, we then have that:
	\begin{equation}
		\begin{split}
			\expectation[ |\nhits_\niter - \nhits_\niter^*| ]
			\leq \constant\sum_{\iterIdx=1}^\niter \frac{1}{\sqrt{\iterIdx}}
			\leq 2\constant \sqrt{\niter}
			\in \bigo(\sqrt{\niter}),
		\end{split}
	\end{equation}
	which follows by an application of the Euler-Maclaurin formula, since $\int_{1}^{\niter} \frac{1}{\sqrt{\iterIdx}} \diff\iterIdx = 2\sqrt{\niter} - 2$ and the remainder term asymptotically vanishes.
\end{proof}

\begin{rremark}
    If the oracle achieves $\expectation[\nhits_\niter^*] = \niter$, the error bound in \autoref{thm:hits} suggests an increasing rate of positive hits by \gls{vsd} as $\frac{1}{\niter}\expectation[\nhits_\niter] \geq 1 - \constant\niter^{-1/2}$, for some constant $\constant >0$ and large enough $\niter$. Therefore, \gls{vsd} should asymptotically achieve a full rate of 1 positive hit per iteration in the single-point batch setting we consider. Note, however, that the results above do not discount for repeated samples, though should still indicate that \gls{vsd} achieves a high discovery rate over the course of its execution.
\end{rremark}


\section{VSD with neural network CPEs}
\label{app:ntk-theory}
In this section, we consider \gls{vsd} with class probability estimators that are not based on \gls{gp} regression, which was the case for the previous section, while specifically focusing on neural network models. We will, however, show that with a kernel-based formulation we are able to capture the classification models based on neural networks which we use. This is possible by analyzing the behavior of infinite-width neural networks \citep{jacot2018, lee2019}, whose approximation error with respect to the finite-width model can be bounded \citep{liu2020c, eldan2021}.

Although our classifiers are learned by minimizing the cross-entropy (CE) loss, we can connect their approximations with theoretical results from the infinite-width neural network (NN) literature, which are mostly based on the mean squared error (MSE) loss. Recall that, given a dataset $\data_\nobs^\labl := \{(\obs_\obsIdx, \labl_\obsIdx)\}_{\obsIdx=1}^\nobs$ with binary labels $\labl_\obsIdx\in\{0,1\}$, the cross-entropy loss for a probabilistic classifier $\classifier_\mparam: \obsspace\to [0,1]$ parameterized by $\mparam$ is given by\footnote{We implicitly assume that $0<\cpe{\mparam}{\obs_\obsIdx}<1$, for $\obsIdx \in \{1, \dots, \nobs\}$, so that the CE loss is well defined. This assumption can, however, be relaxed when dealing with the MSE loss, which remains well defined otherwise.}:
\begin{align}
	\lcpe{\mparam, \data_\nobs^\labl} := - \frac{1}{\nobs} \sum_{\obsIdx=1}^\nobs
	\labl_\obsIdx \log \cpe{\mparam}{\obs_\obsIdx}
	+ (1-\labl_\obsIdx) \log (1 - \cpe{\mparam}{\obs_\obsIdx})\,.
\end{align}
The MSE loss for the same model corresponds to:
\begin{equation}
	\lmse{\mparam, \data_\nobs^\labl} := \frac{1}{\nobs}  \sum_{\obsIdx=1}^\nobs (\labl_\obsIdx - \cpe{\mparam}{\obs_\obsIdx})^2\,.
\end{equation}
The following result establishes a connection between the two loss functions.

\begin{proposition}
	\label{thr:ce-mse-bound}
	Given a binary classification dataset $\data_\nobs^\labl$ of size $\nobs\geq 1$, the following holds for the cross-entropy and the mean-square error losses:
	\begin{equation}
		\lcpe{\mparam, \data_\nobs^\labl} \geq \lmse{\mparam, \data_\nobs^\labl}, \quad \forall \nobs \in \N\,.
	\end{equation}
\end{proposition}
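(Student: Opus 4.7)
Since both losses are sample averages over the same dataset, it suffices to show the inequality term-by-term, i.e.\ for every fixed $(\obs_n, \labl_n) \in \data_N^\labl$ with $\labl_n \in \{0,1\}$ and $p_n := \cpe{\mparam}{\obs_n} \in (0,1)$, establish
\begin{equation*}
-\,\labl_n \log p_n \;-\; (1-\labl_n)\log(1-p_n) \;\geq\; (\labl_n - p_n)^2 .
\end{equation*}
Because $\labl_n$ is binary, this reduces to two cases: for $\labl_n = 1$ the claim becomes $-\log p_n \geq (1-p_n)^2$, and for $\labl_n = 0$ it becomes $-\log(1-p_n) \geq p_n^2$.

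The main tool is the elementary inequality $\log u \leq u - 1$ for all $u > 0$, equivalently $-\log u \geq 1-u$. Applied with $u = p_n$ (resp.\ $u = 1-p_n$) this gives $-\log p_n \geq 1-p_n$ and $-\log(1-p_n) \geq p_n$. Since $p_n, 1-p_n \in [0,1)$, squaring does not increase these non-negative quantities: $1 - p_n \geq (1-p_n)^2$ and $p_n \geq p_n^2$. Chaining the two bounds yields the desired per-sample inequality in both cases. Averaging over $n = 1,\dots,N$ then delivers $\lcpe{\mparam, \data_N^\labl} \geq \lmse{\mparam, \data_N^\labl}$.

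I do not anticipate any real obstacle here: the statement is a one-line consequence of the classical $\log$-linearization bound together with the monotonicity $u \geq u^2$ on $[0,1]$. The only minor subtlety is ensuring the edge cases are handled, namely that the CE loss is defined (which the paper's footnote already assumes via $0 < p_n < 1$) and that the argument is uniform in $\labl_n \in \{0,1\}$; both are immediate from the case split above. If a tighter bound were of interest one could instead invoke Pinsker's inequality in its Bernoulli form, but for the stated inequality the linearization argument is sharper and entirely sufficient.
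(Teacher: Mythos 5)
Your proof is correct and uses essentially the same argument as the paper: the linearization bound $\log u \leq u-1$ together with $u \geq u^2$ on $[0,1]$. The only cosmetic difference is that you split per-sample into the cases $\labl_n=0$ and $\labl_n=1$, whereas the paper handles both at once algebraically via $\labl_n = \labl_n^2$ for binary labels.
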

\begin{proof}
	Applying the basic logarithmic inequality $\log(1 + \anyscalar) \leq \anyscalar$, for all $\anyscalar > -1$, to the cross-entropy loss definition yields:
	\begin{equation}
		\begin{split}
			\lcpe{\mparam, \data_\nobs^\labl}
			&:= - \frac{1}{\nobs} \sum_{\obsIdx=1}^\nobs
			\labl_\obsIdx \log \cpe{\mparam}{\obs_\obsIdx}
			+ (1-\labl_\obsIdx) \log (1 - \cpe{\mparam}{\obs_\obsIdx})\\
			&\geq - \frac{1}{\nobs} \sum_{\obsIdx=1}^\nobs
			\labl_\obsIdx (\cpe{\mparam}{\obs_\obsIdx} - 1)
			- (1-\labl_\obsIdx) \cpe{\mparam}{\obs_\obsIdx}\\
			&= - \frac{1}{\nobs} \sum_{\obsIdx=1}^\nobs 2\labl_\obsIdx\cpe{\mparam}{\obs_\obsIdx} - \labl_\obsIdx -  \cpe{\mparam}{\obs_\obsIdx}\\
			&= \frac{1}{\nobs} \sum_{\obsIdx=1}^\nobs \labl_\obsIdx - 2\labl_\obsIdx\cpe{\mparam}{\obs_\obsIdx} + \cpe{\mparam}{\obs_\obsIdx}\,.
		\end{split}
		\label{eq:ce-loss-lb}
	\end{equation}
	Now note that $\labl_\obsIdx = \labl_\obsIdx^2$, for $\labl_\obsIdx\in\{0, 1\}$, and $\cpe{\mparam}{\obs_\obsIdx} \geq \cpe{\mparam}{\obs_\obsIdx}^2$, as $\cpe{\mparam}{\obs_\obsIdx} \in [0, 1]$, for all $\obsIdx \in \{1, \dots, \nobs\}$. Making these substitutions in \autoref{eq:ce-loss-lb}, we obtain:
	\begin{equation}
		\lcpe{\mparam, \data_\nobs^\labl} \geq \frac{1}{\nobs} \sum_{\obsIdx=1}^\nobs \labl_\obsIdx^2 - 2\labl_\obsIdx\cpe{\mparam}{\obs_\obsIdx} + \cpe{\mparam}{\obs_\obsIdx}^2 = \lmse{\mparam, \data_\nobs^\labl}\,,
	\end{equation}
	which concludes the proof.
\end{proof}

The result in \autoref{thr:ce-mse-bound} suggests that minimizing the cross-entropy loss will lead us to minimize the MSE loss as well, since the latter is upper bounded by the former. This result provides us with theoretical justification to derive convergence results based on the MSE loss, which has been better analyzed in the NN literature \citep{jacot2018, lee2019}, as a proxy to establish convergence guarantees for the CE-based VSD setting.

\subsection{Linear approximations via the neural tangent kernel}
For this analysis, we will follow a frequentist setting. Namely, let $\classifier^*$ denote the unknown true classifier, i.e., $\classifier(\obs) := \probc{\tar > \thresh}{\obs, \bbf}$, for $\obs\in\obsspace$. We assume that $\classifier^*$ is an unknown, fixed element of a reproducing kernel Hilbert space (RKHS) associated with a given kernel \citep{scholkopf2001}. In the case of infinite-width neural networks, we know that under certain assumptions the NN trained via gradient descent under the MSE loss will asymptotically converge to a kernel ridge regression solution whose kernel is given by the neural tangent kernel (NTK, \citeauthor{jacot2018}, \citeyear{jacot2018}). This asymptotic solution is equivalent to the posterior mean of a Gaussian process that assumes no observation noise. However, for a finite number of training steps $\tstop<\infty$, the literature has shown that gradient-based training provides a form of implicit regularization, which we use to ensure robustness to label noise. 
Moreover, although our analysis will be based on the NTK, the approximation error between the infinite-width and the finite-width NN vanishes with the square root of the network width for most popular NN architectures \citep{liu2020c}. Therefore, we can assume that these approximation guarantees will remain useful for wide-enough, finite-width NN models.

\paragraph{Reproducing kernel Hilbert spaces.} The RKHS $\fspace_\gpkernel$ associated with a positive-semidefinite kernel $\gpkernel:\obsspace\times\obsspace\to\real$ is a Hilbert space of functions over $\obsspace$ with an inner product $\inner{\cdot,\cdot}_\gpkernel$ and corresponding norm $\norm{\cdot}_\gpkernel := \sqrt{\inner{\cdot,\cdot}}_\gpkernel$ such that, for every $\classifier\in\fspace_\gpkernel$, the reproducing property $\classifier(\obs) = \inner{\classifier,\gpkernel(\cdot,\obs)}_\gpkernel$ holds for all $\obs\in\obsspace$ \citep{scholkopf2001}. 

\paragraph{Implicit regularization.} Several results in the literature have shown that training overparameterized neural networks via gradient descent provides a form of implicit regularization on the learned model \citep{fleming1990, yao2007, soudry2018, barrett2021}, with some of the same behavior extending to the stochastic gradient setting \citep{smith2021}. In earlier works, \citet{fleming1990} showed a direct equivalence between an early stopped gradient-descent linear model and the solution of a regularized least-squares problem with a penalty on the parameters vector Euclidean norm. 
In the NTK regime, the network output predictions at iteration $\tstop \in \N$ of gradient descent are given by \citep{lee2019}:
\begin{equation}
    \hat\classifier_{\nobs}(\obs) = \classifier_0(\obs) + \vec\gpkernel_\nobs(\obs)^\transpose\mat\gpkernel_\nobs^{-1}(\eye - e^{-\lrate\tstop\mat\gpkernel_\nobs})(\vec\labl_\nobs - \classifier_0(\obsspace_\nobs))\,, \quad \obs\in\obsspace\,,
\end{equation}
where $\classifier_0$ represents the network's initialization, $\lrate > 0$ denotes the learning rate, the kernel $\gpkernel$ corresponds to the NTK associated with the given architecture, and the data is represented by $\obsspace_\nobs := \{\obs_i\}_{i=1}^\nobs \subset \obsspace$ and $\vec\labl_\nobs := [\labl_i]_{i=1}^\nobs\in\{0,1\}^\nobs$. Rearranging terms and performing basic algebraic manipulations, the equation above can be shown to be equivalent to:
\begin{equation}
    \hat\classifier_{\nobs}(\obs) = \classifier_0(\obs) + \vec\gpkernel_\nobs(\obs)^\transpose(\mat\gpkernel_\nobs + \regmat_{\nobs})^{-1}(\vec\labl_\nobs - \classifier_0(\obsspace_\nobs))\,, \quad \obs\in\obsspace\,,
    \label{eq:es-solution}
\end{equation}
where $\regmat_{\nobs} := \mat\gpkernel_\nobs(e^{\lrate\tstop\mat\gpkernel_\nobs} - \eye)^{-1}$ corresponds to a data-dependent regularization matrix. The above is equivalent to the solution of the a regularized least-squares problem, as we show below.

\begin{lemma}
\label{thr:es-implicit-reg}
Assume $\gpkernel:\obsspace\times\obsspace\to\real$ is positive definite and $\classifier_0 = 0$. Then \autoref{eq:es-solution} solves the following regularized least-squares problem:
\begin{equation}
    \hat\classifier_{\nobs} \in \argmin_{\classifier\in\fspace_\gpkernel} \sum_{i=1}^\nobs (\classifier(\obs_i) - \labl_i)^2 + \norm{\regop_{\nobs}^{1/2}\classifier}_\gpkernel^2\,,
\end{equation}
where $\regop_{\nobs} := \features_\nobs(e^{-\lrate\tstop\mat\gpkernel_\nobs}-\eye)^{-1}\features_\nobs^\transpose$, $\features_\nobs := [\feature(\obs_1), \dots, \feature(\obs_\nobs)]$, and $\feature(\obs) := \gpkernel(\cdot,\obs) \in \fspace_\gpkernel$ represents the kernel's canonical feature map, for $\obs\in\obsspace$.
\end{lemma}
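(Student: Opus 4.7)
The plan is to combine a representer-theorem reduction with a direct first-order optimality computation, and match the result to \autoref{eq:es-solution}. Step one is to rewrite the penalty as a finite-dimensional quadratic form. Using the reproducing property, the adjoint $\features_\nobs^\transpose : \fspace_\gpkernel \to \real^\nobs$ satisfies $\features_\nobs^\transpose \classifier = [\classifier(\obs_i)]_{i=1}^\nobs =: \classifier(\obsspace_\nobs)$, so
\begin{equation*}
\norm{\regop_\nobs^{1/2}\classifier}_\gpkernel^2 = \inner{\classifier, \regop_\nobs \classifier}_\gpkernel = \classifier(\obsspace_\nobs)^\transpose A\, \classifier(\obsspace_\nobs),
\end{equation*}
with $A$ the spectral matrix appearing inside $\regop_\nobs$. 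Consequently the objective depends on $\classifier$ only through the training-set evaluation vector $\classifier(\obsspace_\nobs)$.

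Step two is the representer-theorem reduction. Decomposing $\classifier = \features_\nobs \vec\alpha + \classifier_\perp$ with $\classifier_\perp$ orthogonal to $\mathrm{span}\{\feature(\obs_i)\}_{i=1}^\nobs$, both terms of the objective are unchanged by $\classifier_\perp$, so a minimizer may be taken of the form $\classifier = \features_\nobs \vec\alpha$ for some $\vec\alpha \in \real^\nobs$. Substituting, together with the identity $\classifier(\obsspace_\nobs) = \mat\gpkernel_\nobs \vec\alpha$, reduces the problem to the strongly convex quadratic
\begin{equation*}
J(\vec\alpha) = \norm{\mat\gpkernel_\nobs \vec\alpha - \vec\labl_\nobs}^2 + (\mat\gpkernel_\nobs \vec\alpha)^\transpose A\, (\mat\gpkernel_\nobs \vec\alpha).
\end{equation*}

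Step three is to set $\nabla_{\vec\alpha} J = \vec 0$, cancel the invertible factor $\mat\gpkernel_\nobs$ (positive definite under the standing assumption on $\gpkernel$), and obtain $\vec\alpha = (\mat\gpkernel_\nobs + \mat\gpkernel_\nobs A)^{-1}\vec\labl_\nobs$. The predictor $\hat\classifier_\nobs(\obs) = \vec\gpkernel_\nobs(\obs)^\transpose \vec\alpha$ then matches \autoref{eq:es-solution} once one verifies the matrix identity $\mat\gpkernel_\nobs A = \regmat_\nobs$; because $\mat\gpkernel_\nobs$ commutes with every analytic function of itself, this reduces to a scalar spectral relation between matrix exponentials which may be checked eigenvalue-by-eigenvalue.

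The main obstacle is bookkeeping rather than substance. Reconciling $\regop_\nobs = \features_\nobs(e^{-\lrate\tstop\mat\gpkernel_\nobs}-\eye)^{-1}\features_\nobs^\transpose$ with $\regmat_\nobs = \mat\gpkernel_\nobs(e^{\lrate\tstop\mat\gpkernel_\nobs}-\eye)^{-1}$ requires careful attention to signs and to the interplay of $\eye - e^{-\lrate\tstop\mat\gpkernel_\nobs}$ with $e^{\lrate\tstop\mat\gpkernel_\nobs} - \eye$. A related subtlety is that $\regop_\nobs^{1/2}$ is only well-defined when $\regop_\nobs$ is positive semidefinite, which pins down the intended sign convention for $A$; once that is settled, spectral calculus closes the loop and identifies $\hat\classifier_\nobs$ as the essentially unique minimizer.
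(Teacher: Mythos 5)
Your proof is correct and lands on the same closed form, but it gets there by a different mechanism than the paper. The paper works directly in the RKHS: it takes the functional gradient of the penalized objective, writes the minimum-norm stationary point as $(\features_\nobs\features_\nobs^\transpose + \regop_\nobs)^+\features_\nobs\vec\labl_\nobs$, and then pushes this pseudo-inverse through an RKHS singular value decomposition $\features_\nobs = \lsv_\nobs\svs_\nobs\rsv_\nobs^\transpose$ to arrive at $\features_\nobs(\mat\gpkernel_\nobs + \regmat_\nobs)^{-1}\vec\labl_\nobs$. You instead note that both terms depend on $\classifier$ only through $\features_\nobs^\transpose\classifier = \classifier(\obsspace_\nobs)$, use a representer-type orthogonal decomposition to restrict to $\classifier = \features_\nobs\vec{\alpha}$, and solve the resulting finite-dimensional quadratic, identifying $\mat\gpkernel_\nobs\mat{A} = \regmat_\nobs$ by commutation and a spectral check. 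Your route avoids the infinite-dimensional pseudo-inverse and SVD machinery, and it makes explicit that minimizers are unique only modulo directions orthogonal to the span of $\{\feature(\obs_i)\}_{i=1}^\nobs$ (consistent with the lemma's ``$\in\argmin$''; the paper handles this by selecting the minimum-norm solution), while the paper's route needs no representer argument and produces that minimum-norm element directly. One small wording caveat: strong convexity holds in $\vec{\alpha}$, not in $\classifier$ over all of $\fspace_\gpkernel$, where the objective is flat along the orthogonal complement. Your observation about the sign convention is also well taken: as literally written, $\regop_\nobs = \features_\nobs(e^{-\lrate\tstop\mat\gpkernel_\nobs}-\eye)^{-1}\features_\nobs^\transpose$ is negative semidefinite and $\mat\gpkernel_\nobs(e^{-\lrate\tstop\mat\gpkernel_\nobs}-\eye)^{-1} \neq \regmat_\nobs$, so the final equality in the paper's own chain implicitly uses the convention $(e^{\lrate\tstop\mat\gpkernel_\nobs}-\eye)^{-1}$; requiring the penalty operator to be positive semidefinite, as you do, pins down exactly that intended convention, and with it your argument closes correctly.
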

\begin{proof}
    The least-squares loss can be rewritten as:
    \begin{equation}
        \begin{split}
            \ell_{\nobs}(\classifier) &:= \sum_{i=1}^\nobs (\classifier(\obs_i) - \labl_i)^2 + \norm{\classifier}_{\regop_{\nobs}}^2
            = \norm{\features_\nobs^\transpose\classifier - \vec\labl_\nobs}_2^2 + \norm{\regop_{\nobs}^{1/2}\classifier}_\gpkernel^2\,,
        \end{split}
    \end{equation}
    where $\norm{\cdot}_2$ denotes the Euclidean norm of a vector.
    Taking the functional gradient with respect to $\classifier\in\fspace_\gpkernel$ and equating it to zero, we have that an optimal solution $\hat\classifier$ satisfies:
    \begin{equation}
        \begin{split}
            \nabla \ell_{\nobs}(\hat\classifier) &= 2 \features_\nobs (\features_\nobs^\transpose\hat\classifier - \vec\labl_\nobs) + 2 \regop_{\nobs}\hat\classifier = 0\,.
        \end{split}
    \end{equation}
    The minimum-norm solution is then given by:
    \begin{equation}
        \hat\classifier = (\features_\nobs \features_\nobs^\transpose + \regop_{\nobs})^+ \features_\nobs \vec\labl_\nobs\,,
        \label{eq:rkhs-solution-pinv}
    \end{equation}
    where $\mat{A}^+$ denotes the Moore-Penrose pseudo-inverse of an operator $\mat{A}:\fspace_\gpkernel\to\fspace_\gpkernel$.
    
    Let $\features_\nobs = \lsv_\nobs \svs_\nobs\rsv_\nobs^\transpose$ represent the singular value decomposition of $\features_\nobs$ in the RKHS \citep{mollenhauer2020}, where $\lsv_\nobs := [\lsvec_1, \dots, \lsvec_\nobs]$, $\rsv_\nobs := [\rsvec_1,\dots, \rsvec_\nobs]$, with $\{\lsvec_i\}_{i=1}^\nobs\subset\fspace_\gpkernel$ and $\{\rsvec_i\}_{i=1}^\nobs\subset\real^\nobs$ denoting the left and right singular vectors, respectively, and $\svs_\nobs \in \real^{\nobs\times\nobs}$ corresponds to the diagonal matrix of singular values of $\features_\nobs$. There are $\nobs$ non-zero singular values, since $\gpkernel$ is assumed to be positive definite, and we have the correspondence $\mat\gpkernel_\nobs = \features_\nobs^\transpose\features_\nobs = \rsv_\nobs \evs_\nobs\rsv_\nobs^\transpose$ with $\evs_\nobs = \svs_\nobs^2$ representing the diagonal matrix of eigenvalues of $\mat\gpkernel_\nobs$, which is full-rank for a positive-definite kernel with distinct entries $\obsspace_\nobs := \{\obs_i\}_{i=1}^\nobs \subset \obsspace$. Applying the SVD to derive the pseudo-inverse in \autoref{eq:rkhs-solution-pinv} then yields:
    \begin{equation}
        \begin{split}
            \hat\classifier &= (\features_\nobs \features_\nobs^\transpose + \features_\nobs(e^{-\lrate\tstop\mat\gpkernel_\nobs}-\eye)^{-1}\features_\nobs^\transpose)^+ \features_\nobs \vec\labl_\nobs\\
            &= (\lsv_\nobs \evs_\nobs \lsv_\nobs^\transpose + \lsv_\nobs \svs_\nobs\rsv_\nobs^\transpose(e^{-\lrate\tstop \rsv_\nobs \evs_\nobs\rsv_\nobs^\transpose}-\eye)^{-1}\rsv_\nobs\svs_\nobs\lsv_\nobs^\transpose)^+ \lsv_\nobs \svs_\nobs\rsv_\nobs^\transpose \vec\labl_\nobs\\
            &= \lsv_\nobs( \evs_\nobs  + \evs_\nobs(e^{-\lrate\tstop\evs_\nobs}-\eye)^{-1})^{-1}\svs_\nobs \rsv_\nobs^\transpose \vec\labl_\nobs\\
            &= \lsv_\nobs\svs_\nobs\rsv_\nobs^\transpose\rsv_\nobs (\evs_\nobs  + \evs_\nobs(e^{-\lrate\tstop\evs_\nobs}-\eye)^{-1})^{-1} \rsv_\nobs^\transpose \vec\labl_\nobs\\
            &= \features_\nobs (\mat\gpkernel_\nobs + \mat\gpkernel_\nobs(e^{-\lrate\tstop\mat\gpkernel_\nobs} - \eye)^{-1})^{-1}\vec\labl_\nobs\\
            &= \features_\nobs (\mat\gpkernel_\nobs + \regmat_{\nobs})^{-1}\vec\labl_\nobs\,,
        \end{split}
    \end{equation}
    which concludes the proof.
\end{proof}

For our analysis, we will assume that the classifier network is zero initialized with $\classifier_0 = 0$, noting that the least-squares problem can always be solved for the residuals $\labl - \classifier_0(\obs)$ and then have $\classifier_0$ added back to the solution. We refer the reader to \citet{lee2019} for further discussion on the effect of the network initialization.

\paragraph{Approximation for finite-width networks.} For fully connected, convolutional or residual networks equipped with smooth activation functions (e.g., sigmoid or $\tanh$), \citet{liu2020c} showed that the approximation error between the linear model and the finite-width NN is $\widetilde{\bigo}(\mwidth^{-1/2})$, where $\mwidth$ denotes the minimum layer width, and the $\widetilde{\bigo}$ notation corresponds to the $\bigo$-notation with logarithmic factors suppressed. NTK results for other activation functions and different neural network architectures, such as multi-head attention \citep{hron2020}, are also available in the literature.

\subsection{Assumptions}
In the following, we present a series of mild technical assumptions needed for our theoretical analysis of NN-based CPEs.
For this analysis, we mainly assume that the true classifier $\classifier^*(\obs) = \probc{\tar>\thresh}{\obs, \bbf}$ is a fixed, though unknown, element of the RKHS $\fspace_\gpkernel$ given by a bounded NTK $\gpkernel$, which is formalized by the following two assumptions. As in the GP case, we assume batches of size $B=1$ to simplify the analysis.
\begin{assumption}
    \label{a:rkhs}
    There is $\classifier^*\in\fspace_\gpkernel$ such that:
    \begin{equation}
        \classifier^*(\obs) = \probc{\tar>\thresh}{\obs, \bbf}, \quad \forall\obs\in\obsspace.
    \end{equation}
\end{assumption}
For a rich enough RKHS, such assumption is mild, especially given that most popular NN architectures offer universal approximation guarantees \citep{hornik1989}.

\begin{assumption}
	\label{a:ntk-bound}
    The NTK $\gpkernel$ corresponding to the network architecture in $\classifier_\mparam$ is positive definite and bounded in $\obsspace$.
\end{assumption}

We will also assume that the threshold is fixed to simplify the analysis. However, our results should asymptotically hold for time-varying thresholds as long as the limit $\lim_{\iterIdx\to\infty} \thresh_\iterIdx=\thresh$ exists.
\begin{assumption}
	\label{a:thresh-fixed}
	The threshold is fixed, i.e., $\thresh_\iterIdx = \thresh \in \real$, for all $\iterIdx\geq 1$.
\end{assumption}

The following assumption on label noise should always hold for Bernoulli random variables \citep{boucheron2013}. Any upper bound on the sub-Gaussian parameter should suffice for the analysis (e.g., $\sigma_\lablnoise \leq 1$ for Bernoulli variables).
\begin{assumption}
    \label{a:var-bound}
    For all $\iterIdx\in\N$ and all $\obs\in\obsspace$, label noise $\lablnoise = \indic{\tar > \thresh} - \classifier^*(\obs)$, with $\tar \sim \probc{\tar}{\obs, \bbf}$, is $\sigma_\lablnoise$-sub-Gaussian:
    \begin{equation}
        \forall\anyscalar \in \real, \quad \expectation\left[\exp\left(\anyscalar\lablnoise\right) \right] \leq \exp\left(\frac{\anyscalar^2\sigma_\lablnoise^2}{2}\right),
    \end{equation}
    for some $\sigma_\lablnoise \geq 0$.
\end{assumption}

The next assumption ensures a sufficient amount of sampling is asymptotically achieved over the domain $\obsspace$, which we still assume is finite.
\begin{assumption}
    \label{a:qmin}
    For any $\iterIdx \geq 1$, the variational family is such that sampling probabilities are bounded away from 0, i.e.:
    \begin{equation}
        \exists \pmin > 0: \quad \forall \iterIdx\in\N, \quad \qrobc{\obs}{\qparam_\iterIdx} \geq \pmin, \quad \forall \obs\in\obsspace\,.
    \end{equation}
\end{assumption}
The assumption above only imposes mild constraints on the generative models $\qrobc{\obs}{\qparam}$, so that probabilities for all candidates $\obs\in\obsspace$ are never exactly 0, though still allowed to be arbitrarily small.

\begin{assumption}
    \label{a:learning-rate}
    The learning rate $\lrate_\iterIdx$ at each round $\iterIdx$ is such that:
    \begin{equation}
        0 < \lrate_* \leq \lrate_\iterIdx \leq \frac{1}{\eigval_{\max}(\mat\gpkernel_\iterIdx)}\,, \quad \forall \iterIdx\in\N\,,
    \end{equation}
    for some $\lrate_* > 0$, where $\eigval_{\max}(\cdot)$ denotes the maximum eigenvalue of a matrix, and $\mat\gpkernel_\iterIdx$ denotes the NTK matrix evaluated at the training points available at iteration $\iterIdx\geq 1$.
\end{assumption}

This last assumption ensures that a gradient descent algorithm is convergent \citep{fleming1990}, though we use it to bound the spectrum of the implicit regularization matrix $\regmat_\iterIdx$ after a finite number of training steps $\tstop < \infty$ (a.k.a. early stopping), which is needed for our results. We highlight that, under mild assumptions on the data distribution, $\eigval_{\max}(\mat\gpkernel_\iterIdx) \in \bigo(1)$ w.r.t. the number of data points \citep{murray2023}, so that the bound in \autoref{a:learning-rate} will not vanish. Therefore, such assumption is easily satisfied by maintaining a sufficiently small learning rate.

\subsection{Approximation error for NN-based CPEs}
Similar to the GP-PI setting, we will assume a batch size of 1, so that we can simply use the iteration index $\iterIdx \geq 0$ for our estimators. We recall that convergence rates for the batch setting should only be affected by a batch-size-dependent multiplicative factor, preserving big-$\bigo$ convergence rates. We start by defining the following \emph{proxy} variance:
\begin{equation}
    \iterIdx\geq 1, \quad \hat\sigma_\iterIdx^2(\obs) = \gpkernel(\obs,\obs) - \vec{\gpkernel}_\iterIdx(\obs)^\transpose(\mat\gpkernel_\iterIdx + \regmat_\iterIdx)^{-1}\vec{\gpkernel}_\iterIdx(\obs)\,, \quad \obs\in\obsspace,
    \label{eq:ntk-pred-var}
\end{equation}
where $\regmat_\iterIdx$ is the implicit regularization matrix in \autoref{eq:es-solution} due to early stopping. The proxy variance is then equivalent to a GP posterior variance under the assumption of heteroscedastic (i.e., input dependent) Gaussian noise with covariance matrix given by $\regmat_\iterIdx$. Given its similarities, we have that if enough sampling is asymptotically guaranteed, we can apply the same convergence results available for the GP-PI-based CPE, i.e., $\hat\sigma_\iterIdx^2 \in \bigo(\iterIdx^{-1})$ almost surely.

\begin{lemma}
    \label{thr:ntk-variance}
    Let assumptions \ref{a:ntk-bound}, \ref{a:qmin} and \ref{a:learning-rate} hold. Then the following almost surely holds for the proxy variance:
    \begin{equation}
        \hat\sigma_\iterIdx^2 \in \bigo(\iterIdx^{-1})\,.
    \end{equation}
\end{lemma}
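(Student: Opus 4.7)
The plan is to reduce the lemma to the GP-variance convergence result (\autoref{thr:gp-variance-convergence}) by showing that $\hat{\sigma}_\iterIdx^2$ is dominated by a standard GP posterior variance with an appropriately chosen homoscedastic noise parameter. The key observation is that the implicit regularization matrix $\regmat_\iterIdx = \mat\gpkernel_\iterIdx(e^{\lrate_\iterIdx \tstop \mat\gpkernel_\iterIdx} - \eye)^{-1}$ has uniformly bounded eigenvalues under \autoref{a:learning-rate}, which lets us sandwich the problem inside a setting to which previous results apply.

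First, I would analyze the spectrum of $\regmat_\iterIdx$. Since $\mat\gpkernel_\iterIdx$ is symmetric positive semi-definite, it shares an eigenbasis with $\regmat_\iterIdx$, and each eigenvalue $\eigval$ of $\mat\gpkernel_\iterIdx$ is mapped by $\regmat_\iterIdx$ to $g_\iterIdx(\eigval) := \eigval/(e^{\lrate_\iterIdx \tstop \eigval} - 1)$. The function $g_\iterIdx$ is continuous on $[0, \infty)$ (with the removable singularity $g_\iterIdx(0) = 1/(\lrate_\iterIdx \tstop)$), and a direct derivative computation shows $g_\iterIdx$ is strictly decreasing in $\eigval > 0$. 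Hence $g_\iterIdx(\eigval) \leq 1/(\lrate_\iterIdx \tstop) \leq 1/(\lrate_* \tstop)$, giving the uniform bound $\regmat_\iterIdx \preceq (1/(\lrate_* \tstop))\,\eye$.

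Second, monotonicity of matrix inversion on the positive-definite cone gives
\begin{equation}
(\mat\gpkernel_\iterIdx + \regmat_\iterIdx)^{-1} \succeq \bigl(\mat\gpkernel_\iterIdx + (\lrate_* \tstop)^{-1} \eye\bigr)^{-1},
\end{equation}
so, directly from the definition in \autoref{eq:ntk-pred-var},
\begin{equation}
\hat{\sigma}_\iterIdx^2(\obs) \leq \tilde{\sigma}_\iterIdx^2(\obs), \qquad \obs \in \obsspace,
\end{equation}
where $\tilde{\sigma}_\iterIdx^2$ is the posterior variance of a GP with prior covariance $\gpkernel$ and homoscedastic observation noise $\sigma_\obsnoise^2 = 1/(\lrate_* \tstop)$ trained on the same inputs $\{\obs_i\}_{i=1}^\iterIdx$. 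Note that \autoref{a:ntk-bound} gives $\gpkernel$ bounded and positive definite, which is exactly what \autoref{thr:gp-variance-convergence} requires of the kernel (the assumption plays the role of \autoref{a:gp} restricted to its bounded-kernel consequence).

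Finally, under \autoref{a:qmin}, the sampling distribution satisfies $\qrobc{\obs}{\qparam_{\iterIdx-1}} \geq \pmin > 0$ for every $\obs \in \obsspace$, so $\Probm{\obs_\iterIdx = \obs \mid \filtration_{\iterIdx-1}} \geq \pmin$ and
\begin{equation}
\sum_{\iterIdx'=1}^{\iterIdx} \Probm{\obs_{\iterIdx'} = \obs \mid \filtration_{\iterIdx'-1}} \geq \pmin \,\iterIdx =: \psum_\iterIdx,
\end{equation}
which diverges linearly. Applying \autoref{thr:gp-variance-convergence} to $\tilde{\sigma}_\iterIdx^2$ with this $\psum_\iterIdx$ yields $\tilde{\sigma}_\iterIdx^2(\obs) \in \bigo(\iterIdx^{-1})$ almost surely for every $\obs$, and by the sandwich above the same rate transfers to $\hat{\sigma}_\iterIdx^2$, establishing the lemma. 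The principal technical hurdle is step one: making rigorous the spectral bound on $\regmat_\iterIdx$, in particular verifying uniform boundedness of $g_\iterIdx$ even as $\iterIdx$ grows and the spectrum of $\mat\gpkernel_\iterIdx$ changes; this is where \autoref{a:learning-rate} is essential, because it is the condition $\lrate_\iterIdx \geq \lrate_* > 0$ that rules out the regularization eigenvalues blowing up and prevents the reduction to the homoscedastic GP case from degenerating.
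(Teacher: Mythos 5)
Your proposal is correct and follows essentially the same route as the paper: bound the implicit regularization matrix by $\regmat_\iterIdx \preceq (\tstop\lrate_*)^{-1}\eye$ (you do this via the scalar map $\eigval \mapsto \eigval/(e^{\lrate_\iterIdx\tstop\eigval}-1)$, the paper via $e^{\mat A}\succeq \eye + \mat A$, which is the same inequality), dominate $\hat\sigma_\iterIdx^2$ by the homoscedastic GP posterior variance with noise $(\tstop\lrate_*)^{-1}$, and then invoke \autoref{thr:gp-variance-convergence} with \autoref{a:qmin} giving linearly diverging sampling probabilities. No gaps.
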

\begin{proof}
    We first observe that $\hat\sigma_\iterIdx^2$ \eqref{eq:ntk-pred-var} is upper bounded by the posterior predictive variance of a GP model assuming i.i.d. Gaussian noise (cf. \autoref{sec:gp-posterior}) with variance $\regfactor^*$ satisfying:
    \begin{equation}
        \regfactor^* \geq \eigval_{\max}(\regmat_\iterIdx)\,, \quad \forall\iterIdx\in\N\,,
    \end{equation}
    which is such that:
    \begin{equation}
        \begin{split}
            \regmat_\iterIdx &= \mat\gpkernel_\iterIdx(e^{\lrate_\iterIdx\tstop\mat\gpkernel_\iterIdx} - \eye)^{-1}\\
            &\preceq \mat\gpkernel_\iterIdx(\lrate_\iterIdx\tstop\mat\gpkernel_\iterIdx)^{-1}\\
            &\preceq \frac{1}{\tstop\lrate_\iterIdx}\eye
        \end{split}
    \end{equation}
    since $e^{\mat A} \succeq \eye + \mat A$, for any Hermitian matrix $\mat A$, where $\succeq$ denotes the Loewner partial ordering in the space of positive-semidefinite matrices, i.e., $\mat{A} \succeq \mat{B}$ if and only if $\mat{A} - \mat{B}$ is positive semidefinite.
    Noting that the sum of sampling probabilities at any point $\obs\in\obsspace$ diverges as $\iterIdx\to\infty$ by \autoref{a:qmin}, the result then follows by applying \autoref{thr:gp-variance-convergence} to the GP predictive variance upper bound with noise variance set to $\regfactor^* := (\tstop\lrate_*)^{-1}$ (\autoref{a:learning-rate}).
\end{proof}

\begin{lemma}
    \label{thr:ntk-ucb}
    Let assumptions \ref{a:rkhs} to \ref{a:learning-rate} hold. Then, given any $\delta\in (0, 1]$, the following holds with probability at least $1-\delta$ for the approximation error between $\hat\classifier_\iterIdx$ and $\classifier^*$:
    \begin{equation}
        \forall \iterIdx \geq 1, \quad | \hat\classifier_\iterIdx(\obs) - \classifier^*(\obs) | \leq \beta_\iterIdx(\delta) \hat\sigma_\iterIdx(\obs), \quad \obs\in\obsspace,
    \end{equation}
    where $\beta_\iterIdx(\delta) := \norm{\classifier^*}_\gpkernel + \sigma_\lablnoise\sqrt{2\regfactor^{-1}\log(\det(\eye + \regfactor^{-1}\mat\gpkernel_\iterIdx)^{1/2}/\delta)}$, and $\regfactor :=\frac{e^{-\tstop}}{\tstop\lrate_*}$.
\end{lemma}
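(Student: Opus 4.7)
I would follow the frequentist self-normalised analysis of kernel ridge regression (Abbasi-Yadkori et al., 2011; Chowdhury and Gopalan, 2017), adapted to the data-dependent regulariser induced by early stopping via \autoref{thr:es-implicit-reg}. First I express $\hat\classifier_\iterIdx$ in its closed form $\features_\iterIdx(\mat\gpkernel_\iterIdx + \regmat_\iterIdx)^{-1}\vec\labl_\iterIdx$, substitute the noisy observation model $\vec\labl_\iterIdx = \features_\iterIdx^\transpose\classifier^* + \vec\lablnoise_\iterIdx$ guaranteed by \autoref{a:rkhs}, and decompose the error into a bias term that involves $\classifier^*$ through the regulariser and a noise term linear in the sub-Gaussian residuals $\vec\lablnoise_\iterIdx$. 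Writing $\mat V_\iterIdx := \regop_\iterIdx + \features_\iterIdx\features_\iterIdx^\transpose$ and applying Cauchy--Schwarz with the $\mat V_\iterIdx^{\pm 1}$-weighted inner products yields the template
\begin{equation*}
\lvert\hat\classifier_\iterIdx(\obs) - \classifier^*(\obs)\rvert \leq \norm{\feature(\obs)}_{\mat V_\iterIdx^{-1}}\bigl(\norm{\regop_\iterIdx^{1/2}\classifier^*}_\gpkernel + \norm{\features_\iterIdx\vec\lablnoise_\iterIdx}_{\mat V_\iterIdx^{-1}}\bigr),
\end{equation*}
with the bias bound using $\regop_\iterIdx \preceq \mat V_\iterIdx$.

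For the noise factor I would invoke the standard self-normalised martingale concentration inequality for sub-Gaussian increments (\autoref{a:var-bound}): with probability at least $1-\delta$,
\begin{equation*}
\norm{\features_\iterIdx\vec\lablnoise_\iterIdx}_{\mat V_\iterIdx^{-1}}^2 \leq 2\sigma_\lablnoise^2 \log\!\bigl(\det(\mat V_\iterIdx)^{1/2}\det(\regop_\iterIdx)^{-1/2}/\delta\bigr).
\end{equation*}
To translate this data-dependent bound into the scalar $\regfactor = e^{-\tstop}/(\tstop\lrate_*)$ appearing in the statement, I would establish the spectral lower bound $\regmat_\iterIdx \succeq \regfactor\eye$. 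Combined with the Woodbury identity, this identifies $\norm{\feature(\obs)}_{\mat V_\iterIdx^{-1}}$ with $\hat\sigma_\iterIdx(\obs)/\sqrt{\regfactor}$, reduces the bias factor to $\sqrt{\regfactor}\norm{\classifier^*}_\gpkernel$, and upper bounds the log-determinant by $\log\det(\eye + \regfactor^{-1}\mat\gpkernel_\iterIdx)^{1/2}$ via monotonicity of $\det(\eye + \cdot)$ on positive operators. Multiplying these bounds together produces exactly $\beta_\iterIdx(\delta)\hat\sigma_\iterIdx(\obs)$.

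The main technical obstacle is the spectral lower bound $\regmat_\iterIdx \succeq \regfactor\eye$. Since $\regmat_\iterIdx = \mat\gpkernel_\iterIdx(e^{\lrate_\iterIdx\tstop\mat\gpkernel_\iterIdx}-\eye)^{-1}$ has eigenvalues $\eigval_i/(e^{\lrate_\iterIdx\tstop\eigval_i}-1)$, where $\{\eigval_i\}$ are the eigenvalues of $\mat\gpkernel_\iterIdx$, the elementary inequality $e^z - 1 \leq z e^z$ reduces the task to lower bounding $1/(\lrate_\iterIdx\tstop e^{\lrate_\iterIdx\tstop\eigval_i})$. Under \autoref{a:learning-rate} one has $\lrate_\iterIdx\tstop\eigval_i \leq \tstop$ (from $\lrate_\iterIdx\eigval_{\max}(\mat\gpkernel_\iterIdx) \leq 1$) together with $\lrate_\iterIdx \geq \lrate_*$, from which the claimed constant $\regfactor$ follows. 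A union bound over the high-probability event then concludes the proof.
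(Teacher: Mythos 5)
Your proposal follows essentially the same route as the paper's proof: the same bias/noise decomposition of the early-stopped kernel ridge estimator under \autoref{a:rkhs}, Cauchy--Schwarz against the implicitly regularized precision operator, the Abbasi-Yadkori self-normalized concentration bound for the sub-Gaussian label noise, and the same spectral lower bound on the early-stopping regularizer (via $e^{z}-1\le z e^{z}$, equivalent to the paper's $e^{-\mat A}\succeq\eye-\mat A$) yielding $\regfactor=e^{-\tstop}/(\tstop\lrate_*)$ and the log-determinant with constant regularizer. The only difference is bookkeeping: the paper keeps $\hat\sigma_\iterIdx$ exact by working with $(\eye+\features_\iterIdx\regmat_\iterIdx^{-1}\features_\iterIdx^\transpose)^{-1}$ throughout and converts only the noise norm via an SVD argument, whereas your identification $\norm{\feature(\obs)}_{\mat V_\iterIdx^{-1}}=\hat\sigma_\iterIdx(\obs)/\sqrt{\regfactor}$ should be stated as an inequality obtained from that same spectral bound (and with $\mat V_\iterIdx$ interpreted on the span of the data, since $\regop_\iterIdx$ is low-rank), which does not change the final bound.
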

\begin{proof}
    The result above is a direct application of Theorem 3.5 in \citet{maillard2016} which provides an upper confidence bound on the kernelized least-squares regressor approximation error (another version of the same result is also available in \citet[Thr. 1]{durand2018}).

    Let $\lablnoise_i := \labl_i - \classifier^*(\obs_i)$ denote the label noise in observation $i$, for $i \in \{1, \dots, \iterIdx\}$. Expanding the definition of $\hat{\classifier}_\iterIdx$ \eqref{eq:es-solution} with $\classifier_0 = 0$, given any $\iterIdx\in\N$ and $\obs\in\obsspace$, we can decompose the approximation error as:
	\begin{equation}
		\begin{split}
			| \classifier^*(\obs) - \hat\classifier_\iterIdx(\obs) |
			&= | \classifier^*(\obs) - \vec{\gpkernel}_\iterIdx(\obs)^\transpose(\mat{\gpkernel}_\iterIdx + \regmat_\iterIdx)^{-1}\vec{\labl}_\iterIdx | \\
			&= | {\classifier}^*(\obs) - \vec{\gpkernel}_\iterIdx(\obs)^\transpose(\mat{\gpkernel}_\iterIdx + \regmat_\iterIdx)^{-1}(\vec{{\classifier}}_\iterIdx^* + \vec{\lablnoise}_\iterIdx) |\\
			&\leq | {\classifier}^*(\obs) - \vec{\gpkernel}_\iterIdx(\obs)^\transpose(\mat{\gpkernel}_\iterIdx + \regmat_\iterIdx)^{-1}\vec{{\classifier}}_\iterIdx^*| + |\vec{\gpkernel}_\iterIdx(\obs)^\transpose(\mat{\gpkernel}_\iterIdx + \regmat_\iterIdx)^{-1}\vec{\lablnoise}_\iterIdx |
		\end{split}
		\label{eq:rkhs-error-decomp}
	\end{equation}
	where we applied the triangle inequality to obtain the last line. Analyzing the two terms on the right-hand side, by the reproducing property, we now have for the first term:
	\begin{equation}
		\begin{split}
			| {\classifier}^*(\obs) - \vec{\gpkernel}_\iterIdx(\obs)^\transpose(\mat{\gpkernel}_\iterIdx + \regmat_\iterIdx)^{-1}\vec{{\classifier}}_\iterIdx^*|
			&= |\inner{
                    \classifier^*, 
                    (\eye - \features_\iterIdx(\mat{\gpkernel}_\iterIdx + \regmat_\iterIdx)^{-1}\features_\iterIdx^\transpose)\feature(\obs)}_\gpkernel| \\
                &= |\inner{
                    \classifier^*, 
                    (\eye + \features_\iterIdx\regmat_\iterIdx^{-1}\features_\iterIdx^\transpose)^{-1}\feature(\obs)}_\gpkernel| \\
			&\leq \norm{{\classifier}^*}_\gpkernel
                    \norm{(\eye + \features_\iterIdx\regmat_\iterIdx^{-1}\features_\iterIdx^\transpose)^{-1}\feature(\obs)}_\gpkernel  \\
                &= \norm{{\classifier}^*}_\gpkernel
                    \sqrt{
                        \feature(\obs)^\transpose(\eye + \features_\iterIdx\regmat_\iterIdx^{-1}\features_\iterIdx^\transpose)^{-2}\feature(\obs)
                    }\\
                &\leq \norm{{\classifier}^*}_\gpkernel
                    \sqrt{
                        \feature(\obs)^\transpose(\eye + \features_\iterIdx\regmat_\iterIdx^{-1}\features_\iterIdx^\transpose)^{-1}\feature(\obs)
                    }\\
			&=\norm{{\classifier}^*}_\gpkernel\hat\sigma_\iterIdx(\obs),
		\end{split}
	\end{equation}
	where the second equality follows by an application of Woodbury's identity, the first inequality is due to Cauchy-Schwarz, the second inequality is due to the fact that $\mat{A}^{-2} \preceq \mat{A}^{-1}$ whenever $\mat{A} \succeq \eye$, and the last line follows from the definition of $\hat\sigma_\iterIdx^2$. For the remaining in term \eqref{eq:rkhs-error-decomp}, we have that:
    \begin{equation}
        \begin{split}
            |\vec{\gpkernel}_\iterIdx(\obs)^\transpose(\mat{\gpkernel}_\iterIdx + \regmat_\iterIdx)^{-1}\vec{\lablnoise}_\iterIdx | 
            &= |\inner{\feature(\obs), \features_\iterIdx(\mat{\gpkernel}_\iterIdx + \regmat_\iterIdx)^{-1}\vec{\lablnoise}_\iterIdx}_\gpkernel|\\
            &= |\inner{\feature(\obs), (\eye + \features_\iterIdx\regmat_\iterIdx^{-1}\features_\iterIdx^\transpose)^{-1}\features_\iterIdx\regmat_\iterIdx^{-1}\vec{\lablnoise}_\iterIdx}_\gpkernel|\\
            &= |\inner{(
                \eye + \features_\iterIdx\regmat_\iterIdx^{-1}\features_\iterIdx^\transpose)^{-1/2}\feature(\obs), 
                (\eye + \features_\iterIdx\regmat_\iterIdx^{-1}\features_\iterIdx^\transpose)^{-1/2}\features_\iterIdx\regmat_\iterIdx^{-1}\vec{\lablnoise}_\iterIdx
                }_\gpkernel|\\
            &\leq 
            \sqrt{
                \feature(\obs)^\transpose(\eye + \features_\iterIdx\regmat_\iterIdx^{-1}\features_\iterIdx^\transpose)^{-1}\feature(\obs)
                }
                \norm{(\eye + \features_\iterIdx\regmat_\iterIdx^{-1}\features_\iterIdx^\transpose)^{-1/2}\features_\iterIdx\regmat_\iterIdx^{-1}\vec{\lablnoise}_\iterIdx}_\gpkernel\\
            &= \hat\sigma_\iterIdx(\obs) \sqrt{
                \vec\lablnoise_\iterIdx^\transpose
                \regmat_\iterIdx^{-1}\features_\iterIdx^\transpose
                (\eye + \features_\iterIdx\regmat_\iterIdx^{-1}\features_\iterIdx^\transpose)^{-1}
                \features_\iterIdx\regmat_\iterIdx^{-1}
                \vec{\lablnoise}_\iterIdx
            },
        \end{split} 
    \end{equation}
    where we applied the identity $\mat B^\transpose(\mat B \mat B^\transpose + \mat A)^{-1} = (\eye + \mat B^\transpose\mat A^{-1}\mat B)^{-1} \mat B^\transpose \mat A^{-1}$, which holds for an invertible matrix $\mat A$ \citep{searle1982matrix}, to obtain the second equality, and the upper bound follows by the Cauchy-Schwarz inequality.
    For the norm of the noise-dependent term, we will apply a concentration bound by \citet{abbasi-yadkori2012}, which first requires a few transformations towards a non-time-varying regularization factor. Applying the SVD $\features_\iterIdx = \lsv_\iterIdx\svs_\iterIdx\rsv_\iterIdx^\transpose$ \citep{mollenhauer2020}, as in the proof of \autoref{thr:es-implicit-reg}, we have that $\regmat_\iterIdx = \rsv_\iterIdx\regdiag_\iterIdx\rsv_\iterIdx^\transpose$, where $\regdiag_\iterIdx := \evs_\iterIdx(e^{\lrate_\iterIdx\tstop\evs_\iterIdx} - \eye)^{-1}$ and $\evs_\iterIdx = \svs_\iterIdx^2$, which leads us to:
    %
    \begin{equation}
        \begin{split}
            \norm{(\eye + \features_\iterIdx\regmat_\iterIdx^{-1}\features_\iterIdx^\transpose)^{-1/2}\features_\iterIdx\regmat_\iterIdx^{-1}\vec{\lablnoise}_\iterIdx}_\gpkernel^2
            &= 
            \vec\lablnoise_\iterIdx^\transpose\regmat_\iterIdx^{-1}\features_\iterIdx^\transpose(\eye + \features_\iterIdx\regmat_\iterIdx^{-1}\features_\iterIdx^\transpose)^{-1}\features_\iterIdx\regmat_\iterIdx^{-1}\vec{\lablnoise}_\iterIdx
            \\
            &= 
            \vec\lablnoise_\iterIdx^\transpose \rsv_\iterIdx\regdiag_\iterIdx^{-1}\svs_\iterIdx\lsv_\iterIdx^\transpose(\eye + \lsv_\iterIdx\svs_\iterIdx\regdiag_\iterIdx^{-1}\svs_\iterIdx\lsv_\iterIdx^\transpose)^{-1}\lsv_\iterIdx\svs_\iterIdx\regdiag_\iterIdx^{-1}\rsv_\iterIdx^\transpose\vec\lablnoise_\iterIdx
            \\
            &=\vec\lablnoise_\iterIdx^\transpose 
            \rsv_\iterIdx
            \svs_\iterIdx^2\regdiag_\iterIdx^{-2}(
                \eye
                + \regdiag_\iterIdx^{-1}\svs_\iterIdx^2
                )^{-1}
                \rsv_\iterIdx^\transpose
                \vec\lablnoise_\iterIdx
            \\
            &=\vec\lablnoise_\iterIdx^\transpose 
            \rsv_\iterIdx
            (
                \svs_\iterIdx^{-2}\regdiag_\iterIdx^2
                + \regdiag_\iterIdx
            )^{-1}
            \rsv_\iterIdx^\transpose
            \vec\lablnoise_\iterIdx\,,
        \end{split}
    \end{equation}
    where we applied the identity $(\eye + \mat A \mat B)^{-1}\mat A = \mat A (\eye + \mat B \mat A)^{-1}$ \citep{searle1982matrix}. For the eigenvalues of $\regmat_\iterIdx$, we have the following lower bound:
    \begin{equation}
        \begin{split}
            \regdiag_\iterIdx 
            &= \evs_\iterIdx(e^{\lrate_\iterIdx\tstop\evs_\iterIdx} - \eye)^{-1}\\
            &= \evs_\iterIdx e^{-\lrate_\iterIdx\tstop\evs_\iterIdx}(\eye - e^{-\lrate_\iterIdx\tstop\evs_\iterIdx})^{-1}\\
            &\succeq \evs_\iterIdx e^{-\lrate_\iterIdx\tstop\evs_\iterIdx} (\lrate_\iterIdx\tstop\evs_\iterIdx)^{-1}\\
            &= \frac{1}{\lrate_\iterIdx\tstop} e^{-\lrate_\iterIdx\tstop\evs_\iterIdx}\\
            &\succeq \frac{1}{\lrate_*\tstop} e^{-\tstop} \eye\,,
        \end{split}
    \end{equation}
    where the first inequality is due to $e^{-\mat A} \succeq \eye - \mat A$, and the last inequality holds by \autoref{a:learning-rate}. Hence, setting $\regfactor := \frac{e^{-\tstop}}{\tstop\lrate_*}$, we have that:
    \begin{equation}
        \begin{split}
            \norm{(\eye + \features_\iterIdx\regmat_\iterIdx^{-1}\features_\iterIdx^\transpose)^{-1/2}\features_\iterIdx\regmat_\iterIdx^{-1}\vec{\lablnoise}_\iterIdx}_\gpkernel^2 
            &\leq \vec\lablnoise_\iterIdx^\transpose 
            \rsv_\iterIdx
            (
                \regfactor^2\svs_\iterIdx^{-2}
                + \regfactor\eye
            )^{-1}
            \rsv_\iterIdx^\transpose
            \vec\lablnoise_\iterIdx\\
            &=\regfactor\vec\lablnoise_\iterIdx^\transpose\rsv_\iterIdx\svs_\iterIdx(\regfactor\eye + \svs_\iterIdx^2)^{-1}\svs_\iterIdx\rsv_\iterIdx^\transpose\vec\lablnoise_\iterIdx
            \\
            &=\regfactor^{-1}\vec\lablnoise_\iterIdx^\transpose\rsv_\iterIdx\svs_\iterIdx(\regfactor\eye + \svs_\iterIdx^2)^{-1}\svs_\iterIdx\rsv_\iterIdx^\transpose\vec\lablnoise_\iterIdx
            \\
            &=\regfactor^{-1}\vec\lablnoise_\iterIdx^\transpose\features_\iterIdx^\transpose(\regfactor\eye + \features_\iterIdx\features_\iterIdx^\transpose)^{-1}\features_\iterIdx\vec\lablnoise_\iterIdx
            \\
            &= \norm{(\regfactor\eye + \features_\iterIdx\features_\iterIdx^\transpose)^{-1/2}\features_\iterIdx\vec\lablnoise_\iterIdx}_\gpkernel^2
        \end{split}
    \end{equation}
    By \citet[Cor. 3.6]{abbasi-yadkori2012}, given any $\delta\in(0,1]$, we then have that the following holds with probability at least $1-\delta$:
    \begin{equation}
        \forall\iterIdx\geq 1,\quad \norm{(\eye + \features_\iterIdx\regmat_\iterIdx^{-1}\features_\iterIdx^\transpose)^{-1/2}\features_\iterIdx\regmat_\iterIdx^{-1}\vec{\lablnoise}_\iterIdx}_\gpkernel^2 \leq 2\sigma_\lablnoise^2\log\left(\frac{\det(\eye + \regfactor^{-1}\mat\gpkernel_\iterIdx)^{1/2}}{\delta}\right)\,.
    \end{equation}
    Finally, combining the bounds above into \autoref{eq:rkhs-error-decomp} leads to the result in \autoref{thr:ntk-ucb}.
\end{proof}

For the next result, we need to define the following quantity:
\begin{equation}
    \mig_\niter := \max_{\obsspace_\niter \subset \obsspace: \card{\obsspace_\niter} \leq \niter} \frac{1}{2} \log \det(\eye + \regfactor^{-1}\mat\gpkernel(\obsspace_\niter))\,,
    \label{eq:mig}
\end{equation}
where $\mat\gpkernel(\obsspace_\niter) := [\gpkernel(\obs, \obs')]_{\obs,\obs'\in\obsspace_\niter} \in \real^{\card{\obsspace_\niter}\times \card{\obsspace_\niter}}$. Note that $\mig_\niter$ corresponds to the maximum information gain of a GP model \citep{srinivas2010gaussian} with covariance function given by the NTK, assuming Gaussian observation noise with variance given by $\regfactor$. Then $\mig_\niter$ is mainly dependent on the eigenvalue decay of the kernel under its spectral decomposition \citep{vakili2021}. For the spectrum of the NTK, a few results are available in the literature \citep{murray2023}.

\begin{proposition}
    \label{thr:ntk-kl}
    Let assumptions \ref{a:rkhs} to \ref{a:learning-rate} hold. Then, given $\delta\in(0,1]$, the following holds with probability at least $1-\delta$ for VSD equipped with a wide enough NN-based CPE model $\hat\classifier_\iterIdx$:
    \begin{equation}
        \diver{\probc{\obs}{\tar > \thresh_\iterIdx, \data_\iterIdx}}{\probc{\obs}{\tar > \thresh_\iterIdx, \bbf}} \in \bigo_\pMeasure\left(\sqrt{\frac{\mig_\iterIdx}{\iterIdx}}\right)\,.
    \end{equation}
\end{proposition}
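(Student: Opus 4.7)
The plan is to mirror the proof of \autoref{thr:kl-bound}, substituting its Gaussian-process machinery with the kernel-ridge / \gls{ntk} tools developed earlier in this appendix. Writing $\lik_\iterIdx(\obs) := \hat\classifier_\iterIdx(\obs)$, $\lik_\iterIdx^*(\obs) := \classifier^*(\obs) = \probc{\tar > \thresh}{\obs, \bbf}$, and $\ldiff_\iterIdx := \lik_\iterIdx - \lik_\iterIdx^*$, the first step reuses the Bayes-rule decomposition and the elementary bound $\log(1+a) \leq a$ verbatim from the GP-PI case to obtain
\[
\dkl{\probc{\obs}{\tar > \thresh, \data_\iterIdx}}{\probc{\obs}{\tar > \thresh, \bbf}}
\leq \expec{\probc{\obs}{\tar > \thresh, \data_\iterIdx}}{\frac{|\ldiff_\iterIdx(\obs)|}{\lik_\iterIdx^*(\obs)}}
+ \frac{\expec{\prob{\obs}}{|\ldiff_\iterIdx(\obs)|}}{\expec{\prob{\obs}}{\lik_\iterIdx(\obs)}}.
\]
This reduces the proof to (i) a uniform upper bound on $|\ldiff_\iterIdx|$ and (ii) uniform positive lower bounds on the two denominators.

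For (i), I would combine \autoref{thr:ntk-ucb}, which gives $|\ldiff_\iterIdx(\obs)| \leq \beta_\iterIdx(\delta)\,\hat\sigma_\iterIdx(\obs)$ with probability at least $1-\delta$ uniformly in $\iterIdx$ and $\obs$, with the almost-sure rate $\hat\sigma_\iterIdx(\obs) \in \bigo_\pMeasure(\iterIdx^{-1/2})$ from \autoref{thr:ntk-variance}. The $\sqrt{\mig_\iterIdx}$ factor emerges from $\beta_\iterIdx(\delta)$: since $\regfactor = e^{-\tstop}/(\tstop\lrate_*)$ is a positive constant by \autoref{a:learning-rate}, the definition \eqref{eq:mig} yields $\tfrac{1}{2}\log\det(\eye + \regfactor^{-1}\mat\gpkernel_\iterIdx) \leq \mig_\iterIdx$, and the $\norm{\classifier^*}_\gpkernel$ summand is a finite constant by \autoref{a:rkhs}, so $\beta_\iterIdx(\delta) \in \bigo(\sqrt{\mig_\iterIdx})$. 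Multiplying yields $\sup_{\obs\in\obsspace} |\ldiff_\iterIdx(\obs)| \in \bigo_\pMeasure(\sqrt{\mig_\iterIdx / \iterIdx})$, already matching the claimed rate.

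For (ii), the boundedness of the NTK (\autoref{a:ntk-bound}) together with \autoref{a:rkhs} give $\norm{\classifier^*}_\infty \leq \norm{\classifier^*}_\gpkernel\sup_\obs \sqrt{\gpkernel(\obs,\obs)} < \infty$ via Cauchy--Schwarz, and since $\classifier^*(\obs) = \cdf((\bbf(\obs)-\thresh)/\sigma_\obsnoise) \in (0,1)$ on the finite search space, this produces a uniform positive lower bound on $\lik_\iterIdx^*$; the denominator $\expec{\prob{\obs}}{\lik_\iterIdx(\obs)}$ then inherits the same lower bound asymptotically via the triangle inequality and the uniform vanishing of $\sup_\obs|\ldiff_\iterIdx|$. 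The ``wide enough'' qualifier in the statement is absorbed by the $\widetilde{\bigo}(\mwidth^{-1/2})$ approximation error between $\hat\classifier_\iterIdx$ and the linearized NTK predictor \citep{liu2020c}, which is made negligible by taking $\mwidth$ sufficiently large.

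The main obstacle is the clean emergence of the $\sqrt{\mig_\iterIdx}$ factor. Unlike in the GP-PI proof, where the posterior variance already encodes all uncertainty and the rate collapses to $\iterIdx^{-1/2}$, here the classifier's regularization is only implicit through early stopping (\autoref{thr:es-implicit-reg}), so the stochastic label-noise contribution to $\hat\classifier_\iterIdx$ must be controlled via the self-normalized martingale concentration underlying \autoref{thr:ntk-ucb}. That concentration is precisely what pulls $\log\det(\eye + \regfactor^{-1}\mat\gpkernel_\iterIdx)$ into $\beta_\iterIdx(\delta)$ and couples the final rate to the NTK eigendecay through $\mig_\iterIdx$; \autoref{thr:relu-kl} then follows by plugging an explicit growth rate for $\mig_\iterIdx$ of the fully-connected ReLU-NTK into the bound of this proposition.
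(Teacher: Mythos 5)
Your proposal is correct and follows essentially the same route as the paper: it reuses the KL decomposition and $\log(1+a)\leq a$ bound from the proof of \autoref{thr:kl-bound}, controls $|\ldiff_\iterIdx|$ by combining the confidence bound of \autoref{thr:ntk-ucb} (with $\beta_\iterIdx(\delta)\in\bigo(\sqrt{\mig_\iterIdx})$ via \autoref{eq:mig}) with the proxy-variance rate of \autoref{thr:ntk-variance}, and bounds the denominators using positivity of $\classifier^*$ on the finite domain. The paper's own proof is exactly this argument stated more briefly, so no gap to report.
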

\begin{proof}
    The result follows by applying the same steps as in the proof of \autoref{thr:kl-bound}. We note that $\lik_\iterIdx^*(\obs) = \classifier^*(\obs) > 0$, due to observation noise, so that $\lik_\iterIdx^*(\obs)^{-1} \in \bigo_\pMeasure(1)$. Similarly, \autoref{thr:ntk-ucb} implies that $|\hat\classifier_\iterIdx(\obs) - \classifier^*(\obs)| \leq \beta_\iterIdx(\delta)\sigma_\iterIdx(\obs)$ with probability at least $1-\delta$ simultaneously over all $\obs\in\obsspace$, so that ratio-dependent terms in \autoref{thr:kl-bound} should remain bounded in probability. The upper bound in the result then follows by noticing that in our case $|\ldiff_\iterIdx(\obs)| \leq \beta_\iterIdx(\delta)\hat\sigma_\iterIdx(\obs)$ with high probability, where $\hat\sigma_\iterIdx \in \bigo(\iterIdx^{-1/2})$ by \autoref{thr:ntk-variance}, and $\beta_\iterIdx(\delta) \in \bigo(\sqrt{\mig_\iterIdx})$ by \autoref{thr:ntk-ucb} and the definition of $\mig_\iterIdx$ in \autoref{eq:mig}.
\end{proof}

The result above tells us that VSD equipped with an NN-based CPE can recover a similar asymptotic convergence guarantee to the one we derived for the GP-PI case, depending on the choice of NN architecture and more specifically on the spectrum of its associated NTK. In the case of a fully connected multi-layer ReLU network, for example, \citet{chen2021deepntklap} showed an equivalence between the RKHS of the ReLU NTK and that of the Laplace kernel $\gpkernel(\obs,\obs') = \exp(-\constant\norm{\obs-\obs'})$. As the latter is equivalent to a Mat\'ern kernel with smoothness parameter set to $0.5$ \citep{rasmussen2006}, the corresponding information gain bound is $\mig_\iterIdx \in \widetilde\bigo(\iterIdx^{\frac{d}{1+d}})$, where $d$ here denotes the dimensionality of the domain $\obsspace$ \citep{vakili2023a}. In the case of discrete sequences of length $M$, the dimensionality of $\obsspace$ is determined by $M$. Hence, we have proven \autoref{thr:relu-kl}.\footnote{Here $\widetilde{\bigo}_\pMeasure$ suppresses logarithmic factors, as in $\widetilde{\bigo}$, and holds in probability.}

\relukl*

Similar steps can be applied to derive convergence guarantees for VSD with other neural network architectures based on the eigenspectrum of their NTK \citep{murray2023} and following the recipe in, e.g., \citet{vakili2021} or \citet{srinivas2010gaussian}.


\section{VSD as a Black-Box Optimization Lower Bound}
\label{app:vsdbound}

A natural question to ask is how \gls{vsd} relates to the \gls{bo} objective for probability of improvement~\citep[Ch.7]{garnett2023bayesian},
\begin{align}
    \obs^*_t = \argmax_\obs \log \acqfn{\obs, \data_N, \thresh}{PI}.
    \label{eq:bopi}
\end{align}
Firstly, we can see that the expected log-likelihood of term of \autoref{eq:vsd_elbo} lower-bounds this quantity.
\begin{proposition}\label{prop:maxpr}
For a parametric model, $\qrobc{\obs}{\qparam}$, given $\qparam \in \qparamspace \subseteq \real^m$ and $q \in \mathcal{P} : \obsspace \times \qparamspace \to [0, 1]$,
\begin{align}
    \max_\obs \log \acqfn{\obs, \data_N, \thresh}{PI}
        \geq \max_\qparam \expec{\qrobc{\obs}{\qparam}}{\log \acqfn{\obs, \data_N, \thresh}{PI}},
    \label{eq:maxpr}
\end{align}%
and the bound becomes tight as $\qrobc{\obs}{\qparam^*_t} \to \delta(\obs^*_t)$, a Dirac delta function at the maximizer $\obs^*_t$.
\end{proposition}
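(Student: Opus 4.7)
The plan is to invoke the elementary fact that the expected value of any integrable function under a probability distribution is bounded above by its essential supremum, and then to verify the tightness claim via weak convergence of the variational distribution to a Dirac delta. Write $g(\obs) := \log \acqfn{\obs, \data_N, \thresh}{PI}$ for brevity. First, I would note that for every fixed $\qparam \in \qparamspace$, since $\qrobc{\obs}{\qparam}$ is a probability distribution over $\obsspace$,
\begin{equation}
    \expec{\qrobc{\obs}{\qparam}}{g(\obs)}
    = \int_{\obsspace} g(\obs) \, \qrobc{\obs}{\qparam} \, \intd\obs
    \leq \left(\max_{\obs \in \obsspace} g(\obs)\right) \int_{\obsspace} \qrobc{\obs}{\qparam} \, \intd\obs
    = \max_{\obs \in \obsspace} g(\obs).
\end{equation}
Since this inequality holds pointwise in $\qparam$, taking the supremum over $\qparam$ on the left-hand side preserves the direction, yielding the claimed bound. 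In the discrete setting of interest, integrals are replaced by finite sums, and the same monotonicity argument applies verbatim.

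For the tightness claim, let $\obs^*_t \in \argmax_{\obs} g(\obs)$ (assumed to exist, since $\obsspace$ is finite in our setting) and consider a sequence $\qparam_k$ such that $\qrobc{\obs}{\qparam_k} \to \delta(\obs - \obs^*_t)$ in the weak sense, which is expressible in the variational family whenever it is rich enough to place arbitrarily sharp mass at points of $\obsspace$ (e.g., a mean-field categorical distribution can assign mass $1$ to any single sequence). Applying the definition of weak convergence to the bounded measurable function $g$ yields
\begin{equation}
    \lim_{k\to\infty} \expec{\qrobc{\obs}{\qparam_k}}{g(\obs)} = g(\obs^*_t) = \max_{\obs} g(\obs),
\end{equation}
so the gap in \autoref{eq:maxpr} shrinks to zero along this sequence, establishing tightness.

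There is no real obstacle here: this is a standard variational bound. The only subtlety worth flagging is the implicit richness assumption on the variational family for tightness to be attainable. In the discrete design spaces the paper targets, this is immediate, and for continuous relaxations one would additionally require $g$ to be continuous (or upper semi-continuous) at $\obs^*_t$ for the limit to pass through — a mild condition satisfied by the Gaussian CDF-based PI surrogate whenever the posterior predictive mean and variance are continuous in $\obs$.
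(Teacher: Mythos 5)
Your proof is correct and takes essentially the same route the paper indicates (the paper simply cites \citet{daulton2022bayesian, staines2013optimization} and appeals to the fact that an expectation is bounded by the maximum, with tightness under collapse to a Dirac delta at the maximizer). Your added care about the richness of the variational family and upper semi-continuity for the continuous case is a reasonable, harmless elaboration of the same argument.
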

Taking the argmax of the RHS will result in the variational distribution collapsing to a delta distribution at $\obs^*_t$ for an appropriate choice of $\qrobc{\obs}{\qparam}$. The intuition for \autoref{eq:maxpr} is that the expected value of a random variable is always less than or equal to its maximum. The proof of this is in \citet{daulton2022bayesian, staines2013optimization}. Extending this lower bound, we can show the following.
\begin{proposition}\label{prop:maxprdiv}
For a divergence $\mathbb{D} : \mathcal{P}(\obsspace) \times \mathcal{P}(\obsspace) \to [0, \infty)$, and a prior $\pdist_0 \in \mathcal{P}(\obsspace)$,
\begin{align}
    \max_\obs \log \acqfn{\obs, \data_N, \thresh}{PI}
    \geq \max_\qparam \expec{\qrobc{\obs}{\qparam}}{\log \acqfn{\obs, \data_N, \thresh}{PI}} - \diver{\qrobc{\obs}{\qparam}}{\pdist_0(\obs)}.
    \label{eq:maxprdiv}
\end{align}
\end{proposition}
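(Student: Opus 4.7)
The plan is to obtain this bound by chaining Proposition~\ref{prop:maxpr} with the non-negativity of the divergence $\mathbb{D}$. Since the added term on the right-hand side of \eqref{eq:maxprdiv} is subtracted rather than added, the inequality is strictly weaker than that in Proposition~\ref{prop:maxpr}, so the argument should reduce to one short chain of inequalities.

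First, I would invoke the defining property of a divergence, namely that $\diver{q(\obs)}{p(\obs)} \geq 0$ for any pair of distributions $q, p \in \mathcal{P}(\obsspace)$. In particular, for the specific pair $(\qrobc{\obs}{\qparam}, \pdist_0(\obs))$ and for every $\qparam \in \qparamspace$,
\begin{equation}
    \expec{\qrobc{\obs}{\qparam}}{\log \acqfn{\obs, \data_N, \thresh}{PI}}
    \geq \expec{\qrobc{\obs}{\qparam}}{\log \acqfn{\obs, \data_N, \thresh}{PI}}
    - \diver{\qrobc{\obs}{\qparam}}{\pdist_0(\obs)}\,.
\end{equation}

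Second, since this pointwise-in-$\qparam$ inequality holds for all $\qparam$, taking the maximum over $\qparam$ on both sides preserves the inequality:
\begin{equation}
    \max_\qparam \expec{\qrobc{\obs}{\qparam}}{\log \acqfn{\obs, \data_N, \thresh}{PI}}
    \geq \max_\qparam \Bigl(\expec{\qrobc{\obs}{\qparam}}{\log \acqfn{\obs, \data_N, \thresh}{PI}}
    - \diver{\qrobc{\obs}{\qparam}}{\pdist_0(\obs)}\Bigr)\,.
\end{equation}

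Third, I would apply Proposition~\ref{prop:maxpr} to the left-hand side above, which gives $\max_\obs \log \acqfn{\obs, \data_N, \thresh}{PI} \geq \max_\qparam \expec{\qrobc{\obs}{\qparam}}{\log \acqfn{\obs, \data_N, \thresh}{PI}}$, and transitivity of $\geq$ completes the proof. There is no real obstacle here: both inputs (non-negativity of $\mathbb{D}$ and Proposition~\ref{prop:maxpr}) are already in hand, and neither step requires any assumption on the family $\{\qrobc{\obs}{\qparam}\}_{\qparam \in \qparamspace}$, the prior $\pdist_0$, or the particular form of the acquisition function beyond what is stated. The only minor subtlety worth flagging in the write-up is that tightness is generally lost compared with Proposition~\ref{prop:maxpr}, because even when $\qrobc{\obs}{\qparam^*_t} \to \delta(\obs^*_t)$ the divergence term $\diver{\delta(\obs^*_t)}{\pdist_0}$ need not vanish; one could briefly remark on this to connect the bound to the ELBO-style objective of \gls{vsd}.
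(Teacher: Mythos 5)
Your proof is correct and matches the paper's own argument, which likewise observes that the bound follows immediately from Proposition~\ref{prop:maxpr} together with the non-negativity of $\mathbb{D}$ (``trivially true given the range of divergences''). Your remark on the loss of tightness also agrees with the paper's comment that equality requires $\pdist_0$ itself to concentrate at $\obs^*_t$.
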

We can see that this bound is trivially true given the range of divergences, and this covers \gls{vsd} as a special case. However, this bound is tight if and only if $\pdist_0$ concentrates as a Dirac delta at $\obs_\iterIdx^*$ with an appropriate choice of $\qrobc{\obs}{\qparam}$. In any case, the lower bound remains valid for any choice of informative prior $\pdist_0$ or even a uninformed prior, which allows us to maintain the framework flexible to incorporate existing prior information whenever that is available.


\end{document}